\newcounter{rowno}
      \OR\ifentrytype{incollection}\OR\ifentrytype{inproceedings}%
      \OR\ifentrytype{inreference}}
\newcommand{\secn}{\S}
  \def\nl#1#2{\begingroup
     \scalebox{0.85}[1]{\textbf{#2}}%
     \def\@currentlabel{\textnormal{\scalebox{0.85}[1]{\textbf{#2}}}}
     \phantomsection\label{#1}\endgroup
}
  \DeclareMathAlphabet{\mathcal}{OMS}{cmsy}{m}{n}
\newtheorem{theorem}            {Theorem}[chapter]
\newtheorem{lemma}              [theorem]{Lemma}
\theoremstyle{definition}   
\newtheorem{definition}         [theorem]{Definition}
\def\thm@space@setup{%
  \thm@preskip=6pt plus 5pt minus 3pt 
  \thm@postskip=6pt plus 5pt minus 3pt 
}
 \newcommand{\normal}{\mathcal{N}}
\noindent\textbf{#1:}\begin{rmfamily}\noindent}%
\newlist{senumerate}{enumerate}{3}
\setlist[senumerate]{label=({\bf \roman*}),wide, labelwidth=!, labelindent=0pt,topsep=0pt}
\newlist{renumerate}{enumerate}{3}
\setlist[renumerate]{label={\bf{\arabic*.}},wide,labelindent=!,labelwidth=!,topsep=0pt,ref={\arabic*}}
\newenvironment{rem}{\needspace{0.5\baselineskip}
 \noindent {\bf{\em \scalebox{0.85}[1.0]{Remarks}}}. 
  \begin{renumerate}} 
  {\end{renumerate}}
\newcommand{\pwe}{Complements and Sources}
\newcommand{\pdf}{p}
\newcommand{\cdf}{F}
\newcommand{\prob}{\mathbb{P}}
\newcommand{\E}{\mathbb{E}}
\newcommand{\bstate}{\bar{\state}}
\newcommand{\obs}{y}
\newcommand{\snoise}{w}
\newcommand{\onoise}{v}
\newcommand{\statem}{A}
\newcommand{\obsm}{C}
\newcommand{\snoisecov}{Q}
\newcommand{\onoisecov}{R}
\newcommand{\state}{x}
\newcommand{\statespace}{\mathcal{X}}
\newcommand{\obspace}{\mathcal{Y}}
\newcommand{\statedim}{X}
\newcommand{\obsdim}{{Y}}
\newcommand{\mc}{r}  
\newcommand{\fun}{\phi}
\newcommand{\oprob}{B}
\newcommand{\tp}{P}
\newcommand{\finaltime}{N}
\newcommand{\belief}{\pi}
\newcommand{\bbelief}{\bar{\pi}}
\newcommand{\Belief}{\Pi(\statedim)}
   \newcommand{\kalmancov}{\Sigma}
\newcommand{\tr}{\operatorname{trace}}
\newcommand{\trace}{\tr}
\newcommand{\Sig}{S}   
\newcommand{\weight}{\omega}
\newcommand{\ole}{\overset{\textnormal{defn}}{=}}
\newcommand{\filterd}{\sigma}
\newcommand{\filter}{T}
\newcommand{\argmin}{\operatornamewithlimits{argmin}}
\newcommand{\argmax}{\operatornamewithlimits{argmax}}
\newcommand{\reals}{\mathbb{R}}
\newcommand{\beq}{\begin{equation}}
\newcommand{\eeq}{\end{equation}}
\newcommand{\nn}{\nonumber}
\renewcommand{\(}		{\left(}
\renewcommand{\)}		{\right)}
\renewcommand{\th}{\theta}
\newcommand{\p}{\prime}
\newcommand{\one}{\mathbf{1}}
\newcommand{\ones}{\mathbf{1}}
\newcommand{\zero}{0}
\renewcommand{\div} {{\operatorname{div}}}
\newcommand{\diag}{\textnormal{diag}}
\newcommand{\pnoise}{\nu}
\newcommand{\horizon}{N}
\newcommand{\cost}{c}
\newcommand{\reward}{r}
\newcommand{\action}{u}
\newcommand{\bact}{\bar{a}}
\newcommand{\actionspace}{\,\mathcal{U}}
\newcommand{\actiondim}{U}
\newcommand{\discount}{\rho}
\def \con {{\beta}}
\def \rcon {{\gamma}}
\def\param{{\alpha}}
 \newcommand{\statpi}{\pi}
\newcommand{\policy}{\mu}
\newcommand{\optpolicy}{\policy^*}
\newcommand{\overlook}{\beta}
\newcommand{\stopset}{\mathcal{S}}
\newcommand{\thtrue}{\theta^o}
\newcommand{\bd}{\succeq_{\mathcal{B}}}
\newcommand{\noise}{n}
\newcommand{\bm}{W}
\newcommand{\cd}{(\cdot)}
\def\ph{\varphi}
\newcommand{\wdt}{\widetilde}
\newcommand{\ad}{&\!\!\!\disp}
\newcommand{\barray}{\begin{array}{ll}}
\newcommand{\earray}{\end{array}}
\newcommand{\disp}{\displaystyle}
\def\cd{(\cdot)}
\newcommand{\numagents}{M}
   \newcommand{\probe}{\alpha}
\newcommand{\response}{\beta}
\newcommand{\utility}{U}
\newcommand{\budget}{I}
 \newcommand{\dataset}{\mathbb{D}}
\newcommand{\tindx}{k}
\newcommand{\Tindxter}{\finaltime}
\newcommand{\dtime}{k}
\xpretocmd{\@endpart}{%
  \ifx\@abstract\@empty\else
    \bigskip
    \begin{quote}\@abstract\end{quote}
    \global\let\@abstract\@empty
  \fi
}{}{}
\newcommand{\partabstract}[1]{%
  \renewcommand{\@abstract}{\normalsize #1}%
}
\newcommand{\@abstract}{}
\newlist{steplist}{enumerate}{3}
\setlist[steplist]{label={\em{Step \arabic*.}},leftmargin=*,topsep=0pt}
\newlist{steplistn}{enumerate}{3}
\setlist[steplistn]{label={\em{Step \arabic*.}},leftmargin=*,wide}
\newenvironment{compactenum}[1][]{
  \begin{enumerate}[leftmargin=*,itemsep=0pt,parsep=0pt,topsep=0pt,label=\arabic*.,#1]

}{
  \end{enumerate}
}
\newenvironment{compactitem}[1][]{
  \begin{itemize}[leftmargin=*,itemsep=0pt,parsep=0pt,topsep=0pt,#1]
}{
  \end{itemize}
}
\newcommand{\eqrefp}[1]{\eqref{#1}~\vpageref{#1}}
\newcommand{\bigtuple}{\Theta}
\newcommand{\BRP}{\operatorname{BRP}}
\newcommand{\datann}{\mathbb{D}}
\newcommand{\datasetaccum}{\datann}
\newcommand{\dpiter}{m}
\newcommand{\dpitertwo}{l}
\newcommand{\dpset}{\mathcal{M}}
\newcommand{\stateset}{\ensuremath{\mathcal{X}}} 
\newcommand{\actionset}{\ensuremath{\mathcal{A}}} 
\newcommand{\obsset}{\mathcal{Y}}
\newcommand{\RIcost}{K}
\newcommand{\hRIcost}{\hat{\RIcost}}
\newcommand{\attfunsymb}{\oprob}
\newcommand{\utilitysymbolagent}[1]{\utilitysymbol_{#1}}
\newcommand{\hutilitysymbolagent}[1]{\hat{\utilitysymbol}_{#1}}
\newcommand{\stoptuple}{\Xi}
\newcommand{\optstoptuple}{\stoptuple_{opt}}
\newcommand{\utilitysymbol}{r}
\newcommand{\hutilitysymbol}{\hat{\utilitysymbol}}
\newcommand{\actselectagent}[1]{p_{#1}(\act|\state)}
\newcommand{\runcostinst}{c}
\newcommand{\sumruncostsymbol}{C}
\newcommand{\sumruncostagent}[1]{\sumruncostsymbol_{#1}}
\newcommand{\numdp}{M}
\newcommand{\actiontwo}{\bar{a}}
\newcommand{\utilityagent}[1]{\utilitysymbol_{#1}(\state,\act)}
\newcommand{\hutilityagent}[1]{\hat{\utilitysymbol}_{#1}(\state,\act)}
\newcommand{\act}{a}
\newcommand{\actdim}{A}
\newcommand{\attspace}{\Delta}
\newcommand{\agent}{m}
\newcommand{\agentset}{\mathcal{M}}
\newcommand{\optstoptime}{\stoptime^\ast}
\newcommand{\stoptime}{\mu}
\newcommand{\funcstop}{\tau}
\newcommand{\datainf}{\mathcal{D}_{M}}
\newcommand{\agenttwo}{l}
\newcommand{\optsearchtuple}{\stoptuple_{opt}}
\newcommand{\SHT}{\operatorname{SHT}}
\newcommand{\numstates}{X}
\newcommand{\contcost}{c}
\newcommand{\stopcost}{s}
\newcommand{\env}{m}
\newcommand{\benv}{l}
\newcommand{\envspace}{\mathcal{M}}
\newcommand{\temperature}{\beta}
\newcommand{\forward}{\mathcal L^*}
\newcommand{\fobspace}{\mathcal{O}}
\newcommand{\fobs}{O}
\newcommand{\foprob}{\Psi}
\newcommand{\msL}{L}
\newcommand{\probedim}{m}
\newcommand{\budgetg}{g}
\newcommand{\sindx}{s}
\newcommand{\precconstraint}{p_*}
\newcommand{\fasttime}{t}
\newcommand{\evalue}{\lambda}
\newcommand{\bound}{\bar{\kalmancov}}
\newcommand{\ebound}{\bar{\lambda}}
\newcommand{\upperbound}{\bar{\response}}
\newcommand{\ARE}{\operatorname{\mathcal{A}}}
\newcommand{\obsresponse}{\bar{\response}}
\newcommand{\Obsresponse}{\boldsymbol{\obsresponse}}
\newcommand{\nresponse}{\obsresponse}
\newcommand{\Anoise}{\boldsymbol{\anoise}}
\newcommand{\Pnoise}{\boldsymbol{\pnoise}}
\newcommand{\anoise}{\epsilon}
\newcommand{\obsdataset}{\mathcal{D}_\text{obs}}
\newcommand{\threshold}{\gamma}
\newcommand{\Probe}{\boldsymbol{\probe}}
\newcommand{\tPhi}{\tilde{\Phi}}
\newcommand{\lambdat}{\lambda^o}
\newcommand{\ut}{u^o}
\newcommand{\ccdf}{\bar{F}_M}
\newcommand{\nprobe}{\bar{\probe}}
\newcommand{\Obsprobe}{\boldsymbol{\nprobe}}
\newcommand{\Response}{\boldsymbol{\response}}
\newcommand{\setresponse}{{\boldsymbol{\response}}}
\newcommand{\potfun}{V}
\newcommand{\np}{P}
\newcommand{\ia}{p}
\newcommand{\ja}{q}
\newcommand{\feasconsi}{\hat{\response}_\dtime^\ia}
 \newcommand{\sunderbar}[1]{\underline{\smash{#1}\vphantom{a}}}
 \newcommand{\uresponse}{\sunderbar{\response}}
 \newcommand{\lind}{\ell}
 \newcommand{\hresponse}{\hat{\response}}
 \newcommand{\welfare}{W}
 \newcommand{\afriat}{\mathcal{A}}
 \newcommand{\margin}{\mathcal{M}}
 \newcommand{\subjectto}{\operatorname{subject\; to}}
 \newcommand{\hutility}{\hat{\utility}}
 \newcommand{\presponse}{{\tilde{\response}}}
 \newcommand{\putility}{{\tilde{\utility}}}
 \newcommand{\masking}{\eta}
 \newcommand{\pdataset}{{\tilde{\dataset}}}
 \newcommand{\HMI}{\operatorname{HMI}}
 \newcommand{\VI}{\operatorname{VI}}
 \newcommand{\MCI}{\operatorname{MCI}}
  \newcommand{\uaf}{\phi}
  \newcommand{\Tstat}{\mathcal{T}}
  \newcommand{\dtimen}{s}
  \newcommand{\iindex}{i}
\newcommand{\Search}{\operatorname{Search}}
\newcommand{\datainfsearch}{\mathcal{D}_{\numagents}(\Search)}
\newcommand{\aaction}{a}
\newcommand{\csearch}{c}
\newcommand{\searchcostsymbol}{c}
\newcommand{\IRLoutput}{\operatorname{IRL}}
\newcommand{\datafin}{\mathcal{D}_{n}}
\newcommand{\aprob}{G}
\newcommand{\post}{\eta}
\newcommand{\fBelief}{\Pi}
\newcommand{\bB}{\bar{\oprob}}
\newcommand{\truestate}{x^o}
\newcommand{\hb}{z}
\newcommand{\inlineitem}[1][]{%
\ifnum\enit@type=\tw@
    {\descriptionlabel{#1}}
  \hspace{\labelsep}%
\else
  \ifnum\enit@type=\z@
       \refstepcounter{\@listctr}\fi
    \qquad\@itemlabel\hspace{\labelsep}%
\fi}
\newcommand{\hstate}{\hat{\state}}
\newcommand{\kg}{\Psi}
\newcommand{\anoisecov}{\sigma_\epsilon^2}
\newcommand{\lact}{\underline{\act}}
\newcommand{\actspace}{\mathcal{A}}
\newcommand{\qdp}{\belief_0}
\newcommand{\upar}{\theta}
\newcommand{\unoise}{\epsilon}
\newcommand{\regc}{\psi}
\newcommand{\remar}{\noindent {\textbf{\textit{\scalebox{0.85}[1.0]{Remark}}}}.\ }
\newcommand{\remarn}[1][]{%
  \noindent\textbf{\textit{\scalebox{0.85}[1.0]{Remark}}}%
  \ifx\relax#1\relax.
  \else%
    . \textbf{\textit{\scalebox{0.85}[1.0]{#1}}}: 
  \fi
}
\newcommand{\summar}{\noindent {\textbf{\textit{\scalebox{0.85}[1.0]{Summary}}}}.\ }
\newcommand{\hdformat}[1]{\noindent {\textbf{\textit{\scalebox{0.85}[1.0]{#1}}}}.\ }
  \newcommand{\examt}[1]{%
   \noindent\textbf{\textit{\scalebox{0.85}[1.0]{Example. #1}}}: \kern-0.3em{}%
    }
\newcommand{\examnt}[2]{%
    \noindent \textbf{\textit{\scalebox{0.85}[1.0]{Example #1. #2}}}: \kern-0.3em %
}
\newcommand{\alp}{\alpha}
\newcommand{\kerneln}{\operatorname{K}_\kernelstep}
\newcommand{\kernelstep}{\Delta}
\newcommand{\simplies}{\mathrel{\!\!\implies\!\!}}
\newcommand{\tc}[1]{\mathbf{#1}}
\newcommand{\RHS}{\text{right hand side}\xspace}
\newcommand{\tslow}{n}
\newcommand{\Reward}{R}
\newcommand{\stoptimeirl}{\tau}
\newcommand{\thdim}{N}
\newcommand{\step}{\varepsilon}
\newcommand{\stepa}{\mu}
\newcommand{\kernel}{\operatorname{K}}
\renewcommand{\eth}{\alpha}
\newcommand{\stat}{\pdf}
\tikzset{
    blockff/.style={rectangle, draw, line width=0.2mm, black,  text centered,text width=6em,
                 minimum height=2em},
    line/.style={draw, -latex}}
\newcommand{\obsnoise}{v}
\newcommand{\mreward}{\rho}
\newcommand{\numparticles}{L}
\newcommand{\mcstep}{\eta}
\newcommand{\stepn}{\nu}
\newcommand{\Tr}{\operatorname{Tr}}
\newcommand{\al}{\eth}
\newcommand{\Cons}{B}
\newcommand{\admissible}{\mathcal{D}}
\newcommand{\cond}{\phi}
\newcommand{\randmix}{p}
\newcommand{\logistic}{{\mathcal E}}
\newcommand{\stepmc}{\mcstep}
\begin{document}

\begin{center}
{\LARGE Inverse Reinforcement Learning using Revealed Preferences and Passive Stochastic Optimization}

\vspace{1em}

{\large Vikram Krishnamurthy\\
Cornell University, Ithaca, NY 14853, USA.\\
\texttt{vikramk@cornell.edu}} \\

\vspace{1em}
\today
\end{center}
\vspace{1em}

Cite this monograph  as:
\begin{lstlisting}[basicstyle=\ttfamily, breaklines=true]
@Book{Kri25,
  author    = {Vikram Krishnamurthy},
  title     = {Partially Observed Markov Decision Processes. Filtering, Learning and Controlled Sensing},
  publisher = {Cambridge University Press},
  edition   = {2},
  year      = {2025}
}
\end{lstlisting}

\dominitoc

\frontmatter

\tableofcontents


\chapter*{Summary}


The first two chapters of 
this short monograph  on inverse reinforcement learning (IRL)  draw heavily  from my new book {\em Partially Observed Markov Decision Processes}, Second Edition, published by Cambridge University Press in June 2025. In the first two chapters, we will  view IRL through the lens of microeconomics via revealed preferences. In the third chapter, we study adaptive IRL using passive learning based on Langevin dynamics.

Our motivation for IRL stems from identifying smart controlled sensors from their actions.
  A controlled sensor is a  constrained utility maximizer that  adapts its sensing modes to changes in the environment.
This monograph  discusses {\em inverse} reinforcement learning (IRL)  for controlled  sensing by  addressing two questions:
\begin{compactenum}
\item By observing the signals (actions) of a  sensing system, how to identify if  the system is a constrained utility maximizer?
\item If the sensing system is a constrained utility maximizer,  how to  estimate its   utility function and therefore predict its future actions?
\end{compactenum}

Mathematically speaking, IRL can be viewed   as
 inverse optimization or inverse stochastic control.
  The IRL framework transcends  
classical 
statistical signal processing to
address the deeper issue of {\em how to infer strategy from sensing data}.
IRL facilitates understanding   how adaptive
sensors make decisions, and conversely how to design adaptive sensors that are robust to adversarial attacks. Besides controlled sensing (such as cognitive radar), the data-driven IRL approach
that we will discuss is  used in microeconomics to  understand consumer behavior via revealed
preference, in multimedia social networks to explain user engagement, and in human--robot interaction to understand 
how a robot  learns by observing a human decision maker. 

This monograph comprises three chapters.  Chapter~\ref{chp:afriat} uses classical revealed preference (Afriat's theorem and extensions) to identify constrained utility maximizers based on the actions of an agent, and then reconstruct set-valued estimates of the agent's utility.  We illustrate this procedure by identifying the presence of a cognitive radar and reconstructing its utility function. We also discuss how to construct a statistical  detector for utility maximization behavior when the actions of the agent are measured in noise.

Chapter~\ref{chp:birl} discusses Bayesian IRL. By observing the decisions of a Bayesian agent, how can an analyst determine if the  agent is a Bayesian utility maximizer that is simultaneously optimizing its observation likelihood?  Chapter~\ref{chp:birl} also discusses  inverse stopping-time problems, namely reconstructing the continue and stopping cost of the Bayesian agent that acts over a random horizon. This IRL methodology is then used to identify the presence of a Bayes optimal sequential detector. The chapter also gives a brief description of discrete choice models, inverse Bayesian filtering and inverse stochastic gradient algorithms for adaptive IRL.

Chapter~\ref{chp:langevin} studies the design of  adaptive IRL algorithms that can estimate a time evolving utility function given noisy misspecified gradients. Suppose the adversary is a cognitive sensor that uses (possibly multiple)  stochastic gradient algorithms to optimize adaptively the expected  reward $R(\th) = \E\{r_k(\th)\}$ for choosing its  sensing strategy.
In learning its optimal sensing strategy, the cognitive sensor generates the dataset of randomly chosen points $\th_k$ and noisy gradients  $\nabla_\th\reward_k(\th_k)$:
\begin{equation*}
 {\footnotesize \begin{matrix} \text{Dataset generated by} \\ \text{adversary cognitive sensor}
                   \end{matrix}} \qquad  \{\th_k, \nabla_\th\reward_k(\th_k),k=1,2,\ldots\}
\end{equation*}
 Suppose we observe  the dataset from the above equation in real time.
{\em How can we  design an adaptive IRL algorithm that 
estimates the  expected  reward $R(\th) = \E\{r_k(\th)\}$?}  This chapter draws heavily from the paper~\cite{KY21} that studies adaptive passive Langevin dynamics and their weak convergence.

This monograph is work in progress. 
I plan to class-test  the material  in a PhD level course that I teach  at Cornell.
The IRL methods  in Chapter 1 also apply to identifying group intent, for example, identifying if multiple drones are coordinating their behavior,  and details will be added in future.
In due course, I plan to add a chapter on the finite sample analysis of the passive Langevin dynamics (in contrast to Chapter 3 which contains the asymptotic analysis).
I also plan to add chapters on how IRL can be used as a basis of explainable AI and also in social network analysis for explaining certain sociological phenomenon.

\vspace{0.1cm} 
\hfill Vikram Krishnamurthy,  \\ \mbox{} \hfill Ithaca, New York.  \\
\mbox{}  \hfill vikramk@cornell.edu


\mainmatter


\chapter{Revealed Preferences for Inverse Reinforcement Learning}  
\label{chp:afriat}

\minitoc

\index{inverse reinforcement learning (IRL)|seealso{revealed preferences, Bayesian IRL}}

\noindent
Inverse reinforcement learning (IRL) aims to estimate the utility function of a decision system by analyzing its input--output dataset.
This chapter discusses IRL for controlled sensing.  
Mathematically,  IRL can be viewed   as
inverse optimization. 
To motivate this chapter, we discuss IRL in the context of an adversarial cognitive radar. The reader unfamiliar with the radar application can simply assume that a cognitive radar is a constrained utility maximizer.

Cognitive radars  \cite{Hay06} utilize  the perception--action cycle of cognition to  sense the environment, learn relevant information about the target and the background, and then optimally adapt their sensing modes  to meet the mission's objectives. Cognitive radars   adaptively optimize their waveform, aperture, dwell time and revisit rate. 

This chapter  is motivated by the  next logical step, namely, \textit{inverse cognitive radar}.
  The framework involves an adversarial signal processing IRL  problem comprising    ``us'' and an ``adversary''. Figure \vref{fig:schematica} displays the schematic setup.
``Us'' refers to a drone/UAV  or electromagnetic signal that probes an ``adversary'' cognitive  radar system.
The adversary cognitive radar  estimates our kinematic coordinates using a Bayesian tracker and then adapts its mode (waveform, aperture, revisit time) dynamically using feedback control  based on sensing our kinematic state (e.g., position and velocity of drone).
At each time $\dtime$ our kinematic state can be viewed as a probe vector $\probe_\dtime \in \reals_+^\probedim$ to the radar. We  observe  the  adversary radar's response $\response_\dtime \in \reals^\probedim$. (For now we assume $\probe_\dtime$ and $\response_\dtime$ have the same dimension; in \secn \ref{sec:forges} we will relax this.) Given the time series of probe vectors and responses, $\{\probe_\dtime,\response_\dtime,\dtime=1,\ldots,\horizon\}$, this chapter addresses the following questions:
\begin{compactenum}
\item  How to identify if  the adversary's radar is cognitive, i.e., does there exist a utility function $\utility(\response)$ that the radar  is maximizing to generate its response $\response_\dtime$ to our probe input $\probe_\dtime$?

\item  How to construct an IRL algorithm that estimates  the utility function  of the adversary's cognitive radar? Conversely,
how to design a
cognitive radar that hides its utility from an IRL algorithm?
That is, how can a smart sensor purposely act dumb? 

\item How to identify if multiple radars are coordinating their responses (in a Pareto-optimal manner) and if so, how to estimate the utilities of the individual radars?   
\item  How to construct a statistical detection test for utility maximization when we observe the adversary's radar's actions in noise?
\item  In the statistical detection test, how to  probe the adversary's radar by choosing our state  to minimize the Type-II  error of detecting if the adversary radar is a utility maximizer, subject to a constraint on the Type-I detection error?
\end{compactenum}

    \begin{figure}[h] \centering
            {\resizebox{8cm}{!}{
                \begin{tikzpicture}[node distance = 1cm, auto]
                  \tikzset{
    blocka/.style={rectangle, draw, line width=0.5mm, black, text width=4.5em, text centered,
                 minimum height=1em},
               line/.style={draw, -latex}}
    \node [blocka] (BLOCK1) {Sensor};
    \node [blocka, below of=BLOCK1,right of=BLOCK1,node distance=1.5cm] (BLOCK2) {Optimal Decision \\ Maker};
    \node [blocka, below of=BLOCK1,left of=BLOCK1,node distance=1.5cm] (BLOCK3) {Bayesian Tracker};

    \draw[Latex-] (BLOCK1) -| node[left,pos=0.8]{$\response_\dtime$}  (BLOCK2)  ;
    \draw[-Latex] (BLOCK1.west) -|   node[left,pos=0.8]{$\obs_\fasttime$} (BLOCK3);

    \draw[-Latex](BLOCK3) --  node[above]{$\belief_\fasttime$} (BLOCK2);

    \node[draw=none,fill=none] at (4.5,-1.5) (drone) {\includegraphics[bb=0 0 0 0,scale=0.07]{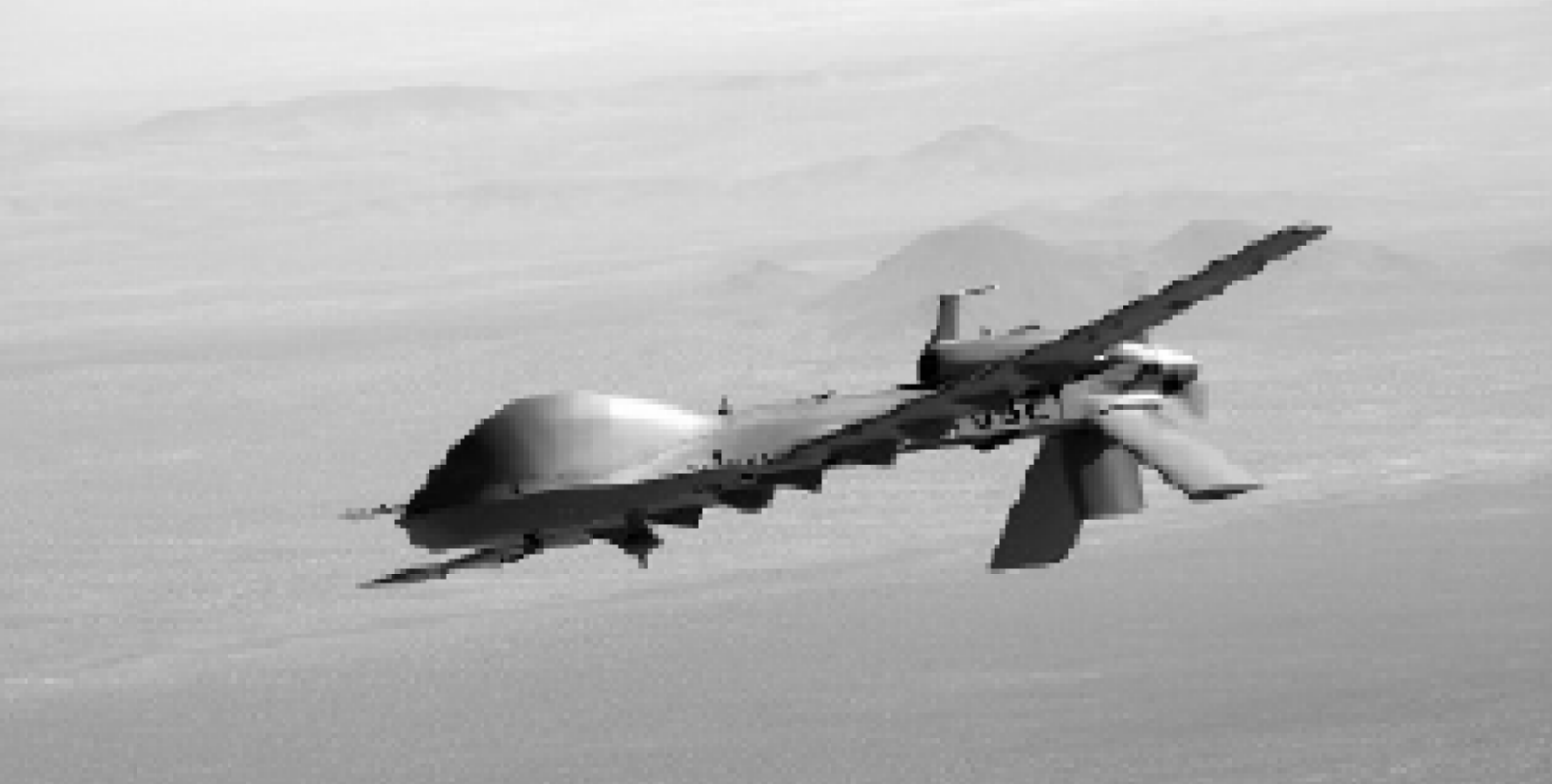}};
     \node[text width=2cm] at (5.5,-1.7) {{Our state $\state_\fasttime$}};

    \draw[-Latex,line width=2pt] (2,0)   -- node[above]{action $\response_\dtime$}(4,0);
    \draw[-Latex,line width=2pt] ([yshift=0.8cm]drone.west)   --   node[below]{probe $\probe_\dtime$} (2.8,-0.7);
    \node[draw] at (5.5,-3.0) {Our side};
    \node[draw] at (0.5,-3.0) {Adversary};
    \draw [dashed] (3.5,1) -- (3.5,-3);
   \end{tikzpicture}} }

\caption{IRL for identifying an adversary cognitive radar. Our state $\state_\fasttime$ (drone or electromagnetic signal), parametrized by $\probe_\dtime$ (purposeful acceleration maneuvers), probes the adversary radar. Here, $\fasttime$ denotes a fast time scale and $\dtime$ denotes a slow time scale.  Based on the noisy observation $\obs_\fasttime$ of our state, the adversary radar  responds with action~$\response_\dtime$. The IRL problem is to determine if the adversary radar response  $\beta_\dtime$  is consistent with constrained utility maximization.}
\label{fig:schematica}
\end{figure}

The central idea of this chapter is to apply \textit{revealed preference} theory  
from microeconomics as a constructive procedure for  IRL.  Revealed preference theory provides a necessary and sufficient condition for a dataset to be consistent with utility maximization, and then     constructs
 set-valued estimates of the utility function.  Estimating a utility
function is an ill-posed problem since argmax is an ordinal function (so any monotone increasing function of
the utility is also a valid utility). Therefore,  a point estimate of the utility, such as a least squares estimate,
is not useful. In contrast, set-valued estimates are important in this setting.
Given the set-valued estimates of the utility,  one can then predict (extrapolate)  future responses of the adversary and also construct confidence intervals around these predictions.

\subsubsection{Context. Revealed Preferences versus Classical IRL}

We will give a detailed discussion and literature review of how revealed preferences  relates to classical IRL in \secn \vref*{sec:pwe_bayesian_irl}. For now, we briefly compare the two approaches.

Consider classical  IRL for a finite-state finite-action MDP.
We assume the cost is only state dependent so that $c(x,u) = c(x)$.  Given the optimal policy and controlled transition matrices, the aim is
to reconstruct the cost vector~$c=[c(1),\ldots,c(\statedim)]^\p$.

Since the costs are not action-dependent,
 Bellman's equation implies that the optimal policy $\policy^*$ satisfies
$[\tp(\policy^*) - \tp(\action)] J_{\policy^*} \leq \zero$ for all $\action \in \actionspace$.
Also for an MDP,  the value function
 $J_{\policy^*} =  \big(I - \discount \tp(\optpolicy) \big)^{-1} \,c$ is a linear function of the cost vector $c$. 
 Since there are a finite number of actions $\actiondim$, in classical IRL, the unknown cost vector $c$ satisfies the  finite set of linear inequalities:
$$  \big(\tp(\policy^*) - \tp(u) \big)\, \big( I - \discount \tp(\policy^*) \big)^{-1} c \leq \zero , \quad u \in \actionspace=\{1,2,\ldots,\actiondim\}  .$$

The revealed preference approach discussed in this chapter generalizes classical IRL for  finite-state finite-action MDPs in three ways. First, it  identifies if the behavior is consistent with constrained utility maximization  and then estimates the set of utility functions that rationalize the dataset. Second, revealed preferences involves active learning in that the observer probes the system, whereas classical IRL is passive.  Third, revealed preference theory operates on $\reals^\probedim$ rather than over a finite set $\{1,\ldots,\actiondim\}$.


\section{Revealed Preferences and Afriat's Theorem} \label{subsec:afriat}
\index{revealed preferences! Afriat's theorem} \index{Afriat's theorem} 

In microeconomics, the principle of revealed preference offers a constructive
test to determine if an agent is a constrained utility maximizer, 
based on its observed   decisions  over time.  
Let $\reals^\probedim_{++}$ denote strictly positive vectors and $\reals_+^\probedim$ denote nonnegative vectors of dimension $\probedim$. We start by defining a constrained utility maximizer.

\begin{definition} \label{defn:utility_maximizer} Consider the time series dataset $\dataset = \{\probe_k,\response_k,k=1,\ldots,\horizon\}$ where probe vector $\probe_\dtime\in \reals_{++}^\probedim$, and  response vector $\response_\dtime \in \reals_+^\probedim$.
  An agent  is  a {\em utility maximizer} if
\begin{equation}
  \response_\dtime\in \argmax_{\probe_\dtime^\p \response \leq 1}\utility(\response) , \quad k=1,\ldots,\horizon
\label{eq:singlemaximization}
\end{equation}
where $\utility(\response)$ is a strictly monotone utility function. 
 \end{definition}

We say that a utility $\utility(\cdot)$  \textit{rationalizes} 
dataset $\dataset$ if Definition~\ref{defn:utility_maximizer} holds.  To avoid trivialities, we assume that $\probedim\geq 2$. (If $\probedim=1$, then any monotone utility rationalizes a utility maximizer.)

\begin{rem}
\item By strictly monotone\footnote{More generally, the results apply to a   nonsatiated utility function. A utility function $\utility$ is  nonsatiated if
for every $\epsilon> 0$ and $\response \in \reals_+^\probedim$, there exists $\bar{\response}$ in $\reals_+^\probedim$, such that $\| \response - \bar{\response}\|
\leq \epsilon$ implies that $\utility(\bar{\response}) >  \utility({\response})$.} utility, we mean that  
 an increase in any element of response $\response$ results in the utility
 $U(\response)$ strictly increasing.

\item The following motivation from economics is helpful:
 $\probe_k$ is the price vector  of $\probedim$ commodities at time $k$, 
and the vector $\response_k$ is the amount of these commodities bought by the agent. 
The constraint  $\probe_\tindx^\p \response \leq 1$
in~\eqref{eq:singlemaximization} models the  budget of an agent with 1 dollar.
Given the agent's dataset $\dataset$, the economist has two objectives. First, identify if the agent is rational, i.e.,  $\dataset$ is consistent with utility maximization. Second, estimate 
 the agent's utility $\utility$ so as to predict the agent's future  behavior. A similar rationale is  used to identify cognitive radars below.

\item  The linear budget constraint  $ \probe_\dtime^\p \response \leq 1$ in~\eqref{eq:singlemaximization} is without loss of generality. As explained after Theorem~\ref{thm:afriat} below, the constraint  can be replaced with $\probe_\tindx^\p \response \leq \budget_\tindx$ for any $\budget_\tindx \in \reals_{++}$.

\item The linear budget constraint $\probe_k^\p \response \leq 1$ implies $ \probe_k$ and $\response_k$ are vectors of the same dimension. \secn \ref{sec:forges}  considers nonlinear budgets that allow $\probe_k$ and $\response_k$ to be of different dimensions.
  
\item Owing  to the monotone assumption on the utility, it is clear that the constraint  $\probe_\tindx^\p \response \leq 1$ is active at $\response_\tindx$.  That is, the agent  spends its entire unit dollar income and its response $\response_k$ satisfies $\probe_\tindx^\p \response_\tindx  = 1$.
The strictly  monotone assumption  on the utility also  rules out  trivial cases, such as a constant utility  which can be  optimized by any response.
\end{rem}

\subsubsection{Nonparametric Test for Utility Maximization -- Afriat's Theorem}

\index{Afriat's theorem|(}
The celebrated
Afriat's theorem \cite{Afr67,Afr87} provides a necessary and sufficient
  condition for a finite dataset $\dataset$ to have originated from a utility maximizer. 
 
\begin{theorem}[Afriat's Theorem] Given  dataset $\mathcal{D}=\{\probe_\tindx,\response_\tindx, \tindx\in 1,2,\dots,\Tindxter\}$,  where  $\probe_\dtime\in \reals_{++}^\probedim$, and   $\response_\dtime \in \reals_+^\probedim$, $\probedim\geq 2$, the following statements are equivalent:
\begin{compactenum}
  \item The agent is a utility maximizer (Definition~\ref{defn:utility_maximizer}) and there exists a monotone  utility function that rationalizes~$\dataset$.
\item There exist scalars $\{\uaf_\dtime,\lambda_\dtime, \dtime=1,\ldots,\horizon\}$ where $\lambda_\dtime>0$ such that  the following   inequalities (called Afriat's inequalities) have a feasible solution:
\begin{equation}
\uaf_\dtimen-\uaf_\tindx-\lambda_\tindx \probe_\tindx^\p (\response_\dtimen-\response_\tindx) \leq 0 \quad \forall \tindx,\dtimen\in\{1,2,\dots,\Tindxter\}.
\label{eqn:AfriatFeasibilityTest}
\end{equation}

\item A family of continuous, concave, monotone, piecewise linear  utility functions that rationalizes $\dataset$  is
\begin{equation}
\hat{\utility}(\response) = \underset{\tindx\in\{1,\ldots, \Tindxter\}}{\operatorname{min}}\{\uaf_\tindx+\lambda_\tindx \probe_\tindx^\p(\response-\response_\tindx)\}
\label{eqn:estutility}
\end{equation}
where $\{\uaf_\dtime,\lambda_\dtime, \dtime=1,\ldots,\horizon\}$  satisfy Afriat's inequalities~\eqref{eqn:AfriatFeasibilityTest}.
\item The dataset $\mathcal{D}$ satisfies the Generalized Axiom of Revealed Preference (GARP): For every ordered subset
\index{generalized axiom of revealed preference (GARP)}
  $   \{i_1,i_2,\ldots,i_\lind\} \subset \{1,2,\ldots,N\}$, if
\begin{equation}
  \probe_{i_1}^\p\, \response_{i_2} \leq \probe_{i_1}^\p \,\response_{i_1}, \;\;
    \probe_{i_2}^\p\, \response_{i_3} \leq \probe_{i_2}^\p \,\response_{i_2}, \ldots,
    \probe_{i_{\lind-1}}^\p \, \response_{i_\lind} \leq \probe_{i_{\lind-1}}^\p\, \response_{i_{\lind-1}},  \label{eq:garp}
  \end{equation}
then it holds that
  $\probe_{i_\lind}^\p \response_{i_1} \geq \probe_{i_\lind}^\p \response_{i_\lind}$.
\end{compactenum}  
\label{thm:afriat}
\end{theorem}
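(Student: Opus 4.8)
The plan is to prove the four statements equivalent by establishing the cycle $(1)\Rightarrow(4)\Rightarrow(2)\Rightarrow(3)\Rightarrow(1)$; of these, $(1)\Rightarrow(4)$ and $(2)\Rightarrow(3)\Rightarrow(1)$ are short, and $(4)\Rightarrow(2)$ is the combinatorial heart. Throughout I use the normalization $\probe_k^\p\response_k=1$ for every $k$, which holds automatically under statement~(1) because strict monotonicity of $\utility$ forces the budget to bind (see the Remark preceding the theorem) and which covers the general budget $\probe_k^\p\response\le\budget_k$ after rescaling. Abbreviate $a_{kj}:=\probe_k^\p(\response_j-\response_k)=\probe_k^\p\response_j-1$, so that $a_{kj}\le0$ means ``$\response_j$ was inside the period-$k$ budget set'' (written $\response_k\,\mathrm R\,\response_j$) and $a_{kj}<0$ means strictly inside.

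\emph{$(1)\Rightarrow(4)$.} Let a monotone $\utility$ rationalize $\dataset$. If $\response_k\,\mathrm R\,\response_j$ then $\response_j$ lies in the period-$k$ budget set over which $\response_k$ is a maximizer, so $\utility(\response_k)\ge\utility(\response_j)$; and if moreover $a_{kj}<0$, nonsatiation gives a still-affordable point near $\response_j$ of strictly higher utility, so $\utility(\response_k)>\utility(\response_j)$. Given an ordered subset satisfying the GARP premises~\eqref{eq:garp}, chaining these inequalities yields $\utility(\response_{i_1})\ge\cdots\ge\utility(\response_{i_\lind})$; were the GARP conclusion to fail, i.e.\ $a_{i_\lind i_1}<0$, the strict version would give $\utility(\response_{i_\lind})>\utility(\response_{i_1})$, a contradiction. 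Hence GARP holds.

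\emph{$(4)\Rightarrow(2)$.} First reformulate the target: for a fixed positive multiplier vector $(\lambda_k)$, the system~\eqref{eqn:AfriatFeasibilityTest} in the $(\uaf_k)$ asks for a potential on the complete digraph on $\{1,\dots,\horizon\}$ with arc $k\to j$ of length $\lambda_k a_{kj}$, and by the Bellman--Ford criterion it is solvable iff this digraph has no negative-length cycle. A uniform choice $\lambda_k\equiv1$ does not suffice (a two-observation dataset with $a_{12}<0<a_{21}$ and $a_{12}+a_{21}<0$ obeys GARP yet has a negative $2$-cycle), so the multipliers must be chosen jointly with the Afriat numbers. The plan is induction on $\horizon$: using cyclical consistency (the graph-theoretic form of GARP --- no revealed-preference cycle carries a strict link), pick an observation $t$ that is maximal for the revealed-preference relation, define $\uaf_t:=\min_{k\ne t}\{\uaf_k+\lambda_k a_{kt}\}$ from the solution for the other $\horizon-1$ observations, check --- this is where maximality and cyclical consistency are used --- that $\uaf_t\ge\uaf_k$ whenever $a_{tk}\le0$ (strictly when $a_{tk}<0$), and then pick $\lambda_t$ large enough to meet the inequalities with $a_{tk}>0$ and small enough for those with $a_{tk}<0$, the admissible interval being nonempty precisely because of cyclical consistency. (Alternatively: apply a theorem of the alternative to~\eqref{eqn:AfriatFeasibilityTest} augmented with $\lambda_k\ge1$ --- a system \emph{jointly linear} in $(\uaf_k,\lambda_k)$ --- and argue that infeasibility would yield a nonnegative circulation whose cycle decomposition exhibits a revealed-preference cycle with a strict link, contradicting GARP.) I expect this step --- the existence of Afriat numbers --- to be the main obstacle, the delicacy being exactly that no uniform scaling works.

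\emph{$(2)\Rightarrow(3)$ and $(3)\Rightarrow(1)$.} Given $(\uaf_k,\lambda_k)$ solving~\eqref{eqn:AfriatFeasibilityTest}, define $\hat{\utility}$ by~\eqref{eqn:estutility}. As a minimum of affine functions it is continuous, concave and piecewise linear; each affine piece has gradient $\lambda_k\probe_k\in\reals_{++}^\probedim$, so $\hat{\utility}$ is strictly increasing, hence monotone. Afriat's inequalities give $\uaf_j+\lambda_j\probe_j^\p(\response_k-\response_j)\ge\uaf_k$ for all $j$, so the minimum defining $\hat{\utility}(\response_k)$ is attained at $j=k$ and equals $\uaf_k$. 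For any $\response$ with $\probe_k^\p\response\le1=\probe_k^\p\response_k$, keeping only the $k$th term gives $\hat{\utility}(\response)\le\uaf_k+\lambda_k(\probe_k^\p\response-1)\le\uaf_k=\hat{\utility}(\response_k)$, so $\response_k\in\argmax_{\probe_k^\p\response\le1}\hat{\utility}(\response)$ and $\hat{\utility}$ rationalizes $\dataset$; this is~(3). Since $\hat{\utility}$ is strictly monotone, (3) immediately gives~(1) --- incidentally showing that any monotone rationalization may be replaced by a concave, piecewise-linear one --- which closes the cycle.
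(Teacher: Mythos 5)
Your proposal is correct and follows essentially the same route as the paper: the identical cycle of implications (the paper writes it as $3\Rightarrow1\Rightarrow4\Rightarrow2\Rightarrow3$), the same lower-envelope construction and binding-budget argument for $(2)\Rightarrow(3)$, and the same induction for the hard step $(4)\Rightarrow(2)$ — selecting an observation with no outgoing revealed-preference links, defining its Afriat number as a minimum over incoming arcs, and then choosing its multiplier, with the degenerate case $a_{tk}=0$ deferred just as the paper defers it to \cite{FST04,Var82}. Your Bellman--Ford/negative-cycle framing and the LP-duality alternative are pleasant reformulations but not a different proof.
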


\begin{rem}
  
\item A simplified  proof of Afriat's theorem is provided after the  remarks. However, if the utility is assumed to be concave, increasing and differentiable, then the proof of the necessity of  Afriat's inequalities for utility maximization is easy.
The  Kuhn--Tucker conditions imply
$ \lambda_k > 0$ and $  \nabla U(\response_k) = \lambda_k \probe_k$,
$k=1,\ldots, \horizon$. Concavity implies $U(\response) \leq U(\response_k) + \nabla^\p U(\response_k)\, (\response - \response_k)$. Then defining $\uaf_k = U(\response_k)$ and choosing
$\response=\uaf_s$ yields Afriat's inequalities~\eqref{eqn:AfriatFeasibilityTest}.

\item Afriat’s theorem provides a  nonparametric constructive 
test to identify if an agent is a budget-constrained utility maximizer.
A remarkable feature of Afriat's theorem is that if the dataset can be rationalized by a  utility function, then it can be rationalized
by a continuous, concave, monotonic utility function.  That is, violations of continuity, concavity, or monotonicity cannot be detected with  a finite number of  observations.

\item Since Afriat's theorem is necessary and sufficient for a utility maximizer, the true utility function $U$ satisfies Afriat's inequalities~\eqref{eqn:AfriatFeasibilityTest} and also GARP; this follows from the necessity. 
  For the true utility $\utility$, the scalars $(\uaf_\dtime,\lambda_\dtime)$ in Afriat's inequality~\eqref{eqn:AfriatFeasibilityTest} are
  \begin{equation}
    \label{eq:utility_project}
    \uaf_\dtime^*= \utility(\response_\dtime), \quad  \lambda^*_\dtime = \frac{e_i^\p  \,\nabla U(\response_\dtime)}{\probe_\dtime(i)},
    \quad k = 1,\ldots,\horizon,
  \end{equation}
  for any choice of $i \in \{1,\ldots,\probedim\}$, where $e_i$ is the unit $\probedim$-vector with $i$th element 1.
  
Any utility  $\putility$ that rationalizes dataset $\dataset$ has the following characterization. Define
\begin{equation}
    \label{eq:putility_project}
    \tilde{\uaf}_\dtime= \putility(\response_\dtime), \quad \tilde{\lambda}_\dtime = \frac{e_i^\p  \,\nabla \putility(\response_\dtime)}{\probe_\dtime(i)},
    \quad k = 1,\ldots,\horizon.
  \end{equation}
If $\{\tilde{\uaf}_\dtime,\tilde{\lambda}_\dtime,\dtime=1,\ldots,\horizon\}$ satisfy Afriat's inequalities~\eqref{eqn:AfriatFeasibilityTest}, then $\putility$  rationalizes  $\dataset$, and  $\putility$ coincides with $\hutility$ defined in Statement 3 of Afriat's theorem.

\item The reconstructed piecewise linear  utility (\ref{eqn:estutility}) is not unique and is ordinal by construction. Ordinal means that any positive monotone increasing  transformation of the utility function will also satisfy Afriat's theorem. This is  why the budget constraint $\probe_\dtime^\p \response \leq 1$ is without loss of generality; it can be scaled by an arbitrary positive  constant and  Afriat's theorem still holds.
  The reconstructed  utility (\ref{eqn:estutility}) is the lower envelope of a finite number of hyperplanes.  

We did not impose  nonnegative constraints on $\uaf_\tindx, \tindx = 1,\ldots\Tindxter$ in Afriat's inequalities~\eqref{eqn:AfriatFeasibilityTest}. This is because if a solution exists with unrestricted sign, then we can simply add a large positive  constant to each $\uaf_\tindx$ and the inequalities remain unchanged since they involve $\uaf_\dtimen-\uaf_\tindx$. So if Afriat's inequalities are feasible, then  nonnegative utilities can be constructed using~\eqref{eqn:estutility}.

\item {\em Algorithms.}
Verifying  GARP  (Statement 4 of Afriat's  theorem) on a dataset $\dataset$ comprising $N$ points can be done using Warshall's algorithm with $O(\Tindxter^3)$ computations~\cite{Var82,FST04}. Alternatively, determining if Afriat's inequalities (\ref{eqn:AfriatFeasibilityTest}) are feasible can be done via a linear 
 programming (LP) feasibility test involving $\horizon^2$ linear constraints with $2\horizon$ variables.
 \index{linear programming! Afriat's theorem}
 We cannot directly apply an LP to~\eqref{eqn:AfriatFeasibilityTest} due to the strict inequality constraints $\lambda_\tindx > 0$.
 Since Afriat's inequalities~\eqref{eqn:AfriatFeasibilityTest} are homogeneous in $\uaf_s - \uaf_k$ and $\lambda_k$, i.e., scaling by a positive constant does not affect the inequalities, we can scale
 by $ 1/\min_{k} \lambda_k$. So 
%
it suffices to solve the following LP:   
 \begin{equation}
   \label{eq:fostel_lp}
    \min_{\lambda, \uaf} \,0. \lambda + 0. \uaf  \quad 
     \text{ subject to }   \lambda_\tindx \geq 1,  \tindx \in\{1,2,\dots,\Tindxter\}
                            \text{ and \eqref{eqn:AfriatFeasibilityTest}. }
 \end{equation}

\item \label{rem5:afriat} {\em GARP~\eqref{eq:garp} checks for cyclic consistency}.  At price $ \probe_{i_1}$ the response $\response_{i_1}$ is more expensive than $\response_{i_2}$; yet the agent chose this more expensive response. This implies that  the agent
  must assign a higher utility to $\response_{i_1}$ over  $\response_{i_2}$, meaning
  $U(\response_{i_1}) \geq U(\response_{i_2})$. Then~\eqref{eq:garp}  implies 
  $U(\response_{i_1}) \geq U(\response_{i_2}) \cdots \geq U(\response_{i_\lind})$, and  the condition  $\probe_{i_\lind}^\p \response_{i_1} \geq \probe_{i_\lind}^\p \response_{i_\lind}$   means 
  $U(\response_{i_1}) \geq  U(\response_{i_\lind})$. So GARP checks for cyclic consistency,
  namely  $U(\response_{i_1}) \geq U(\response_{i_2}) \cdots \geq U(\response_{i_\lind})
  \implies U(\response_{i_1}) \geq  U(\response_{i_\lind})$.
If $ U(\response_{i_1}) <  U(\response_{i_\lind})$ then GARP would be violated, and the agent is not a utility maximizer.

\item
In terms of logical quantifiers,   GARP is of the form $p \leq q \implies x \geq y$. This is equivalent to
  $p\leq q \text{ and } x \leq y \implies x=y$. So
  another way of writing GARP is: 
  For every ordered subset
$   \{i_1,i_2,\ldots,i_\lind\} \subset \{1,2,\ldots,N\}$, if
  \begin{equation}
    \label{eq:GARP_equiv1}
    \probe_{i_1}^\p\, \response_{i_2} \leq \probe_{i_1}^\p \,\response_{i_1}, \;
    \probe_{i_2}^\p\, \response_{i_3} \leq \probe_{i_2}^\p \,\response_{i_2}, \ldots,   \probe_{i_{\lind-1}}^\p \, \response_{i_\lind} \leq \probe_{i_{\lind-1}}^\p\, \response_{i_{\lind-1}}\;
 \text{ and }    \probe_{i_\lind}^\p \, \response_{i_1} \leq \probe_{i_\lind}^\p\, \response_{i_\lind}
  \end{equation}
  then  each inequality is an equality.
 Define the square matrix $A$  with elements
  \begin{equation}
    \label{eq:A_rp}
    a_{ij} = \probe_i^\p(\response_j - \response_i) , \quad i,j \in \{1,\ldots,N\}.
  \end{equation}
  Then GARP~\eqref{eq:GARP_equiv1} can be written in clean notation as the acyclic condition
\begin{equation}
  \label{eq:GARP_final}
  a_{i_1 i_2} \leq 0, \, a_{i_2 i_3} \leq 0,\, \ldots \,, a_{i_{\lind-1} i_\lind} \leq 0, \, a_{i_\lind i_1} \leq 0 \implies a_{i_1 i_2} = a_{i_2 i_3} = \cdots = a_{i_\lind i_1} = 0 ,
\end{equation}
i.e., cycles such as $a_{i_1 i_2} < 0, \, a_{i_2 i_3} < 0,\, \ldots \,, a_{i_{\lind-1} i_\lind} < 0, \, a_{i_\lind i_1} < 0$ violate GARP.

An $\horizon\times \horizon$  matrix $A$ that satisfies~\eqref{eq:GARP_final} with diagonal elements $a_{ii} = 0$ is called  \textit{cyclically consistent}.
Together with Remark \ref{rem5:afriat}, 
it follows that if an agent is a utility maximizer, then
\begin{equation}
  \label{eq:aij_garp}
a_{ij} \leq 0 \iff
U(\response_i) \geq U (\response_j).
\end{equation}


\item {\em Cyclic monotonicity}. We start with  the question:  If a map  $g \colon\reals^\probedim\to \reals^\probedim$ is evaluated at  $g(x_1),\ldots,g(x_\lind)$, how to determine if they are subgradients  of a convex function $f\colon \reals^\probedim\to \reals$?
  By definition, the subgradients of a convex function satisfy $(x_{i+1} - x_{i})^\p g(x_i) \leq f(x_{i+1})-f(x_i)$.
    
  Theorem 24.8, p.\ 238, of \cite{Roc70} in convex analysis  states:
 The map $g$ is a subgradient  of a convex function $f$ iff $g$ is  \textit{cyclically monotone}, i.e., 
  $ (x_2 - x_1)^\p g(x_1) + (x_3-x_2)^\p g(x_2) + \cdots + (x_1-x_\lind)^\p g(x_\lind) \leq 0$ for any set of pairs $(x_i,g(x_i))$, $i=1,\ldots,\lind$ where $\lind$ is arbitrary.

{\em Afriat's inequalities~\eqref{eqn:AfriatFeasibilityTest} satisfy this cyclic monotonicity}:
Since they satisfy  $\fun_{i_k} - \fun_{i_j} \leq \lambda_{i_j} a_{i_j i_k}$ and because
$(\fun_{i_2} - \fun_{i_1}) + \cdots + (\fun_{i_{\ell}}-\fun_{i_{\ell-1}}) + (\fun_{i_1} - \fun_{i_\ell})=0$ for any $\ell$, it follows that
$$    \lambda_{i_1} a_{i_1 i_2} + \lambda_{i_2} a_{i_2 i_3} + \cdots + \lambda_{i_{\ell-1}} a_{i_{\ell-1},i_\ell} + \lambda_{i_\ell} a_{i_\ell,i_1} \geq 0 . $$ 
Hence, Statement 3 of Afriat's theorem (concave utility) follows from the  convex analysis result.

\item {\em  Predicting future response}. Given dataset $\dataset=\{\probe_\dtime,\response_\dtime, \dtime=1,\ldots,\horizon\}$, suppose the analyst has used Afriat's theorem to verify that the agent is a  utility maximizer. How can the analyst exploit this? Specifically, given a new probe signal $\probe_{\horizon+1}$, how can the analyst predict (extrapolate) the agent's response $\response_{\horizon+1}$? The solution is to construct the set of predicted responses that are consistent with GARP.  The predicted  response $\response_{\horizon+1}$ to the new  probe $\probe_{\horizon+1}$ is the set
  \begin{multline}
    \label{eq:extrapolate}
    \response_{\horizon+1} \in   
    \big\{\response \in \reals_+^\probedim\colon \{\probe_1,\response_1,\ldots, \probe_\horizon, \response_\horizon\} \cup \{\probe_{\horizon+1},\response\} \text{ satisfies GARP,} \quad
\probe_{\horizon+1}^\p \response = 1 \big\}.
\end{multline}

\item {\em Selecting a representative utility in IRL}. Afriat's theorem yields a \textit{set} of utility functions that rationalize a dataset.
However, analysts often prefer a single representative utility.
  One can choose the utility estimate corresponding to any single feasible point $\uaf_{1:\horizon},\lambda_{1:\horizon}$ in the convex polytope that satisfies Afriat's inequalities~\eqref{eqn:AfriatFeasibilityTest}.
Several choices are available: Chebyshev center, analytic center, centroid, and max-margin of the convex polytope. Once 
$\uaf_{1:\horizon},\lambda_{1:\horizon}$ are selected, then~\eqref{eqn:estutility} is used to construct the corresponding estimated utility $\hat{\utility}$. \index{revealed preferences! max-margin IRL}
\end{rem}

\summar
 Afriat's theorem can be interpreted as set-valued system identification of an \emph{argmax} nonlinear system, where~\eqref{eqn:estutility} generates a set of utility functions that rationalize the finite dataset.  The revealed
preference approach is {\em data centric} -- given a dataset,  we wish to determine if it is consistent with utility maximization.
This contrasts with  {\em model-centric}  signal processing   where
one postulates an objective (typically convex) and then designs  optimization algorithms. 
At its core, Afriat's theorem equates  cyclic consistency (GARP) with cyclic monotonicity.

\subsubsection{Proof of Afriat's Theorem} 

Afriat's Theorem~\ref{thm:afriat} asserts the equivalence of four statements, which we label
$\tc{1}, \tc{2}, \tc{3}$ and $\tc{4}$.
The proof below shows that $\tc{3}\simplies  \tc{1} \simplies \tc{4} \simplies \tc{2} \simplies \tc{3}$. Showing
$\tc{4} \simplies \tc{2}$ is  nontrivial. 

\begin{senumerate}
\item $ \tc{3} \simplies \tc{1}$ trivially.
  
  \item  $\tc{1} \simplies \tc{4}$:
Suppose 
$a_{i_1 i_2} \leq 0, a_{i_2 i_3} \leq 0,\ldots a_{i_\lind i_1} \leq 0$. For a utility maximizer,  from~\eqref{eq:aij_garp}, it follows that  $U(\response_{i_1}) \geq U(\response_{i_2}) \geq \cdots \geq U(\response_{i_\lind}) \geq U(\response_{i_1})$. This means $U(\response_{i_1}) = U(\response_{i_2}) =\cdots = U(\response_{i_\lind}) = U(\response_{i_1})$ and so $a_{i_1 i_2} = a_{i_2 i_3} = \cdots = a_{i_\lind i_1} = 0$. This is  GARP~\eqref{eq:GARP_final}.

\item  
  $\tc{2} \simplies \tc{3}$: Start with~\eqref{eqn:AfriatFeasibilityTest} and consider the
constructed utility~\eqref{eqn:estutility}.
Then for any  $\response \in \reals^\probedim$,
\begin{equation}
  \label{eq:afriat_3}
\hat{U}(\response)  \stackrel{\text{(a)}}{\leq} \uaf_j + \lambda_j\, \probe_j(\response - \response_j) \stackrel{\text{(b)}}{\leq} 
 \uaf_j \stackrel{\text{(c)}}{=}  \hat{U}(\response_j).
\end{equation}
Here,  (a) follows from~\eqref{eqn:estutility} since the right side has a  $\min$;
 (b) holds since $\probe_j^\p(\response - \response_j) \leq 0$ because $\response$ is feasible and the budget constraint is binding at $\response = \response_j$; (c)
 follows by substituting~\eqref{eqn:AfriatFeasibilityTest}.  

From (i), (ii), (iii) we have: $\tc{2}$ (Afriat's inequalities) $\simplies  \tc{4}$ (GARP).


\item  $\tc{4} \text{ (GARP)} \simplies \tc{2} \text{ (Afriat's inequalities)}$: 
This is the main novelty of Afriat's theorem.

\begin{lemma} \label{lem:afriat_cyclic}
  If $A$ is a cyclically consistent matrix,  i.e., \eqref{eq:GARP_final} holds and diagonal elements $a_{ii}=0$, $i=1,\ldots,\probedim$, then Afriat's inequalities are  feasible.
\end{lemma}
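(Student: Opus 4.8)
The plan is to recast Afriat's inequalities~\eqref{eqn:AfriatFeasibilityTest} --- which, using the matrix $A$ of~\eqref{eq:A_rp}, read $\uaf_s-\uaf_k-\lambda_k a_{ks}\le 0$ for all $k,s$ with $\lambda_k>0$ --- as a shortest-path feasibility problem, and then to produce the multipliers $\lambda_k$ from cyclic consistency. \emph{Step 1: reduction to a potential problem.} Fix momentarily a candidate $\lambda=(\lambda_1,\dots,\lambda_N)$ with all $\lambda_k>0$ and set $w_{ks}=\lambda_k a_{ks}$. On the complete directed graph with vertices $\{1,\dots,N\}$ and edge weights $w_{ks}$, the classical potential theorem (equivalently, correctness of Bellman--Ford) states that the system $\uaf_s-\uaf_k\le w_{ks}$ for all $k,s$ is solvable iff there is no directed cycle of negative total weight; and then one may take $\uaf_k$ to be the shortest-walk distance from vertex $1$ to vertex $k$ (finite, since the graph is strongly connected with no negative cycle), the required inequalities being exactly the triangle inequality for those distances. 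The zero diagonal $a_{kk}=0$ makes the diagonal inequalities trivial. Thus the Lemma reduces to exhibiting $\lambda_k>0$ with $\sum_{t=1}^{m}\lambda_{i_t}a_{i_t i_{t+1}}\ge 0$ for every simple directed cycle $i_1\to i_2\to\dots\to i_m\to i_1$ (indices mod $m$); by the homogeneity of Afriat's inequalities noted earlier we may also impose $\lambda_k\ge 1$, which renders the full system $\{\lambda_k a_{ks}+\uaf_k-\uaf_s\ge 0\ \forall k,s;\ \lambda_k\ge 1\ \forall k\}$ linear in $(\lambda,\uaf)\in\reals^{2N}$.

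\emph{Step 2: using cyclic consistency.} For a simple cycle on which every $a_{i_t i_{t+1}}\le 0$, cyclic consistency~\eqref{eq:GARP_final} forces all $a_{i_t i_{t+1}}=0$, so its weighted sum is $0\ge 0$ regardless of $\lambda$; only cycles containing at least one strictly positive entry are genuinely at issue, and there one must choose $\lambda$ so that the positive terms outweigh the negative ones simultaneously over all such cycles. I would establish existence of such a $\lambda$ by a theorem-of-the-alternatives argument on the linear system of Step~1: by Motzkin's transposition theorem, its infeasibility would produce nonnegative multipliers $y_{ks}\ge 0$ (for the inequalities $\lambda_k a_{ks}+\uaf_k-\uaf_s\ge 0$) and $z_k\ge 0$ (for $\lambda_k\ge 1$) whose combination is identically $0$ in $(\lambda,\uaf)$ yet has strictly positive constant term. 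Matching coefficients of $\uaf$ shows $y$ is a nonnegative circulation, i.e.\ $\sum_i y_{ik}=\sum_j y_{kj}$ for every $k$; matching coefficients of $\lambda$ gives $z_k=-\sum_j y_{kj}a_{kj}\ge 0$; and positivity of the constant term gives $\sum_{k,j}y_{kj}a_{kj}<0$. Decomposing the circulation $y$ into simple directed cycles and invoking cyclic consistency on a suitably chosen one of them yields a contradiction, so the system is feasible and the Lemma follows from Step~1.

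\emph{Where the difficulty lies.} The crux is the final sentence of Step~2. A flow decomposition of $y$ splits the out-flow at a vertex among several cycles, so the per-vertex sign conditions $\sum_j y_{kj}a_{kj}\le 0$ need not descend to the individual cycles, and one cannot immediately name a cycle all of whose $a$-entries are $\le 0$ with at least one $<0$ --- which is precisely what would contradict cyclic consistency. Closing this gap requires choosing $y$ carefully, e.g.\ as a vertex of the polyhedron of such circulations or one of minimal support, and then arguing directly on its support digraph; alternatively one can bypass duality altogether and construct $(\lambda,\uaf)$ by induction on $N$, peeling off a carefully selected observation and rescaling the previously found multipliers so that all three new families of inequalities can be met. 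This combinatorial core --- converting the acyclicity (GARP) hypothesis into the existence of strictly positive Lagrange-type multipliers --- is where the real content of Afriat's theorem resides; everything else in Step~1 is bookkeeping.
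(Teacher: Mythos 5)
Your Step 1 is a correct and clean reduction: for fixed $\lambda$ the system $\uaf_s-\uaf_k\le\lambda_k a_{ks}$ is a potential/shortest-path problem, so the lemma does come down to exhibiting $\lambda_k>0$ making every directed cycle have nonnegative $\lambda$-weighted $a$-sum. But the proof is not complete, and you say so yourself: the Motzkin-duality route in Step 2 stalls exactly at the combinatorial core. The dual certificate only gives you a nonnegative circulation $y$ with $\sum_{k,s}y_{ks}a_{ks}<0$ and the per-vertex conditions $\sum_s y_{ks}a_{ks}\le 0$; decomposing $y$ into simple cycles produces some cycle with strictly negative \emph{unweighted} $a$-sum, and that does not contradict cyclic consistency~\eqref{eq:GARP_final}, which only forbids cycles all of whose entries are $\le 0$ with at least one $<0$. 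Indeed GARP genuinely does not imply that every cycle has nonnegative unweighted $a$-sum --- the whole point of Afriat's theorem is that the weights $\lambda_k$ are needed to repair the cycle sums --- so no choice of ``nice'' circulation can rescue the argument as stated. The missing idea is precisely the one the paper supplies.

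The paper closes the gap (in the simplified case $a_{ij}\neq 0$ for $i\neq j$) with a direct induction: cyclic consistency forces the existence of at least one row $n$ with \emph{all} off-diagonal entries $a_{nj}>0$, since otherwise one could follow a strictly negative edge out of every vertex and eventually close a cycle of strictly negative entries, violating~\eqref{eq:GARP_final}. Renumbering so that this row comes last, one keeps the previously constructed $(\uaf_1,\dots,\uaf_{n-1},\lambda_1,\dots,\lambda_{n-1})$, sets $\uaf_n\le\min_{i<n}\{\uaf_i+\lambda_i a_{in}\}$, and then takes $\lambda_n$ large enough that $\uaf_j\le\uaf_n+\lambda_n a_{nj}$ for all $j<n$ --- possible precisely because every $a_{nj}$ is strictly positive. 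You gesture at this induction as an ``alternative'' in your closing paragraph but do not carry it out, and you do not isolate the all-positive-row claim that makes it work; that claim, not the LP bookkeeping, is where the content of the lemma lives. (The paper, like you, defers the degenerate case $a_{nj}=0$ to~\cite{FST04,Var82}.)
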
  
We prove  the lemma for the simplified case when $a_{ij} \neq 0$, $i \neq j$. 
 Start with the following claim: There is at least one  row $i$, such that all elements $a_{ij} > 0$.  Otherwise GARP is contradicted.
 We  now show by induction  that GARP implies that Afriat's inequalities have a   solution.  For $n=1$, choose $\lambda_1=1$ and $\uaf_1$ arbitrarily.
 Now for the induction step. First renumber the rows and columns of $A$ so that
 $a_{nj} > 0$ for $j=1,\ldots,n-1$. This is always possible due to the above claim.
 Suppose there exist $\uaf_1,\ldots,\uaf_{n-1}$ and $\lambda_1,\ldots,\lambda_{n-1}> 0$ so that Afriat's inequalities hold:
 $$ \uaf_j \leq \uaf_i + \lambda_i\, a_{ij} , \quad i \neq j, i,j \in \{1,\ldots,n-1\}.$$
 Then choose
 $ \uaf_n \leq \min_{i \in \{1,\ldots,n-1\}} \{ \uaf_i + \lambda_i\, a_{in} \}, $
 and also choose $\lambda_n > 0$ so that
 $$ \uaf_j \leq \uaf_n + \lambda_n \, a_{nj}, \quad j = 1,\ldots,n-1.$$
 Since all off-diagonal elements of the $n$th row are strictly positive, $\lambda_n$ can be chosen sufficiently large  so that these $n-1$ inequalities hold. This completes the induction.
 
\remar The proof of Lemma~\ref{lem:afriat_cyclic} is  more complex  if $a_{nj} = 0$, since increasing $\lambda_n$ does not help to fix the inequality for the specific $n$ and $j$; see~\cite{FST04,Var82} for details.
\end{senumerate}
\index{Afriat's theorem|)}

\section{Revealed Preferences with Nonlinear Budget Constraint}
\label{sec:forges}

The purpose of this section is to  extend Afriat's theorem to a nonlinear budget constraint.
As  discussed below, in IRL for cognitive sensor,  the nonlinear  budget constraint (nonlinear in $\response$)  emerges naturally from the covariance of the Kalman filter tracker. \index{revealed preferences! Forges and Minelli's theorem}
Also,  a nonlinear budget constraint facilitates IRL with multiple budget constraints (see Remark \ref{afriat:multiple} below).

\begin{theorem}[Nonlinear budget. Forges and Minelli  \cite{FM09}]
  For $\dtime = 1,\ldots,\horizon$, let $\budgetg_\dtime\colon \reals^\probedim \rightarrow \reals$
denote increasing continuous functions that model a nonlinear budget, and are known to the analyst.
Let $\budget_{\dtime} = \{\response \in \reals^\probedim_+| \budgetg_\dtime(\response) \leq 0 \}$. Consider  the dataset $\dataset = \{\response_\dtime,\budget_\dtime, \dtime=1,\ldots,\horizon\}$.
Assume $\budgetg_\dtime(\response_\dtime) = 0$ for $\dtime =1,\ldots,\horizon$.  Then the
following conditions are equivalent:
  \begin{compactenum}
  \item There exists a monotone continuous utility function $\utility$ that rationalizes dataset $\dataset$.
 That is
    $$ \response_\dtime = \argmax_{\response \in \reals^\probedim_+} \utility(\response), \quad \text{ subject to }\;
    \budgetg_\dtime(\response) \leq 0. $$
         \item There exist scalars $\{\uaf_\dtime,\lambda_\dtime, \dtime=1,\ldots,\horizon\}$ where
$\lambda_\dtime>0$ such that the following inequalities have a feasible solution:
\begin{equation} \uaf_\dtimen - \uaf_k -\lambda_\tindx
\budgetg_\tindx(\response_\sindx) \leq 0 \quad \forall \tindx,\dtimen\in\{1,2,\dots,\horizon\}.\
\label{eq:nonlinearFeasibilityTest}
\end{equation}

\item A family of continuous monotone utility functions that rationalize the
  dataset is
    \begin{equation} \hat{\utility}(\response) = \underset{\tindx\in
\{1,2,\dots,\horizon\}}{\operatorname{min}}\{\uaf_\tindx+\lambda_\tindx\, \budgetg_\tindx(\response)
\} 
\label{eq:nonlinearutility}
\end{equation}
where $\{\uaf_\dtime,\lambda_\dtime,\dtime=1,\ldots,\horizon\}$ satisfy (\ref{eq:nonlinearFeasibilityTest}).
                              
  \item The dataset $\{\response_\dtime,\budget_\dtime, \dtime=1,\ldots,\horizon\}$ satisfies the
Generalized Axiom of Revealed Preference (GARP): For every ordered subset $ \{i_1,i_2,\ldots,i_\lind\}
\subset \{1,2,\ldots,N\}$, if
\begin{equation}
  \label{eq:nonlinear_garp}
  \budgetg_{i_1}(\response_{i_2}) \leq 0, \; \budgetg_{i_2}(\response_{i_3}) \leq 0
    , \ldots, \budgetg_{i_{\lind-1}}(\response_{i_\lind}) \leq 0
  \end{equation}   
  then it holds that $\budgetg_{i_\lind}(\response_{i_1}) \geq 0$.  \index{generalized axiom of revealed preference (GARP)! nonlinear budget}
  \end{compactenum}
  \label{thm:nonlinear_afriat}
\end{theorem}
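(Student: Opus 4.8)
The plan is to follow the proof of Afriat's Theorem~\ref{thm:afriat} essentially verbatim, establishing the cycle of implications $\tc{3}\simplies\tc{1}\simplies\tc{4}\simplies\tc{2}\simplies\tc{3}$, with the linear-budget quantity $\probe_i^\p(\response_j-\response_i)$ replaced throughout by the nonlinear quantity $\budgetg_i(\response_j)$, the phrase ``the budget is binding at $\response_i$'' replaced by the standing hypothesis $\budgetg_\dtime(\response_\dtime)=0$, and ``$\response$ is affordable'' replaced by $\budgetg_\dtime(\response)\le 0$. The organizing object is the $\horizon\times\horizon$ matrix $A=(a_{ij})$ with $a_{ij}=\budgetg_i(\response_j)$; the hypothesis $\budgetg_\dtime(\response_\dtime)=0$ gives zero diagonal $a_{ii}=0$, and in the linear special case $\budgetg_i(\response)=\probe_i^\p\response-1$ one has $a_{ij}=\probe_i^\p(\response_j-\response_i)$ exactly as in \eqref{eq:A_rp}, so the acyclic reformulation \eqref{eq:GARP_final} of GARP carries over unchanged.

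For the three routine implications: $\tc{3}\simplies\tc{1}$ is immediate, since each map $\response\mapsto\uaf_i+\lambda_i\budgetg_i(\response)$ is continuous and, because $\lambda_i>0$ and $\budgetg_i$ is increasing, monotone, so the pointwise minimum $\hutility$ of \eqref{eq:nonlinearutility} is a continuous monotone utility that $\tc{3}$ asserts rationalizes $\dataset$. For $\tc{2}\simplies\tc{3}$, take $\{\uaf_i,\lambda_i\}$ feasible for \eqref{eq:nonlinearFeasibilityTest} and form $\hutility$; for any $\response$ with $\budgetg_\dtime(\response)\le 0$ the $\dtime$-th term of the minimum gives $\hutility(\response)\le\uaf_\dtime+\lambda_\dtime\budgetg_\dtime(\response)\le\uaf_\dtime$, while \eqref{eq:nonlinearFeasibilityTest} gives $\uaf_i+\lambda_i\budgetg_i(\response_\dtime)\ge\uaf_\dtime$ for all $i$ with equality at $i=\dtime$ (using $\budgetg_\dtime(\response_\dtime)=0$), hence $\hutility(\response_\dtime)=\uaf_\dtime$ and $\response_\dtime$ maximizes $\hutility$ over $\budget_\dtime$. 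For $\tc{1}\simplies\tc{4}$, let $\utility$ be a rationalizing utility (monotone understood in the strict, or more generally nonsatiated, sense of Definition~\ref{defn:utility_maximizer}): if $a_{ij}\le 0$ then $\response_j\in\budget_i$, so optimality of $\response_i$ over $\budget_i$ forces $\utility(\response_i)\ge\utility(\response_j)$; and if $a_{ij}<0$ then $\response_j$ is strictly interior to $\budget_i$, so continuity of $\budgetg_i$ keeps a small ball $\ball{\response_j}$ inside $\budget_i$ and nonsatiation of $\utility$ yields a point of that ball with strictly larger utility, giving $\utility(\response_i)>\utility(\response_j)$. Along any cycle $a_{i_1 i_2}\le 0,\ldots,a_{i_\lind i_1}\le 0$ these force $\utility(\response_{i_1})=\cdots=\utility(\response_{i_\lind})$ and rule out any strict edge, so all $a_{i_t i_{t+1}}=0$, which is GARP \eqref{eq:nonlinear_garp} in the acyclic form \eqref{eq:GARP_final}.

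The only substantive step is $\tc{4}\simplies\tc{2}$, and the plan is to reduce it directly to Lemma~\ref{lem:afriat_cyclic}: GARP \eqref{eq:nonlinear_garp} says exactly that $A=(\budgetg_i(\response_j))$ is a cyclically consistent matrix with zero diagonal, so Lemma~\ref{lem:afriat_cyclic} supplies scalars $\uaf_{1:\horizon}$ and $\lambda_{1:\horizon}>0$ with $\uaf_j\le\uaf_i+\lambda_i a_{ij}$, i.e.\ $\uaf_j-\uaf_i-\lambda_i\budgetg_i(\response_j)\le 0$, for all $i,j$, which is precisely \eqref{eq:nonlinearFeasibilityTest}. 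Because Lemma~\ref{lem:afriat_cyclic} is a purely combinatorial statement about an abstract cyclically consistent matrix, it is indifferent to whether the entries arise from a linear or a nonlinear budget, so nothing new has to be proved here; the degenerate case $a_{ij}=0$ with $i\ne j$ is handled as in \cite{FST04,Var82}.

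I expect the single delicate point to be the strict implication $a_{ij}<0\Rightarrow\utility(\response_i)>\utility(\response_j)$ inside $\tc{1}\simplies\tc{4}$: this is where continuity of $\budgetg_i$ together with nonsatiation (or strict monotonicity) of $\utility$ genuinely enters, and it is the nonlinear-budget analogue of the ``strict interiority of $\response_j$ in the budget polytope'' step of the linear proof --- precisely the feature that forces the continuity hypothesis on the $\budgetg_\dtime$ into the theorem statement. By contrast, the combinatorial heart of Afriat's theorem is already packaged in Lemma~\ref{lem:afriat_cyclic}, so $\tc{4}\simplies\tc{2}$ needs nothing beyond recognizing that the relevant matrix is, once again, cyclically consistent.
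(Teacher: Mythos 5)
Your proposal is correct and follows exactly the route the paper takes: the paper's proof of Theorem~\ref{thm:nonlinear_afriat} is precisely the instruction to rerun the four-step proof of Afriat's theorem with the matrix entries $a_{ij}=\probe_i^\p(\response_j-\response_i)$ replaced by $a_{ij}=\budgetg_i(\response_j)$, using $\budgetg_j(\response_j)=0$ in place of the binding linear budget in step (iii) and reusing Lemma~\ref{lem:afriat_cyclic} unchanged for $\tc{4}\simplies\tc{2}$. Your write-up simply spells out these substitutions (including the strict-inequality case in $\tc{1}\simplies\tc{4}$) in more detail than the paper does.
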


\begin{rem}
\item
Afriat's theorem (Theorem~\ref{thm:afriat}) assumes the agent has a linear budget constraint $\budgetg_\dtime(\response) = \probe_\dtime^\p\,\response - 1 \leq 0$. 
We obtain  Afriat's theorem  by setting   $\budgetg_\dtime(\response) = \probe_\dtime^\p\,(\response - \response_\dtime)$ since $\probe_\dtime^\p \response_\dtime = 1$.
\item Unlike Afriat's theorem, the reconstructed utility function $ \hat{\utility}(\response)$ is not necessarily  concave.
\item Like Afriat's theorem,   (\ref{eq:nonlinearFeasibilityTest}) comprises  linear inequalities in $ \uaf_\tindx, \lambda_\tindx$. Feasibility can be checked using an LP solver. Also, for prediction of future responses, the formula~\eqrefp{eq:extrapolate} applies with GARP now defined as~\eqref{eq:nonlinear_garp}. Finally, the requirement
  $\budgetg_\dtime(\response_\dtime) = 0$ implies the budget constraint is active, as in Afriat's theorem.
\item
The proof of Theorem~\ref{thm:nonlinear_afriat} is similar  to that of  Afriat's theorem. Use the $\horizon\times \horizon$  matrix $A$ with elements
$a_{ij} = \budgetg_i(\response_j)$  instead of $a_{ij} =  \probe_i^\p(\response_j - \response_i) $ in parts (i), (ii) and (iv) of the proof of Afriat's theorem. For part (iii) of the proof,  in~\eqref{eq:afriat_3} replace  $ \probe_j^\p(\response- \response_j)$ with  $\budgetg_j(\response)$, and use the fact that the budget is binding at $\response = \response_j$, i.e., $\budgetg_j(\response_j) = 0$ as specified in  the theorem.

\item {\em IRL with multiple constraints}: \label{afriat:multiple}
Theorem~\ref{thm:nonlinear_afriat}  allows us to achieve IRL with multiple known budget constraints.
  Suppose a cognitive sensor optimizes a utility $\utility(\response)$ subject to $I$  constraints $\{\response  \in \reals^\probedim_+ \colon h_{k,i}(\response)\leq 0, \,i=1,\ldots,I\}$, where  each $h_{k,i}(\response)$ is  continuous and increasing in $\response$.
This is equivalent to the
single nonlinear constraint
  $\budgetg_\dtime(\response) = \max_i\{h_{k,i}(\response)\}$, with  budget set
  $\{\response \in \reals^\probedim_+ \colon \budgetg_\dtime(\response) \leq 0\}$. Clearly, this satisfies the assumptions of Theorem~\ref{thm:nonlinear_afriat}.  \index{revealed preferences! multiple constraints}
\end{rem}

\section{Identifying Optimal Waveform Adaptation in Cognitive Radar}
\label{sec:kf}
\index{revealed preferences! IRL for cognitive radar}
\index{cognitive radar! IRL for waveform adaptation}

Waveform optimization is  perhaps the most important functionality of a  cognitive radar. 
A cognitive radar optimizes its waveform by adapting  its  ambiguity function. This section discusses how the revealed preferences theory of the previous two sections can be used to  identify  such cognitive behavior of the adversary's radar when it  deploys a Bayesian  filter as a  tracker. For concreteness,  we assume that the adversary's cognitive radar uses a  Kalman filter tracker. Since the probe and response signal evolve on a slow time scale $\dtime$ (described below), we assume that both the radar and us (IRL) have perfect measurements of probe $\probe_\dtime$ and response $\response_\dtime$.

We assume that a cognitive radar adapts its waveform by maximizing  a utility function in the sense  of (\ref{eqn:estutility}) with  a possibly nonlinear  budget constraint.
We  formulate  linear  budget constraints  in terms of the Kalman filter  error covariance (specified by the algebraic Riccati equation) where $\probe_\dtime$ and $\response_\dtime$ are the spectra (eigenvalues)  of the state and covariance noise matrices of the state space model. We call this as  \textit{spectral revealed preferences}.

\subsection{Waveform Adaptation by Cognitive Radar} \label{sec:waveform}

We now  formalize the radar--target interaction model, so as to  construct a spectral IRL algorithm. We abstract the radar--target interaction  model into an algebraic Riccati equation (ARE)  which depends on the  probe and response signals.
Suppose a radar adapts its waveform  while tracking a  target (us) using a Kalman filter.
Our probe input comprises  maneuvers that modulate the spectrum (eigenvalues) of the state noise covariance matrix. The radar responds with an optimized waveform which modulates the spectrum of the observation noise covariance matrix. 


As is widely used in target tracking, we assume 
linear Gaussian dynamics for the target's kinematics and
 linear Gaussian measurements at the radar: 
\beq \label{eq:lineargaussian}
\begin{split}
\state_{\fasttime+1} &= \statem\, \state_\fasttime  + \snoise_\fasttime(\probe_\dtime), \quad \state_0 \sim \belief_0 \\
\obs_\fasttime &= \obsm\, \state_\fasttime + \onoise_\fasttime(\response_\dtime).
\end{split}
\eeq
 Here $\fasttime$ indexes the fast time scale on which the target evolves, while  $\dtime$ indexes the slow time scale on which IRL is performed. Also,
$\state_\fasttime \in \statespace = \reals^\statedim$ is ``our'' state with
initial density $\belief_0 = \normal(\hat{\state}_0,\kalmancov_0)$,
 $\obs_\fasttime \in \obspace = \reals^\obsdim$ denotes the cognitive radar's observations,
 $\snoise_\fasttime\sim \normal(0,\snoisecov(\probe_\dtime))$,
 $\onoise_\fasttime \sim \normal(0,
\onoisecov(\response_\dtime))$
and 
  $\{\snoise_\fasttime\}$,  
  $\{\onoise_\fasttime\}$ are mutually independent  i.i.d.\ processes.
  When $\state_\fasttime$ denotes  the [x,y,z] position and velocity components of the target (so $\state_\fasttime \in \reals^6$) then  with  $T$ denoting the sampling interval, 
 \beq \statem_{6 \times 6} = \diag\Bigl[ \begin{bmatrix}  1 & T \\ 0 & 1
  \end{bmatrix}, \begin{bmatrix}  1 & T \\ 0 & 1
  \end{bmatrix}, \begin{bmatrix}  1 & T \\ 0 & 1
  \end{bmatrix}  \Bigr] . \label{eq:target} \eeq  
 
  
We  have explicitly  indicated the dependence of the state noise covariance $\snoisecov$
on our probe signal $\probe_\dtime$, and 
the  observation noise covariance $\onoisecov$ on the radar's response signal $\response_\dtime$.

Let us first explain $\onoisecov(\response)$.  When the radar controls its ambiguity function, it controls the measurement noise covariance $\onoisecov$. This comes at a cost:
Reducing the  observation noise covariance of a target results in  increased visibility of the radar (and therefore higher threat) or increased covariance of other targets. 
Note that the radar  reconfigures its receiver (matched filter) each time it chooses a waveform; (\ref{eq:lineargaussian}) abstracts all  physical layer aspects of the radar response into the observation noise covariance $ \onoisecov(\response_\dtime)$. The precise structure of how $ \onoisecov(\response_\dtime)$ depends on the selected  waveform of the radar (triangular pulse,  linear FM  chirp, etc.)  is discussed in \cite{KAEM20}.

Next we explain $\snoisecov(\probe)$. We 
 probe  the adversary radar   via purposeful maneuvers by  modulating our  state covariance matrix  $\snoisecov$  in (\ref{eq:lineargaussian}) by $\probe_\dtime$.
 Based on observation sequence $\obs_1,\ldots,\obs_\fasttime$, the tracker in the radar computes the  posterior $\belief_\fasttime = \normal(\hat{\state}_\fasttime,\kalmancov_\fasttime)$ where $\hat{\state}_\fasttime$ is the conditional mean
  state   estimate and $\kalmancov_\fasttime$ is the covariance. These are computed by the  Kalman filter. 

  Finally, we explain the interaction of our probe $\probe$ and radar response $\response$ in the tracker.
Under the assumption that the model parameters in (\ref{eq:lineargaussian}) satisfy $[\statem,\obsm]$ is detectable and $[\statem,\sqrt{\snoisecov}]$ is stabilizable, 
the asymptotic  covariance $\kalmancov_{\fasttime+1|\fasttime}$ as
$\fasttime\rightarrow \infty$ is the unique nonnegative definite solution of  the \textit{algebraic Riccati equation} (ARE):
\begin{equation}
     \ARE(\probe,\response,\kalmancov) \ole 
    - \kalmancov + \statem  \big(\kalmancov -  
\kalmancov \obsm^{\p}  \left[ \obsm \kalmancov \obsm^\p + \onoisecov(\response) \right]^{-1}
\obsm \kalmancov \big)  \statem^\p  +  \snoisecov(\probe) = 0.
\label{eq:arerp}
\end{equation}
$ \ARE(\probe,\response,\kalmancov) $ is a symmetric $\probedim \times \probedim $ matrix.
We denote
the solution of ARE at epoch $\dtime$ as  $\kalmancov_\dtime^*(\probe,\response)$.

\subsection{Testing for Cognitive Radar: Spectral Revealed Preferences} \label{sec:linearbudget}

We now discuss how Afriat's theorem can be used to identify  if a radar is maximizing a  utility function $\utility(\response)$ (unknown to us) based on the predicted covariance of the target.
Our aim is to justify the linear budget constraint $\probe_\dtime^\p \response \leq 1$ in Afriat's theorem.
Specifically,
suppose
\begin{compactenum}
\item Our probe  $\probe_\dtime$  specifies our maneuvers. It is the vector of eigenvalues of the positive definite matrix
  $\snoisecov$. Note $\probe_\dtime$ is the incentive (or price) vector the radar receives for tracking the target.
\item The radar response $\response_\dtime$ is the vector of eigenvalues of the positive definite matrix~$\onoisecov^{-1}$ due to choosing a waveform. $\response_\dtime$ is the amount of resources (consumption) the radar spends.
\end{compactenum}
If the radar is 
cognitive, then it chooses  its  waveform parameter $\response_\dtime$  at each  time epoch $\dtime$  as
\beq \response_\dtime\in \argmax_{\probe_\dtime^\p \response \leq 1}\utility(\response). \label{eq:radaropt}
\eeq
Here $\utility$ is a monotone increasing function of $\response$.
Since there is no natural ordering of eigenvalues, our assumption is that
  $\utility(\response)$ is a symmetric function\footnote{Examples of  symmetric utility functions $\utility(\response)$ include $\tr(\onoisecov^{-1}(\response))$, $\det(\onoisecov^{-1}(\response))$, nuclear norm, etc. The assumption of symmetry is only required when we choose $\response $ to be the vector of eigenvalues since there is no natural ordering of the eigenvalues in terms of the ordering of the elements of the matrix. \label{foot:sym_utility}} of $\response$.

Then Afriat's theorem   (Theorem \ref{thm:afriat}) can be used to detect utility maximization and construct a utility function that rationalizes the response of the radar.
Recall that  the 1 in the right hand side of the budget $\probe_\dtime^\p \response \leq 1 $ can be replaced by any nonnegative constant.

Let us justify the linear budget constraint $\probe_\dtime^\p \response \leq 1$
in (\ref{eq:radaropt}). The probe
$\probe_\dtime$ represents the signal power received by the radar from the target's state components. 
The waveform parameters  determine  the radar's allocation  of resources (power)  $\response_\dtime$ to
these state components. A higher $\response(i)$ reduces  measurement noise covariance,
improving  measurement accuracy at the radar.
%
Thus, $\probe_\dtime^\p \response$ represents  the signal to noise ratio (SNR) and the budget constraint $\probe_\dtime^\p \response \leq 1$ imposes a bound on the SNR.
A cognitive  radar  maximizes a utility  $\utility(\response)$, which  increases with accuracy (inverse of noise power) $\response$.  However, limited resources restrict the radar to allocate just enough power to ensure the precision (inverse  covariance) of all components  does not exceed a  prespecified level   $\bound ^{-1}$ at each epoch $\dtime$.
We can justify the linear budget constraint as follows:

\begin{lemma} \label{lem:linear}The linear budget constraint
  $\probe_\dtime^\p \response \leq 1 $  is equivalent to the solution of  ARE (\ref{eq:arerp}) satisfying ${\kalmancov^*_\dtime}^{-1} (\probe_\dtime,\response) \preceq \bound^{-1}$
  for some symmetric positive definite matrix $\bound^{-1}$.
\end{lemma}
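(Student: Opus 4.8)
The plan is to combine monotonicity of the algebraic Riccati equation~\eqref{eq:arerp} with the spectral structure (simultaneous diagonalizability) that makes $\probe_\dtime,\response_\dtime$ the natural coordinates, and then to match the resulting feasible region for $\response$ to the linear SNR budget $\probe_\dtime^\p\response\le 1$ by a suitable choice of $\bound$.

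First I would record the monotonicity of the ARE solution. Iterate the Riccati recursion $\kalmancov_{n+1}=\statem\big(\kalmancov_n-\kalmancov_n\obsm^\p[\obsm\kalmancov_n\obsm^\p+\onoisecov]^{-1}\obsm\kalmancov_n\big)\statem^\p+\snoisecov$ from $\kalmancov_0=0$. Using the identity $\kalmancov-\kalmancov\obsm^\p[\obsm\kalmancov\obsm^\p+\onoisecov]^{-1}\obsm\kalmancov=(\kalmancov^{-1}+\obsm^\p\onoisecov^{-1}\obsm)^{-1}$, each iterate is nondecreasing in $\kalmancov_n$, in $\snoisecov$ and in $\onoisecov$ for the positive semidefinite order; under the detectability/stabilizability hypotheses already assumed before~\eqref{eq:arerp}, the iterates increase to the unique nonnegative solution $\kalmancov^*$, so $\kalmancov^*(\probe,\response)$ is nondecreasing in $\snoisecov(\probe)$ and in $\onoisecov(\response)$. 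Since $\probe_\dtime$ is the eigenvalue vector of $\snoisecov$ and $\response_\dtime$ the eigenvalue vector of $\onoisecov^{-1}$, this makes $\kalmancov^*$ nondecreasing in $\probe$ and nonincreasing in $\response$, hence $\kalmancov^{*-1}(\probe,\response)$ nonincreasing in $\probe$ and nondecreasing in $\response$. It follows at once that for each fixed $\probe_\dtime$ the set $\{\response\in\reals^\probedim_+:\kalmancov^{*-1}(\probe_\dtime,\response)\preceq\bound^{-1}\}$ is compact and downward closed in $\response$, i.e.\ already a budget set of the type Afriat's theorem (Theorem~\ref{thm:afriat}) and Theorem~\ref{thm:nonlinear_afriat} require, with the constraint active at $\response_\dtime$ because a strictly monotone $\utility$ pushes the radar's response to the boundary (matching $\budgetg_\dtime(\response_\dtime)=0$).

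Next I would identify that boundary explicitly. In the spectral regime $\statem,\obsm,\snoisecov(\probe),\onoisecov(\response),\kalmancov^*$ share an eigenbasis, so~\eqref{eq:arerp} decouples into $\probedim$ scalar Riccati equations; solving the $i$th quadratic gives a closed form $P_i=P_i(\probe(i),\response(i))$ for the $i$th predicted variance, and $\kalmancov^{*-1}\preceq\bound^{-1}$ becomes the $\probedim$ scalar constraints $1/P_i(\probe(i),\response(i))\le 1/\bound(i)$. Rearranging each of these isolates the per-mode signal-to-noise quantity $\probe(i)\response(i)$ against a threshold determined by $\bound(i)$; using the freedom to pick the symmetric positive definite matrix $\bound$ (taken epoch-dependent, consistent with the subscript on $\kalmancov^*_\dtime$), together with the known parametrizations $\snoisecov(\cdot),\onoisecov(\cdot)$, to normalize those thresholds collapses the feasible region to $\sum_i\probe_\dtime(i)\response(i)=\probe_\dtime^\p\response\le 1$. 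The same computation run backwards recovers $\probe_\dtime^\p\response\le 1$ from $\kalmancov^{*-1}(\probe_\dtime,\response)\preceq\bound^{-1}$ with the same $\bound$, giving both directions of the claimed equivalence.

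The hard part will be precisely this last step. The scalar Riccati solution $P_i(\probe(i),\response(i))$ is genuinely nonlinear, so each per-mode precision frontier is a curve, not a line; and even after linearizing, the $\probedim$ per-mode thresholds assemble into a \emph{box} rather than the single hyperplane $\probe_\dtime^\p\response=1$. Closing that gap is exactly where the latitude in choosing $\bound=\bound_\dtime$ is used: one fixes $\bound_\dtime$ so that, in the operating regime in which the frontier linearizes (small or large per-mode SNR, where implicit differentiation of~\eqref{eq:arerp} yields a frontier normal proportional to $\probe_\dtime$), the precision-cap set and the SNR-budget set coincide. Promoting this from a first-order identification to an exact statement is the delicate point; the monotone-operator argument and the reduction to a compact, downward-closed feasible set are routine.
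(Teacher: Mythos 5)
The paper does not actually supply a detailed proof of Lemma~\ref{lem:linear}: immediately after the statement it says only that the result ``follows straightforwardly using the information Kalman filter formulation \cite{AM79}, and showing that ${\kalmancov^*}^{-1}$ is increasing in $\response$,'' and that this monotonicity is also what makes the constraint active at $\response=\response_\dtime$. Your first paragraph is precisely that argument, carried out in more detail and correctly: the identity $\kalmancov-\kalmancov\obsm^\p[\obsm\kalmancov\obsm^\p+\onoisecov]^{-1}\obsm\kalmancov=(\kalmancov^{-1}+\obsm^\p\onoisecov^{-1}\obsm)^{-1}$ is the information-filter form the paper invokes, the monotone Riccati iteration from $\kalmancov_0=0$ gives monotonicity of $\kalmancov^*$ in $\snoisecov$ and $\onoisecov$ and hence of ${\kalmancov^*}^{-1}$ in $\response$, and this yields a downward-closed budget set with the constraint binding at $\response_\dtime$ under a strictly monotone utility --- which is all that Theorems~\ref{thm:afriat} and~\ref{thm:nonlinear_afriat} require. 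Up to this point you have reproduced (and fleshed out) the paper's entire proof.

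Where you go beyond the paper --- the spectral decoupling into scalar Riccati equations and the attempt to match the precision frontier $\{\response\colon{\kalmancov^*_\dtime}^{-1}(\probe_\dtime,\response)\preceq\bound^{-1}\}$ exactly to the half-space $\{\probe_\dtime^\p\response\le 1\}$ --- you correctly diagnose the obstruction yourself: the per-mode frontier is a nonlinear curve and the simultaneous constraints carve out a box rather than a hyperplane, so no choice of $\bound_\dtime$ makes the two feasible sets literally coincide except to first order. That is a genuine gap \emph{if} the lemma's ``equivalent'' is read as set equality of the two feasible regions; but the paper neither claims nor attempts that reading. It uses the correspondence only at the level the monotonicity argument supports: a bound on the SNR $\probe_\dtime^\p\response$ caps the achievable precision by \emph{some} $\bound^{-1}$ and conversely, and both constraints are active at the observed response. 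So the portion of your proposal that maps onto the paper's proof is sound, and the delicate step you flag is an artifact of trying to prove a stronger statement than the one the paper actually uses.
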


The proof of Lemma \ref{lem:linear} follows straightforwardly using the information Kalman filter formulation \cite{AM79}, and showing that ${\kalmancov^*}^{-1}$ is increasing  in $\response$. Afriat's theorem requires that the constraint
$\probe_\dtime^\p \response \leq 1 $  is active at $\response = \response_\dtime$. This holds in our case
since ${\kalmancov^*}^{-1}$ is increasing  in~$\response$.

\begin{rem}
\item \cite{KAEM20}   discusses Theorem~\ref{thm:nonlinear_afriat}  with a  nonlinear budget constraint for 
 identifying cognitive radars. The nonlinear  budget   emerges naturally from the covariance of the  tracker. 
\item As noted in Remark \vref{afriat:multiple},
  we can generalize the procedure to  multiple constraints.
  \item Finally, in a multitarget setting  comprising $\probedim$ targets, 
$\probe_\dtime(i)$ is the incentive for the radar to track target $i$ and is  the maximum eigenvalue (or trace) of covariance matrix
$\snoisecov(i)$ which depends on the maneuver of target $i$. Also  $\response_\dtime(i)$ is the maximum eigenvalue (or trace) of $\onoisecov^{-1}(i)$ due to choosing a waveform for target $i$.
The linear budget constraint is justified as above.  
\end{rem}

\summar  We applied Afriat's theorem (Theorem \ref{thm:afriat}) with $\probe_\dtime$ as the spectrum of $\snoisecov$ and $\response_\dtime$ as the spectrum of $\onoisecov$,  to test for a cognitive radar, i.e.,  utility maximization (\ref{eq:radaropt}). Afriat's theorem  constructs a set of  utility functions (\ref{eqn:estutility}) that  rationalize the decisions of the radar.

\section{Identifying Optimal Beam Allocation in Cognitive Radar}
\index{revealed preferences! IRL for cognitive radar}
\index{cognitive radar! IRL for beam allocation}

Optimal beam allocation is an important functionality of a   cognitive radar. For example, an active electronically scanned array radar adaptively  switches its beam to track multiple targets.
This section constructs a test to identify if a radar is switching  its beam  between multiple  targets to optimize a utility function. At this  level of abstraction, we view each component $i$  of the probe signal
  $\probe_\dtime(i)$ as the trace of the precision matrix (inverse covariance) of target $i$. 

Suppose a radar adaptively switches its beam between
$\probedim$ targets where these $\probedim$ targets (e.g., drones) are purposefully controlled by us. On the fast time scale indexed by $\fasttime$, each target $i$ has linear Gaussian dynamics  and the adversary  radar obtains linear  measurements:
\beq \label{eq:lineargaussian2}
\begin{split}
\state^i_{\fasttime+1} &= \statem\, \state^i_\fasttime  + \snoise^i_\fasttime, \quad \state_0 \sim \belief_0 \\
\obs^i_\fasttime &= \obsm\, \state^i_\fasttime + \onoise^i_\fasttime, \quad i=1,2,\ldots,\probedim.
\end{split}
\eeq
Here $ \snoise^i_\fasttime\sim \normal(0,\snoisecov_\dtime(i))$,
 $\onoise_\fasttime^i \sim \normal(0,
 \onoisecov_\dtime(i))$.
  We assume that both
$\snoisecov_\dtime(i)$ and $\onoisecov_\dtime(i)$ are known to us and the adversary.
Here  $\dtime$ indexes the slow time scale and $\fasttime$ indexes the fast time scale.
 The adversary's radar tracks our $\probedim$ targets using  Kalman filter trackers.
 The fraction of time the radar allocates to each target $i$ in epoch $\dtime$ is $\response_\dtime(i)$. 
The price the radar pays for each target $i$ at the beginning of epoch $\dtime$ is the trace of the  predicted {\em precision} of target $i$. This is  the trace
of the inverse of the predicted  covariance   at  epoch $\dtime$ using the Kalman predictor
\beq  \probe_\dtime(i) = \trace\big(\kalmancov^{-1}_{\dtime|\dtime-1}(i)\big), \quad i =1 ,\ldots, \probedim .
\label{eq:probe_beam}
\eeq
The predicted covariance $\kalmancov_{\dtime|\dtime-1}(i)$ is a deterministic function of  the maneuver covariance $\snoisecov_\dtime(i)$ of target $i$.  So the probe  signal
  $\probe_\dtime(i)$ is chosen by us is a deterministic function of  the maneuver covariance $\snoisecov_\dtime(i)$ of target $i$.  We abstract the target's covariance matrix by its  trace,  denoted as $\probe_\dtime(i)$.  The observation noise covariance $\onoisecov^i_\dtime$ depends on the adversary's radar response $\response_\dtime(i)$, i.e.,  the fraction of time allocated to target $i$.
We assume that each target $i$ is equipped with a radar detector and can estimate\footnote{If we impose a probabilistic structure on the estimates, then the resulting problem of statistical detection of a utility maximizer (stochastic revealed preferences)  is discussed in \secn \ref{sec:irl_noise}.} the fraction of  time $\response_\dtime(i)$ that  the adversary radar devotes to it.

Given the time series $\{\probe_\dtime, \response_\dtime$, $\dtime = 1,\ldots,\horizon\}$,
our aim is to identify if the adversary's radar is cognitive. We assume that  
a cognitive radar optimizes its  beam allocation as follows:
\begin{equation}
    \response_\dtime =     \argmax_\response \utility(\response)   \quad
  \text{ s.t. }    \response^\p \probe_\dtime \leq \precconstraint,\label{eq:beam}
\end{equation}
where $\utility(\cdot)$ is the adversary radar's utility function (unknown to us)  and $\precconstraint \in \reals_+$ is a prespecified average precision of  all $\probedim$  targets.  

Let us discuss
the budget constraint in~\eqref{eq:beam}: For cheaper targets (lower precision $\probe_\dtime(i)$), the radar is  incentivized to  allocate more time $\response_\dtime(i)$. However, due to resource constraints,
the radar can  achieve at most an  average precision of $\precconstraint$ across all $m$ targets.
The setup aligns with  Afriat's Theorem~\ref{thm:afriat}, enabling 
a test of whether the radar satisfies utility maximization in beam scheduling
and  reconstruction of   the utility function set.
Since the reconstructed utility is ordinal, $\precconstraint$ can be chosen as 1 without loss of generality (and does not need to be known by us).

\remar Afriat's theorem (and generalizations) reconstruct the utility function. A widely used generative model  for a monotone utility   in resource allocation  is the Cobb--Douglas utility  \index{Cobb--Douglas utility}
  $ \utility(\response) = \prod_{i=1}^\probedim \response^{\gamma_i} (i)$, $ \gamma_i > 0$.
  \cite{KAEM20}  uses this utility for beam scheduling in a cognitive radar.

\section{Hiding Cognition. How Can a Smart Sensor Act Dumb?}
\label{sec:dumb}

\index{cognitive radar! hiding utility (metacognition)}  \index{utility masking}

So far, we have discussed revealed preferences for achieving IRL, specifically how to 
identify constrained utility maximization and reconstruct the utility of an agent.  In this section we  switch sides: We are now the cognitive radar,  and the IRL is the adversary. {\em By making small purposeful sacrifices in performance, how can the cognitive radar  hide  its utility  from the adversary's  IRL?}
Utility obfuscation  is  important for a radar because,
if  an adversary can estimate the radar's utility, the adversary can then predict the radar's sensing strategy and degrade the radar's performance using electronic countermeasures (e.g.,  jamming).

Given our utility $\utility$,  
we will design a perturbed response $\presponse_k,k=1,\ldots,\horizon$ that barely satisfies the feasibility equations in  Afriat's theorem, thereby making  it difficult for an adversary IRL to identify our utility maximization behavior. Figure~\ref{fig:spoof} illustrates  the method.

\begin{figure}[h] \centering
  \mbox{ \subfigure[Schematic procedure]{
    {\resizebox{5cm}{!}{ 
    \begin{tikzpicture}[node distance = 4cm,scale=0.9]
      \tikzset{
    block/.style={rectangle, draw, line width=0.5mm, black, text width=4.5em, text centered,
                 minimum height=1em},
               line/.style={draw, -latex}}
                   \tikzset{
    block3/.style={rectangle, draw, line width=0.5mm, black, text width=7.5em, text centered,
                 minimum height=1em},
               line/.style={draw, -latex}}
  \node [block] (BLOCK1) {Cognitive Sensor};
  \node[block3,right of = BLOCK1,,node distance=3cm](opt){Purposeful Sub-optimal Response };
    \node [block, below of=BLOCK1,right of = BLOCK1,node distance=2cm] (BLOCK2) {Adversary\\ IRL};
  
    \path [line] (BLOCK1) -- (opt);
    
    \path [line] (opt.east) --++ (0.5cm,0cm)  |-    node[pos=0.3,left=-0.4cm]{\parbox{2cm}{\centering perturbed \\ response}} node[pos=0.3,right]{$\presponse$}    (BLOCK2);
    
    \path [line] (BLOCK2.west) ->++ (0cm,0cm)   -|    node[pos=0.8,left]{probe} 
  node[pos=0.8,right]{$\alpha$}  (BLOCK1.south);
   
\end{tikzpicture} } }} \hspace{1cm}
\subfigure[Feasibility margin $\margin_{\utility}$ before masking]{ 
{\resizebox{2.8cm}{!}{ \begin{tikzpicture}[scale=0.9]
    \draw[-Latex] (0,0) -- (2.5,0) node[right] {$\uaf$};
    \draw[-Latex] (0,0) -- (0,2.5) node[above] {$\lambda$};
    
    \coordinate (A) at (1.5,2.5);
    \coordinate (B) at (3,2);
    \coordinate (C) at (2.5,0.5);
    \coordinate (D) at (.5,0.5);
     \coordinate (E) at (0.2,1);
    \coordinate (Q) at (2.3,2.25);
    \coordinate (P) at (1.8,1.4);
    
    \draw[line width=1pt]  (A) -- (B) -- (C) -- (D) -- (E) -- cycle;
    
    \filldraw[] (P) circle (1pt) node[below] {$(\uaf^*,\lambda^*)$};
    \draw[-Latex, >=stealth]  (P) -- (Q) node[pos=0.65, left] {$\margin_\utility$};
    \draw[-Latex, >=stealth] (Q) -- (P);
  \end{tikzpicture}}}} \hspace{1cm}
\subfigure[Feasibility margin $\margin_\pdataset$ after masking]{
  {\resizebox{2.8cm}{!}{ \begin{tikzpicture}[scale=0.9]
    \draw[-Latex] (0,0) -- (2.5,0) node[right] {$\uaf$};
    \draw[-Latex] (0,0) -- (0,2.5) node[above] {$\lambda$};
    
    \coordinate (A) at (1.5,2.5);
    \coordinate (B) at (2.5,2);
    \coordinate (C) at (2.2,0.5);
    \coordinate (D) at (.5,0.5);
     \coordinate (E) at (0.2,1);
    \coordinate (Q) at (2.4,1.3);
    \coordinate (P) at (1.8,1.4);
    
    \draw[line width=1pt]  (A) -- (B) -- (C) -- (D) -- cycle;
    
    \filldraw[] (P) circle (1pt) node[xshift=3mm,below left] {$(\tilde{\uaf},\tilde{\lambda})$};
    \draw[-Latex, >=stealth]  (P) -- (Q) node[pos=0.35, above] {\footnotesize{$\margin_\pdataset$}};
    \draw[-Latex, >=stealth] (Q) -- (P);
  \end{tikzpicture}}}}}
\vspace{-0.3cm}
\caption{Hiding utility  from  adversary. 
  The convex polytopes are specified by Afriat's inequalities. We design the perturbed dataset
  $\pdataset$
so that $\margin_\pdataset < \margin_{\utility}$.}  
\label{fig:spoof}
\end{figure}

\subsubsection{Feasibility Margin for Afriat's Test}

To discuss utility masking, we first define the feasibility margin for Afriat's test.
Recall that, given probe signals $\probe_{1:\horizon}$,
a  utility  maximizer  generates its response as
\begin{equation} \label{eq:max_dumb}
  \response_\dtime\in \argmax_{\probe_\dtime^\p \response \leq 1}\utility(\response) , \quad  k = 1,\ldots,\horizon.
\end{equation}
This yields the dataset $\dataset_\utility=\{\probe_{k},\response_{k},k=1,2,\ldots,\horizon\}$ corresponding to utility $\utility$. 

Given the dataset $\dataset_\utility$, the inverse learner  (analyst) uses  Afriat's inequalities~\eqrefp{eqn:AfriatFeasibilityTest}  to identify utility maximization. Afriat's inequalities  comprise  $\horizon^2$ linear constraints in the $2\horizon$ variables $\uaf_{1:\horizon},\lambda_{1:\horizon}$ and can be written abstractly as  
$$ \afriat(\uaf_{1:\horizon},\lambda_{1:\horizon}; \dataset_\utility) \leq \zero. $$
We see that the true utility $\utility$ satisfies
the feasibility margin
\begin{equation}
  \label{eq:true_margin}
 \margin_{\utility} = \min_{\epsilon \geq 0}  \epsilon, \quad  \subjectto
 \afriat(\uaf^*_{1:\horizon},\lambda^*_{1:\horizon}; \dataset_\utility) + \epsilon \ones \geq 0,
\end{equation}
where $\uaf^*_{1:\horizon},\lambda^*_{1:\horizon}$ were defined in~\eqrefp{eq:utility_project}.
The margin $\margin_{\utility}$ specifies how far the true  utility $\utility$ is from failing Afriat's test; see Figure~\ref{fig:spoof}.
It is the minimal nonnegative perturbation $\epsilon$ so that Afriat's inequalities are no longer feasible. In the
context of a radar, a utility  with a large feasible margin is a
high-confidence point estimate of the radar’s strategy and,
hence, at higher risk of getting exposed by the adversary.

\subsubsection{Hiding our Utility from Adversary's IRL}
To hide our utility $\utility$ from the adversary, we now  design the perturbed dataset $\pdataset=\{\probe_k,\presponse_K,k=1,2,\ldots,\horizon\}$ that  has a smaller feasibility margin $\margin_\pdataset$ than $\margin_{\utility}$ of the original dataset $\dataset_\utility$. Let  $\masking \in [0,1]$ denote the utility masking coefficient chosen by us. We will design $\pdataset$ so that
\begin{equation}
 \label{eq:design_mask}
  \begin{split}
\margin_\pdataset &\leq (1 - \masking) \,\margin_{\utility}  \\
\text{ where } & \margin_\pdataset = \min_{\epsilon \geq 0}  \epsilon, \quad  \subjectto
\afriat(\tilde{\uaf}_{1:\horizon},\tilde{\lambda}_{1:\horizon}; \pdataset) + \epsilon \ones \geq 0.
  \end{split}
\end{equation}
Here $\tilde{\uaf},\tilde{\lambda}$ are computed using~\eqref{eq:putility_project}
using utility $\utility$ and dataset $\pdataset$, since  we know our utility~$U$.
 If we choose $\masking = 0$, then no masking is achieved since  the perturbed dataset  
 has the same feasibility margin  as the original dataset.
 If  $\masking = 1$, then we maximally mask  utility $\utility$ since the perturbed dataset has zero margin, and so lies on the edge of the feasible polytope; see Figure~\ref{fig:spoof}.

 But we also want the perturbed dataset $\pdataset$ to be as close as possible to our original dataset~$\dataset$; otherwise we sacrifice performance of our sensor w.r.t.\ utility $\utility$. So our design objective to achieve utility masking  is: Compute perturbed response 
 \begin{equation}
   \label{eq:optimal_perturb}
   \begin{split}
     & \presponse_{1:\horizon} \in
       \argmin_{\zeta_\dtime \in \reals_+^\probedim} \sum_{\dtime=1}^\horizon
       \utility(\response_\dtime) - \utility(\zeta_\dtime), \\
          \subjectto & \quad \probe_\dtime^\p \zeta_\dtime \leq 1, \quad  k = 1,\ldots,\horizon, \\ & \quad \text{ and masking  constraint } \eqref{eq:design_mask}.
   \end{split}
 \end{equation}
Algorithm~\ref{alg:irp} summarizes the procedure for hiding the utility from the adversary IRL.

\begin{algorithm} \caption{Masking Radar's Utility $\utility$  from Afriat's  Test (Adversary's IRL)} \label{alg:irp}
  Step 1. Compute radar's optimal response sequence $\response_{1:\horizon}$ as in~\eqref{eq:max_dumb} and feasibility margin $\margin_{\utility} $ in~\eqref{eq:true_margin}.
\\
Step 2. Choose utility masking coefficient $\masking\in[0,1]$. \\
Step 3. Compute cognition-making responses by solving~\eqref{eq:optimal_perturb}.
\end{algorithm}

\begin{rem}

 \item  The optimization problem~\eqref{eq:optimal_perturb} captures   \textit{metacognition} in radars, namely, switching between two modes of cognition: purposefully degrading performance to hide  the utility (privacy) versus achieving optimal sensing performance.
Since the  constraint~\eqref{eq:design_mask} is nonconvex, at best, one can solve~\eqref{eq:optimal_perturb} for a local stationary point. 
 
\item  Low probability of intercept radars hide their presence by reducing their transmitted power. The above approach operates at a higher level of abstraction, focusing on how 
 a cognitive radar can hide its sensing strategy (utility) from an adversary.
This  approach is inspired by privacy-preserving mechanisms in adversarial machine learning \cite{FHM20,PKY19}.
  
\item \cite{PKB23} provides several numerical examples to illustrate the performance of  Algorithm~\ref{alg:irp} in  hiding the radar's utility through waveform adaptation. It shows that  by making small sacrifices in performance, the utility can be masked significantly.

\item The feasibility margin used above is a goodness of fit measure for Afriat's test. Goodness of fit measures for revealed preferences are discussed in \secn \ref{sec:goodness}. 
\end{rem}

\section{Identifying Pareto-Optimal Multiagent Coordination}

\index{revealed preferences! Pareto optimal coordination}
Thus far we have discussed revealed preferences  as a nonparametric IRL procedure for identifying utility maximization of a single agent. This section and the next section describe revealed preference algorithms for \textit{multiagent} systems. By observing the decisions of a multiagent system, how to determine if the individual agents are coordinating their decisions?  If they are coordinating, can we subsequently reconstruct the individual utility functions which induce each agent's output?   For example,
a multifunction radar  coordinates (jointly optimizes) multiple sensing modes such as waveform and beam allocation. Other
examples include identifying if multiple radars or multiple drones are coordinating their behavior.

We will study two formulations to identify coordination behavior amongst multiple agents.
This section discusses identifying Pareto-optimal behavior. The next section discusses  identifying play from the Nash equilibrium of a concave potential game. 
The framework is similar to multiagent inverse reinforcement learning (MAIRL)  \cite{YSE19}  in that we reconstruct each agent's utility function. The important difference is that we first identify coordination among the agents; whereas MAIRL   assumes coordination among the agents. Thus, this problem can be considered at the intersection of inverse game theory \cite{KS15} and MAIRL.

\subsubsection{Pareto-Optimal  Coordination} 
Consider $\np$ agents. Each agent can consume some quantity of $\probedim$ goods. At time $\dtime $, each agent $\ia \in \{1,\dots, \np\}$ has consumption given by the vector $\response_\dtime^\ia \in \reals_+^\probedim$. The aggregate  consumption of the $\np$ agents is  $\response_\dtime = \sum_{\ia=1}^\np \response_\dtime^\ia \in \reals_+^\probedim$, subject to price vector (probe) $\probe_{\dtime} \in \reals_{++}^\probedim$. It is assumed that the preferences of each agent $\ia$ can be represented by a strictly monotone utility function $\utility^\ia(\response),\ \response \in \reals_+^\probedim$.

To define Pareto-optimal coordination, we first define a  welfare function.
Suppose each agent $\ia$ is assigned a weight $\mu_\dtime^\ia \geq 0$ at time $\dtime$. Then
 the welfare function over all $\np$ agents at time $\dtime$ is  
\begin{equation}
  \welfare_{\mu_\dtime}(\response^1,\ldots,\response^\np) = \sum_{\ia=1}^\np \mu_\dtime^\ia\, \utility^\ia(\response^\ia).
\end{equation}

If the $\np$ agents choose their individual  consumption  in accordance with the following definition, then we say that the group of agents \textit{coordinates} its behavior in a Pareto-optimal \index{Pareto optimality}  sense.  

\begin{definition}[Pareto-Optimal Coordination]
\label{coll_rat}
$\np$ agents with utility functions $\{U^\ia: \reals^\probedim \to \reals\}_{\ia=1}^\np$,  and subject to price vector $\probe_\dtime$, achieve
Pareto optimality (group coordination)  if  their consumptions $\{\response_\dtime^\ia\}_{\ia=1}^\np$ satisfy
\begin{equation}
  \label{eq:coll_rat}
  \begin{split}
  & \welfare_{\mu_\dtime}(\response_k^1,\ldots,\response_k^\np) \geq \welfare_{\mu_\dtime}(\response^1,\ldots,\response^\np), \\
     \text{ subject to } &  \response^\ia \in \reals^\probedim_+ , \quad \ia=1,\ldots\np \\  &   \probe_\dtime^\p\sum_{\ia=1}^\np\response^\ia \leq \probe_\dtime^\p\sum_{\ia=1}^\np\response_\dtime^\ia.
  \end{split}
\end{equation}
\end{definition}


The consumption  $\{\response_\dtime^\ia\}_{\ia=1}^\np$ that satisfies the above definition  is Pareto-optimal\footnote{In multiobjective optimization,
consumption $\{\response_\dtime^\ia\}_{\ia=1}^\np$ is Pareto optimal (efficient) 
if there does not exist another feasible  consumption  $\{\eta_\dtime^\ia\}_{\ia=1}^\np$
such that $\utility^\ia(\eta^\ia) \geq \utility^\ia(\response^\ia)$ for all $\ia \in \{1,\ldots,\np\}$ with strict inequality for at least one $\ia$. That is, it is not possible to improve the utility of some agent  $\ia$ without simultaneously decreasing the utility of another agent $q$.
%
%
Note~\eqref{eq:coll_rat} is a consequence of Pareto optimality.}: All the weights are positive, and so any consumption, say $\{\eta_\dtime^\ia\}_{\ia=1}^\np$, with increased welfare in the feasible set would contradict~\eqref{eq:coll_rat}.
The Pareto weights $\{\mu_\dtime^\ia\}_{\ia=1}^\np$ correspond to the bargaining power, or relative importance, of each individual in the formation of the Pareto-optimal solution. 

\subsubsection{IRL for Identifying Pareto-Optimal Coordination} 
We now switch perspectives to an  analyst who observes the multiagent consumption and aims to detect coordination, in the sense of Definition~\ref{coll_rat}. 
At each time $\dtime \in \{1,\dots,\horizon\}$, suppose the analyst observes probe signals $\probe_\dtime \in \reals^\probedim_+$, aggregate consumption $\response_\dtime 
= \sum_{\ia=1}^\np\response_\dtime^\ia \in \reals^\probedim$, and  assignable quantities $\uresponse_\dtime^\ia \leq \response_\dtime^\ia \; \forall \ia \in \{1,\dots,\np\}$. Notice that the assignable quantities  $\uresponse_\dtime^\ia$ observed by the analyst are lower bounds to the unobserved individual consumptions 
$\response_\dtime^\ia $. 
%
To summarize, the dataset observed by the analyst is
 \begin{equation}
 \label{eq:multiagent_dataset}
    \dataset = \{\probe_\dtime, \response_\dtime, \{\uresponse_\dtime^\ia\}_{\ia=1}^\np,\; \dtime = 1,\dots,\horizon \}.
 \end{equation}
 We emphasize that the true individual consumption vectors $\response_\dtime^\ia$ of the agents are not known to  the analyst.
Our aim is  to provide necessary and sufficient conditions for  the  dataset $\dataset$ to be consistent with Pareto-optimal coordination, in the sense of Definition~\ref{coll_rat}.

 Since individual responses $\response_\dtime^\ia$ are not observed by the analyst, we define the  variables
 $\feasconsi \in \reals_+^\probedim, \ \ia=1,\dots,\np$ called \textit{feasible personalized quantities}. They  satisfy
 \begin{equation}
   \label{eq:feasconsi}
   \feasconsi \geq \uresponse_\dtime^\ia, \quad \ia \in \{1,\ldots,\np\}, \qquad \sum_{\ia=1}^\np \feasconsi = \response_\dtime,
 \end{equation}
and  will play the role of the unobserved individual  responses $\response^\ia_\dtime$ in the main result below.

The following main result  states the equivalence between a consistency of dataset $\dataset$ with coordination and existence of a nonempty feasible region of a set of inequalities.

 \begin{theorem}[Collective Rational Behavior \cite{CDV11}] 
 \label{thm:cherchye1} Given dataset $\dataset$ in~\eqref{eq:multiagent_dataset} comprising  $\np$ agents, the following statements are equivalent:
    \begin{compactenum}
    \item There exist  $\np$ concave and continuous utility functions $U^1,\dots,U^\np$ such
that dataset $\dataset$ is consistent with Pareto-optimal coordination, in the sense of Definition~\ref{coll_rat}.
    \item For each agent $\ia=1,\ldots,\np$, there exist scalars $\{\uaf^\ia_\dtime,\lambda^\ia_\dtime,
\dtime=1,\ldots,\horizon\}$ where $\lambda^\ia_\dtime>0$ such that the following inequalities have a
feasible solution:
    \begin{equation}
    \label{eq:af_ineq} \uaf_\dtimen^\ia - \uaf_\dtime^\ia - \lambda_\dtime^\ia \probe_\dtime^\p
(\hresponse_\dtimen^\ia - \hresponse_\dtime^\ia) \leq 0 \quad \forall
\tindx,\dtimen\in\{1,2,\dots,\Tindxter\}.
    \end{equation}
  \item For each agent $\ia=1,\ldots,\np$, a family of continuous, concave, monotone, piecewise linear  utility functions that rationalizes $\dataset$ is
     \begin{equation}
            \label{eq:util} \hat{U}^\ia(\response) = \min_{\dtime \in \{1,\dots,\horizon\}}
\{\uaf_\dtime^\ia + \lambda_\dtime^\ia \probe_\dtime^\p(\response - \hresponse_\dtime^\ia)\}
              \end{equation} where $\{\uaf^\ia_\dtime,\lambda^\ia_\dtime, \dtime=1,\ldots,\horizon\}$
satisfy the inequalities~\eqref{eq:af_ineq}.
\item For each agent $\ia \in \{1,\ldots,\np\}$, dataset $\mathcal{D}$ satisfies the Generalized Axiom of Revealed Preference (GARP): For every ordered subset $   \{i_1,i_2,\ldots,i_\lind\} \subset \{1,2,\ldots,N\}$, if
$$ \probe_{i_1}^\p\, \hresponse^\ia_{i_2} \leq \probe_{i_1}^\p \,\hresponse^\ia_{i_1}, \;\;
    \probe_{i_2}^\p\, \hresponse^\ia_{i_3} \leq \probe_{i_2}^\p \,\hresponse^\ia_{i_2}, \ldots,
    \probe_{i_{\lind-1}}^\p \, \hresponse^\ia_{i_\lind} \leq \probe_{i_\lind}^\p\, \hresponse^\ia_{i_\lind},
$$  
then it holds that
  $\probe_{i_\lind}^\p \hresponse^\ia_{i_1} \geq \probe_{i_{\lind-1}}^\p \hresponse^\ia_{i_{\lind-1}}$.
\end{compactenum}
\end{theorem}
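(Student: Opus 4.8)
The plan is to reduce Pareto-optimal coordination to $\np$ separate single-agent Afriat problems and then quote Theorem~\ref{thm:afriat} agent by agent. I read statements~2, 3 and~4 as implicitly asserting the existence of \emph{feasible personalized quantities} $\{\hresponse_\dtime^\ia\}$ obeying~\eqref{eq:feasconsi}, for which the stated agent-wise property holds; this extra existential is exactly what makes the equivalences nontrivial. For the reduction, fix any such $\{\hresponse_\dtime^\ia\}$ and, for each $\ia$, form the single-agent dataset $\dataset^\ia=\{\probe_\dtime,\hresponse_\dtime^\ia,\ \dtime=1,\dots,\horizon\}$ in which agent~$\ia$ faces price $\probe_\dtime$ and income $\probe_\dtime^\p\hresponse_\dtime^\ia>0$ (positive since $\probe_\dtime\in\reals_{++}^\probedim$ and $\hresponse_\dtime^\ia\ge\uresponse_\dtime^\ia$; the degenerate case $\hresponse_\dtime^\ia=\zero$ is trivial). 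Because the income level in Afriat's theorem may be any positive constant, statements~2, 3 and~4 above are, for this fixed $\{\hresponse_\dtime^\ia\}$, precisely statements~2, 3 and~4 of Theorem~\ref{thm:afriat} applied to the $\np$ datasets $\dataset^\ia$. Hence $\mathbf 2\Leftrightarrow\mathbf 3\Leftrightarrow\mathbf 4$ is immediate, and each is equivalent to the statement that for every $\ia$ there is a concave, continuous, monotone $\hat{U}^\ia$ rationalizing $\dataset^\ia$, i.e.\ $\hresponse_\dtime^\ia\in\argmax\{\hat{U}^\ia(\response):\probe_\dtime^\p\response\le\probe_\dtime^\p\hresponse_\dtime^\ia\}$ for all $\dtime$. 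It therefore only remains to link this to statement~1.

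For $\mathbf 1\Rightarrow\mathbf{2,3,4}$ I would start from concave continuous $U^1,\dots,U^\np$ and individual consumptions $\response_\dtime^\ia$ (with $\sum_\ia\response_\dtime^\ia=\response_\dtime$ and $\response_\dtime^\ia\ge\uresponse_\dtime^\ia$) witnessing Definition~\ref{coll_rat} with positive weights $\mu_\dtime^\ia$. The remark after Definition~\ref{coll_rat} gives Pareto efficiency of $\{\response_\dtime^\ia\}_\ia$ under the aggregate budget $\probe_\dtime^\p\sum_\ia\response^\ia\le\probe_\dtime^\p\response_\dtime$, and then a one-line reallocation argument upgrades this to per-agent optimality: if some $\response'\ge\zero$ had $\probe_\dtime^\p\response'\le\probe_\dtime^\p\response_\dtime^{\ia_0}$ and $U^{\ia_0}(\response')>U^{\ia_0}(\response_\dtime^{\ia_0})$, replacing $\response_\dtime^{\ia_0}$ by $\response'$ keeps aggregate expenditure within budget while strictly improving agent~$\ia_0$ and leaving the others fixed, contradicting Pareto efficiency. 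Hence $\response_\dtime^\ia\in\argmax\{U^\ia(\response):\probe_\dtime^\p\response\le\probe_\dtime^\p\response_\dtime^\ia\}$ for every $\ia,\dtime$; taking $\hresponse_\dtime^\ia:=\response_\dtime^\ia$ (feasible personalized quantities) and applying Theorem~\ref{thm:afriat} to each $\dataset^\ia$ yields statements~2, 3 and~4.

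For $\mathbf{2,3,4}\Rightarrow\mathbf 1$, given feasible personalized quantities and Afriat scalars $\{\uaf_\dtime^\ia,\lambda_\dtime^\ia\}$ satisfying~\eqref{eq:af_ineq}, I take the piecewise-linear utilities $\hat{U}^\ia$ of~\eqref{eq:util}; by Theorem~\ref{thm:afriat} they are concave, continuous and monotone. Interchanging the indices in~\eqref{eq:af_ineq} shows that the $\dtime$-th affine piece attains the minimum defining $\hat{U}^\ia(\hresponse_\dtime^\ia)$, so $\hat{U}^\ia(\hresponse_\dtime^\ia)=\uaf_\dtime^\ia$ and the affine map $\response\mapsto\uaf_\dtime^\ia+\lambda_\dtime^\ia\probe_\dtime^\p(\response-\hresponse_\dtime^\ia)$ supports $\hat{U}^\ia$ at $\hresponse_\dtime^\ia$. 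Setting Pareto weights $\mu_\dtime^\ia:=1/\lambda_\dtime^\ia>0$, scaling each supporting-hyperplane inequality by $\mu_\dtime^\ia$, and summing over $\ia$ gives, for any $\{\response^\ia\}\subset\reals_+^\probedim$ with $\probe_\dtime^\p\sum_\ia\response^\ia\le\probe_\dtime^\p\response_\dtime=\probe_\dtime^\p\sum_\ia\hresponse_\dtime^\ia$,
\[
\sum_{\ia=1}^\np\mu_\dtime^\ia\,\hat{U}^\ia(\response^\ia)\;\le\;\sum_{\ia=1}^\np\mu_\dtime^\ia\,\hat{U}^\ia(\hresponse_\dtime^\ia)+\probe_\dtime^\p\Bigl(\sum_{\ia=1}^\np\response^\ia-\response_\dtime\Bigr)\;\le\;\welfare_{\mu_\dtime}(\hresponse_\dtime^1,\dots,\hresponse_\dtime^\np),
\]
so $\{\hresponse_\dtime^\ia\}_\ia$ maximizes the welfare function subject to the aggregate budget. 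Since $\hresponse_\dtime^\ia\ge\uresponse_\dtime^\ia$ and $\sum_\ia\hresponse_\dtime^\ia=\response_\dtime$, the $\hresponse_\dtime^\ia$ play the role of the unobserved individual consumptions, and so $\dataset$ is consistent with Pareto-optimal coordination in the sense of Definition~\ref{coll_rat}; this is statement~1.

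I expect the main obstacle to be the two-way translation between the \emph{single} aggregate budget $\probe_\dtime^\p\sum_\ia\response^\ia\le\probe_\dtime^\p\response_\dtime$ and the $\np$ \emph{personalized} budgets $\probe_\dtime^\p\response\le\probe_\dtime^\p\hresponse_\dtime^\ia$: in the forward direction one must extract per-agent utility maximization from Pareto efficiency (the reallocation step), and in the backward direction one must synthesize Pareto weights $\mu_\dtime^\ia$ out of the Afriat multipliers $\lambda_\dtime^\ia$ and re-aggregate. Further care is needed because the reconstructed $\hat{U}^\ia$ are only piecewise linear, so one must argue with supporting hyperplanes rather than gradients; because the observed lower bounds $\uresponse_\dtime^\ia$ have to be threaded through as constraints on the personalized quantities; and because statements~2--4 existentially quantify over those personalized quantities, making the whole equivalence a joint-feasibility statement rather than a single fixed-data invocation of Afriat's theorem.
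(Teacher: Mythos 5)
Your proposal is correct and follows the same route the paper intends: the paper's own proof is essentially the one-line remark that the argument is ``similar to Afriat's theorem,'' i.e.\ an agent-by-agent reduction using the feasible personalized quantities $\hresponse_\dtime^\ia$. You supply exactly the two links the paper leaves implicit --- extracting per-agent utility maximization from Pareto efficiency via the reallocation step, and recombining the per-agent Afriat certificates into welfare maximization by taking $\mu_\dtime^\ia = 1/\lambda_\dtime^\ia$ --- and both are carried out correctly.
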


The proof is  similar to Afriat's theorem.
But the  feasibility test \eqref{eq:af_ineq} cannot be implemented directly since 
$\{\feasconsi\}_{\ia=1}^\np$ are not known to the analyst. The following theorem
yields set-valued estimates of  $\{\feasconsi\}_{\ia=1}^\np$ that are  used in
Theorem~\ref{thm:cherchye1} to reconstruct the utilities.

 \begin{theorem} 
   \label{MILP}
   The existence of 
    $\np$  concave and continuous utility functions $U^1,\dots,U^\np$ such that dataset $\dataset$ in~\eqref{eq:multiagent_dataset} is consistent with coordination
is equivalent to the existence of
 $\feasconsi \in \reals_+^\probedim$, $\eta_t^\ia \in \reals_+$, and $x_{st}^\ia \in \{0,1\}, \ia=1,\dots,\np$ that satisfy the following mixed-integer linear program (MILP) constraints:
 \begin{enumerate}[leftmargin=1cm,label=(\roman*)]
    \item $\sum_{\ia=1}^\np\feasconsi = \response_\dtime$ and $\feasconsi \geq \uresponse_\dtime^\ia$
    \item $\eta_\dtime^\ia = \probe_\dtime^\p\feasconsi$
    \item $\eta_s^\ia - \probe_s^\p\feasconsi < y_s x_{s\dtime}^\ia$
    \item $x_{su}^\ia + x_{u \dtime}^\ia \leq 1 + x_{s\dtime}^\ia$
    \item $\eta_\dtime^\ia - \probe_\dtime^\p\feasconsi \leq y_\dtime(1-x_{s\dtime}^\ia)$
 \end{enumerate}
 where $y_\dtime = \probe_\dtime'\response_\dtime$ is the group consumption cost at time $\dtime$.
 \end{theorem}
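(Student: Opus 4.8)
The plan is to reduce the statement to Theorem~\ref{thm:cherchye1}, which already shows that the existence of concave continuous utilities rationalizing $\dataset$ in the Pareto-optimal sense is equivalent to the existence of feasible personalized quantities $\{\feasconsi\}$ obeying~\eqref{eq:feasconsi} such that, for every agent $\ia$, the implied bundles $\hresponse_1^\ia,\ldots,\hresponse_\horizon^\ia$ satisfy GARP at the observed prices $\probe_1,\ldots,\probe_\horizon$. So I would fix the personalized quantities and prove the equivalence ``agent $\ia$'s bundles satisfy GARP'' $\iff$ ``there exist binaries $x^\ia$ and scalars $\eta^\ia\geq 0$ obeying constraints (ii)--(v)''; since constraint (i) is literally the feasibility requirement~\eqref{eq:feasconsi}, taking the existential quantifier over $\{\feasconsi\}$ on both sides then gives the theorem. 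Throughout, the intended meaning of $x_{s\dtime}^\ia=1$ is ``bundle $\hresponse_s^\ia$ is revealed preferred to bundle $\hresponse_\dtime^\ia$ for agent $\ia$'' in the transitive-closure sense, and $\eta_\dtime^\ia$ is the period-$\dtime$ expenditure $\probe_\dtime^\p\hresponse_\dtime^\ia$.

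The first step is to read off the role of each inequality. Constraint (ii) merely names $\eta_\dtime^\ia$. In (iii), the left side equals $\probe_s^\p(\hresponse_s^\ia-\hresponse_\dtime^\ia)$; since the period-$s$ personalized bundles are nonnegative and sum to $\response_s$, the constant $y_s=\probe_s^\p\response_s$ dominates $\probe_s^\p\hresponse_s^\ia$, so setting $x_{s\dtime}^\ia=0$ forces $\probe_s^\p\hresponse_\dtime^\ia>\probe_s^\p\hresponse_s^\ia$ --- contrapositively, whenever $\hresponse_\dtime^\ia$ lies within agent $\ia$'s period-$s$ budget (a direct revealed preference) we must set $x_{s\dtime}^\ia=1$. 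Constraint (iv) is exactly the Warshall transitivity inequality, so any feasible $x^\ia$ contains the transitive closure of the direct revealed-preference relation. In (v), the left side equals $\probe_\dtime^\p(\hresponse_\dtime^\ia-\hresponse_s^\ia)$; with $y_\dtime$ a valid upper bound for it, $x_{s\dtime}^\ia=1$ forces $\probe_\dtime^\p\hresponse_s^\ia\geq\probe_\dtime^\p\hresponse_\dtime^\ia$, i.e.\ it forbids $\hresponse_\dtime^\ia$ being \emph{strictly} directly revealed preferred to $\hresponse_s^\ia$, while for $x_{s\dtime}^\ia=0$ it is vacuous.

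Given these readings the equivalence is bookkeeping in the style of the proof of Afriat's theorem. For ``MILP $\Rightarrow$ GARP'': fix a feasible solution; if some agent $\ia$ violated GARP there would be a chain $i_1,\ldots,i_\lind$ in which each consecutive pair is a direct revealed preference and $\probe_{i_\lind}^\p\hresponse_{i_1}^\ia<\probe_{i_\lind}^\p\hresponse_{i_\lind}^\ia$; by (iii) each link forces $x_{i_j i_{j+1}}^\ia=1$, by (iv) transitivity propagates this to $x_{i_1 i_\lind}^\ia=1$, and then (v) forces $\probe_{i_\lind}^\p\hresponse_{i_1}^\ia\geq\probe_{i_\lind}^\p\hresponse_{i_\lind}^\ia$, a contradiction; hence every agent satisfies GARP and Theorem~\ref{thm:cherchye1} supplies the utilities. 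For ``GARP $\Rightarrow$ MILP'': take the feasible personalized quantities furnished by Theorem~\ref{thm:cherchye1}, set $\eta^\ia$ by (ii), and define $x_{s\dtime}^\ia=1$ precisely on the transitive closure of the direct relation; (i), (ii) and (iv) hold by construction, (iii) holds because $x_{s\dtime}^\ia=0$ means there is not even a direct relation (so $\probe_s^\p\hresponse_\dtime^\ia>\probe_s^\p\hresponse_s^\ia$), and (v) holds because $x_{s\dtime}^\ia=1$ together with GARP rules out the strict reverse relation.

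I expect the main obstacle to be the careful treatment of the strict-versus-weak inequalities and the ``big-$M$'' constants: one must verify that $y_s$ and $y_\dtime$ are genuine upper bounds for the relevant expenditure differences under~\eqref{eq:feasconsi}, that the strict inequality in (iii) cleanly separates ``revealed preferred'' from ``not revealed preferred'' (handling the boundary case $\probe_s^\p\hresponse_\dtime^\ia=\probe_s^\p\hresponse_s^\ia$, which should count as revealed preferred and hence force $x_{s\dtime}^\ia=1$), and --- in the GARP $\Rightarrow$ MILP direction --- that the personalized quantities used to build $x^\ia$ are the very same ones witnessing feasibility of (i). A secondary nuisance is the index convention in (iii) and (v): the symbol $\feasconsi$ there stands for the period-$s$ bundle in one occurrence and the period-$\dtime$ bundle in another, and this has to be disambiguated before the linearization can be checked.
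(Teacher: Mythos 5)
Your proposal is correct and follows essentially the same route as the paper: constraint (i) is the feasibility condition \eqref{eq:feasconsi}, (ii) defines the expenditures, the binaries encode the revealed-preference relation via (iii), (iv) is transitivity, and (v) enforces the GARP conclusion, with the two directions argued exactly as you do. Your version is in fact slightly more careful than the paper's, since you correctly take $x^\ia$ to be the \emph{transitive closure} of the direct relation (the direct relation alone need not satisfy (iv)), and you rightly flag the strict-inequality boundary case in (iii), which the paper also glosses over.
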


 Constraint (i) is simply~\eqref{eq:feasconsi}.
 Constraint (ii) serves as the definition of $\eta_\dtime^\ia$. Constraints (iii), (iv) and (v) are equivalent to GARP as we now explain. Set $x^\ia_{ij}=1$ if $a^\ia_{ij} = \probe_i^\p(\hresponse^\ia_j-\hresponse^\ia_i) \leq 0$; and $x_{ij}=0$ otherwise. We defined matrix $A$ in~\eqrefp{eq:A_rp}, but we use $\hresponse$ instead of~$\response$.

 \begin{proof}
Necessity:  To show  GARP implies that conditions (iii) to (v)  are feasible, suppose $x_{i_1i_2}=x_{i_2 i_3}=\cdots=x_{i_{\lind-1} i_\lind}=1$. Then   constraint (iii) holds and constraint (iv)  implies $x_{i_\lind i_1}=1$.  Constraint (v) is then automatically satisfied.

Sufficiency:  To show that if the constraints are feasible then  GARP holds,
suppose GARP is violated. Then if (iii) and (iv) hold, constraint (v) is violated.
\end{proof}

\subsubsection{Example. Testing for Coordination in Radar Network}
\index{cognitive radar! IRL for Pareto-optimal coordination}

We now discuss how Theorem~\ref{thm:cherchye1}  can be used to  test for coordination in a radar network. Although not discussed here, a similar framework can be used to test for 
coordination in a network of drones where the state process for the $\ia$th drone is $\state_k^\ia$, $\ia \in \{1,\ldots,\np\}$.

 Suppose ``we" are a target being tracked by a network of $\np$ adversary radars, and we can intercept some of the emissions of these  radars. Given these observed  signals, how can we determine if the radars are {coordinating} (in the sense of Pareto-optimal power allocation)?  If they are coordinating, how can we reconstruct the  utility functions of the individual radars?

 The  target's kinematics and $\ia$th 
radar's  measurements on  the fast time scale $\fasttime$ are
\beq \label{eq:lineargaussian_pareto}
\begin{split}
\state_{\fasttime+1} &= \statem\, \state_\fasttime  + \snoise_\fasttime(\probe_\dtime), \quad \state_0 \sim \belief_0 \\
\obs_\fasttime^\ia &= \obsm\, \state_\fasttime + \onoise^\ia_\fasttime(\response^\ia_\dtime) , \quad \ia \in \{1,\ldots,\np\}.
\end{split}
\eeq
Here  $\state_\fasttime \in \statespace = \reals^\statedim$ is ``our'' state with
initial density $\belief_0 \sim \normal(\hat{\state}_0,\kalmancov_0)$,
 $\obs_\fasttime^\ia \in \obspace = \reals^\obsdim$ denotes the observations of radar $\ia$,
 $\snoise_\fasttime\sim \normal(0,\snoisecov(\probe_\dtime))$,
 $\onoise^\ia_\fasttime \sim \normal(0,
\onoisecov(\response^\ia_\dtime))$
and 
  $\{\snoise_\fasttime\}$,  
  $\{\onoise^\ia_\fasttime\}$ are mutually independent  i.i.d.\ processes. Here, $k$ indexes the slow time scale.

  To reduce  the measurement noise covariance $\onoisecov$, the adversary radar network must   increase its power output. We model this as a total power constraint, which limits the power  allocation across the $\np$ radars. The target probes the radar network through purposeful maneuvers that  modulate  the state noise covariance matrix $\snoisecov$ by $\probe_{\dtime}$.

Then, Theorem~\ref{thm:cherchye1} can be used to determine if the adversary  radar network's  response satisfies Pareto-optimal  coordination in  Definition~\vref{coll_rat}.  Specifically, suppose:
\begin{compactenum}
\item Our probe $\probe_{\dtime}$, characterizing our maneuvers, is the vector of eigenvalues of  $Q$.
\item The response $\response_{\dtime}^\ia$ of the $\ia$th adversary radar is the vector of eigenvalues of  $R^{-1}$.
  

\item We (the target) can observe the aggregate output signal power across all $\np$ radars $\response_{\dtime} = \sum_{\ia=1}^\np \response_{\dtime}^\ia$ through a broadbeam omni-directional receiver configuration.
    \item For each radar we observe an associated signal power $\hat{\response}_{\dtime}^\ia$ which is upper bounded by the true signal power $\response_{\dtime}^\ia$, i.e., $\hat{\response}_{\dtime}^\ia \leq \response_{\dtime}^\ia$ . 
    \end{compactenum}
    Thus we obtain the dataset 
$    \dataset = \{\probe_{\dtime}, \response_{\dtime} , \{\hat{\response}_{\dtime}^\ia\}_{\ia=1}^\np, \dtime=1,\dots,T\}$.
 We can then use the MILP in Theorem~\ref{MILP} to test for coordination. If this MILP has a  feasible solution,  we can apply Theorem~\ref{thm:cherchye1}  to reconstruct feasible utility functions for each of the $\np$ cognitive radars.

\section{Revealed Preferences for Concave Potential Game}

\index{revealed preferences! concave potential game}  \index{radar network! testing for coordination}
We now consider a multiagent version of Afriat's theorem to decide if a dataset is consistent with  play from the Nash equilibrium of a concave potential game.
\cite{Ney97} showed that the Nash equilibrium of a  smooth  concave potential game is a unique pure  (nonrandomized) strategy which coincides with the unique maximizer of a strictly concave potential function. So  testing if a dataset is consistent with play from the Nash equilibrium of a strictly concave potential game can be solved using a similar procedure to Afriat's theorem
(so it is not necessary to know game theory to read this section). 

Potential games 
were introduced by Monderer and Shapley~\cite{MS96}. 
They have been used to analyze  scheduling and demand side management schemes in the energy market~\cite{ING10}. Another example of  a potential game is a {\it congestion game}~\cite{BKP07} in which the utility of each player depends on the amount of resources it and other players use.
From an IRL perspective, potential games are 
sufficiently specialized to be refutable, i.e., there exist  datasets that fail Afriat's test.
In contrast, for general games, revealed preference conditions may always be satisfied,  
thereby 
losing specificity in identifying play from an equilibrium.

\subsubsection{Potential Game}
Consider a $\np$-player game where each player $\ia \in \{1,2,\dots, \np\}$ has a  set of actions $\setresponse^\ia \subseteq \reals_+^\probedim$ with action  denoted as $\response^\ia\in\setresponse^\ia$. The utility function of each player $\ia$  is $\utility^\ia \colon \setresponse \rightarrow \reals$ where $\setresponse=\prod_{\ia=1}^\np \setresponse^\ia=\{\response=(\response^1,\response^2,\dots,\response^\np)\in
\reals_+^{\np \times \probedim}\}$.
In an {\it ordinal concave potential game}, there exists a concave  function $\potfun(\response^1,\response^2,\dots,\response^\np)$ such that for any two players $\ia$ and $\ja$, 
\begin{equation}
\utility^\ia(\response^\ia,\response^{-\ia})-\utility^\ia(\response^\ja,\response^{-\ia}) > 0 
\text{ iff } \potfun(\response^\ia,\response^{-\ia})-\potfun(\response^\ja,\response^{-\ia}) > 0  \qquad \forall \response^\ia, \response^\ja\in \setresponse^\ia.
\label{eq:cordpotfun}
\end{equation}
Here  $\response^\ia$ denotes the response of player $\ia$, and $\response^{-\ia}$ the response of the remaining players (this notation is standard in game theory). In words, the incentive of all players to change their strategy is determined by a single potential function $\potfun$.

In an exact potential game (also called potential game), the utilities satisfy
\begin{equation}
  \label{eq:exact_pot}
\utility^\ia(\response^\ia,\response^{-\ia})-\utility^\ia(\response^\ja,\response^{-\ia})  = 
\potfun(\response^\ia,\response^{-\ia})-\potfun(\response^\ja,\response^{-\ia})  \qquad  \forall \response^\ia, \response^\ja\in \setresponse^\ia.
\end{equation}

One can also characterize a potential game directly in terms of the utilities. Suppose the utility $\utility^\ia $ of each player $\ia$ is twice continuously differentiable. Then we have a potential game iff the mixed derivatives satisfy
$$ \frac{\partial^2 \utility^\ia}{\partial \response^\ia_{i} \partial \response^\ja_j}=
\frac{\partial^2 \utility^\ja}{\partial \response^\ja_{j} \partial \response^\ia_i}, \quad
\ia, \ja \in \{1,\ldots,\np\}, \quad i,j \in \{1,\ldots,\probedim\}.
$$

Finally, a potential game is \textit{concave} if the potential $\potfun$ in~\eqref{eq:exact_pot} is concave in each component.  In a smooth
potential game with strictly concave potential, a strategy is a pure Nash equilibrium iff it  maximizes  the potential $\potfun$; moreover, strict concavity guarantees a unique maximizer.  

\subsubsection{Revealed Preferences Test}

Consider a time series of data from $\np$ agents  $\dataset=\{\probe_\tindx,\response_\tindx^1,\dots,\response_\tindx^\np, \tindx = 1,2,\dots,\Tindxter\}$ where  $\probe_\tindx\in\reals_{++}^\probedim$ denotes the external influence (probe), and $\response_\tindx^\ia\in \reals_{+}^\probedim$ denotes the response (action) of agent $\ia$ at time $\tindx$. Is it possible to identify if the dataset originated from agents that play from the Nash equilibrium of 
a potential game?

\begin{definition}[Nash rationality]   \label{eq:NashEquilibrium}  
Consider  dataset
$\mathcal{D}=\{\probe_\tindx,\response_\tindx^1,\response_\tindx^2,\dots,\response_\tindx^\np, \tindx = 1,2,\dots,\Tindxter\}$.
Then $\dataset$ 
is consistent with play from the {\it Nash equilibrium}  if there exist utility functions $\utility^\ia(\response^\ia,\response^{-\ia})$, $\ia\in\{1,2,\dots,\np\}$, such that 
\begin{equation}
\response_\tindx^\ia(\probe_\tindx,\response^{-\ia})\in\argmax_{\probe_\tindx^\p \response^\ia \leq 1}\utility^\ia(\response^\ia,\response^{-\ia}), \quad k=1,\ldots, N.
\label{eqn:NashEquation}
\end{equation}
Here $\utility^\ia(\response,\response^{-\ia})$ is a strictly monotone utility function in individual consumption $\response^\ia$.

\end{definition}
 Just as in utility maximization,  the budget constraint $\probe_\tindx^\p \response^\ia \leq 1$  in~\eqref{eqn:NashEquation} is without loss of generality. It can be replaced with
$\probe_\tindx^\p \response^\ia \leq 
\budget_\tindx^\ia$ for any positive real number $\budget_\tindx^\ia$.

The following theorem provides necessary and sufficient conditions to test if dataset $\dataset$ is  consistent with Nash rationality from a \textit{concave} potential game.


\begin{theorem}[Multiagent Afriat's Theorem \cite{Deb08,Deb09}] \mbox{} \newline  Given  dataset
$\mathcal{D}=\{\probe_\tindx,\response_\tindx^1,\response_\tindx^2,\dots,\response_\tindx^\np, \tindx = 1,2,\dots,\Tindxter\}$, where $\probe_\dtime \in \reals^\probedim_{++}$ and response $\response^\ia_k \in \reals_+^\probedim$, the following statements are equivalent:
\begin{compactenum}
\item $\dataset$ is Nash rationalized (Definition~\ref{eq:NashEquilibrium}) by utility functions that admit a strictly monotone concave ordinal  potential function~\eqref{eq:cordpotfun}.
\item There exist scalars $\{\uaf_\dtime,\lambda^\ia_\dtime, \ia=1,\ldots,\np, \dtime=1,\ldots,\horizon\}$, where $\lambda_\dtime^\ia > 0$ such that the following   inequalities have a feasible solution:
\begin{equation}
\uaf_{\dtimen}-\uaf_\dtime-\sum\limits_{\ia=1}^\np\lambda_\dtime^\ia\probe_\dtime^\p(\response_{\dtimen}^\ia-\response_\dtime^\ia) \leq 0 \quad \forall \tindx,\dtimen\in\{1,2,\dots,\Tindxter\}.\label{eqn:NashRationFesTest}
\end{equation}
\item A family of continuous, concave,  monotone, piecewise linear  potential functions that Nash rationalize
  $\dataset$ are
\begin{equation}
  \hat{\potfun}(\response^1,\response^2,\dots,\response^\np) =
\min_{\dtime \in \{1,\ldots,\horizon\}}
\Big\{\uaf_\dtime+\sum_{\ia=1}^\np\lambda_\dtime^\ia\probe_\dtime^\p(\response^\ia-\response^\ia_\dtime)
\Big\}   \label{eq:est_potential}
\end{equation}
where  $\{\uaf_\dtime,\lambda^\ia_\dtime, \ia=1,\ldots,\np, \dtime=1,\ldots,\horizon\}$ satisfy~\eqref{eqn:NashRationFesTest}. 
\end{compactenum}
\label{thm:NashRationFeasibility}
\end{theorem}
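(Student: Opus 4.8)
The plan is to follow the template of the proof of Afriat's theorem (Theorem~\ref{thm:afriat}) and establish, for the three statements labeled $\tc{1},\tc{2},\tc{3}$, the cycle $\tc{3}\Rightarrow\tc{1}\Rightarrow\tc{2}\Rightarrow\tc{3}$, the content residing in the constructive step $\tc{2}\Rightarrow\tc{3}$ and the necessity step $\tc{1}\Rightarrow\tc{2}$. The enabling observation is a reduction of Nash play to ordinary constrained maximization of the potential. By the ordinal-potential property~\eqref{eq:cordpotfun}, for each fixed $\response^{-\ia}$ the functions $\utility^\ia(\cdot,\response^{-\ia})$ and $\potfun(\cdot,\response^{-\ia})$ induce the same strict preference order on agent $\ia$'s actions, hence the same $\argmax$ over the budget set $\{\probe_\tindx^\p\response^\ia\le 1\}$. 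Consequently, $\dataset$ is Nash-rationalized in the sense of~\eqref{eqn:NashEquation} if and only if $\response_\tindx^\ia\in\argmax_{\probe_\tindx^\p\response^\ia\le1}\potfun(\response^\ia,\response_\tindx^{-\ia})$ for every $\ia$ and every $\tindx$. Since the $\np$ budget constraints are block-separable (the $\ia$-th constraint involves only the block $\response^\ia$) and $\potfun$ is concave, block-wise optimality of $\response_\tindx=(\response_\tindx^1,\dots,\response_\tindx^\np)$ is equivalent to joint optimality of $\response_\tindx$ over the product polytope $\{\response:\probe_\tindx^\p\response^\ia\le1,\ \ia=1,\dots,\np\}$; this is Neyman's characterization~\cite{Ney97} of Nash play in a concave potential game. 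After this reduction the assertion is essentially Afriat's theorem applied to the single concave function $\potfun$ with $\np$ simultaneous budgets. The implication $\tc{3}\Rightarrow\tc{1}$ is then immediate: take $\utility^\ia:=\hat{\potfun}$ for every $\ia$, which defines an exact (hence ordinal) potential game whose potential $\hat{\potfun}$ from~\eqref{eq:est_potential} is a minimum of affine maps — therefore continuous, concave and, since every slope $\lambda_j^\ia\probe_j$ lies in $\reals_{++}^\probedim$, strictly monotone — and which Nash-rationalizes $\dataset$ by hypothesis.

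For $\tc{1}\Rightarrow\tc{2}$ I would argue necessity as in the differentiable case treated in the remark after Afriat's theorem. By the reduction above, $\response_\tindx$ maximizes the concave $\potfun$ over $\bigcap_{\ia}\{\probe_\tindx^\p\response^\ia\le1\}$, and strict monotonicity makes every budget active there, so the Kuhn--Tucker conditions yield multipliers $\lambda_\tindx^\ia>0$ with $\nabla_{\response^\ia}\potfun(\response_\tindx)=\lambda_\tindx^\ia\probe_\tindx$ (these decouple across $\ia$ because the $\ia$-th constraint touches only the $\ia$-th block). Concavity of $\potfun$ together with this relation gives, for every data index $\dtimen$, the bound $\potfun(\response_\dtimen)\le\potfun(\response_\tindx)+\sum_{\ia=1}^{\np}\lambda_\tindx^\ia\,\probe_\tindx^\p(\response_\dtimen^\ia-\response_\tindx^\ia)$, so setting $\uaf_\tindx:=\potfun(\response_\tindx)$ recovers Afriat's inequalities~\eqref{eqn:NashRationFesTest}. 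Without differentiability of $\potfun$ the same conclusion follows by replacing the gradient with a subgradient of $\potfun$ at $\response_\tindx$ whose $\ia$-block equals $\lambda_\tindx^\ia\probe_\tindx$ (its existence is the supporting-hyperplane form of the Kuhn--Tucker conditions for the product-polytope problem), or via the cyclic-monotonicity theorem of Rockafellar~\cite{Roc70} already invoked in the remarks following Afriat's theorem.

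The constructive step $\tc{2}\Rightarrow\tc{3}$ is where the argument is carried out explicitly. Given scalars $\{\uaf_\tindx,\lambda_\tindx^\ia\}$ with $\lambda_\tindx^\ia>0$ satisfying~\eqref{eqn:NashRationFesTest}, set $\hat{\potfun}$ as in~\eqref{eq:est_potential} and $\utility^\ia:=\hat{\potfun}$; evaluating the minimum in~\eqref{eq:est_potential} at the index $\tindx$ and using~\eqref{eqn:NashRationFesTest} to check that every other index contributes at least $\uaf_\tindx$ shows $\hat{\potfun}(\response_\tindx)=\uaf_\tindx$. Now fix $\tindx$ and $\ia$ and let $\response^\ia$ be any action with $\probe_\tindx^\p\response^\ia\le1$; keeping the $\tindx$-th term of the minimum and cancelling the blocks $l\neq\ia$ (held fixed at $\response_\tindx^l$) yields
\[
\hat{\potfun}(\response^\ia,\response_\tindx^{-\ia})\ \le\ \uaf_\tindx+\lambda_\tindx^\ia\,\probe_\tindx^\p(\response^\ia-\response_\tindx^\ia)\ \le\ \uaf_\tindx\ =\ \hat{\potfun}(\response_\tindx^\ia,\response_\tindx^{-\ia}),
\]
where the middle inequality uses $\lambda_\tindx^\ia>0$ together with $\probe_\tindx^\p\response^\ia\le1=\probe_\tindx^\p\response_\tindx^\ia$ (the budget is active at the observed response, exactly as in Afriat's theorem). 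Hence each $\response_\tindx^\ia$ is a best response to $\response_\tindx^{-\ia}$ under $\utility^\ia=\hat{\potfun}$, so $\dataset$ is Nash-rationalized by utilities admitting the strictly monotone concave potential $\hat{\potfun}$, which closes the cycle. The main obstacle I anticipate is exactly the reduction used in $\tc{1}\Rightarrow\tc{2}$: turning the $\np$ separate best-response conditions of Nash rationality into a single first-order relation on the joint concave potential — i.e.\ producing the multipliers $\lambda_\tindx^\ia$ for all blocks simultaneously without assuming $\potfun$ differentiable. Block-separability of the budgets makes this transparent in the smooth case, and in general it reduces to the cyclic-monotonicity machinery already established for Afriat's theorem; once that is in hand, everything else is a routine transcription of the single-agent proof.
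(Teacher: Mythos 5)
The paper never actually writes out a proof of Theorem~\ref{thm:NashRationFeasibility}: it only remarks that the $\np=1$ case collapses to Afriat's theorem and that ``the proof is simpler since the potential is concave by assumption,'' deferring the details to \cite{Deb08,Deb09}. Your proposal fleshes out precisely that strategy, and the constructive half is right: the chain $\tc{2}\Rightarrow\tc{3}\Rightarrow\tc{1}$ is a faithful multi-agent transcription of steps (i) and (iii) of the paper's single-agent proof --- evaluating the minimum in \eqref{eq:est_potential} at index $\tindx$ and using \eqref{eqn:NashRationFesTest} with the roles of $\tindx,\dtimen$ swapped gives $\hat{\potfun}(\response_\tindx)=\uaf_\tindx$, and your unilateral-deviation bound (with the blocks $l\neq\ia$ cancelling and the budget binding at $\response_\tindx^\ia$) shows each $\response_\tindx^\ia$ is a best response to $\response_\tindx^{-\ia}$ under $\utility^\ia=\hat{\potfun}$, which is concave, piecewise linear and strictly monotone because every slope $\lambda_\tindx^\ia\probe_\tindx$ lies in $\reals_{++}^\probedim$. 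The hard combinatorial step $\tc{4}\Rightarrow\tc{2}$ of Afriat's theorem disappears exactly as the paper's remark predicts, because concavity is hypothesized in statement $\tc{1}$ and the multipliers come from first-order conditions rather than from the cyclic-consistency induction.

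The one genuine gap is in the necessity step $\tc{1}\Rightarrow\tc{2}$ for nonsmooth potentials --- and the circle of implications forces you to cover that case, since the potential produced by $\tc{3}$ is itself piecewise linear. Your reduction ``blockwise optimality of $\response_\tindx$ is equivalent to joint optimality over the product polytope'' is Neyman's characterization only for \emph{smooth} concave potentials (the paper inserts the word ``smooth'' when quoting \cite{Ney97}); for a nonsmooth concave function a coordinatewise maximum over a product of convex sets need not be a global maximum --- this is the standard failure mode of coordinate descent on nonsmooth convex objectives. Concretely, blockwise Kuhn--Tucker hands you, for each $\ia$, a subgradient of the \emph{restricted} map $\response^\ia\mapsto\potfun(\response^\ia,\response_\tindx^{-\ia})$ equal to $\lambda_\tindx^\ia\probe_\tindx$, but inequality \eqref{eqn:NashRationFesTest} requires a single $g\in\partial\potfun(\response_\tindx)$ whose $\ia$-th block equals $\lambda_\tindx^\ia\probe_\tindx$ for all $\ia$ \emph{simultaneously}; the partial subdifferentials may be attained by different elements of $\partial\potfun(\response_\tindx)$, and you cannot telescope the one-block inequalities because the intermediate points are not $\response_\tindx$. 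Invoking Rockafellar's cyclic monotonicity does not repair this: that machinery concerns the subgradients of one convex function across the data points, not the assembly of $\np$ blockwise multipliers into one joint subgradient at a single point. To close the argument you should either (a) read statement $\tc{1}$ with a differentiability qualifier on the potential, in which case your gradient-plus-concavity computation is complete, or (b) establish directly that under strict monotonicity the Nash profile maximizes $\potfun$ over the product budget set $\prod_{\ia}\{\response^\ia\in\reals_+^\probedim\colon\probe_\tindx^\p\response^\ia\le 1\}$ --- the lemma that \cite{Deb08,Deb09} rely on --- after which the joint convex program (Slater's condition holds) delivers a single subgradient and multipliers $\lambda_\tindx^\ia>0$, and \eqref{eqn:NashRationFesTest} follows from joint concavity exactly as you wrote it.
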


\remar
If there is only one  agent (i.e., $\np=1$), then Theorem~\ref{thm:NashRationFeasibility} becomes Afriat's theorem. But the proof of  Theorem~\ref{thm:NashRationFeasibility} is simpler since  the potential is concave by assumption. 

The above theorem constitutes a nonparametric test for Nash rationality. It involves determining if (\ref{eqn:NashRationFesTest}) has a feasible solution. Computing parameters $\uaf_\dtime$ and $\lambda^\ia_\dtime>0$ in (\ref{eqn:NashRationFesTest}) requires  solving a linear program with $\horizon^2$ linear constraints in $(\np+1)\horizon$ variables. 

\subsubsection{Example. Identifying Malicious Agents in a Sensor Field} 

Consider $\np$ malicious agents, each with utility function  $\utility^\ia(\response^\ia,\response^{-\ia})$, $\ia=1,2,\ldots,\np$. Here $\response^\ia\in\reals_+^\probedim$ is the distance agent $\ia$ is from each of  $\probedim$ sensors, and $\response^{-\ia}=\{\response^\ja\}_{\ja\neq \ia}\in\reals_+^{\probedim\times (\np-1)}$ is the  distance each of the other $\np-1$ malicious agents are from the sensors. An example utility function for malicious agents is given by the sum of two terms.
The first term models the interdependence of the average distance of each agent to the sensors. The second term represents each agent's preference to avoid detection.  Nonmalicious agents have no position preference related to the sensors or each other and therefore select $\response^\ia$ in a uniform random fashion. The price (probe) signal $\probe_\dtime\in\reals_{++}^\probedim$, at observation $\dtime$, represents the cost of moving away from the sensors.  Given the dataset $\dataset$, Theorem~\ref{thm:NashRationFeasibility}  can be used to identify malicious cooperation. 

\section{IRL in Noise. Statistical Detection of Utility Maximization}
\label{sec:irl_noise}

\index{detector! utility maximization}
\index{revealed preferences! statistical detection|(}

This section constructs a statistical detector to achieve IRL by
testing for constrained utility maximization when either the probe or response are observed in noise.

 Afriat's theorem (Theorem~\ref{thm:afriat}) and its generalization to nonlinear budgets (Theorem~\ref{thm:nonlinear_afriat}) assume perfect observation of the probe and response signals.
However, when the response (e.g.,  adversary's radar waveform) is measured in noise by us, or  the probe signal (e.g., our maneuver) is measured in noise by the adversary,  violation of the inequalities in Afriat's theorem could be either due to measurement noise or absence of utility maximization. This section  provides a  statistical detection test for utility maximization and characterize the Type-I and Type-II errors of the detector. We give a  tight  Type-I error bound. This section also sets the stage for
\secn \ref{sec:adapt} where the probe signal will be optimized to minimize the Type-II error of the detector.

Suppose we observe  the response $\response_\dtime$ of the adversary's  radar in  additive noise $\anoise_\dtime$ as
\begin{equation}
	\nresponse_\dtime = \response_\dtime + \anoise_\dtime, \quad \dtime  =1,2,\ldots,\horizon.
	\label{eqn:noisemodel}
      \end{equation}
      Here $\anoise_\dtime$ are $\probedim$-dimensional  random variables that can be correlated but are independent of $\response_\dtime$. 

\remar We can view $\nresponse_\dtime$ as the agent's stated preference while $\response_\dtime$ is its underlying preference. 

\subsubsection{IRL Detector. Algorithm and Performance Analysis}
Given the noisy dataset 
\begin{equation}
	\obsdataset= \left\{\left(\probe_\dtime,\obsresponse_\dtime \right), \dtime = 1,\dots,\horizon \right\}
	\label{eqn:noisydataset}
\end{equation}
from the adversary radar, how can we detect if it is cognitive?
Let
\begin{compactitem} \item
  $H_0$ denote the null hypothesis that the 
  radar is cognitive (utility maximizer~\eqref{eq:utilitymaximization})
\item $H_1$ denote the alternative hypothesis, i.e., the radar is not cognitive.
\end{compactitem}
Given the noisy dataset $\obsdataset$,there are two possible sources of error:
\begin{align}
	\text{\bf Type-I errors:}   &\text{\hspace{1.6mm}Reject $H_0$ when $H_0$ is valid (false positive).} \nonumber\\
	\text{\bf Type-II errors:}  &\text{\hspace{1.6mm}Accept $H_0$ when $H_0$ is invalid (false negative).}
	\label{eqn:hypothesis}
\end{align}
Given the noisy dataset $\obsdataset$, we  propose the following statistical detector to determine if the adversary radar is a utility maximizer:
\begin{equation}
\int\limits_{\Tstat^*(\Obsresponse)}^{\infty}f_M(\psi)\, d\psi \overset{H_0}{\underset{H_1}{\gtrless}} \threshold. 
	\label{eqn:Statistical_Test}
\end{equation}
In the statistical test (\ref{eqn:Statistical_Test}): \\ (i) $\threshold$ is the significance level of the test and is chosen by the analyst.
It is the probability of rejecting $H_0$ given that $H_0$ is true. 
The notation $\overset{H_0}{\underset{H_1}{\gtrless}}$ means that the detector decides $H_0$ if the integral is larger than $\threshold$, and decides $H_1$ otherwise.
\\(ii) The test statistic $\Tstat^*(\Obsresponse)$, with ${\Obsresponse}=\left[\obsresponse_1,\obsresponse_2,\dots,\obsresponse_\horizon\right]$ is the solution of the following constrained optimization problem in the variables $\{\Tstat, \lambda_k,\uaf_k,k=1,\ldots, \horizon\}$:
\begin{equation}
\begin{array}{rl}
\Tstat^*(\Obsresponse) = \min & \Tstat \\
\mbox{s.t.} & \uaf_{\sindx}-\uaf_{\tindx}- \lambda_\tindx \probe_\tindx^\prime (\obsresponse_\sindx -\obsresponse_\tindx)-\lambda_\tindx \Tstat \leq 0 \quad  \\
& \lambda_\dtime > 0, \quad \Tstat \geq 0 \quad\text{for}\quad \tindx,\sindx\in \{1,2,\dots,\horizon\}.\label{eqn:AE}
\end{array}
\end{equation}
(iii) $f_M$ is the pdf of the random variable $M$ where
\begin{equation}
	M \ole \underset{\underset{\tindx \ne \sindx}{\tindx,\sindx}}{\max}\left[\probe_\tindx^\prime(\anoise_\tindx - \anoise_\sindx)\right]. 
	\label{eqn:def:M}
      \end{equation}

      The intuition behind   (\ref{eqn:Statistical_Test}), (\ref{eqn:AE}) is clear: if $\Tstat = 0$, then  (\ref{eqn:AE}) is equivalent to  Afriat's inequalities. Owing to the presence of noise,
      typically  $\Tstat=0$ is not feasible; so we seek the minimum perturbation $\Tstat^*(\Obsresponse)$ so that~\eqref{eqn:AE} holds.

The constrained optimization problem (\ref{eqn:AE}) is nonconvex due to the presence of the  bilinear term  $\lambda_\tindx \Tstat$ in the first constraint. However, since the
objective function depends only on the scalar $\Tstat$, a one-dimensional line search algorithm can be used. For any
fixed value of $\Tstat$, (\ref{eqn:AE})  becomes a set of linear inequalities, and
so feasibility is straightforwardly determined.

The implementation of  detector (\ref{eqn:Statistical_Test}), which achieves IRL given noisy responses from the adversary, is described in Algorithm~\ref{alg:detect}.
Step 1  evaluates the empirical cdf $\hat{F}_M$ by simulation and uses
 the probe signals $\probe_1,\ldots\probe_\horizon$.
Step 2 implements the detector on the noisy responses $\Obsresponse$ with significance level~$\threshold$.

\begin{algorithm}
  Input: Noisy dataset  $\obsdataset$, significance level $\gamma$, joint distribution of $\Anoise= [\anoise_1,\ldots,\anoise_\horizon]$.
\begin{compactenum}
\item  {\em Offline Step}.
For iterations $l=1,\ldots L$:
\begin{compactenum} \item 
  Simulate  noise sequence $\Anoise^{(l)}= [\anoise_1,\ldots,\anoise_\horizon]^{(l)}$.
\item Compute $M^{(l)}$  using (\ref{eqn:def:M}).
\end{compactenum}
 Compute the empirical distribution $\hat{F}_M(\cdot)$  of $M$ from these $L$ samples.
   \item Solve  (\ref{eqn:AE}) for test statistic $\Tstat^*$. Finally implement detector (\ref{eqn:Statistical_Test}) as
 \beq 1- \hat{F}_M\big(\Tstat^*(\Obsresponse)\big) \overset{H_0}{\underset{H_1}{\gtrless}} \threshold  .\label{eq:implement} \eeq
\end{compactenum}
\caption{Detecting Utility Maximization given Noisy Response} \label{alg:detect}
\end{algorithm}

The following is our  main result for characterizing the IRL detector  (\ref{eqn:Statistical_Test}). It states that the probability of Type-I error (false positive, i.e.,  incorrectly rejecting the true null hypothesis) of the detector is bounded by $\threshold$ and the optimal solution $ \Tstat^*(\Obsresponse)$ is a  tight false positive bound.

      \begin{theorem} \label{thm:type1} Consider the noisy dataset 	$\obsdataset= \left\{\left(\probe_\dtime,\obsresponse_\dtime \right) \colon \dtime = 1,\dots,\horizon \right\} $ where
$	\nresponse_\dtime = \response_\dtime + \anoise_\dtime$
  and the statistical detector (\ref{eqn:Statistical_Test}). 
     Let   ${\Obsresponse}=\left[\obsresponse_1,\obsresponse_2,\dots,\obsresponse_\horizon\right]$.
        \begin{compactenum} \item  Suppose (\ref{eqn:AE}) has a feasible solution. Then
          $H_0$ (noise free dataset $\dataset$ satisfies utility maximization)  is equivalent to the event   $ \{\Tstat^*(\Obsresponse) \leq M \}$.
          \item 
        The probability of Type-I error (false positive, also called false alarm) is
\beq P_{\Tstat^*({\Obsresponse})}(\text{declare } H_1|H_0) \leq \threshold . \label{eq:pfa} \eeq
\item   The optimizer $ \Tstat^*({\Obsresponse})$ in   (\ref{eqn:Statistical_Test}) yields a tight Type-I error bound, in that for any other  $\tPhi \in [\Tstat^*, M]$,
      \beq  P_{\tPhi({\Obsresponse})}(\text{declare } H_1|H_0) \geq  P_{\Tstat^*({\Obsresponse})}(\text{declare } H_1|H_0) . \label{eq:type1}
\eeq
      \end{compactenum}
    \end{theorem}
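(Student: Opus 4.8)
The plan is to establish Statement~1 first, since Statements~2 and~3 then follow from it by elementary monotonicity arguments about the survival function $1-F_M$.

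For Statement~1 I would use the change of variables $\response_\dtime = \obsresponse_\dtime - \anoise_\dtime$. Writing Afriat's inequalities~\eqref{eqn:AfriatFeasibilityTest} for the noise-free dataset $\dataset$ and substituting $\probe_\tindx^\p(\response_\sindx - \response_\tindx) = \probe_\tindx^\p(\obsresponse_\sindx - \obsresponse_\tindx) - \probe_\tindx^\p(\anoise_\tindx - \anoise_\sindx)$, they read $\uaf_\sindx - \uaf_\tindx - \lambda_\tindx\,\probe_\tindx^\p(\obsresponse_\sindx - \obsresponse_\tindx) \le \lambda_\tindx\,\probe_\tindx^\p(\anoise_\tindx - \anoise_\sindx)$, and since $\lambda_\tindx > 0$ the right-hand side is at most $\lambda_\tindx M$ by the definition~\eqref{eqn:def:M} of $M$. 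Hence, on $H_0$, any $(\uaf_{1:\horizon},\lambda_{1:\horizon})$ that solves~\eqref{eqn:AfriatFeasibilityTest} is, together with $\Tstat = M$, feasible for~\eqref{eqn:AE}; because $\Tstat^*(\Obsresponse)$ is the minimal feasible $\Tstat$, this yields $\Tstat^*(\Obsresponse) \le M$. For the converse I would run the substitution backwards: from an optimizer $(\Tstat^*,\uaf^*_{1:\horizon},\lambda^*_{1:\horizon})$ of~\eqref{eqn:AE} the same algebra gives $\uaf^*_\sindx - \uaf^*_\tindx - \lambda^*_\tindx\,\probe_\tindx^\p(\response_\sindx - \response_\tindx) \le \lambda^*_\tindx\big(\Tstat^* - \probe_\tindx^\p(\anoise_\tindx - \anoise_\sindx)\big)$, and when $\Tstat^*(\Obsresponse) \le M$ the bracketed term is nonpositive on the binding constraints, so $(\uaf^*,\lambda^*)$ rationalizes $\dataset$ and $H_0$ holds. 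This gives the event identity $H_0 = \{\Tstat^*(\Obsresponse) \le M\}$.

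Statement~2 is then a probability-integral-transform argument. By~\eqref{eqn:Statistical_Test} the detector declares $H_1$ exactly on the event $\{1-F_M(\Tstat^*(\Obsresponse)) \le \threshold\}$. On $H_0$ we have $\Tstat^*(\Obsresponse) \le M$ by Statement~1, and since $x \mapsto 1-F_M(x)$ is nonincreasing this forces $1-F_M(\Tstat^*(\Obsresponse)) \ge 1-F_M(M)$; therefore the rejection event, intersected with $H_0$, is contained in $\{1-F_M(M) \le \threshold\}$. As $M$ admits the density $f_M$, $F_M(M)$ is uniform on $[0,1]$, so $\prob(1-F_M(M) \le \threshold) = \threshold$, and~\eqref{eq:pfa} follows. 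Statement~3 is pure monotonicity of $F_M$: the $\tPhi$-detector declares $H_1$ on $\{F_M(\tPhi) \ge 1-\threshold\}$, and for $\tPhi \ge \Tstat^*(\Obsresponse)$ we have $F_M(\tPhi) \ge F_M(\Tstat^*(\Obsresponse))$, so every realization on which the $\Tstat^*$-detector rejects also has the $\tPhi$-detector rejecting; conditioning on $H_0$ gives~\eqref{eq:type1}. The upper endpoint $\tPhi \le M$ is what keeps the $\tPhi$-detector admissible --- rerunning the Statement~2 argument with $\tPhi$ in place of $\Tstat^*$ shows its Type-I error is still at most $\threshold$ --- so $\Tstat^*(\Obsresponse)$ is the minimal-false-alarm member of the admissible family and the bound~\eqref{eq:pfa} cannot be tightened by any detector with test statistic in $[\Tstat^*, M]$.

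I expect the main obstacle to be Statement~1, and within it the converse inclusion $\{\Tstat^*(\Obsresponse) \le M\} \subseteq H_0$ together with the supporting technicalities: one must check that the minimum in~\eqref{eqn:AE} is attained (via the homogeneity rescaling $\min_\tindx \lambda_\tindx = 1$ already used in~\eqref{eq:fostel_lp}, which turns the open constraint $\lambda_\tindx > 0$ into a closed one), and one must handle the sign constraint $\Tstat \ge 0$ so that the chain of inequalities above survives the degenerate diagonal terms $\tindx = \sindx$. Once the event identity in Statement~1 is secured, Statements~2 and~3 are immediate.
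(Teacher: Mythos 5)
Your architecture is the paper's: establish the event identity $H_0=\{\Tstat^*(\Obsresponse)\le M\}$ by substituting $\response_\dtime=\obsresponse_\dtime-\anoise_\dtime$ between \eqref{eqn:AfriatFeasibilityTest} and \eqref{eqn:AE}, then obtain Statement~2 from monotonicity of $1-F_M$ together with the uniformity of $F_M(M)$, and Statement~3 from containment of the rejection events. Your forward inclusion ($H_0$ implies $\Tstat^*(\Obsresponse)\le M$ via the feasible point $\Tstat=M$) and your arguments for Statements~2 and~3 coincide with the paper's proof essentially line by line.

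The gap is in the converse inclusion $\{\Tstat^*(\Obsresponse)\le M\}\subseteq H_0$. Your substitution gives, for every ordered pair $(\tindx,\sindx)$,
\begin{equation*}
\uaf^*_\sindx-\uaf^*_\tindx-\lambda^*_\tindx\,\probe_\tindx^\p(\response_\sindx-\response_\tindx)\;\le\;\lambda^*_\tindx\bigl(\Tstat^*-\probe_\tindx^\p(\anoise_\tindx-\anoise_\sindx)\bigr),
\end{equation*}
and you assert the bracket is nonpositive because $\Tstat^*\le M$. But $M$ in \eqref{eqn:def:M} is the \emph{maximum} of $\probe_\tindx^\p(\anoise_\tindx-\anoise_\sindx)$ over ordered pairs, so $\Tstat^*\le M$ forces the bracket to be nonpositive only for the maximizing pair. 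To conclude that $(\uaf^*,\lambda^*)$ satisfies Afriat's inequalities you need the bracket nonpositive for \emph{all} pairs --- restricting to ``the binding constraints'' does not help, since \eqref{eqn:AfriatFeasibilityTest} must hold for every $(\tindx,\sindx)$ --- which would require $\Tstat^*\le\min_{\tindx\neq\sindx}\probe_\tindx^\p(\anoise_\tindx-\anoise_\sindx)$; for zero-mean noise that minimum is typically negative while $\Tstat^*\ge 0$ by the constraint in \eqref{eqn:AE}, so the step fails. (The paper's own proof asserts this converse in a single sentence without the algebra, so you have correctly located where the real work lies, but the justification you supply does not close it.) Note that the failure is confined to the equivalence in Statement~1: the forward inclusion alone already yields Statements~2 and~3, since under $H_0$ one still has $\Tstat^*(\Obsresponse)\le M$ and hence the rejection event is contained in $\{1-F_M(M)\le\threshold\}$.
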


\noindent {\em Proof. } 1. Suppose $H_0$ holds. By Theorem \ref{thm:afriat}, $H_0$ is equivalent to  Afriat's equations (\ref{eqn:AfriatFeasibilityTest}) having a feasible solution.
Let $(\lambdat_\tindx, \ut_\tindx)$ denote a feasible solution to  (\ref{eqn:AfriatFeasibilityTest}).
Then substituting $\response_\dtime = \nresponse_\dtime - \anoise_\dtime$, 
we see  that $(\lambdat_\tindx, \ut_\tindx,\Tstat = M)$  is   a feasible solution  to the noisy inequalities (\ref{eqn:AE}), where $M$ is defined in~\eqref{eqn:def:M}. Since  $(\lambdat_\tindx, \ut_\tindx,\Tstat = M)$ is feasible, clearly 
  the minimizing solution  of   (\ref{eqn:AE}) satisfies $\Tstat^*(\Obsresponse) \leq M$. 
  Therefore,   $$\text{    (\ref{eqn:AE})  feasible and  $H_0$  } \implies  \Tstat^*(\Obsresponse) \leq M. $$
 Next, let $(\bar{\lambda}_\tindx,\bar{\uaf}_\tindx)$ denote a feasible solution to the noisy inequalities  (\ref{eqn:AE}). Then  $\Tstat^*(\Obsresponse) \leq M$ implies that  (\ref{eqn:AfriatFeasibilityTest}) has a feasible solution,  i.e.,
  $$ \text{    (\ref{eqn:AE}) feasible  and }   \Tstat^*(\Obsresponse) \leq M \implies H_0 . $$ 
  Therefore if  (\ref{eqn:AE}) is feasible,  $H_0$ is equivalent to  $ \Tstat^*(\Obsresponse) \leq M$.
\\ \noindent
2.  Let $\ccdf$ denote the complementary cdf of $M$.
From the statistical test  (\ref{eqn:Statistical_Test}),  declaring   $H_1$
is equivalent to  the event $\{\ccdf(\Tstat^*(\Obsresponse)) \leq \threshold\}$. So 
using  Part 1 we have
 $$P(H_1|H_0) = P\big(\ccdf(\Tstat^*(\Obsresponse)) \leq \threshold | \Tstat^*(\Obsresponse) \leq M \text{ and \eqref{eqn:AE} is feasible}\big) .$$
Now if $ \Tstat^*(\Obsresponse) =M$, then since $\ccdf(M)$ is uniform\footnote{For any random variable $X$, $\bar{\cdf}(X)$ is  uniformly distributed in $[0,1]$, where $\bar{\cdf}$ is the complementary cdf of $X$.} in $ [0,1]$ clearly
$P(H_1|H_0) = \threshold$. Since  $\ccdf$ is decreasing in its argument,
it follows that for $ \Tstat^*(\Obsresponse) \leq M$ we have
$$  P(H_1|H_0) =P(\ccdf(\Tstat^*(\Obsresponse)) \leq \threshold | \Tstat^*(\Obsresponse) \leq M) \leq \threshold. $$ 
3. Suppose $\tPhi(\Obsresponse) > \Tstat^*(\Obsresponse)$. Then  (\ref{eq:type1}) holds since
$$P(\ccdf(\tPhi(\Obsresponse)) \leq \threshold | \tPhi(\Obsresponse) \leq M) \geq P(\ccdf(\Tstat^*(\Obsresponse)) \leq \threshold | \Tstat^*(\Obsresponse) \leq M). $$

\remar
We have discussed detecting the presence of a cognitive radar when the response of the radar is observed in noise.
A similar approach to above applies
for detecting the presence of a cognitive radar
that measures our probe signal in additive noise
as
\begin{equation}
	\nprobe_\dtime = \probe_\dtime + \pnoise_\dtime,
	\label{eqn:probenoisemodel2}
\end{equation}
where the noise $\{\pnoise_\dtime\}$ are $\probedim$-dimensional i.i.d.\ random variables; see \cite{KAEM20}.

\index{revealed preferences! statistical detection|)}

  \section{Active  IRL. Optimal Probing  to Minimize  Type-II Error Probability} \label{sec:adapt}

\index{revealed preferences! active IRL|(}

This section discusses how to adaptively interrogate an agent (adversary radar)  to detect if it is a utility maximizer (cognitive), based on noisy measurements of the agent's response. 
Theorem~\ref{thm:type1}  guarantees that if we observe the agent's response in noise, then the probability of Type-I errors (\textit{false positive}, i.e., deciding that the agent is not a utility maximizer   when it is) is less than $\threshold$ when employing  the decision test (\ref{eqn:Statistical_Test}).
 Our aim is to enhance the statistical test (\ref{eqn:Statistical_Test})  by adaptively selecting  probe vectors $\Probe=[\probe_1,\probe_2,\dots,\probe_\horizon]$ to reduce the probability of  Type-II errors
 (\textit{false negative}, i.e., deciding that the agent is a utility maximizer  when it is not).

 The framework is shown in Figure \ref{fig:optimize} and constitutes  \textit{active} IRL  since we purposefully probe the agent (adversary radar) to elicit a response.
 Specifically,
given batches of noisy measurements of the adversary's radar  response $\Obsresponse^l=[\obsresponse_{1}^l,\ldots,\obsresponse_{\horizon}^l] \in \reals_+^{\probedim\times \horizon}$, indexed by batch $l=1,2,\ldots$ (where each $\nresponse_k$ is generated by \eqrefp{eqn:noisemodel}), the aim is to adaptively design batches of our probe signals   $\Probe^l = [\probe_{1}^l,\ldots,\probe_{\horizon}^l] \in \reals_{++}^{\probedim\times \horizon}$, $l=1,2,\ldots,$  so as to   minimize  the probability of  Type-II errors of the statistical detector:
\begin{align}
&\argmin_{\Probe\in\reals^{\probedim\times \horizon}_+}J(\Probe) = \underbrace{\prob\Big(\!\!\!\!\!\!\int\limits_{\Tstat^*\big(\Response(\Probe)+\Anoise\big)}^{+\infty} \!\!\!\!\!\! f_{M}(\psi) d\psi > \threshold \big| \{\Probe,\Response(\Probe)\}\in \mathcal{A}\Big)}_\text{Probability of Type-II error}. 
\label{eqn: SPSA Objective}
\end{align}

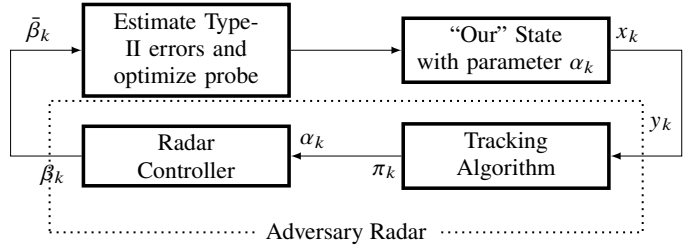
\begin{figure}[h]  \centering
\resizebox{9cm}{!}{ 
  \begin{tikzpicture}[node distance = 4.5cm, auto]
    \tikzset{
    block/.style={rectangle, draw, line width=0.5mm, black, text width=8em, text centered,
                 minimum height=2em},
               line/.style={draw, -latex}}

  \node [block] (BLOCK1) {Estimate Type-II errors and optimize probe};
  \node[block,right of = BLOCK1](opt){``Our'' State \\ with parameter $\probe_\dtime$ };
    \node [block, below of=BLOCK1,node distance=1.5cm] (BLOCK2) {Radar\\ Controller};
    \node[block, right of=BLOCK2] (grad) {Tracking\\ Algorithm};

    \path [line] (BLOCK1) -- (opt);
    
    \path [line] (opt.east) --++ (1cm,0cm) node[pos=0.25,above]{$\state_k$}  |-   node[pos=0.35,left]{$\obs_k$}    (grad);
    
     \path [line] (BLOCK2.west) --++ (-1cm,0cm)  node[pos=0.4,below]{$\response_\dtime$}  |-     node[pos=0.7,above]{$\obsresponse_\dtime$}     (BLOCK1.west);
     \path [line] (grad) --   node[pos=0.15,below]{$\belief_k$}    node[pos=0.8,above]{$\probe_\dtime$}  (BLOCK2);
     \node[draw, thick, dotted, inner sep=3ex, yshift=-1ex,
     fit=(BLOCK2) (grad)] (box) {};
     \node[fill=white, inner xsep=1ex] at (box.south) {Adversary Radar};
   \end{tikzpicture}
 }
 \caption{Optimizing the probe waveform to detect cognition in adversary's radar by minimizing the Type-II errors subject to constraints in Type-I errors.
   }
 \label{fig:optimize}
\end{figure}

Here $\prob(\cdot | \cdot)$ denotes the conditional probability that the statistical test (\ref{eqn:Statistical_Test}) accepts $H_0$, defined in (\ref{eqn:hypothesis}), given that $H_0$ is false. $\Anoise=[\anoise_1,\anoise_2,\dots,\anoise_\horizon]$ is the noise matrix where
the random vectors $\anoise_\dtime$ are defined in
(\ref{eqn:noisemodel}).  $\threshold$ is the significance level of  (\ref{eqn:Statistical_Test}).
 The set $\mathcal{A}$ contains all elements $\{\Probe,\Response(\Probe)\}$, with $\Response(\Probe)=[\response_1,\response_2,\dots,\response_\horizon]$, such that  $\{\Probe,\Response\}$ does not satisfy GARP~\eqref{eq:garp}, i.e., fails Afriat's theorem for utility maximization.

Since the pdf $f_M$ defined in (\ref{eqn:def:M}) is not known explicitly, (\ref{eqn: SPSA Objective}) is a stochastic optimization problem. To estimate a local minimizer  of the Type-II error probability $J(\Probe)$ w.r.t.\  $\Probe$, several types of stochastic optimization algorithms can be used. Algorithm \vref{alg:spsa}  employs  the SPSA algorithm. Recall that 
a useful property of  SPSA  is that estimating the gradient $\nabla_{\Probe} J_\iindex(\Probe^\iindex)$ in (\ref{eqn:SPSA})  requires only two measurements of the cost function (\ref{eqn:SPSACost}) corrupted by noise per iteration, i.e., the number of evaluations is independent of the dimension $\probedim\times \horizon$ of the matrix  $\Probe$. For decreasing step size $\mu_\iindex = 1/\iindex$,  SPSA  converges w.p.1. to a local stationary point of $J(\Probe)$.
\index{revealed preferences! active IRL|)}  
\index{SPSA algorithm}
\begin{algorithm}
\begin{steplist}
\item[\bf Step 1.]  Choose initial probe $\Probe^0=[\probe_1,\probe_2,\dots,\probe_\horizon]\in\reals^{m\times \horizon}_{++}$ (strictly positive).
  \item[\bf Step 2.]  For iterations $\iindex=0,1,\dots$
\begin{compactenum}
\item Estimate empirical Type-II error probability  $J(\Probe_\iindex)$ in (\ref{eqn: SPSA Objective}) in $L$ independent trials: 
\begin{equation}
\hat{J}_\iindex(\Probe^\iindex) =\frac{1}{L}\sum\limits_{l=1}^{L} I{\biggl( \hat{F}_M(\Tstat^*(\Obsresponse^l)) \leq 1-\threshold\biggr)}.
\label{eqn:SPSACost}
\end{equation}
In each trial $l$,   $\Obsresponse^l\in\reals^{m\times \horizon}_+$ is the noisy measurement matrix of the radar response  to our probe matrix~$\Probe^\iindex\in\reals^{m\times \horizon}_{++}$.
$I(\cdot)$ denotes the indicator function.  $\hat{F}_M(\Tstat^*(\Obsresponse^l))$ is the empirical cdf of $M$ computed  as in (\ref{eq:implement}).
 $\Tstat^*(\Obsresponse^l)$ is obtained from (\ref{eqn:AE})  using  noisy observation sequence $\Obsresponse^l$ where $\Anoise^l$ is  a realization of $\Anoise$, and  dataset $\{\Probe^\iindex,\Response(\Probe^\iindex)\}\in\mathcal{A}$, described below (\ref{eqn: SPSA Objective}).

\item  Compute the gradient estimate $\hat{\nabla}_{\Probe}$ with gradient step size $\omega > 0$:
\begin{align}
\hat{\nabla}_{\Probe}\hat{J}_\iindex(\Probe^\iindex) &=  \frac{\hat{J}_\iindex(\Probe^\iindex+\Delta_\iindex\,\omega)-\hat{J}_\iindex(\Probe^\iindex-\Delta_\iindex\,\omega)}{2\omega\Delta_\iindex} \label{eqn:SPSA}\\
\Delta_\iindex(j) &= \begin{cases}
   -1 & \text{with probability 0.5} \\
   +1       & \text{with probability 0.5.} 
  \end{cases} \nonumber
\end{align}
\item  Update the probe matrix  $\Probe^\iindex$ using  stochastic gradient algorithm with step size $\mu_\iindex>0$:
\begin{equation*}
\Probe^{\iindex+1} = \Probe^\iindex-\mu_\iindex\,\hat{\nabla}_{\Probe}\hat{J}_\iindex(\Probe^\iindex).
\end{equation*}
\end{compactenum}
\end{steplist}
\caption{Minimizing Type-II Error Probability for Detecting Utility Maximization}
\label{alg:spsa}
\end{algorithm}

\section{Robustness Margin for IRL.  Revealed Preference Violations} \label{sec:goodness}

\index{revealed preferences! goodness of fit}

This section discusses three  measures of rationality when GARP is violated in Afriat's theorem; see  \cite{DM16,DR23}.  In economics, these measures are important since a human decision maker may make a few mistaken choices, causing the entire dataset to fail GARP. In automated  sensing systems such as cognitive radar, these measures can be exploited to obfuscate  the utility function from an adversary, as discussed in  \secn \ref{sec:dumb}. The measures discussed below assess the robustness margin of the dataset, namely, how close  the dataset is to satisfying  utility maximization.

 From Afriat's Theorem \vref{thm:afriat},  a
 dataset $\mathcal{D}=\{\probe_\tindx,\response_\tindx, \tindx\in 1,2,\dots,\Tindxter\}$ satisfies GARP if the following holds: For every ordered subset $   \{i_1,i_2,\ldots,i_\lind\} \subset \{1,2,\ldots,N\}$, if
$$ \probe_{i_1}^\p\, \response_{i_2} \leq \probe_{i_1}^\p \,\response_{i_1}, \;\;
    \probe_{i_2}^\p\, \response_{i_3} \leq \probe_{i_2}^\p \,\response_{i_2}, \ldots,
    \probe_{i_{\lind-1}}^\p \, \response_{i_\lind} \leq \probe_{i_{\lind-1}}^\p\, \response_{i_{\lind-1}},
$$  
then it holds that
  $\probe_{i_\lind}^\p \response_{i_1} \geq \probe_{i_\lind}^\p \response_{i_\lind}$.
 GARP is necessary and sufficient for   $\dataset $ to be consistent with rationality (utility maximization). If the dataset $\dataset$ fails GARP, then how close is $\dataset$  to rationality?  We discuss three measures of rationality (goodness of fit) below.

\begin{renumerate}
\item  The Houtman--Maks index  (HMI) gives the largest  fraction of the dataset that is consistent with GARP. It is defined as  \index{revealed preferences! goodness of fit! Houtman--Maks index}
 $$ \operatorname{HMI}(\dataset) = \max_{A \in \{1,\ldots,\horizon\}} \frac{|A|}{\horizon}
 \quad \subjectto \quad  \{\probe_\dtime, \response_\dtime, \dtime \in A\} \text{ satisfies GARP. } $$
 If the entire dataset  $\dataset$ is consistent with GARP, then $\HMI(\dataset) = 1$.
 At the other extreme, if every probe and response pair in the dataset violates GARP, then  $\HMI(\dataset) = 1/\horizon$. Computing the HMI index is a combinatorial optimization problem and is  NP-hard.

\item The Varian index involves a relaxation of GARP called e-GARP. Consider the vector  $e=[e_1,\ldots,e_\horizon]$ with elements $e_\dtime \in [0,1]$. Then
dataset $\dataset$  satisfies e-GARP if the following holds: For every ordered subset $   \{i_1,i_2,\ldots,i_\lind\} \subset \{1,2,\ldots,N\}$, if
$$ \probe_{i_1}^\p\, \response_{i_2} \leq e_{i_1} \,\probe_{i_1}^\p \,\response_{i_1}, \;\;
    \probe_{i_2}^\p\, \response_{i_3} \leq e_{i_2}\, \probe_{i_2}^\p \,\response_{i_2}, \ldots,
    \probe_{i_{\lind-1}}^\p \, \response_{i_\lind} \leq e_{i_{\lind-1}}\,\probe_{i_{\lind-1}}^\p\, \response_{i_{\lind-1}},
$$  
then it holds that \index{revealed preferences! goodness of fit! Varian index}
  $\probe_{i_\lind}^\p \response_{i_1} \geq e_{i_\lind}\,\probe_{i_\lind}^\p \response_{i_\lind}$.
Note e-GARP becomes GARP when $e=\ones$.
Also, e-GARP becomes less stringent if smaller values of $e_\dtime$ are chosen.

The Varian index maximizes the average value of the vector $e$ that satisfies e-GARP:
$$ \VI(\dataset) = \max_{e \in [0,1]^\horizon} \frac{1}{\horizon} \sum_{k \leq \horizon} e_\dtime \quad \subjectto \dataset \text{ satisfies e-GARP}. $$
Computing $\VI(\dataset)$ requires solving a mixed integer linear program and is NP-hard. The Afriat index is a special case where all elements of $e$ are chosen equal.
\index{revealed preferences! goodness of fit! Afriat index}

\item The minimal cost index~\cite{DM16} determines the largest set of comparisons in the dataset $\dataset$ for which GARP
  holds. Define the set of ordered pairs $ R_{i_{1:\lind}} = \{(i_1,i_2),(i_2,i_3),\ldots,(i_{\lind-1},i_\lind)\} $ where the ordered subset $   \{i_1,i_2,\ldots,i_\lind\} 
  \subset
\{1,2,\ldots,N\} $ satisfies
$$   \probe_{i_1}^\p\, \response_{i_2} \leq \probe_{i_1}^\p \,\response_{i_1}, \;\;
    \probe_{i_2}^\p\, \response_{i_3} \leq \probe_{i_2}^\p \,\response_{i_2}, \ldots,
\probe_{i_{\lind-1}}^\p \, \response_{i_\lind} \leq \probe_{i_{\lind-1}}^\p\, \response_{i_{\lind-1}}.
    $$
    Define $R = \cup_{i_{1:\lind}} R_{i_{1:\lind}} $ denote all such ordered pairs. Let $B$ denote any subset of $R$.  
    We then define the minimal cost index as the minimal information cost we must ignore to satisfy GARP: 
    $$ \MCI(\dataset) = \min_{B \subset R}  \frac{\sum_{(k,\dtimen)\in B}  \probe_k (\response_k - \response_\dtimen)}{  \sum_{t=1}^\horizon \probe_t^\p \response_t}  \text{ such that }
R \backslash B \text{ satisfies GARP. }
$$
Here $R\backslash B$ denotes the difference of sets $R$ and $B$. Computing $\MCI(\dataset) $ is  also NP-hard.  \index{revealed preferences! goodness of fit! minimal cost index}

\end{renumerate}
  
\section{\pwe}  \label{sec:irl_pwe}

Traditional IRL~\cite{NR00,AN04} estimates an unknown deterministic utility (reward) function of an agent by observing the optimal actions of the agent in a Markov decision process (MDP) setting. The key assumption is that the agent is a utility maximizer. We will  discuss the literature on IRL further  in
\secn \ref{sec:pwe_bayesian_irl}, after  studying Bayesian IRL in Chapter~\ref{chp:birl}.

The revealed preference framework  in this chapter identifies if a dataset is consistent with  utility maximization  and then estimates the set of utilities that rationalize the dataset.
Revealed preference dates back to Samuelson~\cite{Sam38}.  Afriat~\cite{Afr67} showed that GARP is necessary and sufficient for the existence of a nonsatiated utility function given a finite time series
of choices and prices. 
Varian~\cite{Var82,Var83,Var12} also provides  tests for homothetic  and additive separability.   \cite{FST04} presents two clean intuitive proofs of Afriat's theorem; see also \cite{Die73,Die12}. \cite{FM09} extends Afriat's theorem to nonlinear budgets. Deeper ideas that underpin revealed preference stem from Rockafellar’s convex analysis book~\cite[Theorem 24.8, p.\ 238]{Roc70}.
Detecting  utility maximization given noisy measurements 
is studied in \cite{FW05,JE09}.

{\em Applications in Engineering}.
The author's interest in IRL and revealed preferences arises from three industrial collaborations. The first was  with a social networking company seeking IRL algorithms to predict user sentiment for YouTube videos \cite{HAK17,HKP20}. The second  was with a telecommunications company aiming to identify popular multimedia videos for adaptive  caching \cite{HNK15}.
The third collaboration, with a radar company, focused on identifying cognitive radars.
\cite{KAEM20,KPG21} use revealed preferences to construct IRL algorithms for identifying cognitive radars. \cite{KH12} introduces  a stochastic gradient algorithm to adapt probe signals to minimize the Type-II error detection probabilities.
The utility masking method in \secn \ref{sec:dumb} is from~\cite{PKB22,PKB23} and can be viewed as inverse IRL (IIRL), namely  RL that purposefully deceives the adversary's IRL.
\cite{HK14} uses revealed references for nonparametric  demand forecasting in energy aware smart grids. 
 \cite{AK17} formulates revealed preferences with change detection in social media applications.

{\em Asymptotics}. 
 Afriat's theorem  assumes a finite dataset comprising $\Tindxter$ points: If the dataset is consistent with utility maximization then it can be rationalized by a concave utility.
\cite{Ren15} analyzes the case $\Tindxter\rightarrow \infty$ and shows: If the dataset is consistent with utility maximization then it  can  be rationalized by a quasiconcave utility. \cite{Col78}
showed that if the utility is assumed to be Lipschitz continuous, then asymptotically one obtains a unique preference ordering.

{\em Partial Orderable Utility}.   Classical revealed preference  assumes the utility  is real-valued and thus orderable. What if the   utility function is  only partially orderable, e.g.,  a covariance matrix?
 Richter \cite{Ric66} developed  results in revealed preferences for partially ordered utilities.
\cite{NOQ17} shows that identifying maximization of partially ordered utilities involves  preferences on  Hausdorff spaces.
 \cite{PK22a}  uses  Richter's framework to  order probe and response vectors via Blackwell dominance and shows that  Theorem~\ref{thm:nonlinear_afriat} is equivalent to Bayesian revealed preferences discussed in Chapter~\ref{chp:birl}.

 {\em Multiagent Revealed Preferences}. 
\cite{CDV11} extends Afriat's theorem to identify collective rationality amongst multiple agents. 
\cite{SKS23} uses this  to identify coordination  in radar networks.
This complements  multiagent inverse reinforcement learning  \cite{YSE19,NKJ10,LAB19,ZYZ21}.
\cite{Deb09} uses Afriat's theorem to 
identify if a dataset from multiple agents is consistent with  play from the Nash equilibrium of a concave potential game. \cite{HKA16} applies these methods to
social networks like  X (formerly Twitter) 
 and also energy markets.




\chapter{Bayesian Inverse Reinforcement Learning}
\label{chp:birl}
\minitoc

\index{Bayesian IRL|(}

\begin{figure}[h]       \centering      
  \begin{tikzpicture}[scale=0.9, transform shape,node distance =2 cm and  3cm, auto]
    \tikzstyle{arrow} = [thick,->,>=stealth]
    \tikzset{
    block/.style={rectangle, draw, line width=0.5mm, black, text width=5em, text centered,
                 minimum height=2em},
               line/.style={draw, -latex}}
   \tikzset{
    block2/.style={rectangle, draw, line width=0.5mm, black, text width=9em, text centered,
                 minimum height=2em},
               line/.style={draw, -latex}}

  \node[block](sensor){Controlled Sensor};
  \node[block,right of=sensor, node distance=3.5cm](filter){Bayesian Decision Maker};
  \node[right of=filter,node distance=2.2cm](nullnode){};

   \draw[-Latex](filter) -- node[above,pos=0.55] {$a_{k}$} (nullnode);
  \draw[-Latex] (filter.south) -- ++(0,-0.3)    -|  node[pos=0.3,above] {$u_k$}  (sensor.south);
  \draw[-Latex] (sensor) -- node[above] {$\obs_k$}  (filter);
  \draw[Latex-] (sensor.west) -- ++(-1,0) node[left] {$\state_k\sim \belief_0$};
  \node[block,below of=sensor, right of=sensor](irl){IRL (Analyst)};
  \draw[-Latex] (filter.east) -- ++(0.5,0)  |-  (irl.east) ;
  \draw[-Latex] (sensor.west) -- ++(-0.5,0)  |-  (irl.west) ;
  
\end{tikzpicture}
\caption{IRL for rationally inattentive Bayesian utility maximizer.} 
\label{fig:irlbum}
\end{figure}
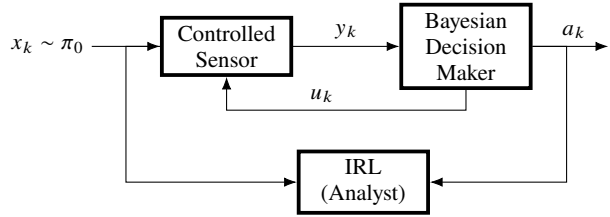

\noindent
The previous chapter studied nonparametric revealed preferences  for inverse reinforcement learning (IRL). This chapter discusses 
Bayesian IRL. The  framework  is illustrated in Figure~\ref{fig:irlbum}
and has two parts: a Bayesian agent (top two boxes in the figure)
and IRL (an analyst). 
\begin{senumerate}
\item The  Bayesian agent (forward learner)  optimizes a utility while performing controlled sensing.
The  agent chooses an action $\act_k$ to maximize its expected utility based on  noisy measurements $\obs_k$ of an underlying state $\state_k$. But the agent is rationally inattentive, so it also controls (optimizes) its observation likelihood to minimize the cost of
 information acquisition. The sensor's  control signal $\action_k$ affects  the  observation  $\obs_k$ and hence the action $a_k$ chosen by the agent. So  the Bayesian agent  jointly performs optimal controlled sensing with utility maximization. 

\item The inverse learner (analyst) conducts IRL to estimate the  Bayesian  agent's utility and controlled sensing costs.
The  analyst  observes the actions $\{a_k\}$ of the  agent.  The analyst  knows the underlying state
$x_k$ but does not know the agent's observation $\obs_k$ or   observation likelihood.
\end{senumerate}

In this chapter, we employ Bayesian revealed preferences to 
 solve the following Bayesian IRL  problem from the analyst's perspective:
\begin{compactenum}
  \item  We formulate necessary and sufficient conditions for the existence of a utility function and information acquisition cost  that rationalize the actions of a Bayesian agent.
  \item We estimate the utility function and information acquisition (controlled sensing)    cost of the agent that is consistent with the observed dataset.
  \end{compactenum}    
 
  Inverse optimization is an ill-posed problem. Similar to classical revealed preference (previous chapter),
  we construct \textit{set-valued}  estimates of the utility and information acquisition  cost.

Some terminology: Controlled sensing is equivalent to 
costly  information acquisition by the Bayesian agent. This is  studied in  economics under the area of ``rational inattention'' \cite{Sim03,Sim10,CDL19}, which is a form of bounded rationality. The key idea is that human attention spans for information acquisition are limited and  can be modeled in information theoretic terms as a Shannon capacity limited communication channel.
This chapter can be viewed as ``inverse social learning''.
 
\hdformat{Outline} We first  discuss  how  to achieve IRL for a one-step Bayesian agent.
Next  we show that the same theory applies to achieve IRL in a Bayesian stopping-time problem over a random horizon.
Then we discuss three examples of IRL in Bayesian stopping-time problems: inverse sequential hypothesis testing,  inverse search, and inverse quickest detection.
Finally, we discuss discrete choice random utility models. 
The Complements and Sources section discusses the IRL literature. The appendix discusses inverse filtering which complements  Bayesian IRL.

\section{IRL for Rationally Inattentive Bayesian Utility Maximizer}
\label{sec:BIRL}  \index{rational inattention! IRL}


This section comprises three parts.
We first describe the  viewpoint of a  utility maximizing rationally inattentive  Bayesian agent. Then we discuss the IRL problem  from the viewpoint of an  analyst that  observes the dataset generated by the  agent and aims to reconstruct the agent's utility and rational inattention cost.
 Finally, we give the main IRL result in Theorem~\ref{thm:BIRL}: A set of linear inequalities that are necessary and sufficient conditions for the actions of a Bayesian agent to be consistent with utility maximization with rational inattention cost. These inequalities yield set-valued estimates of the agent's utility and information acquisition cost.  They constitute the IRL procedure  for identifying a rationally inattentive  Bayesian utility maximizer.

{\em Notation}.
$\statespace= \{1,\ldots,\statedim\}$ denotes the state space,  $\actionset= \{1,\ldots,\actdim\}$ denotes the action space, $\obspace$ denotes the observation space which can either be the finite set $\{1,\ldots,\obsdim\}$  or a  subset of $\reals$.

$\dpset=\{1,2,\ldots,\numdp\}$ is a set of environments over which the agent acts.
We view each environment $\env \in \dpset$ as a distinct experiment setup where the agent is observed by the analyst.
As discussed below, in
each  environment $\env\in \dpset$, the agent has a  distinct reward vector.

\subsubsection*{Viewpoint 1. Utility Maximizing Rationally Inattentive (UMRI) Bayesian Agent}

The purpose of this subsection is to define a UMRI agent and its decision making protocol.

A  UMRI agent uses the following three model parameters in its decision making:
\begin{compactenum}
\item Prior $\belief_0\in \Belief$, where   $\Belief$ is the space of $\statedim$-dimensional probability vectors.
\item A reward vector $\reward_{\env,\act}=[\reward_{m}(1,\act),\ldots, \reward_m(\statedim,\act)]^\p$ in each environment $\env \in \{1,\ldots,M\}$.
\item  Information acquisition (sensing) cost  $\RIcost(\attfunsymb,\belief_0)$.
Here, 
the observation likelihood kernel $\oprob = (\oprob_{xy} = \pdf(\obs|x), x\in \statespace, \obs \in \obspace)$ is known to the agent.
\end{compactenum}
We denote  the optimized observation likelihood in each environment $\env\in \dpset$ as  $\oprob(\env)$ and
$$\oprob_{\obs}(\env) = \diag(\oprob_{1y}(\env),\ldots,\oprob_{\statedim \obs}(\env)),  \, \obs \in \obspace. $$

The following  protocol specifies a UMRI Bayesian agent operating in $\numdp\geq 2 $ environments. 

\noindent {\bf Protocol for UMRI Bayesian Agent}. In each environment $\env \in \dpset=\{1,2,\ldots,\numdp\}$:  \label{pg:umri_protocol}
\begin{steplist}
  \item   \label{step:cs}
 Given its model parameters, the agent optimizes its controlled sensing policy (observation likelihood  kernel $\oprob(\env)$),
by solving the following optimization problem w.r.t.~$\oprob$:
\begin{equation}
  \begin{split}
    \oprob(\dpiter)& \in\argmax_{\attfunsymb \in \attspace} J(\reward_m,\oprob,\belief_0) \ole R(\reward_\env,\oprob,\belief_0) - \RIcost(\attfunsymb,\belief_0)
\\    R(\reward_\env,\oprob,\belief_0) &\ole                      
    \E\big\{\max_{\bact\in\actionset}\E\{ \utilitysymbol_\dpiter(\state,\bact) | \obs\}\big\}  \overset{\text{(a)}}{=} \sum_{\obs \in \obspace} \max_{\bact\in\actionset} \utilitysymbol^\p_{\dpiter,\bact} \, \oprob_\obs\, \belief_0 . 
  \end{split} \label{eq:attentionmaximization}   
\end{equation}
The optimization~\eqref{eq:attentionmaximization} is performed offline over the space
$\attspace$  of observation likelihoods comprising
$\statedim\times \obsdim$ stochastic kernels $\oprob$. Equality (a) is explained in the remark below.
\item The true state $\truestate \in \statespace$ is drawn from prior  pmf $\belief_0$.
But $\truestate$ is not known to the agent.

\item 
The agent draws  observation $\obs$  from the optimized observation  likelihood $\oprob_{\truestate \obs}(\env)$. 
 \item 
The agent then computes the posterior $\belief=[\prob(\truestate=1|\obs),\ldots,\prob(\truestate=\statedim|\obs)]^\p$  as
\begin{equation}\label{eq:compute_posterior}
\belief  =  \filter(\belief_0,\obs,m) \ole \frac{  \oprob_y(\env) \,\belief_0 }{\filterd(\belief_0,\obs,m)},
  \quad \filterd(\belief_0,\obs,m) = \ones^\p \oprob_y(\env)\belief_0.
  \end{equation}
This is Bayes formula for computing  posterior $\belief$ given prior $\belief_0$ and observation~$\obs$.
\item {\em Optimal action}. Finally,  the agent chooses  action $a \in \actionset$ to maximize its expected utility 
\begin{equation}
  \act \in\argmax_{\bact \in \actionset}\E\{ \utilitysymbol_\dpiter(\truestate,\bact) | \obs\}=
\argmax_{\bact \in \actionset}  \reward_{m,\bact}^\p \filter(\belief_0,\obs,\env) =
  \argmax_{\bact \in \actionset}  \reward_{m,\bact}^\p \,\oprob_y(\env) \,\belief_0 . 
\label{eq:utilitymaximization}
\end{equation}
Note that $R(\reward_\env,\oprob,\belief_0)$ in  Step 1  is the expectation (over $\obs$)  of this maximized utility.
\end{steplist}

To summarize,  a UMRI Bayesian agent  is parametrized by the tuple 
\begin{equation}
  \label{eq:umri_tuple}
\big(\dpset,\stateset,\obsset,\actionset,\belief_0,\RIcost,\{\oprob(\dpiter),\utilitysymbol_\dpiter,\dpiter\in\dpset\}\big).
\end{equation}

\begin{rem}

\item \label{comment:arem} Consider the controlled sensing Step 1. Equality (a) in~\eqref{eq:attentionmaximization} holds since   
  $$ R(r_\env,\oprob,\belief_0) =\E_{\obs}\big\{ \max_{\bact} \reward^\p_{\env\bact} \filter(\belief_0,\obs,\env) \big\}
  = \sum_y \max_{\bact} \reward^\p_{\env\bact} \,\filter(\belief_0,\obs,\env)\, \filterd(\belief_0,\obs,\env). $$
\item The information acquisition (rational inattention)  cost $\RIcost(\attfunsymb,\belief_0)$ in~\eqref{eq:attentionmaximization}  is the sensing cost the agent incurs to estimate the underlying state \eqref{eq:compute_posterior}. It is useful to interpret this cost   as
  \begin{equation}
    \label{eq:inf_acq_one}
\RIcost(\attfunsymb,\belief_0) = \sum_\obs \fun\big(\filter(\belief_0,\obs), \belief_0\big)\,\filterd(\belief_0,\obs) 
\end{equation}
where function $\fun$ is chosen by the agent. Often  $\fun$  is chosen as an entropic regularizer (mutual information) in social learning.
In rational inattention  (and controlled sensing),
a higher information acquisition cost is incurred for a more accurate attention strategy  (resulting in a  more accurate state estimate \eqref{eq:compute_posterior} given observation $\obs$).


\item The  optimizations~\eqref{eq:attentionmaximization} and~\eqref{eq:utilitymaximization} define  a UMRI Bayesian agent. Without these optimizations, the above protocol reduces to a classical Bayesian agent that computes the posterior given an observation.  The inner-level optimization~\eqref{eq:utilitymaximization} chooses the  optimal  action $\act$ for a given observation~$\obs$ based on the computed posterior belief. The outer-level optimization~\eqref{eq:attentionmaximization}   chooses  the observations optimally by selecting the optimal attention strategy. Owing to the $\E\{\cdot\}$, the optimal attention strategy is independent of the specific observation and so $\oprob(\dpiter)$ can be precomputed. Referring to  Figure~\vref{fig:irlbum}, the  sensor control signal $u_k$ represents  the optimized observation likelihood $\oprob(\dpiter)$ obtained by solving~\eqref{eq:attentionmaximization} in each environment $\env$.

\item The  state space $\stateset$, observation space $\obsset$, action space $\actionset$, prior $\belief_0$ and  information acquisition cost $\RIcost$ are identical across environments,  but  the reward vector $\reward_{\env,\act}$ depends on the environment $\env$.
  Bayesian revealed preference theory relies on this crucial assumption.
  \end{rem}

\subsubsection*{Viewpoint 2. Inverse Reinforcement learning: Analyst's Model}

We now consider the viewpoint of the analyst.
In  each environment $\dpiter\in \dpset$, the analyst observes the true state $\truestate$ drawn from prior $\belief_0$, and the action $a$ taken by an agent (not necessarily a UMRI Bayesian agent). The analyst observes infinitely many such i.i.d.\ trials. 
By the strong law of large numbers, the empirical frequency of draws of $\truestate$ converges w.p.1. to $\belief_0$. Similarly, the empirical frequency of the agent's actions, given the true state $\truestate=\state$, converges w.p.1 to
the conditional probability that the agent chooses action $\act$, given  $\truestate=\state$ in  each environment $\env$:
\begin{equation}
  \label{eq:suff_stat}
\bB(\env) \ole \{p_\dpiter(\act|\state) ,\state\in\stateset,\act\in\actionset\}, \quad \dpiter\in\dpset.
\end{equation}
These action probabilities serve as a sufficient statistic for the agent from the analyst's viewpoint.

We assume that the  analyst has   the dataset 
\begin{equation} \label{eq:dataset_accum}
\datasetaccum=\{\belief_0 ,p_{\dpiter}(\act|\state) ,\state\in\stateset,\act\in\actionset,\dpiter\in\dpset\}
\end{equation}
generated by an  agent in $\numdp$ environments.
Therefore, in each environment $\env$, the analyst constructs  the \textit{revealed posterior}  using Bayes formula as
\begin{equation}
  \label{eq:revealed_posterior}
\big(\pdf_\env(x|a) , x \in \statespace\big) =
\frac{\bB_a(\env) \,\belief_0}{\ones^\p \,\bB_a(\env) \,\belief_0}, \quad
\text{ where } \bB_a(\env) = \diag\big(p(a|1),\ldots,p(a|\statedim)\big).
\end{equation}

Given the dataset $\datasetaccum$ , the analyst has two IRL objectives:
\begin{compactenum} \item Is $\datasetaccum$ consistent with  a UMRI Bayesian agent? Recall a UMRI agent is defined by~\eqref{eq:umri_tuple} and the five-step protocol described \vpageref{pg:umri_protocol}.
  
\item If  yes,  estimate the agent's utility $\utilitysymbol_\dpiter(\state,\act) $ and information acquisition cost  $\RIcost(\attfunsymb,\belief_0) $.
\end{compactenum}

\remar
The analyst does not know the agent's observation space $\obspace$, observation samples $\obs$ or
observation likelihood $\oprob(\env)$. Such problems arise when the 
 adversary's  sensor is not known.


\subsubsection{Main Result. Bayesian Revealed Preferences}

We  now  state our main Bayesian IRL result from the viewpoint of the analyst. The following theorem  says dataset
$\datasetaccum$ is generated by a UMRI agent {\em if and only if} a set of linear inequalities
(called NIAS and NIAC) has   a feasible solution. 
These inequalities constitute  our {\bf B}ayesian {\bf R}evealed {\bf P}reference (henceforth called BRP) test for rationally inattentive utility maximization.

\begin{theorem}[BRP test for UMRI agent~\cite{CD15}]\label{thm:BIRL} Suppose the analyst has the dataset~$\datasetaccum$, defined in~\eqref{eq:dataset_accum},   from an   agent operating in $\numagents\geq 2 $ environments. Then,\\
{\em 1.} Existence:  An UMRI agent rationalizes dataset $\datasetaccum$ iff there exists a feasible solution
\begin{equation}\label{eq:BRP_ineq}
\BRP\big(\datasetaccum,\{\hutilitysymbol_\dpiter,\hb_\dpiter\}_{\dpiter=1}^{\numdp}\big) \leq \mathbf{0}, \quad
  \hutilitysymbol_\dpiter\in\reals_+^{|\stateset|\times|\actionset|},
\, \hb_\dpiter \in \reals_+.
\end{equation}
Here $\BRP(\cdot)$ is the  set of linear inequalities (in variables $\{\hutilitysymbol_\dpiter,\hb_\dpiter\}_{\dpiter=1}^{\numdp}$)  in Algorithm~\ref{alg:dtest}. \\
{\em 2.} Reconstruction: Given any feasible solution $\{\hutilitysymbol_\dpiter,\hb_\dpiter\}_{\dpiter=1}^\numdp$ to $\BRP(\datasetaccum,\cdot)$:
\begin{senumerate} 
  \item  The set-valued estimate of the  agent's utility (reward) in environment $\dpiter$ is 
    $\hutilitysymbol_\dpiter$.
  \item The set-valued estimate of the agent's  information acquisition $\RIcost$
    is
\begin{equation}
    \hRIcost(\bB,\belief_0)  = \max_{\dpiter\in\dpset} \Big\{\hb_\dpiter + \sum_{\act}\max_{\actiontwo\in\actionset}\sum_{\state} p(\state,\act)\,\hutilitysymbol_\dpiter(\state,\actiontwo) - \sum_{\state,\act}p_{\dpiter}(\state,\act)\,\hutilitysymbol_{\dpiter}(\state,\act)\Big\}. \label{eq:BRP_reconstruct0}
  \end{equation}
  Here, the reconstructed $\hRIcost(\bB,\belief_0) $ is a function of variable 
 $\bB=\{p(\act|\state), \act\in \actionset, \state\in \statespace\}$. Also, $p(\state,\act) = \belief_0(x) p(a|x)$, where $\belief_0$ is the prior.
\end{senumerate}
\end{theorem}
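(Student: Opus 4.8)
The plan is to mirror the proof of Afriat's theorem (Theorem~\ref{thm:afriat}): I would show that the BRP inequalities split into an \emph{action-optimality} block (NIAS), coming from the inner maximization \eqref{eq:utilitymaximization}, and an \emph{attention-optimality} block (NIAC), coming from the outer maximization \eqref{eq:attentionmaximization}, with the unobserved information cost $\RIcost$ eliminated by an Afriat-type cyclic argument. First I would make a revelation-principle reduction: without loss of generality the agent's observation space may be taken to be the action set $\actionset$, with observation likelihood $\oprob_{xa}(\env)=p_\dpiter(a\mid x)$, so that the analyst's revealed posterior \eqref{eq:revealed_posterior} is a genuine Bayes update; then $\datasetaccum$ is rationalizable by \emph{some} UMRI agent iff it is rationalizable by one of this canonical form, and the rest of the argument works with this form.

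For necessity, suppose the data come from a UMRI agent with true reward $\utilitysymbol_\dpiter$ and cost $\RIcost$. Whenever $a$ is chosen with positive probability in environment $\env$, \eqref{eq:utilitymaximization} forces $a\in\argmax_{\actiontwo}\utilitysymbol_{\env,\actiontwo}^\p\oprob_a(\env)\belief_0$, which (with $p(x,a)=\belief_0(x)p_\dpiter(a\mid x)$) is exactly NIAS: $\sum_x p_\dpiter(x,a)[\utilitysymbol_\dpiter(x,a)-\utilitysymbol_\dpiter(x,\actiontwo)]\ge 0$ for all $a,\actiontwo$. For NIAC I would use that $\oprob(\env')$ is an admissible attention strategy in environment $\env$, so \eqref{eq:attentionmaximization} gives, for every pair $\env,\env'$, that $R(\utilitysymbol_\dpiter,\oprob(\env),\belief_0)-\RIcost(\oprob(\env),\belief_0)\ge R(\utilitysymbol_\dpiter,\oprob(\env'),\belief_0)-\RIcost(\oprob(\env'),\belief_0)$; I would then evaluate $R(\utilitysymbol_\dpiter,\oprob(\env),\belief_0)=\sum_{x,a}p_\dpiter(x,a)\utilitysymbol_\dpiter(x,a)$ (NIAS) and $R(\utilitysymbol_\dpiter,\oprob(\env'),\belief_0)=\sum_a\max_{\actiontwo}\sum_x p_{\env'}(x,a)\utilitysymbol_\dpiter(x,\actiontwo)$ (equality~(a) in \eqref{eq:attentionmaximization}), set $\hb_\dpiter:=\RIcost(\oprob(\env),\belief_0)$, and obtain inequalities linear in $(\utilitysymbol_\dpiter,\hb_\dpiter)$; summing these around any cycle $\env_1\to\cdots\to\env_\ell\to\env_1$ cancels the $\hb$'s and yields the cyclic NIAC condition (equivalent to pairwise feasibility by Lemma~\ref{lem:afriat_cyclic}). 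Hence $\{\hutilitysymbol_\dpiter=\utilitysymbol_\dpiter,\ \hb_\dpiter=\RIcost(\oprob(\env),\belief_0)\}$ is a feasible point of $\BRP(\datasetaccum,\cdot)\le\mathbf{0}$.

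For sufficiency and the reconstruction claim, I would take any feasible $\{\hutilitysymbol_\dpiter,\hb_\dpiter\}$, keep the canonical observation structure, assign reward $\hutilitysymbol_\dpiter$ to environment $\env$, and define $\hRIcost(\bB,\belief_0)$ by \eqref{eq:BRP_reconstruct0}. Using NIAS to identify $\sum_{x,a}p_\dpiter(x,a)\hutilitysymbol_\dpiter(x,a)$ with $R(\hutilitysymbol_\dpiter,\oprob(\env),\belief_0)$, the bracketed term in \eqref{eq:BRP_reconstruct0} equals $\hb_\dpiter+R(\hutilitysymbol_\dpiter,\bB,\belief_0)-R(\hutilitysymbol_\dpiter,\oprob(\env),\belief_0)$, so $\hRIcost(\cdot,\belief_0)$ is a maximum of convex functions (each $R(\hutilitysymbol_\dpiter,\cdot,\belief_0)$ is convex in the attention strategy, being a sum over observations of a max of linear functions), hence a legitimate information cost. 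The envelope inequality then gives $R(\hutilitysymbol_\dpiter,\bB,\belief_0)-\hRIcost(\bB,\belief_0)\le R(\hutilitysymbol_\dpiter,\oprob(\env),\belief_0)-\hb_\dpiter$ for every $\bB$, while the NIAC block is exactly the statement that the maximum defining $\hRIcost(\oprob(\env),\belief_0)$ is attained at $\env$, giving equality at $\bB=\oprob(\env)$; thus $\oprob(\env)$ solves \eqref{eq:attentionmaximization} with reward $\hutilitysymbol_\dpiter$ and cost $\hRIcost$. Finally NIAS says that, given observation $a$, the action $a$ maximizes $\sum_x p_\dpiter(x\mid a)\hutilitysymbol_\dpiter(x,\actiontwo)$, so the inner step \eqref{eq:utilitymaximization} can return $a$ and the five-step UMRI protocol reproduces the action probabilities in $\datasetaccum$.

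I expect the NIAC layer to be the main obstacle. In necessity, the care is in checking that $R(\utilitysymbol_\dpiter,\oprob(\env'),\belief_0)$ --- the reward from \emph{using} environment $\env'$'s attention strategy but \emph{reacting} with $\env$'s reward and ex-post optimal actions --- really equals $\sum_a\max_{\actiontwo}\sum_x p_{\env'}(x,a)\hutilitysymbol_\dpiter(x,\actiontwo)$, so that the cost terms telescope around cycles; this is the analogue of passing from GARP to Afriat's inequalities in parts~(i)--(ii) and~(iv) of the proof of Theorem~\ref{thm:afriat}. In sufficiency, \eqref{eq:BRP_reconstruct0} plays the role of the Afriat envelope \eqref{eqn:estutility}, and verifying that NIAC makes it tight at each $\oprob(\env)$ is the counterpart of part~(iii)/(iv) there; this is also the place where $\numdp\ge 2$ is used, since a single environment leaves NIAC vacuous and reward and cost undetermined. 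The revelation-principle reduction is routine but should be stated explicitly, since the analyst never observes the agent's true observation space.
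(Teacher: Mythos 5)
Your architecture is essentially the paper's: NIAS is obtained by averaging the ex-post action-optimality inequality over the (unobserved) observations that map to a given action; NIAC comes from optimality of the attention strategy across environments; and sufficiency is proved by building the canonical agent with $\obsset=\actionset$, $\oprob_{\state\act}(\env)=p_\env(\act|\state)$, and the max-envelope cost \eqref{eq:BRP_reconstruct0}, which NIAC makes tight at each $\bB(\env)$. Your sufficiency half and the reconstruction argument match the paper's appendix proof almost line for line.

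There is, however, one genuine error in your necessity argument, located exactly where you say care is needed. You assert $R(\utilitysymbol_\env,\oprob(\dpitertwo),\belief_0)=\sum_\act\max_{\actiontwo}\sum_\state p_{\dpitertwo}(\state,\act)\,\utilitysymbol_\env(\state,\actiontwo)$. This is false in general: the right-hand side is $R(\utilitysymbol_\env,\bB(\dpitertwo),\belief_0)$, computed from the \emph{action} probabilities, and since $\bB(\dpitertwo)$ is a garbling of $\oprob(\dpitertwo)$ (the action is a coarsening of the observation, chosen optimally for $\utilitysymbol_\dpitertwo$ rather than for $\utilitysymbol_\env$), convexity of $R$ in the attention strategy together with Blackwell dominance yields only the one-sided inequality $R(\utilitysymbol_\env,\bB(\dpitertwo),\belief_0)\le R(\utilitysymbol_\env,\oprob(\dpitertwo),\belief_0)$, with equality precisely when $\dpitertwo=\env$ (and that equality is itself a consequence of NIAS). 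The paper closes the chain as
\[
R(\utilitysymbol_\env,\bB(\env),\belief_0)-\hb_\env = R(\utilitysymbol_\env,\oprob(\env),\belief_0)-\hb_\env \ge R(\utilitysymbol_\env,\oprob(\dpitertwo),\belief_0)-\hb_\dpitertwo \ge R(\utilitysymbol_\env,\bB(\dpitertwo),\belief_0)-\hb_\dpitertwo ,
\]
so it is the \emph{direction} of the garbling inequality, not an equality, that delivers pairwise NIAC directly; no cycle-summing or appeal to Lemma~\ref{lem:afriat_cyclic} is required (the paper reserves the assignment-LP duality of Lemma~\ref{lem:assignment} for the separate equivalence between the combinatorial and pairwise forms of NIAC). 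Relatedly, your up-front revelation-principle reduction cannot be invoked in the necessity direction: reducing an arbitrary UMRI agent to one with $\obsset=\actionset$ is not routine --- it is essentially this same garbling-plus-convexity argument --- and the paper uses the canonical observation structure only in the sufficiency direction, where it is a construction rather than a reduction. Once you replace your claimed equality by the inequality and drop the reduction from the necessity half, your proof coincides with the paper's.
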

The proof of Theorem  is in Appendix~\vref{proof:BIRL}.

\begin{algorithm}[ht]
\begin{algorithmic}
\REQUIRE Dataset $\datasetaccum=\{\belief_0,p_{\dpiter}(\act|\state),\state\in\stateset,\act\in\actionset,\dpiter\in\dpset\}$ from  Bayesian agent.

\hspace{-0.35cm}\textbf{Find:}  \index{Bayesian IRL! NIAS and NIAC conditions}  $\hb_{\dpiter} \in \reals_+$ and $\hutilityagent{\dpiter} \in \reals_+$ for all $\state\in\stateset,$ $\act\in\actionset,~\dpiter\in\dpset$ that satisfy the NIAS (No-Improving-Action-Switches) and NIAC  (No-Improving-Attention-Cycles) inequalities:
\begin{align} \label{eq:NIAS} 
\underline{\textbf{NIAS}}\colon&~\sum_{\state}p_{\dpiter}(\state|\act)~\big(\hutilitysymbolagent{\dpiter}(\state,\actiontwo) -\hutilityagent{\dpiter}\big)\leq 0,
\quad  \forall\act,\actiontwo\in\actionset,~\dpiter\in\dpset, \\ 
   \underline{\textbf{NIAC}}\colon& \sum_{\act} \max_{\actiontwo}\sum_{\state}p_{\dpitertwo}(\state,\act)\,\hutilitysymbolagent{\dpiter}(\state,\actiontwo)   - \hb_{\dpitertwo}\label{eq:NIAC}\\
    & \hspace{1cm} - \bigg[ \sum_{\act} \sum_\state p_{\dpiter}(\state,\act)\,\hutilityagent{\dpiter} -\hb_{\dpiter} \bigg]\leq 0, \quad \forall\dpitertwo,\dpiter\in\dpset,\nonumber
\end{align}
\hspace{-0.14cm}where $p_{\dpiter}(\state,\act)=\belief_0(\state)\,p_{\dpiter}(\act|\state),~p_{\dpiter}(\state|\act)=\frac{p_{\dpiter}(\state,\act)}{\sum_{\bstate}p_{\dpiter}(\bstate,\act)}$.\\
\hspace{-0.35cm}\textbf{Return:} Set of feasible utility functions $\hutilitysymbolagent{\dpiter}$ and information acquisition costs $\hb_{\dpiter}$ incurred by the agent in environments $\dpiter\in\dpset$.
\end{algorithmic}
\caption{BRP Linear Feasibility Test 
$\BRP\big(\datasetaccum,\{\hutilitysymbol_\dpiter,\hb_\dpiter\}_{\dpiter=1}^{\numdp}\big) \leq \mathbf{0}$}
\label{alg:dtest}  
\end{algorithm}

\begin{rem}
\item
The BRP linear feasibility test in Theorem~\ref{thm:BIRL} comprises two sets of inequalities, namely, the NIAS (No-Improving-Action-Switches) inequalities~\eqref{eq:NIAS} and NIAC (No-Improving-Attention-Cycles) inequalities~\eqref{eq:NIAC}. We will say a lot more about these in subsequent remarks.

  We emphasize the ``iff" in Theorem~\ref{thm:BIRL}. If the inequalities in \eqref{eq:BRP_ineq} are not feasible, then the   dataset~$\datasetaccum$ is  not generated by a UMRI Bayesian agent. 
  If \eqref{eq:BRP_ineq} has a feasible solution, then there exists a reconstructable set of  utility functions and information acquisition costs that rationalize~$\datasetaccum$.
The necessity implies for a  UMRI agent, the true utility  and information acquisition cost satisfy NIAS and NIAC; in this sense NIAS and NIAC yield consistent estimates.

The phrase ``set-valued estimate'' in  the theorem warrants discussion. Clearly, we are not constructing statistical estimates. The estimates are the set of utility functions and information costs that satisfy the necessary and sufficient NIAS and NIAC conditions for Bayesian utility maximization.
Every utility and information cost in the feasible set explains $\datasetaccum$ equally well. 

\item  {\em Identifiability.}
  The BRP feasibility test requires the dataset $\datasetaccum$ to be generated from $\numdp\geq 2$ environments. If $\numdp=1$, then \eqref{eq:BRP_ineq} holds trivially for  any information acquisition cost. 

\item {\em System identification for social learning}.   Theorem~\ref{thm:BIRL}  can be viewed as set-valued system identification of the reward (cost) of a social learning agent. We (analyst) take an  agent in isolation, make it operate in at least two environments,  measure its actions, and then determine if its a UMRI and estimate the agent's utility and information acquisition cost.  The IRL procedure does not require knowing the agents observation or observation likelihood.

  In social learning, the aim is to estimate the state $x$; we knew the agent's rewards and  we computed the public belief $p(x|a)$  by observing the agent's action. In IRL, the aim is to estimate the agent's reward; we know state $x$ and observe the public belief (equivalently, action probabilities $\bB(m)$), and we then construct set-valued estimates of the agent's  reward.

\item
{\em Computational aspects.} Since  the dataset $\datasetaccum$ is obtained from $\numdp$ environments, $\BRP(\datasetaccum)$ is a feasibility test with $\numdp~(\statedim \actdim+1)$ free variables and
$\numdp^2 + \numdp (\actdim^2-\actdim-1)$ linear inequalities.

\item {\em NIAS and NIAC inequalities.} \label{item:nias}  NIAS ensures that for any fixed environment $\dpiter$,  the agent chooses the optimal  action based on the posterior pmf. In comparison, NIAC operates over pairs of environments $(\dpitertwo,\dpiter)$       and ensures that the agent chooses the best attention strategy over all $\numdp$ environments. The BRP  test checks if there exist $\numdp$ utility functions $\hutilitysymbolagent{\dpiter}$ and $\numdp$ positive reals $\hb_{\dpiter}$ that, together with dataset $\datasetaccum$, satisfy the NIAS and NIAC inequalities.

In terms of the expected reward $R$ in~\eqref{eq:attentionmaximization},
we can express the  NIAC inequalities~\eqref{eq:NIAC} as
\begin{equation}  \label{eq:niac_full}\\
  \begin{split}
 &R(\hat{r}_m, \bB(\env),\belief_0) - \hb_{\dpiter} \geq
R(\hat{r}_m, \bB(l),\belief_0) - \hb_{\dpitertwo}   \quad \forall\dpitertwo,\dpiter\in\dpset, \\
  \text{ where } \;  &
                       R(\hat{\reward}_m,\bB(m),\belief_0) \ole \sum_a \sum_\state p_m(a|x)\, \hat{\reward}_m(x,a) \,\belief_0(x),\;
\quad \hb_\env, \hb_{\dpitertwo}, \hat{\reward}_m(x,\cdot) \in \reals_+ 
    \\
    &
    R(\hat{\reward}_m,\bB(l),\belief_0) \ole \sum_a \max_{\bar{a}} \sum_\state p_l(a|x)\, \hat{\reward}_m(x,\bar{a}) \,\belief_0(x) .
    \end{split}
  \end{equation}

\item {\em NIAC. Pairwise versus combinatorial version}.   The NIAC condition~\eqref{eq:NIAC}  is written in terms of pairwise environments $(\dpitertwo,\dpiter)$. This is  equivalent to the NIAC condition in the original work of \cite[Theorem 1]{CD15} which was expressed in
 combinatorial form as:
  \begin{equation}
    \label{eq:niac_cd}
    \sum_{\dpiter \in \bar{\dpset}} \Big[
R(\hat{\reward}_m,\bB(m+1),\belief_0) - R(\hat{\reward}_m,\bB(m),\belief_0) 
    \Big]\leq 0
  \end{equation}
  for all  subsets  $\bar{\dpset} = \{\dpiter_1,\ldots,\dpiter_{\bar{\numdp}}\} \subseteq \dpset$. Here we interpret $\dpiter_j+1 =\dpiter_{j+1}$ if $j <\bar{\numdp}$ and 
  $\dpiter_{\bar{\numdp}+1} = \dpiter_1$. There are
  $2^{\numdp}-\numdp-1$ such subsets $\bar{\dpset}$  with two or more elements; so  we call~\eqref{eq:niac_cd} a combinatorial version.    
  Unlike~\eqref{eq:NIAC}, this combinatorial NIAC condition does not involve the information cost $\hb_{\dpiter}$; see \cite[Appendix~C.2.2]{PK23} for an equivalence proof.

The NIAC condition~\eqref{eq:niac_cd}  ensures that gross utility cannot be increased by reassigning attention strategies along any cycle of environments. Put differently, the  name NIAC (No-Improving-Attention-Cycles) stems from~\eqref{eq:niac_cd}: in environment $m$, if we   use   the attention strategy from different environment  $m+1$, then we are  worse off. So cycling through all environments in  $\bar{\dpset}$ in this way, does not improve performance.

\item 
  {\em Insight: Necessity of NIAC}.  \label{rem:niacn}  Where does  the  NIAC condition~\eqref{eq:niac_cd} come from? To provide  insight, we  now prove  the necessity of NIAC~\eqref{eq:niac_cd} for a UMRI agent.  Consider a UMRI agent~\eqref{eq:umri_tuple} with reward
  $\utilitysymbol_\dpiter$, $\dpiter\in\dpset$.
By optimality (UMRI),   the following inequalities hold for $\bar{\dpset} = \{\dpiter_1,\ldots,\dpiter_{\bar{\numdp}}\} \subseteq \dpset$  (recall $R$ and $K$ below were defined in~\eqref{eq:attentionmaximization}):
\begin{equation*}
  \begin{split}
    R(r_1,B(1),\belief_0) - K(B(1),\belief_0) &\geq R(r_1,B(2),\belief_0) - K(B(2),\belief_0) \\
    R(r_2,B(2), \belief_0) - K(B(2), \belief_0) &\geq R(r_2,B(3), \belief_0) - K(B(3), \belief_0)  \\
     &\vdots \\
    R(r_{\bar{\numdp}},B(\bar{\numdp}),\belief(0) - K(B(\bar{\numdp}),\belief_0) &\geq R(r_{\bar{\numdp}},B(1),\belief_0)- K(B(1),\belief_0).
  \end{split}
\end{equation*}
Summing the above inequalities,  implies  that for a UMRI agent, 
\begin{equation}
  \label{eq:niac_base}
   \sum_{\dpiter \in \bar{\dpset}}   \big[ R(r_m,B(m+1),\belief_0 ) - R(r_m,B(m),\belief_0 )  \big] \leq 0.
 \end{equation}
 The insight is that  NIAC~\eqref{eq:niac_cd} simply amounts to replacing   the observation probabilities $B$ in~\eqref{eq:niac_base} with the observed action probabilities $\bB$ (known to the analyst).
To justify this replacement, Appendix~\ref{proof:BIRL} shows that  for a UMRI agent,
$R(r_m,\bB(m),\belief_0) = R(r_m,B(m),\belief_0)$ and $R(r_m,\bB(m+1),\belief_0) \leq  R(r_m,B(m+1),\belief_0)$. Then clearly~\eqref{eq:niac_base} implies NIAC \eqref{eq:niac_cd}. In summary,   we have proved the necessity of NIAC (combinatorial version)  for  a UMRI agent.

\item \textit{Sufficiency of NIAC}.   We now prove that the combinatorial version of NIAC~\eqref{eq:niac_cd} implies that
  the pairwise version of  NIAC~\eqref{eq:NIAC} holds. Appendix~\ref{proof:BIRL}  in turn  shows that pairwise NIAC~\eqref{eq:NIAC} implies the existence of a UMRI agent, thereby establishing sufficiency of NIAC.

\begin{theorem}  \label{thm:pairwise}  The combinatorial NIAC~\eqref{eq:niac_cd}  implies that the  pairwise  NIAC~\eqref{eq:NIAC} holds. 
\end{theorem}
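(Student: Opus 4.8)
\emph{Proof plan.} The plan is to read the pairwise NIAC inequalities~\eqref{eq:niac_full} as a system of difference constraints on the unknown information costs $\hb_\dpiter$, and to invoke the classical fact that such a system is feasible exactly when an associated weighted digraph has no directed cycle of positive total weight --- which is precisely what the combinatorial NIAC~\eqref{eq:niac_cd} asserts. Throughout, the reward functions $\hat{\reward}_\dpiter$, $\dpiter\in\dpset$, are the ones supplied by the combinatorial NIAC hypothesis; only the $\hb_\dpiter$ are to be produced (the NIAS part of the BRP test involves neither NIAC condition and is left untouched).

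First I would set up the complete directed graph $G$ on the vertex set $\dpset$ and assign to each ordered pair $(\env,\benv)$ the edge weight $w(\env,\benv) \ole R(\hat{\reward}_\env,\bB(\benv),\belief_0) - R(\hat{\reward}_\env,\bB(\env),\belief_0)$, with $R$ as in~\eqref{eq:niac_full}. For a directed cycle $\dpiter_1\to\dpiter_2\to\cdots\to\dpiter_{\bar{\numdp}}\to\dpiter_1$ on distinct vertices, the sum of its edge weights is exactly the left-hand side of~\eqref{eq:niac_cd} for the ordered subset $\bar{\dpset}=\{\dpiter_1,\ldots,\dpiter_{\bar{\numdp}}\}$. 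Hence the combinatorial NIAC is equivalent to the statement that every directed cycle of $G$ has nonpositive total weight (it is enough to check simple cycles, since any closed walk decomposes into simple cycles, each then of weight $\le 0$).

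Next I would build a potential $\phi\colon\dpset\to\reals$; this is the only step that is not pure bookkeeping. Because $G$ has no positive cycle, let $\phi(\benv)$ be the supremum of the total weights of directed walks in $G$ ending at $\benv$, where the trivial single-vertex walk is allowed and has weight $0$ (so $\phi\ge 0$). This supremum is finite and attained: whenever a walk repeats a vertex it contains a closed sub-walk, which decomposes into simple cycles of weight $\le 0$, so deleting it does not decrease the total weight; iterating shows every walk has weight at most that of some path, and there are finitely many paths. For any edge $\env\to\benv$, appending it to a weight-maximizing walk at $\env$ yields a walk ending at $\benv$, whence $\phi(\benv)\ge \phi(\env)+w(\env,\benv)$ --- this is exactly the longest-path construction used in the induction of Lemma~\ref{lem:afriat_cyclic}. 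Finally, set $\hb_\dpiter \ole \phi(\dpiter)-\min_{\dpiter'\in\dpset}\phi(\dpiter')\ge 0$; subtracting this constant does not affect differences, so $\hb_\benv-\hb_\env\ge R(\hat{\reward}_\env,\bB(\benv),\belief_0)-R(\hat{\reward}_\env,\bB(\env),\belief_0)$ for all $\env,\benv\in\dpset$, and rearranging gives $R(\hat{\reward}_\env,\bB(\env),\belief_0)-\hb_\env\ge R(\hat{\reward}_\env,\bB(\benv),\belief_0)-\hb_\benv$, which is precisely~\eqref{eq:niac_full}; that is, the pairwise NIAC~\eqref{eq:NIAC} holds for $\{\hat{\reward}_\dpiter,\hb_\dpiter\}_{\dpiter\in\dpset}$. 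Thus the hard point is purely the existence of $\phi$ --- the ``no positive cycle $\Rightarrow$ feasible difference system'' lemma (Bellman--Ford / Gallai), the discrete counterpart of the cyclic-monotonicity argument already invoked for Afriat's inequalities; everything else is rearrangement and the bookkeeping correspondence between cycle sums and~\eqref{eq:niac_cd}.
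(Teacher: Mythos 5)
Your proof is correct, but it takes a genuinely different route from the paper's. The paper proves Theorem~\ref{thm:pairwise} by first invoking Lemma~\ref{lem:assignment}: the combinatorial NIAC~\eqref{eq:niac_cd} implies that the identity permutation is an optimal solution of the linear assignment program~\eqref{eq:assignment_lp}, and the nonnegative costs $\hb_\env$ are then obtained as the dual variables of that LP (citing Luenberger--Ye). You instead bypass the assignment problem entirely: you read the pairwise NIAC~\eqref{eq:niac_full} as a difference-constraint system $\hb_\benv - \hb_\env \geq w(\env,\benv)$, observe that~\eqref{eq:niac_cd} is exactly the no-positive-cycle condition for the weights $w$, and construct the potential $\hb$ explicitly as a longest-walk value (the Bellman--Ford/Gallai argument). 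Both arguments are instances of LP duality for network-flow-type programs, so neither is more general, but yours is constructive and self-contained --- it exhibits the $\hb_\env$ rather than asserting their existence via duality --- and it makes visible the structural parallel with the induction in Lemma~\ref{lem:afriat_cyclic} and with the cyclic-monotonicity discussion for Afriat's inequalities, which the paper's assignment-problem route obscures. The one point worth tightening is the diagonal convention: the two displayed definitions of $R(\hat{\reward}_\env,\bB(\cdot),\belief_0)$ in~\eqref{eq:niac_full} differ by a $\max_{\bar{a}}$, and your edge weights $w(\env,\benv)$ should use the same convention for both endpoints so that cycle sums literally equal the left side of~\eqref{eq:niac_cd}; under NIAS the two definitions coincide when the indices match, so this is cosmetic, but it should be stated.
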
  
\begin{proof}  We start with the following lemma which establishes that 
  there exists no reassignment of the attention strategies
 that increases the total reward.  (This goes in the  reverse direction of  Remark~\ref{rem:niacn} above.) 

 \begin{lemma}[{\cite[p.\ 108]{LY16}}]  \label{lem:assignment}
  With $R(r_\env,\bB(\benv),\belief_0) $ defined in~\eqref{eq:niac_full},
  NIAC~\eqref{eq:niac_cd}  implies that  the following linear assignment problem has an identity map
   solution  $\alp^*_{\env,\env} = 1$:
   \begin{equation}
     \label{eq:assignment_lp}
     \alp^*=  \argmax_{\alp}  \sum_\env \sum_\benv     R(r_\env,\bB(\benv),\belief_0) \,\alp_{\env \benv} \quad \text{ s.t. }
       \sum_\env \alp_{\env \benv} = 1, \; \sum_\benv \alp_{\env, \benv} = 1, \; \alp_{\env,\benv} \geq 0.
   \end{equation}
 \end{lemma}
  
 Next, from~\cite[p.\ 108]{LY16}, it follows via duality   that for the  linear program~\eqref{eq:assignment_lp}, there exist nonnegative variables  $ \hb_\env , \env \in \envspace$ such that
 pairwise  NIAC~\eqref{eq:niac_full} holds.
\end{proof}

\item If   the analyst knows  the agent's utility, then
it  uses NIAC and~\eqref{eq:BRP_reconstruct0}  to reconstruct the information acquisition  (rational inattention) cost
  of the agent. Alternatively, if the information acquisition cost is known, then the utility functions can be reconstructed using  NIAS and NIAC.  If both the utility function and  information acquisition cost are not known to the analyst, then Theorem~\ref{thm:BIRL}  loses specificity since arbitrary choices of utility and information assisting cost  can satisfy NIAS and NIAC. In this case it is
necessary to impose a robust margin for the NIAS and NIAC conditions to introduce specificity into the test. This is discussed  next.
  
\item {\em NIAS and NIAC with $\epsilon$-feasibility margin.}
  A trivial solution that satisfies both NIAS and NIAC  is the constant utility together with an  information cost of all zeros. Such degeneracy is a consequence of the ill-posedness of  inverse optimization problems. In practice, one can ensure only nontrivial solutions satisfy  NIAS and NIAC by introducing a margin constraint:
\begin{equation}\label{eq:niasc_margin}
    \operatorname{NIAS}(\cdot)\leq -\epsilon,~\operatorname{NIAC}(\cdot) \leq -\epsilon, \quad \text{ where } \epsilon>0.
\end{equation}
Introducing margin constraints to  ensure  nondegenerate solutions to feasibility tests is  common in IRL~\cite{RAT06}. The $\epsilon$ restriction of \eqref{eq:niasc_margin}  ensures only nontrivial informative costs pass the NIAS and NIAC feasibility test of Theorem~\ref{thm:BIRL}.

\item {\em Reconstruction of rational inattention cost}.  \eqref{eq:BRP_reconstruct0} specifies the reconstructed  rational inattention cost $\hRIcost(\bB,\belief_0)$  as
a bounded, piecewise linear and convex function of~$\bB$.  Note also that $ \hRIcost(\bB(m),\belief_0)= \hb_\dpiter$. The reconstruction~\eqref{eq:BRP_reconstruct0}  is different to~\cite{CD15}, which is
 \begin{equation}
    \label{eq:CD_RAreconstruct}
  \hRIcost(\bB,\belief_0) =
  \begin{cases} \hRIcost(\bB(m),\belief_0) =\hb_\dpiter &  \text{ if } \bB = \bB(\env) \\
    \infty & \text{otherwise}.
                         \end{cases}                           
                       \end{equation}
     The advantage of~\eqref{eq:BRP_reconstruct0} compared to~\eqref{eq:CD_RAreconstruct}  is  that the rational inattention cost is bounded.

  \end{rem}

  \summar Theorem~\ref{thm:BIRL} and Algorithm~\ref{alg:dtest}  form the basis of Bayesian IRL. The NIAS and NIAC inequalities are  necessary and sufficient conditions for the existence of a
utility maximizing rationally inattentive
(UMRI) agent, i.e., a boundedly rational Bayesian utility maximizer. Furthermore, the NIAS and NIAC inequalities  yield a set of  utility functions and rational inattention costs that rationalize the agent's dataset
$\datasetaccum=\{\belief_0,p_{\dpiter}(\act|\state),\state\in\stateset,\act\in\actionset,\dpiter\in\dpset\}$.

  

  \section{IRL for Bayesian Stopping-Time Problems} \label{sec:irl-bayes-stop}
  \index{Bayesian IRL! stopping-time problem}

This section discusses IRL for Bayesian stopping-time problems. By observing the stopping actions of an agent over a random horizon, how can an analyst decide if these actions are consistent with optimal stopping, and how can the analyst estimate the stop and continue costs?
Our motivation stems from IRL for inverse sequential detection: are the detector's decisions   consistent with Bayes optimality?  Recall that a sequential detection problem is a Bayesian stopping-time POMDP, i.e.,  a stochastic control problem over a random horizon. 

The key idea below is that a Bayesian stopping-time problem  can be mapped to a one-step rationally inattentive Bayesian utility maximizer (discussed in the previous section). So the main IRL result of the previous section applies  to Bayesian stopping-time problems. As a consequence,  the IRL procedure in Algorithm~\vref{alg:dtest} can be used to reconstruct the stopping/continue costs by observing the decisions of the  stopping-time controller.

{\bf Notation}.
We have already defined 
$\dpset$ as the set of environments,  $\stateset= \{1,\ldots,\statedim\}$ as the finite state space, and  $\obspace \subset \reals$ as the observation space in  \secn \ref{sec:BIRL}. The $\statedim$-dimensional probability vector $\belief_0$ is the prior distribution at time 0.
In addition:
\begin{compactitem}
\item  $\actionset=\{1,\ldots,\actdim\}$ is the set of stopping actions. For example, in sequential hypothesis testing discussed in \secn \ref{sec:SHT_back},  there are multiple stopping actions,
one for each hypothesis.
  
\item $\{\contcost_k(x) , k\geq 0, \state \in\statespace\}$ are the continue costs  at time $k$ given state $\state$.
These  costs are not environment dependent.  Define the continue cost vector as
$\contcost_k=[\contcost_k(1),\ldots,\contcost_k(\statedim)]^\p$. We assume $c_k$ is a nonnegative vector with at least one strictly positive element.

\item  $\{\stopcost_\env(x,\act),\state\in\stateset,\act\in\actionset,\agent\in\agentset\}$
  is the cost for taking stop action $\act$ in state $\state$ and environment  $\agent$. Define the stopping cost vector $\stopcost_{\env,\act}=[\stopcost_\env(1,a),\ldots,\stopcost_\env(\statedim,a)]^\p$.
Finally define $\stopcost_\env=\{\stopcost_{\env,\act},\act\in \actionset\}$.

\item As in the previous section, the conditional pdf  $\oprob_{xy} = \pdf(\obs|x)$, $x\in \statespace, \obs \in \obspace$  denotes the observation likelihood, 
  $\oprob_\obs = \diag(\oprob_{1y},\ldots,\oprob_{\statedim \obs})$, $\obs \in \obspace$.

\end{compactitem}

Similar to the   one-step Bayesian utility maximization  discussed in the previous section,  we will describe the framework from  the  viewpoint of an optimal stopping Bayesian agent that makes decisions, and then from the view point of an analyst that observes the decisions of the agent and performs IRL to reconstruct the costs of the Bayesian agent.

\subsubsection{Viewpoint 1. Optimal Bayesian Stopping Agent}

The following protocol specifies a Bayesian stopping agent (not necessarily optimal) that operates in $\env\in \{1,\ldots,\numdp\}$ environments. We require $\numdp\geq 2$ for identifiability.
\begin{steplist}
\item Draw true state $\truestate \sim \belief_0$ at time $k=0$. Here $\truestate$ is not known to the agent.
\item At each  $k>0$, the agent draws (observes) measurement $\obs_k \sim \oprob_{\truestate y }$.
\item The agent updates its belief using Bayes formula as
  $$ \belief_k = \filter(\belief_{k-1},\obs_k) =
  \frac{\oprob_{\obs_k} \, \belief_{k-1}}{\filterd(\belief_{k-1},\obs_k)}, \quad
  \filterd(\belief,\obs) = \ones^\p \oprob_y \,\belief.
  $$
  where $\belief_k=[\belief_k(1),\ldots,\belief_k(\statedim)]^\p$ and $\belief_k(i) = \prob(\truestate=i|\obs_{1:k})$.
\item The agent chooses action $\act_k = \policy_{k,\env}(\belief_k)  \in \actionset \cup \{\text{continue}\}$. \\ If $\act_k \in \actionset$, a stopping  cost $\stopcost_{\env,\act}^\p \belief_k$ is incurred and the protocol stops.\\
If $a_k=\text{continue}$, a continue cost $c_k^\p \belief_k$ is incurred,  set $k=k+1$ and go to Step~2. 
\end{steplist}

The above protocol defines a Bayesian stopping agent. In Step 4, $\policy_{k,\env}(\cdot)$ denotes an arbitrary time-dependent policy in environment $\env$.
Define the stopping time $\tau$ as the random variable 
$$ \tau = \inf\{k \geq 0\colon \policy_{k,\env}(\belief_k) \in  \actionset\}
=  \inf\{k \geq 0\colon \policy_{k,\env}(\belief_k) \neq \text{continue}\}
. $$

 We now  define an {\em optimal} agent. Define the policy sequence $\policy_{\env}= (\policy_{0,\env},\ldots,\policy_{\tau,\env})$.

\begin{definition}[Optimal Bayesian Stopping Agent]\label{def:absoptimality} In  each environment $\agent\in\agentset$, an optimal stopping agent uses the  optimal time-dependent policy $\optstoptime_{\env}(\cdot)$ that satisfies
  \begin{align} \label{eq:opt_stop_time}
\policy^*_{\tau,m}(\belief_\tau)  & = \argmin_{a \in \actionset} \stopcost_{\env,a}^\p \belief_\tau, \quad 
                               J(\stopcost_\env,\policy^*_m,\belief_0) =
\inf_{\policy}    \{  G(\stopcost_m,\policy,\belief_0)  + \RIcost(\policy,\belief_0) \}\\
\text{ where }   &   G(\stopcost_m,\policy,\belief_0) =   \E_\policy \big\{ \min_a \stopcost^\p_{\env,\act} \belief_\tau \mid \belief_0 \big\}, \quad \RIcost(\policy,\belief_0) =\E_\policy \Big\{ \sum_{k=0}^{\tau-1} c_k^\p \, \belief_k \mid \belief_0 \Big\}.
      \nn
  \end{align}
 Here,  $\E_{\stoptime}$, parametrized by policy $\stoptime$, is  w.r.t.\  the joint distribution  $\obs_{1:\funcstop}$. For prior $\belief_0$ and policy~$\policy$,  $ G(\stopcost_m,\policy,\belief_0) $ is the expected stopping cost, and  $\RIcost(\policy,\belief_0)$ is the cumulative continue cost.
  \end{definition}

Definition~\ref{def:absoptimality} provides the standard formulation  for the optimal policy in a sequential stopping problem. Since the continue cost $c_k$ is a positive vector, the stopping time $\funcstop$ is finite w.p.1. The optimal policy  decomposes into two steps: Choose whether to continue or stop; and if the decision is to stop, then  choose a specific stopping action from the set $\actionset$.
The optimal stopping policies $\stoptime^*_{\agent},\agent\in\agentset$ that satisfy (\ref{eq:opt_stop_time}) are  obtained by stochastic dynamic programming. We showed in the POMDP book  that  the set of beliefs for which it is optimal to stop is convex in the belief $\belief$.

To summarize, 
an optimal Bayesian stopping agent is parametrized  by the tuple
\begin{equation}
  \label{eq:stop_agent}
\optsearchtuple =  (\dpset,\stateset,\obsset,\actionset,\belief_0,\contcost_k(x), \oprob,  \{
  \stopcost_\dpiter, \policy^*_\dpiter ,\dpiter\in\dpset\}).
\end{equation}
Note that only the stopping cost vector  $s_\env$ depends on the environment $\env$.

\subsubsection{Viewpoint 2. Inverse Reinforcement Learning. Analyst's Model}

The analyst (inverse learner) observes the stop actions of the Bayesian  agent in $\numagents\geq 2$ environments, each with a different stopping cost vector~$\stopcost_\env$. As in~\eqref{eq:dataset_accum}, by observing  infinitely many i.i.d.\ trials
of the true state $\truestate$ and the agent's stopping action $\act$,
 the analyst has the dataset
\begin{equation} \label{eq:IRL_tuple}
\datasetaccum=\{\belief_0 ,p_{\dpiter}(\act|\state) ,\state\in\stateset,\act\in\actionset,\dpiter\in\dpset\}.
\end{equation}
Here  $\actselectagent{\agent}$ is the  conditional probability that the agent  chooses stop action $\act$ at the stopping time, given the true state~$\truestate=\state$ in environment  $\env \in \dpset$.

The analyst does not know the agent's stopping times,  observations or observation likelihoods.



\subsubsection{Main Result. IRL for Bayesian Stopping-Time Problem}

\begin{theorem}[IRL for  Bayesian optimal stopping~\cite{CD15}]\label{thm:NIAS_NIAC}
  Suppose the analyst has the dataset $\datasetaccum$~\eqref{eq:IRL_tuple}   from a  Bayesian stopping agent  in $\numagents\geq 2 $ environments. Then,
 \newline
{\em 1.} {Existence}:
An  optimal Bayesian stopping agent  $\optsearchtuple$~\eqref{eq:stop_agent} rationalizes dataset $\datasetaccum$ if and only if there exists a feasible solution to  the  set of linear inequalities  (in stopping costs)
\begin{equation}\label{eq:BRP_stop}
\BRP\big(\datasetaccum,\{-\hat{\stopcost}_\env,\hb_\env\}_{\env=1}^\numdp\big)  \leq \mathbf{0}, \quad
 \hat{\stopcost}_\dpiter\in\reals_+^{|\stateset|\times|\actionset|},
\, \hb_\env>0.
\end{equation}
The $\BRP$ feasibility test is defined in Algorithm~\vref{alg:dtest}. \\
\noindent	{\em 2.} {Reconstruction of costs}:
Given any feasible solution $\{\hat{\stopcost}_\dpiter,\hb_\env\}_{\dpiter=1}^\numdp$ to $\BRP(\datasetaccum,\cdot)$
\begin{senumerate} 
  \item  The set-valued estimate of the  agent's stopping cost  in environment $\dpiter$ is 
    $\hat{\stopcost}_\dpiter$.
  \item The set-valued estimate of the agent's expected  cumulative continue cost  $\RIcost(\policy,\belief_0)$ is 
\begin{align}
  \hat{\RIcost}(\policy,\belief_0) & = \max_{\dpiter\in\dpset} \Big\{\hb_\dpiter 
+ \sum_{\state,\act}p_{\dpiter}(\state,\act)\,\hat{\stopcost}_{\dpiter}(\state,\act)
                   -       \sum_{\act}\min_{\actiontwo\in\actionset}\sum_{\state} p_\policy(\state,\act)\,\hat{\stopcost}_\dpiter(\state,\actiontwo)\Big\}.  \label{eq:BRP_reconstruct}
\end{align}
Here, the reconstructed $\hat{\RIcost}(\policy,\belief_0)$ is a function of variable $\policy$ which parametrizes the variables  $\{p_\policy(\act|\state), \act\in \actionset, \state\in \statespace\}$.
Also, $p_\policy(x,a) = \belief_0(x)\, p_\policy(a|x)$, where
$\belief_0$ is the initial belief.
\end{senumerate}
\end{theorem}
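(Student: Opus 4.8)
The plan is to prove Theorem~\ref{thm:NIAS_NIAC} by reducing it to the one-step result Theorem~\ref{thm:BIRL} via an explicit dictionary between a Bayesian stopping-time problem and a UMRI agent: the terminal posterior $\belief_\tau$ produced at the stopping time plays the role of the one-step agent's observation-induced posterior; the law of $\belief_\tau$ given the true state $\truestate=\state$ under a policy $\policy$ plays the role of the observation likelihood / attention strategy $\oprob$; the negated stopping cost vector $-\stopcost_{\env,\act}$ plays the role of the reward vector $\reward_{\env,\act}$; and the expected cumulative continue cost $\RIcost(\policy,\belief_0)=\E_\policy\{\sum_{k=0}^{\tau-1}\contcost_k^\p\belief_k\}$ plays the role of the information-acquisition cost $\RIcost(\oprob,\belief_0)$. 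Under this correspondence the optimal-stopping objective~\eqref{eq:opt_stop_time} becomes, line by line, the UMRI objective~\eqref{eq:attentionmaximization}: the inner rule $\policy^*_{\tau,m}(\belief_\tau)\in\argmin_{\act}\stopcost_{\env,\act}^\p\belief_\tau$ is exactly the utility-maximization step~\eqref{eq:utilitymaximization} with $\reward_\env=-\stopcost_\env$, and $G(\stopcost_m,\policy,\belief_0)=\E_\policy\{\min_\act\stopcost_{\env,\act}^\p\belief_\tau\}$ equals $-R(-\stopcost_\env,\oprob,\belief_0)$ with $R$ as in~\eqref{eq:attentionmaximization}. (Finiteness of $\tau$ a.s., from the assumption that $\contcost_k$ is nonnegative with a strictly positive element, guarantees all these expectations are well defined.)

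First I would record that the analyst's dataset~\eqref{eq:IRL_tuple} is formally identical to~\eqref{eq:dataset_accum}, so the revealed posterior~\eqref{eq:revealed_posterior} and the BRP feasibility test of Algorithm~\ref{alg:dtest} are inherited verbatim and the only bookkeeping is the sign. Plugging $\reward_\env=-\stopcost_\env$ into Theorem~\ref{thm:BIRL} turns~\eqref{eq:BRP_ineq} into~\eqref{eq:BRP_stop}; the NIAS inequalities~\eqref{eq:NIAS} become terminal-action-optimality inequalities for $\stopcost_\env$; and the reconstruction~\eqref{eq:BRP_reconstruct0}, with each inner $\max_{\actiontwo}$ applied to a negated cost turning into an inner $\min_{\actiontwo}$, becomes precisely~\eqref{eq:BRP_reconstruct}, with $\hb_\dpiter$ identified as the reconstructed cumulative continue cost at the agent's realized policy, $\hat\RIcost(\policy_\dpiter,\belief_0)=\hb_\dpiter$.

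For the necessity direction (an optimal Bayesian stopping agent $\optsearchtuple$ generating $\datasetaccum$ implies~\eqref{eq:BRP_stop} is feasible) I would take $\hat\stopcost_\env=\stopcost_\env$ and $\hb_\env=\RIcost(\policy^*_\env,\belief_0)$: NIAS holds because the agent selects the optimal terminal action at $\belief_\tau$, and NIAC holds by the telescoping argument of Remark~\ref{rem:niacn} applied to~\eqref{eq:opt_stop_time} around a cycle of environments. The only nonroutine point is the passage from the terminal-belief law to the observed action probabilities: one must verify $R(-\stopcost_m,\bB(m),\belief_0)=R(-\stopcost_m,B(m),\belief_0)$ (the optimal terminal action makes the map $\belief_\tau\mapsto\act$ a sufficient coarsening for evaluating the agent's own gross reward) and $R(-\stopcost_m,\bB(l),\belief_0)\le R(-\stopcost_m,B(l),\belief_0)$ (a Blackwell/data-processing inequality, since $\bB(l)$ is a garbling of $B(l)$). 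These are exactly the two facts invoked in Remark~\ref{rem:niacn} and established in Appendix~\ref{proof:BIRL} for the one-step model, and the same proof goes through with $\belief_\tau$ replacing the observation.

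The sufficiency direction is the main obstacle. Given a feasible $\{\hat\stopcost_\env,\hb_\env\}$, Theorem~\ref{thm:BIRL} produces a UMRI agent whose optimal attention strategies $\oprob(\env)$ are generic stochastic kernels over an abstract observation space; to finish I must exhibit an \emph{optimal Bayesian stopping agent} with the same action probabilities. The plan is to choose the stopping agent's fixed observation likelihood $\oprob$ informative enough that the induced belief martingale $\{\belief_k\}$ is dense enough for optional stopping to realize (up to a garbling the analyst cannot see) each target experiment $\oprob(\env)$, and to choose the continue-cost sequence $\{\contcost_k\}$ so that the minimal cumulative continue cost of realizing $\oprob(\env)$ equals $\hb_\env$. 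This is a martingale-embedding/realizability lemma --- a Skorokhod-type construction for belief martingales that controls both the reachable set of terminal laws and the attached cumulative cost. Cleanly, I would first restate and prove Theorem~\ref{thm:BIRL} with the attention space restricted to stopping-realizable experiments, so that this embedding lemma is the only new ingredient, and then translate back through the dictionary above; everything else (the sign conventions, the identification of dataset objects, the NIAS/NIAC telescoping) is routine once the lemma is in place.
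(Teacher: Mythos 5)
Your reduction is the same one the paper uses: the paper collapses the multistep problem to a one-step experiment by defining, for each policy $\policy$, the set $\fobspace_\policy$ of observation trajectories that reach the stopping set and the induced ``fictitious'' likelihood $\foprob_\policy(\fobs|\state)$, then checks that $G(\stopcost_\env,\policy,\belief_0)$ and $\RIcost(\policy,\belief_0)$ coincide with $-R$ and the information-acquisition cost of the one-step agent with $\foprob_\policy$ in place of $\oprob$. Your dictionary (terminal-belief law as attention strategy, $-\stopcost_\env$ as reward, cumulative continue cost as inattention cost) is exactly this, and your necessity argument (telescoping NIAC plus the two Blackwell facts from Remark~\ref{rem:niacn}) matches the paper's. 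Where you diverge is sufficiency. The paper stops after matching the two cost functionals and declares that NIAS and NIAC ``apply as necessary and sufficient conditions, as per Theorem~\ref{thm:BIRL}''; it never constructs a bona fide stopping agent --- a single environment-independent likelihood $\oprob$, a single continue-cost sequence $\{\contcost_k\}$, and policies $\policy_\env$ that are simultaneously optimal, reproduce $p_\env(\act|\state)$, and accumulate continue cost $\hb_\env$ --- from a feasible solution. Your proposed martingale-embedding/realizability lemma is precisely the missing ingredient for that stronger reading of ``rationalizes,'' and it has no counterpart in the paper. The cost of your route is that this lemma is nontrivial and you have not proved it; the benefit is that it would make the sufficiency direction genuinely constructive rather than inherited by fiat from the one-step theorem, whose own sufficiency construction uses $\obsset=\actionset$ and an abstract convex information cost rather than a cumulative continue cost of the form $\E_\policy\{\sum_{k<\tau}\contcost_k^\p\belief_k\}$.
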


  Theorem~\ref{thm:NIAS_NIAC}  is identical to  the one-step Bayesian case (Theorem~\ref{thm:BIRL}), except that the stopping agent is  minimizing a cost
  instead of  maximizing a reward.
The reconstruction formula for  the continue cost in~\eqref{eq:BRP_reconstruct} is identical to that of  the information acquisition  cost in~\eqref{eq:BRP_reconstruct0}, with  the policy $\policy$ now playing the role of $\bB$. Also the discussion regarding~\eqref{eq:CD_RAreconstruct} applies.
  
To prove Theorem~\ref{thm:NIAS_NIAC}, we will show that the stopping-time problem is equivalent to the one-step  Bayesian case  and so  Theorem~\ref{thm:BIRL} applies. The main point is that in a stopping time problem, the policy $\policy$ determines the number of observations gathered until stopping time $\tau(\policy)$, but $\policy$  does not affect the observation probabilities $\pdf(y_k|\state)$ for $k< \tau(\policy)$. So the belief at the stopping time  $\tau(\policy)$ can be computed by a single Bayesian step using the vector of observations $\obs_1,\ldots,\obs_{\tau(\policy)}$. The proof below is simply a formalization of this idea.

\begin{proof} Let $\stopset_\policy$ denote the stopping set when using policy $\policy$. This is 
  the  set of beliefs for which policy $\policy$ applies the stop action.
 Let $\filter$ denote the Bayesian update formula.  Define the set
of all observation sequences that yield a belief in $\stopset_\policy$  
when starting with  prior $\belief_0$,   as
$$ \fobspace_{\policy}= \bigcup\{\obs_{1:\tau(\policy)}\colon  \filter(\belief_0,\obs_{1:\tau(\policy)}) \in \stopset_\policy\}. $$
We denote an arbitrary  element of $\fobspace_{\policy}$ as $\fobs$.
Next define the fictitious observation likelihood
$$ \foprob_\policy(\fobs|\state) = 
\prob\big(\obs_{1:\tau(\policy)}\colon \filter(\belief_0,\obs_{1:\tau(\policy)}) \in \stopset_\policy\mid \belief_0=e_\state\big) , \quad \state \in \statespace.
$$
Clearly,  a one-step Bayesian update using  likelihood $ \foprob_\policy(\fobs|\state)$  yields the same stopping belief  as  the  multistep Bayesian update using observation trajectory $\obs_{1:\tau(\policy)}$.



We now  establish the following equivalence with the one-step  Bayesian case:

1.  Consider the  stopping cost $ G(\stopcost_\env,\policy,\belief_0) =   \E_\policy \big\{ \min_a \stopcost^\p_{\env,\act} \belief_\tau \mid \belief_0  \big\}$.
Let $\filter(\belief_0,\fobs) $ denote the Bayesian belief update and $\filterd(\belief_0,\fobs) = \ones^\p \foprob_{\policy}(\fobs) \belief_0= \pdf_\policy(\fobs|\belief_0)$ denote the normalization term, where $\foprob_{\policy}(\fobs) =
\diag\big(\foprob_{\policy}(\fobs|1) ,\ldots,\foprob_{\policy}(\fobs|\statedim) \big)$.
  Then
 \begin{align}
   \label{eq:gmu} G(\stopcost_\env, \policy,\belief_0) &=\E_{\policy}\big\{
\stopcost^\p_{\env,\policy(\belief_{\tau},\tau)}\,\belief_{\tau}\big\} = \E_{\fobs}
\big\{\min_{\act} \stopcost^\p_{\env,\act} \filter(\belief_0,\fobs) \big\} \\ & \hspace{-1.5cm} =
\int_{\fobspace_{\policy}} \filterd(\belief_0,\fobs) \min_{\act \in \actionset}
\stopcost^\p _{m,a}\, \filter(\belief_0,\fobs)\, d\fobs =
\int_{\fobspace_{\policy}} \min_{\act \in \actionset} \stopcost_{\env,\act}^\p\,
\foprob_{\policy}(\fobs) \, \belief_0 \, d\fobs . \nn
 \end{align}
 This
  is equivalent to the negative utility $-R(r_\env,B,\belief_0) $ of the one-step Bayesian agent~\eqrefp{eq:attentionmaximization} with $\foprob_\policy$ replacing $\oprob$.

2.  Next consider
the cumulative continue cost $\RIcost(\policy,\belief_0)  =\E_\policy \big\{ \sum_{k=0}^{\tau-1} c_k^\p \, \belief_k \mid \belief_0  \big\}$.
Define 
$ C_\tau(\policy,x) = \sum_{k=0}^{\tau-1 } c_k^\p \belief_k$ starting with initial belief $ e_x $.
Then
\begin{equation}
  \label{eq:RIexp} \RIcost(\policy,\belief_0) = \E_\policy\big\{ C_\tau(\policy,x)
|\belief_0\big\} = \int_{\fobspace_{\policy}} \sum_{x\in \statespace} C_\tau(\policy,x)\,
\foprob_{\policy}(\fobs|x) \, \belief_0(x) \, d\fobs .
\end{equation}
This is equivalent to the rational inattention cost  $\RIcost(\oprob,\belief_0)$ of the one-step Bayesian agent~\eqrefp{eq:attentionmaximization}  with $\foprob_\policy$ replacing $\oprob$ in the \RHS.

Having established the equivalence with the one-step Bayesian agent, the NIAS and NIAC 
conditions apply as necessary and sufficient conditions, as per Theorem~\ref{thm:BIRL}.
\end{proof}

\section{Example 1. Inverse Sequential Hypothesis Testing}
\index{Bayesian IRL! inverse sequential hypothesis testing} \index{inverse sequential hypothesis testing}
\label{sec:SHT_back}

Suppose the true state $\truestate\in \{1,2\} $ is drawn from prior $\belief_0$. The agent  observes
$\obs_k \sim p(\obs|\truestate)$. But the agent does not know whether the true pdf is
 $p(\obs|\truestate=1)$ or $p(\obs|\truestate=2)$. So the agent deploys a  sequential hypothesis test (SHT).  By accumulating i.i.d.\ observations  $\{\obs_1,\ldots,\obs_k\}$ from the true pdf $p(\obs|\truestate)$ sequentially over time $k$, the aim of SHT  is to decide  whether $\truestate=1$ or $\truestate=2$ by minimizing a combination of the continue (measurement) cost and misclassification cost.
SHT is a special case of the Bayesian stopping-time problem discussed above.

\begin{definition}[SHT in multiple environments]
 SHT in multiple environments $\agentset$ is a special case of $\optstoptuple$~\eqref{eq:stop_agent} where:
\begin{compactitem}
	\item $\stateset=\{1,2\}$, $\mathcal{Y}\subset \mathbb{R}$ , $\actionset=\stateset$.
	\item  $\runcostinst_{\dtime}(\state)=\runcostinst\in\reals^+,~\forall \state\in\stateset$ is the constant continue cost.
	\item $\{\stoptime_{\agent},\agent\in\agentset\}$ are the SHT stopping strategies over $\numagents$ SHT environments.
	\item The stopping cost $\stopcost_\env(\state,\act)$ in environment $\env$
          is the  misclassification cost 
	\begin{equation*}
	    \stopcost_\env(x,\act)  = \begin{cases}
	    \msL_{\agent,1}, & \text{ if } \state=1,\act=2,\\
	    \msL_{\agent,2}, & \text{ if } \state=2,\act=1,\\
	    0, & \text{ if } \state=\act\in\{1,2\}.
	    \end{cases}
	\end{equation*}
	\end{compactitem} 
\end{definition}
The SHT stopping policies of the agent are stationary and
 computed using Bellman's dynamic programming equation. Indeed,
 the SHT policy has the following threshold structure  parametrized by scalars $\alpha_{\agent},\beta_{\agent}\in(0,1)$:
	\begin{align}\stoptime_{\agent} (\belief) = \begin{cases}
	\text{choose action }2, & \mbox{if } 0\leq \belief(2) \leq \beta_{\agent} \\
	\mbox{continue,} & \mbox{if } \beta_{\agent} < \belief(2) \leq \alpha_{\agent} \\
	\text{choose action } 1, & \mbox{if } \alpha_{\agent}<\belief(2) \leq 1.
	\end{cases}
	\label{eqn:opt_policy}
	\end{align}

\remar
The SHT  is  parametrized by $c,\msL_1,\msL_2$. We set the continue cost as $c=1$ 
without loss of generality since the optimal policy is unaffected. So the expected cumulative continue cost is the expected stopping time:  $\RIcost(\policy,\belief_0)= \E_\policy\big\{\sum_{k=0}^{\tau-1}\one^\p \belief_k| \belief_0\big\} = \E_\policy\{\tau\}$, where we suppress~$\belief_0$. 

\subsubsection{IRL for Inverse SHT}

Suppose the analyst (inverse learner)  observes the actions of  the  stopping agent in $\numagents$   environments. We assume the following about the analyst:
\begin{compactenum}
\item \label{asmp:SHT1} The analyst has the dataset 
\begin{equation}
\datainf(\SHT)=\{\belief_0 , \{p_{\dpiter}(\act|\state) ,\state\in\stateset,\act\in\actionset,\dpiter\in\dpset\},\{\sumruncostagent{\agent},\agent\in\agentset\}\}.
    \label{eqn:dataset_SHT}
\end{equation}
Here $\sumruncostagent{\agent}=\mathbb{E}_{\stoptime_{\agent}}\{\funcstop\}$ is the expected continue cost incurred by the Bayesian agent in the  $\agent$th environment, and is known to the analyst.
\item \label{asmp:SHT2} The stopping strategies $\{\stoptime_{\agent},\agent\in\agentset\}$ are stationary strategies characterized by the threshold structure in (\ref{eqn:opt_policy}).
\end{compactenum}

 Assumption \ref{asmp:SHT1} specifies additional information the analyst has for performing IRL for SHT by observing the agent decisions.
 Assumption \ref{asmp:SHT2} specifies the  partial information that  the analyst has about the stopping strategies chosen by the agent and its observation likelihood. Since the optimal stopping strategy has a threshold structure, the analyst (IRL) only needs to compare the expected cost incurred by  threshold policies to check for optimality.

\begin{theorem}[IRL for inverse SHT] \label{thrm:classic_SHT}
  Consider the analyst with dataset $\datainf(\SHT)$ (\ref{eqn:dataset_SHT}) from a Bayesian agent taking actions in $\numagents$ environments.  Then
Theorem~\ref{thm:NIAS_NIAC} holds. (Note  $C_m = \E_{\stoptime_{\agent}}\{\funcstop\}$  is known to the analyst, and hence is not a free variable.)\\
The set-valued IRL estimates of the SHT misclassification costs are
$$
\hat{\msL}_{1,\agent} = \stopcost_{\agent}(1,2), \; \hat{\msL}_{2,\agent}=\stopcost_{\agent}(2,1), \quad \forall \agent\in\agentset,
$$
	where $\{\stopcost_{\agent}(x,a),\agent\in\agentset\}$ is any feasible solution to the NIAS and NIAC inequalities.
      \end{theorem}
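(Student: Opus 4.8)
The plan is to reduce Theorem~\ref{thrm:classic_SHT} to the already-established Theorem~\ref{thm:NIAS_NIAC} by verifying that the SHT setup is an instance of the Bayesian stopping-time framework, so that no new argument is needed beyond checking compatibility of the data and the cost structure. First I would observe that an SHT agent operating in environments $\agentset$ is exactly the tuple $\optsearchtuple$ of~\eqref{eq:stop_agent} with the specializations listed in the definition of SHT in multiple environments: $\stateset=\{1,2\}$, $\actionset=\stateset$, constant continue cost $\contcost_k(x)\equiv\runcostinst$ (which we normalize to $\runcostinst=1$ without loss of generality, as noted in the remark, since scaling the continue cost does not change the optimal policy), and the environment-dependent stopping cost $\stopcost_\env(x,a)$ given by the misclassification matrix with entries $\msL_{\env,1},\msL_{\env,2}$. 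Since the continue cost is strictly positive, the stopping time $\funcstop$ is finite w.p.1, so the agent is a well-defined Bayesian stopping agent in the sense required by Theorem~\ref{thm:NIAS_NIAC}.

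Next I would address the one genuinely SHT-specific ingredient, namely the extra data $\{\sumruncostagent{\agent},\agent\in\agentset\}$ with $\sumruncostagent{\agent}=\E_{\stoptime_\agent}\{\funcstop\}$. The point is that, because the optimal SHT policy has the threshold structure~\eqref{eqn:opt_policy}, the cumulative continue cost $\RIcost(\policy,\belief_0)=\E_\policy\{\sum_{k=0}^{\tau-1}\one^\p\belief_k\mid\belief_0\}=\E_\policy\{\funcstop\}$ coincides (after normalizing $\runcostinst=1$) with the observed scalar $\sumruncostagent{\agent}$. Hence in the reconstruction formula~\eqref{eq:BRP_reconstruct} the quantity $\hb_\env$ playing the role of $\hat{\RIcost}(\policy_\env,\belief_0)$ is no longer a free variable: it is pinned to the known value $\sumruncostagent{\agent}$. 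This does not affect the logic of Theorem~\ref{thm:NIAS_NIAC} — one simply runs the BRP feasibility test of Algorithm~\ref{alg:dtest} with $\hb_\env$ fixed rather than searched over, and Assumption~\ref{asmp:SHT2} guarantees that checking optimality only requires comparing the expected costs of threshold policies, which is precisely what the NIAS/NIAC inequalities encode once the observation likelihood is summarized by the action probabilities $p_\env(a\mid x)$ in $\datainf(\SHT)$.

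Finally, I would read off the cost reconstruction. By Theorem~\ref{thm:NIAS_NIAC}, any feasible solution $\{\hat{\stopcost}_\env,\hb_\env\}$ to $\BRP(\datainf(\SHT),\cdot)\leq\mathbf{0}$ gives a set-valued estimate of the stopping cost vector $\stopcost_\env$; translating through the SHT parametrization of $\stopcost_\env(x,a)$, the only nonzero entries are $\stopcost_\env(1,2)$ and $\stopcost_\env(2,1)$, so the misclassification costs are recovered as $\hat{\msL}_{1,\agent}=\stopcost_\agent(1,2)$ and $\hat{\msL}_{2,\agent}=\stopcost_\agent(2,1)$, which is the claimed estimate. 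The continue cost has already been fixed to $\runcostinst=1$, so nothing further needs reconstruction.

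The main obstacle — and the step that deserves the most care — is justifying that fixing $\hb_\env=\sumruncostagent{\agent}$ is consistent with (rather than a strengthening of) the hypotheses of Theorem~\ref{thm:NIAS_NIAC}: one must argue that for a genuinely optimal SHT agent the equality $\RIcost(\stoptime_\agent,\belief_0)=\E_{\stoptime_\agent}\{\funcstop\}$ holds on the nose (using $\one^\p\belief_k=1$ and $\runcostinst=1$), so that the fixed value is exactly the $\hb_\env$ that the unconstrained BRP test would have found; and conversely, that feasibility of the $\hb_\env$-fixed system still implies existence of a rationalizing optimal stopping agent, which follows because the threshold structure in Assumption~\ref{asmp:SHT2} restricts the policy class without removing the optimal policy. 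Everything else is a direct appeal to Theorem~\ref{thm:NIAS_NIAC}.
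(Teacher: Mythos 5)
Your proposal is correct and follows essentially the same route the paper takes: the paper states Theorem~\ref{thrm:classic_SHT} as a direct specialization of Theorem~\ref{thm:NIAS_NIAC}, relying on exactly the points you spell out --- the normalization $\runcostinst=1$ so that $\RIcost(\policy,\belief_0)=\E_\policy\{\funcstop\}$ coincides with the observed $\sumruncostagent{\agent}$ (pinning $\hb_\env$ rather than leaving it free), the threshold structure reducing the optimality check to a comparison over threshold policies, and reading the misclassification costs off the off-diagonal entries of any feasible stopping-cost matrix. Your added care about the forward/converse consistency of fixing $\hb_\env$ is a reasonable elaboration of what the paper leaves implicit, but introduces nothing beyond the paper's argument.
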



\section{Example 2. Inverse Optimal Search} \index{Bayesian IRL! inverse search} \index{inverse search} \index{optimal search! inverse (IRL)}

This section discusses  how \textit{inverse} optimal search for a nonmoving target can be achieved using Bayesian  IRL.   The aim of inverse  search is to identify if the search actions of the agent are optimal and if so, estimate the search costs.

\subsubsection{Optimal Search for Nonmoving Target}
The classical  search problem has the following model:
$\statespace = \{1,2,\ldots,\statedim\}$ are the search locations. The observations are $\obs_k \in \{F \text{(found)},\bar{F} \text{(not found)}\}$,  and the continue actions $a$ are which location to search. So $a \in \actionset = \statespace$, i.e., the action space and state space are identical.
The overlook probabilities for the search agent are $$\overlook(\aaction)= \prob(\obs= \bar{F}|\text{target is in the cell $\aaction$}), \quad \aaction  \in \actionset. $$

Let $\searchcostsymbol_{\agent}(\aaction)$ denote the cost of searching cell $\aaction$
in environment $\agent.$

Finally let  $\{\policy_{\env},\env\in\envspace\}$ denote  the optimal search strategies of the Bayesian agent over the  $\numagents $ environments, when the agent operates sequentially on a sequence of observations $\obs_1,\obs_2,\ldots$.
as discussed in the POMDP book.

Given the posterior belief $\belief$, 
the optimal stationary search policy of the agent in environment $\env$  is 
$$     \optpolicy_\env(\belief)  = \argmax_{\aaction \in \actionset}  \frac{\belief(\aaction)\, \big(1-\overlook(\aaction)\big)} {\searchcostsymbol_{\agent}(\aaction)}. $$
Since the expected cumulative cost depends only on the search costs (for constant overlook probabilities), we set $c_m(1) = 1$ for each  $\env$ without loss of generality.

\subsubsection{IRL for Inverse Search}
We now discuss inverse search to estimate the search costs $\csearch_\env(\aaction)$, $\aaction \in \actionset$, $\env \in \envspace$.
The framework is the opposite of  Theorem~\ref{thm:NIAS_NIAC} where we considered multiple stopping actions and a single continue action. In optimal search there are multiple continue actions,  namely, which of the 
      $\numstates$ locations to search at each time, and a single stopping action (when the target is found).

Suppose an analyst  (inverse learner) observes the decisions of a Bayesian search agent over $\numagents$ search environments.
We  assume that the analyst knows the dataset
\begin{equation}\label{eqn:dataset_search}
  \datainfsearch =   \big\{\belief_0,\, \{g_{\env}(\act|\state),\,\agent\in\agentset\}\big\}.
\end{equation}
Here, $g_{\env}(\act|\state)$  is the expected number of  times the agent searches location $\aaction$ when the true state of the target is $\truestate=\state$ in environment $\agent$ (recall the analyst knows $\truestate$):
\begin{equation}
    g_\env(\aaction|\state) = \E_{\stoptime_{\agent}}\Big\{\sum_{k=1}^{\funcstop} 
    I\{\stoptime_{\agent}(\belief_{k}) = \aaction\} |\truestate=\state
    \Big\}.
\end{equation}
Assume that there are  $\numagents\geq 2$ environments with distinct search costs.
For environment $\env$, we express the expected cumulative search  cost in  terms of the variable $g(\aaction|\state) $ as
 \begin{align*}
   J(c_\env,\policy,\belief_0)& = \E_{\policy}\Big\{\sum_{k=1}^{\funcstop}
                                 \csearch_\env(\policy(\belief_{k}))\Big\}= \E_{\policy}\Big\{\sum_{\aaction\in\actionset}\csearch_\env(\aaction) \sum_{k=1}^{\funcstop} I\{\policy(\belief_{k})=\aaction\}\Big\}\\
     & \hspace{-1cm} = \sum_{\aaction\in\actionset}\csearch_\env(\aaction)\sum_{\state\in\statespace}\belief_0(\state)\,\E_{\policy}\Big\{\sum_{k=1}^{\funcstop} I\{\policy(\belief_{k})=\aaction\}|\state\Big\} = \sum_{\state\in\statespace,\aaction\in\actionset}\belief_0(\state)\,g(\aaction|\state)\, \csearch_\env(\aaction).
 \end{align*}

  Since there is only one stop action,   NIAS is degenerate. So inverse search is specified by  NIAC.
  Notice that the expected cumulative cost $J$ has the same form as the expected reward $R$ in~\eqref{eq:niac_full} with $p_\env(\act|\state)$ replaced by $g_\env(\act|\state)$, and $\max$ replaced by $\min$.  The search  cost $\csearch_\env(a)$ is independent of state $x$. 
Denoting  $a = \argmin_{\bar{a}} c_m(\bar{a})$, then $J(\hat{\cost}_m,\policy_l,\belief_0)$ in NIAC~\eqref{eq:niac_full}~is
 $$J(\hat{\cost}_m,\policy_l,\belief_0) = \sum_{\act} \min_{\actiontwo}\sum_{\state}g_{\dpitertwo}(\act|\state)\,\hat{\csearch}_\env(\actiontwo)\, \belief_0(x)
 = \sum_{\act} \sum_{\state} g_{\dpitertwo}(\act|\state)\, \hat{\csearch}_\env(\aaction)\, \belief_0(x).
$$
Also, comparing with ~\eqref{eq:attentionmaximization}, we see $\RIcost = 0$; so $\hb_l=\hb_m=0$.
This yields the following result.

\begin{theorem}[IRL for inverse Bayesian search]\label{thrm:Search}
  Consider the analyst  with dataset $\datainfsearch$ (\ref{eqn:dataset_search}) obtained from a search agent acting in multiple environments. 
  Then the search agent is optimal iff  there exists a feasible solution to the following linear (in search costs) inequalities:
\begin{align}
  &\text{Find }\hat{\csearch}_\env(\aaction) \in\reals_+,\;\hat{\searchcostsymbol}_{\agent}(1)=1 \quad\text{s.t.} 
        \nonumber\\
  & 
    \sum_{\state\in\stateset}\sum_{\aaction \in \actionset} \belief_0(\state)\,\big(g_\env(\act|\state)-g_{\agenttwo}(\act|\state)\big)\, \hat{\csearch}_\env(\aaction) \leq 0 \qquad \forall\agent,\agenttwo\in\agentset,~\agent\neq \agenttwo.  \label{eqn:NIAC_dag_def}
  \end{align}
 The set-valued IRL estimate of the agent's search costs  satisfy~\eqref{eqn:NIAC_dag_def}.
\end{theorem}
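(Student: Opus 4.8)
The plan is to deduce Theorem~\ref{thrm:Search} by specializing Theorem~\ref{thm:NIAS_NIAC} to the search model. The search problem is the mirror image of the stopping-time setting of that theorem: here there is a single stopping action (``target found'') while the environment-dependent costs sit on the multiple \emph{continue} actions (the $\numstates$ search locations), whereas in Theorem~\ref{thm:NIAS_NIAC} there were multiple stopping actions and a single, environment-independent continue cost. So I would first recast the search agent in the optimal Bayesian stopping framework of \secn~\ref{sec:irl-bayes-stop} and check that the ``equivalence with a one-step agent'' construction used in the proof of Theorem~\ref{thm:NIAS_NIAC} (the fictitious observation likelihood $\foprob_\policy$, and the identities~\eqref{eq:gmu},~\eqref{eq:RIexp}) still applies. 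The only change is that the entire accrued cost is now the cumulative search cost, so the ``stopping cost'' is identically zero and the whole cost plays the role of the cumulative continue cost $\RIcost$ in~\eqref{eq:attentionmaximization}, with $\foprob_\policy$ in place of $\oprob$.

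Granting that, the proof reduces to reading off NIAS and NIAC. Since there is only one stop action, the NIAS inequalities~\eqref{eq:NIAS} are vacuous (only $\act=\actiontwo$ occurs), so the feasibility test is governed entirely by NIAC. Using the law of total expectation, exactly as in the computation of $J(\csearch_\env,\policy,\belief_0)$ preceding the theorem, the gross cost in environment $\env$ equals $\sum_{\state,\aaction}\belief_0(\state)\,g(\aaction|\state)\,\csearch_\env(\aaction)$, which has the same bilinear form as $R(\hat{r}_m,\bB,\belief_0)$ in~\eqref{eq:niac_full} with $p_\env(\act|\state)$ replaced by $g_\env(\act|\state)$ and the reward replaced by the negative search cost. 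Because the whole cost is a continue cost there is no separate information-acquisition term, so the free variables $\hb_\env$ in the BRP test may be set to $0$ (equivalently $\RIcost\equiv 0$ in~\eqref{eq:attentionmaximization}); and because $\csearch_\env(\aaction)$ is independent of the state $\state$, the inner $\min_{\actiontwo}$ in the pairwise NIAC~\eqref{eq:NIAC} collapses, as spelled out just before the theorem statement. Substituting these facts into the pairwise NIAC inequalities gives precisely~\eqref{eqn:NIAC_dag_def}. Finally, the normalization $\hat{\searchcostsymbol}_{\agent}(1)=1$ is the analogue of fixing $c_m(1)=1$ used throughout: search costs are ordinal up to positive scaling, so this pins the scale without loss of generality, and positivity $\hat{\csearch}_\env(\aaction)\in\reals_+$ is inherited from the sign constraint on costs.

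The ``optimal iff feasible'' equivalence and the set-valued reconstruction then come for free from the ``iff'' of Theorem~\ref{thm:NIAS_NIAC}: the search agent is optimal exactly when the reduced BRP feasibility test (here, $\hat{\csearch}_\env(\aaction)\in\reals_+$, $\hat{\searchcostsymbol}_{\agent}(1)=1$, together with~\eqref{eqn:NIAC_dag_def}) has a solution, and any such solution is a valid set-valued estimate of the search costs. I expect the first step to be the main obstacle: rigorously justifying that the search model meets the hypotheses of Theorem~\ref{thm:NIAS_NIAC} despite the ``continue'' and ``stop'' roles being interchanged --- in particular, that the fictitious-observation argument of~\eqref{eq:gmu}--\eqref{eq:RIexp} still runs when the environment-dependent cost is a continue cost, and that degenerating NIAS and zeroing the $\hb_\env$ variables does not silently discard content of the theorem. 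The remaining algebra (collapsing the $\min$, checking the normalization) is routine.
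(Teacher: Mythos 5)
Your proposal follows essentially the same route as the paper: express the expected cumulative search cost in the bilinear form $\sum_{\state,\aaction}\belief_0(\state)\,g_\env(\aaction|\state)\,\csearch_\env(\aaction)$, observe that NIAS is degenerate with a single stop action, note that state-independence of $\csearch_\env(\aaction)$ collapses the inner $\min$ in NIAC, set $\hb_\env=0$ since $\RIcost=0$, and read off~\eqref{eqn:NIAC_dag_def} from the pairwise NIAC of Theorem~\ref{thm:NIAS_NIAC}. The caveat you flag about interchanging the continue/stop roles is reasonable, but the paper treats it at the same informal level, so there is no substantive difference in approach.
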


\begin{rem}
\item The inverse learner only knows  the average number of times the agent searches a particular location. It  does not know the stopping time or the order in which the agent searches the locations.

\item  
  As discussed earlier,  the above set-valued estimates are not statistical estimates. They  satisfy the necessary and sufficient conditions  of an optimal Bayesian agent  (NIAC in this case).

\item Optimal search for a nonmoving target is a multi-armed bandit problem.
 So Theorem~\ref{thrm:Search} can be viewed as IRL for a multi-armed bandit. 
\end{rem}

\section{Example 3. Inverse Quickest Detection}
\index{Bayesian IRL! inverse quickest detection}

This section discusses Bayesian IRL for  inverse quickest detection. 
We studied classical quickest detection in the POMDP book  as   a stopping-time POMDP. In this section,  we 
assume that an underlying discrete-time state  $\state$ jump changes at a random time $\tau^0\in\Gamma=\{1,2,\ldots,\horizon\}$.
The change time $\tau^0$ is not known to  the detector. But the detector knows the
 prior distribution $\qdp$  of the jump time
 on $\Gamma$.

 In quickest detection,  we formulate the
sequential detection policy $\policy$ that  maps the  posterior belief  at each time  to the actions stop (declare change) or continue.
The declared stopping time by the quickest detector is 
 $\tau(\policy)\in\Gamma$. This is the decision maker's estimate of the true jump time $\tau^0$.

 In the $m$th environment, $m \in \{1,\ldots,M\}$,
the quickest detector uses the optimal policy 
 $$ \policy_m^* \in \argmin_\policy  \E_{\qdp,\policy}\big\{ d\, |\tau(\policy) - \tau^0|^+  + f_m\, I(\tau(\policy) < \tau^0)  \big\} $$
 where $f_m\geq 0$ is the false alarm penalty.
 Without loss of generality, we can divide through by $d>0$ and set the delay penalty $d=1$, since the optimal policy is unaffected.

Next, we  consider the analyst's (IRL) point of view.
 By choosing the  change time $\tau^0\in \Gamma$ as the hidden state, we can use the static Bayesian IRL formulation of \secn\ref{sec:BIRL}.
The analyst knows prior  $\qdp$, and observes $\tau^0$ along with the detector's actions (and therefore $\tau$)  in infinite i.i.d.\ trials in each environment~$m$.
 Thus the analyst knows  the action selection  probabilities
 $p_m(\tau|\tau^0)$. 
 Given $\qdp$ and $p_m(\tau|\tau^0)$,
the analyst first estimates the rational inattention cost
$\RIcost(\policy_m,\qdp) =   \E_{\qdp,\policy_m}\big\{  |\tau(\policy) - \tau^0|^+ \big\}$ for each environment $m$. Then to estimate $f_m$, the analyst solves NIAC:
$$ \RIcost(\policy_m,\qdp) + \E_{\qdp,p_m}\big\{ I(\tau < \tau^0)\, f_m\big\} \leq
\RIcost(\policy_n,\qdp) + \E_{\qdp,p_n}\big\{ I(\tau < \tau^0)\, f_m\big\},  \;  m \neq n, m \in \{1,\ldots,M\}$$
where $f_m \geq 0$.
Since there is only a single stop action,   NIAS is degenerate.
The above  result  is  somewhat simplistic since $f_m$ is a constant.
More generally, if the detector uses a false alarm penalty $f_m(\tau^0,\tau(\policy))$ to model a risk averse detector, the above IRL procedure still holds.

\section{Discrete Choice Random Utility Models} \index{random utility model}
\label{sec:disc_choice}

Thus far, we have
 discussed Bayesian IRL for a {\em single} agent, where the analyst reconstructs a \textit{set-valued}  estimate of the  utility.
We close our discussion of IRL by describing a parametric model for utility estimation, in which each agent in a {\em population} draws random actions from a parametrized logistic  model. The analyst then obtains  a \textit{point-valued} estimate of the  utility, by computing  the  maximum likelihood estimate (MLE) of the logistic parameter of the utility.

Consider a population of agents. If agent $k$ chooses  action $a$, then it receives utility 
\begin{equation}
  \label{eq:um_disc}
  U_k(a) = \regc^\p _{k,a}\, \upar  + \unoise_k(a) , \quad a \in \actionset = \{1,\ldots,A\}.
\end{equation}
The attribute vector $\regc_{k,a}$ is  known to the analyst, while $\upar$ is an unknown parameter vector to be estimated. The  analyst assumes $\unoise_k(a)$ is unobserved  Gumbel distributed   noise that is 
i.i.d.\ w.r.t.\ $k$ and $a$,
with
pdf and cdf, respectively 
\begin{equation}
  \label{eq:gumbel}
 \pdf_G(\unoise)  = \exp(-\unoise) \, \exp(-\exp(-\unoise)) ,
\quad \cdf_G(\unoise) = \exp(-\exp(-\unoise)),
\quad \unoise \in \reals.
\end{equation}
This Gumbel assumption yields a closed-form expression for the choice probabilities below.

{\em Aim}. The analyst observes the  fraction of the population that chooses each action  $a \in \actionset$.  The aim of the analyst is to estimate the parameter vector $\upar$ in the utility function~\eqref{eq:um_disc}. For simplicity, we assume that $\upar$ is a shared parameter that does not vary across agents. 

For each action $a^*\in \actionset$,
define the  {\em choice probability} $\prob( a^*) $ as the probability that an agent sampled uniformly from the population chooses  action $a^*$ to maximize its utility.
Then
  \begin{align}
    \prob( a^*) &=  \prob\big(  U_k(a^*) > U_k(a)   \text{ for all } a \in
                  \actionset - \{a^*\} \big)   \label{eq:choice_eq} \\
    &= \prob\big(  \unoise_k(a) <  \unoise_k(a^*) +  (\regc_{k,a^*} - \regc_{k,a})^\p \upar  \text{ for all } a \in
      \actionset - \{a^*\} \big)  \nn \\
    &= \int_\reals \prod_{ a \in
      \actionset - \{a^*\}} \cdf_G\big( \unoise + (\regc_{k,a^*} - \regc_{k,a})^\p \upar \big)\, \pdf_G(\unoise) d\unoise.   \label{eq:choice_eq_last}
  \end{align}
Equation  \eqref{eq:choice_eq} is called the \textit{random utility model} explanation of observed choices.
  A random utility model specifies the probability that the decision maker will choose
  action  $a^*$ over $a$ for any pair of actions  $a,a^*$. The noise term $\unoise$ introduces flexibility in the model by allowing the decision maker to respond differently when the same   pair of actions is repeatedly queried.

  Evaluating~\eqref{eq:choice_eq_last} with  Gumbel noise~\eqref{eq:gumbel} yields the following
 Bradley--Terry model:
%
\begin{theorem}
  Consider a population of independent agents, each  with utility~\eqref{eq:um_disc} and
  Gumbel noise~\eqref{eq:gumbel}.
  Then  the choice probabilities~\eqref{eq:choice_eq_last}  are  given by the logistic  distribution  
  \begin{equation}
    \label{eq:choice_prob}
  \prob(a^*) = \frac{\exp(\regc^\p_{k,a^*}\, \upar) } { \sum_{a\in \actionset} \exp(\regc^\p_{k,a}\,\upar) } , \quad a^* \in \actionset.
\end{equation}
\end{theorem}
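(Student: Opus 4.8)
The plan is to evaluate the integral~\eqref{eq:choice_eq_last} directly, using the explicit Gumbel pdf and cdf from~\eqref{eq:gumbel}. First I would abbreviate the pairwise utility gaps by writing $\delta_a = (\regc_{k,a^*} - \regc_{k,a})^\p \upar$ for $a \in \actionset - \{a^*\}$, and set $\delta_{a^*} = 0$, so that the product of cdf terms in~\eqref{eq:choice_eq_last} becomes $\prod_{a \neq a^*} \exp(-\exp(-(\unoise + \delta_a)))$. Multiplying in the factor $\exp(-\exp(-\unoise)) = \exp(-\exp(-(\unoise+\delta_{a^*})))$ coming from $\pdf_G(\unoise)$, the full integrand collapses to
\begin{equation*}
 \exp(-\unoise)\,\exp\!\Big(-\sum_{a \in \actionset} e^{-\delta_a}\, e^{-\unoise}\Big),
\end{equation*}
since $\exp(-(\unoise+\delta_a)) = e^{-\delta_a} e^{-\unoise}$ and the exponentials multiply into a single exponential of the sum.

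Next I would perform the substitution $t = e^{-\unoise}$, so $dt = -e^{-\unoise}\,d\unoise$ and as $\unoise$ ranges over $\reals$, $t$ ranges over $(0,\infty)$. The integral~\eqref{eq:choice_eq_last} becomes $\int_0^\infty \exp(-S t)\,dt$ where $S = \sum_{a \in \actionset} e^{-\delta_a}$, which evaluates to $1/S$. Therefore
\begin{equation*}
 \prob(a^*) = \frac{1}{\sum_{a \in \actionset} e^{-\delta_a}} = \frac{1}{\sum_{a \in \actionset} \exp\big(-(\regc_{k,a^*} - \regc_{k,a})^\p \upar\big)} = \frac{\exp(\regc_{k,a^*}^\p \upar)}{\sum_{a \in \actionset} \exp(\regc_{k,a}^\p \upar)},
\end{equation*}
where the last step multiplies numerator and denominator by $\exp(\regc_{k,a^*}^\p \upar)$. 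This is exactly~\eqref{eq:choice_prob}.

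There is no real obstacle here — the only place demanding slight care is the bookkeeping in combining the $A-1$ cdf factors with the one pdf factor into a single clean exponential, and checking that the $a = a^*$ term is correctly included (it contributes $e^{-\delta_{a^*}} = 1$ to the sum $S$). A secondary point worth a sentence is verifying that the change of variables is legitimate (the integrand is positive and the improper integral converges since $S > 0$), and noting as a sanity check that $\sum_{a^*} \prob(a^*) = 1$, which is immediate from the final formula. Everything else is routine.
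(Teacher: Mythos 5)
Your proof is correct. Note that the paper explicitly omits the proof of this theorem, so there is nothing to compare against; your direct evaluation of the integral~\eqref{eq:choice_eq_last} --- collapsing the $A-1$ Gumbel cdf factors and the pdf factor into a single exponential $\exp(-\unoise)\exp(-S e^{-\unoise})$ with $S=\sum_{a}e^{-\delta_a}$, then substituting $t=e^{-\unoise}$ to get $1/S$ --- is the standard and complete derivation of the multinomial logit from i.i.d.\ Gumbel noise, and all the bookkeeping (including the $a=a^*$ term contributing $1$ to $S$, and convergence since $S>0$) checks out.
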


The proof is omitted. The main outcome
is that given  empirical measurements of the choice probabilities,
the analyst can  estimate $\upar$ using a logistic regression on the logistic (softmax) model~\eqref{eq:choice_prob}. We studied logistic regression  in the POMDP book. \index{logistic! regression}
The gradient algorithm  can be used to compute  the MLE of $\upar$.
Finally, given  attribute vectors $\regc_{k,a}, a \in \actionset$ of agent $k$ and MLE of $\upar$, the analyst  can also predict the agent's  future choice behavior, e.g., predict consumer preferences for different vehicle types (sedan, SUV, or electric car), based on fuel efficiency, price and brand. The above Bradley--Terry random utility model is  also used in 
reinforcement learning with human feedback to fine-tune large language models including ChatGPT
\cite{ZSW20}.

\begin{rem}
\item   For binary choice models $\actionset=\{1,2\}$, we can directly obtain \eqref{eq:choice_prob} as follows:
Since $ \unoise_k(a^*) $ and $ \unoise_k(a) $ are i.i.d.\ Gumbel, hence
$   \unoise_k(a^*) - \unoise_k(a) $ has a logistic distribution with cdf
$$ \prob\big( \unoise_k(a^*) - \unoise_k(a)  \leq \onoise\big) =  \frac{\exp(\onoise)}{1 + \exp(\onoise)}, \quad \onoise \in \reals.$$
\item  Since $\sum_{a^*} \prob(a^*) = 1$, one component in~\eqref{eq:choice_prob} is redundant (completely determined) given the remaining $A-1$ components. We can  eliminate this  redundancy by pivoting:  Using $\tilde{\regc}_{k,a} = \regc_{k,a} - \regc_{k,1}$ for all $a$ including $a^*$,  we see that~\eqref{eq:choice_prob} remains unchanged.

\item If  $\unoise$ is Gaussian noise, then~\eqref{eq:choice_eq} 
  yields  the Thurstone--Mosteller random utility model.
  
\end{rem}

\section{\pwe} \label{sec:pwe_bayesian_irl}

The formalism used in this chapter for IRL is Bayesian revealed preferences from  microeconomics~\cite{CM15,CD15,CDL19}. Discrete choice models were pioneered by McFadden, see  \cite{Mcf01}
and references therein. \cite{MM15} incorporates  rational inattention in discrete choice.

{\bf IRL for explaining YouTube User Engagement}.
This chapter is based on \cite{HKP20} and \cite{PK23}. In these papers, Bayesian IRL was  applied to
 massive YouTube multimedia datasets to show that the commenting behavior of  user groups  is consistent with rationally inattentive utility maximization; these  yield remarkably accurate predictive performance.
 
{\bf IRL for Detecting Glass Ceiling Effect}. The glass ceiling effect refers to the barrier that keeps certain
groups from rising to influential positions, regardless of their qualifications. In a social network
context, it can be shown that  preferential attachment and homophily leads to the
glass ceiling effect \cite{NAI22}. At a deeper level, revealed preferences can be used to determine the utility functions that result in specific types of preferential attachment and homophily. 
 
{\bf Detecting a Bayesian Detector}.
 In the chapter, we assumed that the analyst knows the
 conditional  state-action  probabilities  $p_m(a|x)$ in each environment $\env$ by observing the agent infinitely many times.
We now briefly discuss the finite sample case.  To be specific, consider the inverse SHT problem. 
The assumption was that  the analyst 
 knows the conditional  probabilities  $p_m(a|x)$ and expected stopping time $\E\{\tau_m\}$ over the environments  $m =1,\ldots, \numagents$.
Suppose instead, the analyst  only has  noisy estimates  $\hat{p}_m(a|x)$ and $\hat{\E}\{\tau_m\}$ due to observing a finite sample size $n$ of the dataset, denoted as  $\dataset_n$. Then violation of the NIAS and NIAC inequalities could be either due to noisy estimates  or absence of Bayes-optimality.

How  to construct  an {\textit{IRL  detector}} to  detect  if the decisions of the SHT  detector  are consistent with Bayes-optimality? (Put simply, how to detect the presence of an optimal detector?)
What is the minimal sample size  so that the Type-I error probabilities
of the IRL detector are within a specified bound? Let $\IRLoutput(\datafin) = \emptyset$ denote the case where NIAS and NIAC are not feasible given $\datafin$.
Then the Type-I error of the detector is
$ P(\IRLoutput(\datafin) =  \emptyset | \text{Bayesian detector is optimal}) $.
\cite{PK23} uses the  Dvoretzky--Kiefer--Wolfowitz inequality  to bound  $|p_m(a|x) - \hat{p}_m(a|x)|$,  Hoeffding's inequality to bound  $| \E\{\tau_m\} - \hat{\E}\{\tau_m\}|$, and then the union bound to compute  sample complexity bounds for the Type-I error of the IRL detector. Thus,  we can solve intriguing
problems such as  detecting the presence of an optimal detector.

{\bf  Interpretable Deep Learning}.
 \cite{PKJ24} shows how Bayesian IRL can be used to interpret  deep convolutional neural networks   as  rationally inattentive 
 Bayesian utility maximizers. It is shown empirically that in many cases,  the deep classifier's response satisfies the necessary and sufficient conditions of NIAS and NIAC.   Such interpretable deep image classification   can be viewed  as system identification of a trained neural network. The information acquisition cost captures the ``learning'' cost incurred during the training. The reconstructed utilities computed using  IRL  provide insight into
 the neural network decision-making process.

{\bf Self-Attention in Large Language Models and Rational Inattention}. Rational inattention, discussed in the chapter, examines the impact of limited cognitive resources on decision making. In large language models (LLMs), self-attention mechanisms enable the model to focus on relevant parts of the input, ignoring less important details. Self-attention \cite{Vas17} enables each token (e.g., word) to assess the relevance of other tokens, optimizing attention to capture context across the entire input sequence. This parallels  rational inattention, where attention is optimized in decision making. Therefore, the Bayesian IRL methods proposed can be used to test if LLMs are rationally inattentive Bayesian decision makers; see \cite{JK25}.

{\bf Inverse Bayesian Contextual Bandits}. The Bayesian IRL framework  can be used to  perform IRL for partially observed regularized contextual bandits.  Optimal Bayesian stopping is  an instance of a  {\em partially observed regularized contextual Bayesian bandit problem}; {\em contextual}~\cite{AG13} since the agent faces multiple ground truths $\state$ (context), {\em partially observed}  since the agent observes  noisy measurements of the underlying context $\state$, {\em Bayesian}~\cite{HKZ22} since the agent minimizes its expected cumulative cost per context averaged over all contexts sampled from  prior  $\belief_0$,
and {\em regularized}~\cite{FBP19} since the agent minimizes the sum of expected stopping cost and a regularization term, namely, the expected continue cost (rational inattention cost).

{\bf Adversary Engagement and Honeypots}. Consider  IRL for stopping-time problems discussed in
\secn \ref{sec:irl-bayes-stop}. The analyst could implement  a honeypot: intentionally  design the environments to  prolong the stopping time (engagement) of the adversary agent, wasting its time while
 extracting valuable insights about its behavior. This  has applications in cyber defense systems.

 {\bf IRL with Partial Sensor Information}.  
 The chapter assumed that the   analyst does not know the agent's observation space $\obspace$, observation samples $\obs$, and observation likelihood $\oprob(\env)$. What if the analyst knows the set of possible  sensors used by the adversary, but not the specific sensor used?  Suppose $\oprob(m) \in \mathcal{B}(m) = \{\oprob^{(1)}(m), \ldots,  \oprob^{(L)}(m)\}$ and the analyst knows the set $\mathcal{B}(m)$ of sensors in each environment~$\env$. How can this additional information be used in IRL? 

 The IRL procedure is modified as follows: For each~$\env$, check if  a stochastic matrix $Q^{(l)}(\env)$ exists with $ \{0,1\}$ elements, such that the action probabilities factorize as $\bB(m) = B^{(l)}(m) \,Q^{(l)}(\env)$ for each~$l$. Then for all such $l \in \{1,\ldots,L\}$ where  this factorization holds,  modify NIAS and NIAC  in Algorithm~\ref{alg:dtest} to use $p^{(l)}_m(x,y) =  \belief_0(x)\, p^{(l)}_m(y|x) $  instead of $p_m(x,a)$.
(We check for a $ \{0,1\}$ matrix $Q^{(l)}(\env)$ since the optimal action is  a nonrandomized function of the observation.)

{\bf Inverse Stochastic Gradient Algorithms and Adaptive IRL}.
\index{contextual bandits! inverse}   \index{multi-armed bandit! contextual! inverse}
The IRL methods discussed thus far  are offline. They  assume that the forward learner has already converged to its optimal strategy, before the IRL algorithm is deployed. However, in adversarial signal
processing applications, it is important to devise \textit{adaptive} IRL algorithms
that operate in the transient
phase while the decision maker (forward RL algorithm)  is still learning to optimize its strategy. \index{inverse stochastic gradient} \index{inverse reinforcement learning (IRL)! adaptive}

Given  real-time estimates from a stochastic gradient algorithm (forward RL algorithm) that optimizes
expected reward $R$, the aim of   adaptive IRL is to construct
  a stochastic
  gradient algorithm to track and learn  $R$ in real time.  
Adaptive  IRL can be viewed as an \textit{inverse stochastic gradient algorithm}.
 To track $R$ in real time, adaptive IRL  
must operate with  a constant step size.  Also, adaptive IRL  must  be passive: The inverse learner cannot dictate where  the forward leaner  should evaluate its gradients.
This requires 
 passive stochastic gradient algorithms that handle  both  misspecified and noisy gradient estimates.
 In the next chapter,  passive Langevin stochastic gradient algorithms are proposed to achieve this adaptive IRL task.

\subsubsection{IRL Literature}

The IRL framework in Part V of the book was presented through the lens of revealed preferences from microeconomics.
%
%
To give additional context, we now give  a brief literature review of IRL.

{\em (a) IRL in fully observed environments:} 
The linear feasibility approach for IRL in stopping-time problems in this chapter generalizes \cite[Theorem 3]{NR00}.  Since the set of policies for an MDP is finite, \cite[Theorem 3]{NR00} comprises a finite set of linear inequalities. In comparison, the set of policies for a partially observed MDP (POMDP) is infinite (continuum). From the feasible set of rewards, \cite{NR00,RAT06} choose the max-margin reward, i.e., the reward that maximizes the regularized sum of differences between the performance of the observed policy and all other policies.  \cite{PK23} computes a regularized max-margin estimate of costs for inverse SHT.
\cite{AN04} achieve IRL by devising iterative algorithms for estimating the agent's reward.

\cite{ZMB08} use  maximum entropy for IRL when the agent's policy is subject to  Shannon mutual information regularization. The optimal policy turns out to be softmax in terms of the Q-function of the MDP. \cite{JSB20} extend \cite{ZMB08} to a more general regularization setup, 
e.g.,  Tsallis entropy~\cite{LKL20} that generalizes Shannon entropy.

\cite{HGW16} append the IRL task with simultaneous learning of model dynamics, specifically, the agent's transition kernel. 
This chapter differs from \cite{HGW16} in that we operate in the nonparametric partially observed setting  where the observation likelihood of the agent is unknown.

\cite{LV12} generalize IRL to continuous space processes and circumvent the problem of finding the optimal policy for candidate reward functions.  \cite{FLL17} uses deep neural networks for IRL to estimate agent rewards. Building on  \cite{Rus94}, \cite{CCS21} study identifiability of MDPs in IRL.

{\em (b) IRL in partially observed environments:} 
 \cite{CK11,MK12} construct IRL in a POMDP setting.
 \cite{MK12} extends Bayesian IRL for MDPs to POMDPs. In analogy to Bayesian IRL, the aim is to compute the posterior distribution of reward functions given an observation dataset.  In \cite{CK11}, the inverse learner first checks if the agent chooses the optimal action given a particular posterior belief, for {\em finitely many beliefs} aggregated from the observed trajectories of belief--action pairs. This is analogous to the NIAS condition in Theorem~\ref{thrm:classic_SHT}, where we check if the agent's terminal action is optimal given its terminal belief. 
 \cite{CK11} develops IRL methods for POMDPs with no assumption on problem structure. This chapter considers a subset of POMDPs, namely, Bayesian stopping-time problems, and the  IRL algorithms {\em do not} require knowledge of the observation likelihood of the decision maker, nor require solving a POMDP. 

{\em (c) Inverse rational control (IRC):} IRC~\cite{KDS20} is a closely related field to IRL in partially observed environments. IRC models suboptimality in decision makers as a misspecified reward function and aims to estimate this reward.  IRC  comprises two subtasks: 
First, the inverse learner constructs a map from a continuous space of reward functions parametrized by $\theta$ to the reward's optimal policy. Second, based on a finite observation dataset $\dataset$, the underlying hyperparameter $\theta$ is estimated as the maximum likelihood estimate $\argmax_{\theta} \prob(\dataset|\theta)$.
\index{Bayesian IRL|)}

{\em (d)  Role of  deep neural networks (DNNs):}  Deep autoencoders are used as a preprocessing step for IRL to map a  dataset to a lower-dimensional feature dataset  \cite{HKP20}. 
In Deep IRL, DNNs are  used as   functional  approximators to learn a utility function~\cite{WOP15,FDS20}.

\newpage

  \begin{subappendices}

  \section{Inverse Optimal Filtering}
\label{sec:inverse_filtering}
  \index{inverse filtering|(} \index{filter! inverse}
It is apt to conclude our discussion of Bayesian IRL with inverse filtering.
Consider an adversarial  signal processing problem involving ``us'' and an ``adversary''. The adversary observes our  state in noise, updates its posterior distribution of our state and then chooses an action based on this posterior. Given knowledge of ``our'' state and the sequence of  the adversary's actions observed in noise, we consider:  How can we estimate the  adversary's posterior distribution?
That is, we wish to estimate the adversary's estimate of us. 

Such problems arise in adversarial signal processing  within counter-autonomous systems. The adversary operates  an autonomous sensing system;
given measurements of its actions, we aim to estimate the adversary’s sensor’s capabilities and predict its
future actions (thus enabling defensive measures). For a detailed exposition of inverse filtering,
  see \cite{KR19}.


   \begin{figure}[h] \centering
          {\resizebox{9cm}{!}{
              \begin{tikzpicture}[node distance = 1cm, auto,every node/.append style={font=\Large}]
                \tikzset{
    block/.style={rectangle, draw, line width=0.5mm, black, text width=6em, text centered,
                 minimum height=2em},
               line/.style={draw, -latex}}
                \tikzset{
    block2/.style={rectangle, draw, line width=0.5mm, black, text width=5em, text centered,
                 minimum height=2em},
               line/.style={draw, -latex}}
    \node [block2] (BLOCK1) {Sensor};
    \node [block2, below of=BLOCK1,right of=BLOCK1,node distance=1.5cm] (BLOCK2) {Decision \\ Maker};
    \node [block, below of=BLOCK1,left of=BLOCK1,node distance=1.5cm,xshift=-1cm] (BLOCK3) {Tracker $\filter(\belief_{k-1},\obs_k)$};

    \draw[Latex-] (BLOCK1) -| node[left,pos=0.8]{$\lact_k$}  (BLOCK2)  ;
    \draw[-Latex] (BLOCK1.west) -|   node[left,pos=0.6]{$\obs_k\sim \oprob_{\state_k,\obs}$} (BLOCK3);

    \draw[-Latex](BLOCK3) --  node[above]{$\belief_k$} (BLOCK2);

    \node[draw=none,fill=none] at (6,-1.5) (drone) {\includegraphics[bb=0 0 0 0,scale=0.07]{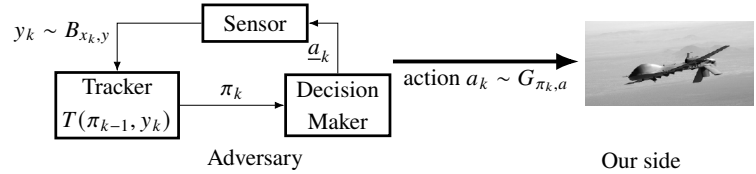}};
      \draw[Latex-,line width=2pt] ([yshift=0.8cm]drone.west)   --   node[below]{action $\act_k\sim \aprob_{\belief_k,\act}$} (2.5,-0.7);
    \node at (7,-2.5) {Our side};
    \node at (0,-2.5) {Adversary};

  \end{tikzpicture}} }
\caption{Schematic of adversarial inference problem. Our side is a drone/UAV or electromagnetic signal that probes the  adversary's multifunction radar system.}
\label{fig:inverse_filter}
\end{figure}

The inverse filtering problem involves two players:  ``us'' and an ``adversary'' as  illustrated in Figure~\ref{fig:inverse_filter}.  With $k=1,2,\ldots$ denoting discrete time, the model has
the following dynamics:
\begin{equation}
\begin{split}
    \state_k &\sim  \tp_{\state_{k-1}, \state} = \pdf(\state | \state_{k-1}), \quad \state_0 \sim \belief_0 \\
    \obs_k  &\sim \oprob_{\state_k \obs} = \pdf(y | x_k)\\
    \belief_k &= \filter(\belief_{k-1}, \obs_k)\\
    \act_k &\sim \aprob_{\belief_k,\act} = \pdf(\act | \belief_k).
  \end{split} \label{eq:model}
\end{equation}
\begin{itemize}
\item $\state_k\in \statespace$ is our Markov state on state space $\statespace$  with  transition kernel $\tp_{\state_{k-1}, \state}$ and prior $\belief_0$.
    \item $\obs_k\in \obspace$ is the adversary's noisy observation of our state $\state_k$ with observation likelihoods $\oprob_{xy}$. Here $\obspace$ denotes the observation space.
    \item $\belief_k = \pdf(x_k| \obs_{1:k})$ is the adversary's belief (posterior)  of our state $\state_k$ where $\obs_{1:k}$ denotes the sequence  $\obs_1,\ldots,\obs_k$. The operator $T$ in (\ref{eq:model}) is the  Bayesian filter  
       \beq  \filter(\belief,\obs) (\state) = \frac{
    \oprob_{\state \obs} \int_\statespace 
    \tp_{\zeta  \state}\, \belief(\zeta) \,d\zeta}
  {\int_\statespace  \oprob_{\state \obs} \int_\statespace 
    \tp_{\zeta  \state}\, \belief(\zeta)\, d\zeta d\state} ,  \qquad \state \in \statespace. 
  \label{eq:belief}
\eeq
Let $\Belief$ denote the  space of all such beliefs. When the state space
$\statespace$ is Euclidean space, then $\Belief$ is a function space comprising the space of density functions; if $\statespace$ is finite, then $\Belief$ is  the unit $
(\statedim-1)$-dimensional simplex of probability vectors.
    \item $\act_k \in \actspace$ denotes our measurement of the adversary's action based on its  belief $\belief_k$ where $\actspace$ denotes the action space. The adversary chooses an action $\lact_k$ as a deterministic function of $\belief_k$ and we observe $\lact_k$ in noise as $\act_k$. We encode this as $\aprob_{\belief_k,\act_k}$,  the conditional probability (or density if $\actspace$ is continuum) of observing  action $\act_k$ given the adversary's belief~$\belief_k$.
    \end{itemize}

    An identical formulation    
    applies to interactive learning:  $\state_k$ is  the  material being taught, 
    $\belief_k$ models the learner's knowledge of the material, and $\act_k$ is the response to an exam administered by the instructor. The instructor aims to  estimate the learner's knowledge  (posterior $\belief_k$), to optimize how to present the material.
    
    \subsubsection{Inverse Optimal Filter for Estimating Belief}
    
Given the model  (\ref{eq:model}), we now derive a filtering recursion for the posterior of the adversary's belief given knowledge  of our state sequence and recorded actions. Define
$$\post_{k}(\belief_k) = \pdf(\belief_k | \act_{1:k},\state_{0:k}).$$
The posterior $\post_k(\cdot)$ is a {\em random measure}, as it represents our conditional distribution of the adversary's belief $\belief_k$ given its actions $a_{1:k}$ and our  state sequence $x_{0:k}$. In the theorem below, we assume that the adversary's observations $\obs_k$ are drawn from a known pdf or pmf $\oprob_{\state_k,\obs}$.
\begin{theorem} \label{thm:post}
The posterior  $\post_k$ satisfies the following filtering recursion:
\beq
  \post_{k+1}(\belief) = \frac{\aprob_{\belief,\act_{k+1}}
    \,  \int_{\fBelief} \oprob_{\state_{k+1}, \obs_{\belief_k,\belief}}\, \post_k(\belief_k) d\belief_k}
  {\int_{\fBelief} \aprob_{\bbelief,\act_{k+1}}
    \,  \int_{\fBelief} \oprob_{\state_{k+1}, \obs_{\belief_k,\bbelief}}\, \post_k(\belief_k) \,d\belief_k \, d\bbelief}
  \label{eq:post}
  \eeq
   initialized by prior  $\post_0 = \pdf(\belief_0)$.
  Here $\obs_{\belief_k,\belief}$ is the observation such that $ \belief = \filter(\belief_k,\obs)$ where $\filter$ is the adversary's  filter (\ref{eq:belief}). The conditional mean estimate
  of the belief is $$\E\{\belief_{k+1}|\act_{1:k+1},\state_{0:k+1}\} = \int_\fBelief \belief\, \post_{k+1}(\belief) \,d\belief .$$
\end{theorem}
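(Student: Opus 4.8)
The plan is to derive the recursion \eqref{eq:post} as a standard hidden-Markov-model filtering update, once we identify the right ``state'' and ``observation'' processes. The key observation is that the pair $(\state_k,\belief_k)$ is itself a Markov process: given $(\state_k,\belief_k)$, the next adversary observation $\obs_{k+1}$ is drawn from $\oprob_{\state_{k+1},\cdot}$ (after $\state_{k+1}$ is drawn from $\tp_{\state_k,\cdot}$), and then $\belief_{k+1} = \filter(\belief_k,\obs_{k+1})$ is a deterministic function of $(\belief_k,\obs_{k+1})$. From our side, we observe $\state_{0:k+1}$ (so the $\state$-component is fully observed) and we observe $\act_{k+1}$, which is a noisy function of $\belief_{k+1}$ only, via the kernel $\aprob_{\belief_{k+1},\act_{k+1}}$. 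So $\post_k(\belief) = \pdf(\belief_k=\belief\mid \act_{1:k},\state_{0:k})$ is the filtered posterior of the partially observed component $\belief_k$ in this HMM, conditioned additionally on the fully observed $\state$-path.

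First I would write the one-step prediction step: by conditioning on $\belief_k$ and using the Markov/Bayes structure,
\[
\pdf(\belief_{k+1}=\belief \mid \act_{1:k},\state_{0:k+1})
= \int_{\fBelief} \pdf(\belief_{k+1}=\belief\mid \belief_k, \state_{k+1})\, \post_k(\belief_k)\, d\belief_k .
\]
Here $\pdf(\belief_{k+1}=\belief\mid \belief_k,\state_{k+1})$ is the push-forward of the observation law $\obs_{k+1}\sim \oprob_{\state_{k+1},\cdot}$ under the map $\obs\mapsto \filter(\belief_k,\obs)$. Assuming the map $\obs\mapsto\filter(\belief_k,\obs)$ is invertible (which is the implicit standing assumption — there is a unique $\obs_{\belief_k,\belief}$ with $\belief=\filter(\belief_k,\obs_{\belief_k,\belief})$; with a Jacobian factor in the continuous-observation case, or simply a sum over preimages in the discrete case), this density equals $\oprob_{\state_{k+1},\obs_{\belief_k,\belief}}$ up to that Jacobian. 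Then I would apply the measurement update: the new observation $\act_{k+1}$ has likelihood $\aprob_{\belief,\act_{k+1}}$ given $\belief_{k+1}=\belief$ (and is conditionally independent of everything else given $\belief_{k+1}$), so Bayes' rule gives
\[
\post_{k+1}(\belief) \;\propto\; \aprob_{\belief,\act_{k+1}} \int_{\fBelief} \oprob_{\state_{k+1},\obs_{\belief_k,\belief}}\, \post_k(\belief_k)\, d\belief_k,
\]
and normalizing over $\belief\in\fBelief$ yields exactly \eqref{eq:post}. The conditional-mean formula is then immediate by integrating $\belief$ against $\post_{k+1}$.

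I expect the main obstacle to be the change-of-variables subtlety in the prediction step, i.e.\ making precise the passage from ``observation $\obs_{k+1}$ has density $\oprob_{\state_{k+1},\obs}$'' to ``induced belief $\belief_{k+1}$ has density $\oprob_{\state_{k+1},\obs_{\belief_k,\belief}}$''. This requires that for each fixed $\belief_k$ the filtering map $\obs\mapsto\filter(\belief_k,\obs)$ is a bijection onto its range with a well-defined inverse $\obs_{\belief_k,\belief}$ (and, in the Euclidean-observation case, an absolutely continuous inverse so the Jacobian appears); the theorem statement sidesteps this by writing the recursion directly in terms of $\obs_{\belief_k,\belief}$, so in the proof I would simply invoke this invertibility as the standing regularity assumption and note that the Jacobian factor, being independent of $\act_{k+1}$, cancels in the normalization. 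The remaining steps — Markovianity of $(\state_k,\belief_k)$, conditional independence of $\act_{k+1}$ given $\belief_{k+1}$, and the Bayes update — are routine and I would state them without belaboring the measure-theoretic bookkeeping.
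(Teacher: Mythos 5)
Your proposal is correct and follows essentially the same route as the paper: the paper writes the joint unnormalized density with the Dirac delta $\pdf(\belief_{k+1}\mid\obs_{k+1},\belief_k)=\delta\big(\belief_{k+1}-\filter(\belief_k,\obs_{k+1})\big)$, marginalizes over $\obs_{k+1}$, and normalizes --- which is exactly your prediction step (push-forward of the observation law through the filter map) followed by the Bayes update with likelihood $\aprob_{\belief,\act_{k+1}}$, with the term $\pdf(\state_{k+1}\mid\state_k)$ cancelling in the normalization. One caveat: your claim that the Jacobian of $\obs\mapsto\filter(\belief_k,\obs)$ ``cancels in the normalization'' is not right as stated (it depends on both $\belief_k$ and $\belief$ and sits inside the inner integral, so it does not divide out), but the paper's proof is equally informal on this point --- it integrates the Dirac delta over $\obs_{k+1}$ without producing a Jacobian, relying only on the footnoted assumption that $\oprob_{\state,\cdot}$ is a pdf so that the preimage $\obs_{\belief_k,\belief}$ is unique.
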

\begin{proof}
Start with the unnormalized density:
\begin{multline*}
  \pdf(\belief_{k+1},\obs_{k+1},\act_{1:k+1}, \state_{0:k+1})
  = \pdf(\act_{k+1}|\belief_{k+1})\, \pdf(\obs_{k+1}|\state_{k+1})\, \\
  \times \pdf(\state_{k+1}|\state_k)\, \int_{\fBelief} \pdf(\belief_{k+1}| \obs_{k+1},\belief_k)\,  \, \pdf(\belief_k,\act_{1:k},\state_{0:k})\, d\belief_k .
\end{multline*}
Note that $\pdf(\belief_{k+1}|\obs_{k+1},\belief_k) = \delta(\belief_{k+1} - \filter(\belief_k, \obs_{k+1}))$ where $\delta(\cdot)$ denotes  the Dirac delta function.
Finally,  marginalizing\footnote{It is here that the assumption that $\oprob_{\state,\cdot}$ is a pdf is used. If $\oprob_{\state,\cdot}$ is a pmf, then the probability of multiple observations resulting in the same posterior needs to be accounted for; see (\ref{eq:hmmpost}). \label{foot:multiple}}  the above  by integrating over $\obs_{k+1}$ and then normalizing yields (\ref{eq:post}). The term $\pdf(\state_{k+1}|\state_k)$ cancels out after normalization.
\end{proof}

  We refer to (\ref{eq:post}) as the {\em optimal inverse filter} since it provides the Bayesian posterior of the adversary's  belief, given our state and noisy measurements of the  adversary's actions.
  Note that $\post_{k+1}$   does depend on $\tp$ since  from (\ref{eq:belief}),  $\obs_{\belief_k,\belief_{k+1}}$ depends on $\tp$.


\subsubsection{Example 1. Inverse HMM Filter}  \index{inverse filtering! HMM filter}
\index{HMM filter! inverse filter}
Here  $\statespace = \{1,\ldots,\statedim\}$,
  $\obspace = \{1,\ldots,\obsdim\}$,  $\actspace = \{1,\ldots,\actdim\}$. Our state $\{\state_k, k \geq 0\}$ is a finite-state Markov chain with transition matrix $\tp$. The adversary observes the   HMM measurements  $\{\obs_k,k\geq 0\}$ where $\obs_k \in \obspace$ is drawn from  observation probability $\oprob$.
  Then $\filter(\belief,\obs)$ in (\ref{eq:belief}) is the  HMM filter deployed by the adversary
to compute the posterior of our state.
  $\fBelief$ is the  $(\statedim-1)$-dimensional simplex, i.e., the belief space of $\statedim$-dimensional probability vectors.
Assume that  (\ref{eq:post}) is initialized  with $\post_0(\belief) = \delta(\belief- \belief_0)$, i.e., the prior is a Dirac delta at $\belief_0 \in \fBelief$.

How to estimate the adversary's posterior distribution $\belief_k$ of our state, by observing its actions in noise? We use the inverse HMM filter.
For $k=1,2,\ldots$ construct the  finite sets $\fBelief_k$ of belief states via the following recursion:
$$\fBelief_k = \big\{ \filter(\belief,\obs), \;\obs \in \obspace, \belief \in \fBelief_{k-1} \big\}, \quad  \fBelief_0 = \{\belief_0\}.$$
 Note  $\fBelief_k$ has
$\obsdim^k$ elements.
Using (\ref{eq:post}), the inverse HMM filter for estimating the belief reads: for $\belief \in \fBelief_{k+1}$, the posterior and conditional mean are
\begin{align}
 & \post_{k+1}(\belief) = \frac{\aprob_{\belief,\act_{k+1}}
    \,  \sum_{\bbelief \in \fBelief_k} \sum_{\obs \in\obspace_{\bbelief,\belief}}\oprob_{\state_{k+1}, \obs}\, \post_k(\bbelief) }
  {\sum_{\belief \in \fBelief_{k+1}} \aprob_{\belief,\act_{k+1}}
    \,  \sum_{\bbelief \in \fBelief_k} \sum_{\obs \in\obspace_{\bbelief,\belief}} \oprob_{\state_{k+1}, \obs}\, \post_k(\bbelief) }
\nonumber  \\
 & \hat{\belief}_{k+1} = \E\{\belief_{k+1}|\act_{1:k+1},\state_{0:k+1}\} = \sum_{\belief \in \fBelief_{k+1}} \belief \,\post_{k+1}(\belief).
 \label{eq:hmmpost}
\end{align}
Here $\obspace_{\bbelief,\belief} = \{\obs\colon \filter(\belief,\obs) = \bbelief \}$ and accounts for the  fact that in finite  observation spaces, different observations can yield the same
posterior.
  The inverse HMM filter (\ref{eq:hmmpost})  is a finite-dimensional recursion, but the cardinality  of 
  $\fBelief_k$ grows exponentially with $k$;
\index{inverse filtering|)} see \cite{MRK20}.

 \subsubsection{Example 2. Inverse Kalman Filter}
 \index{inverse filtering! Kalman filter}
 Suppose that ``we'' are a target, and our state process $\{\state_k\}$ evolves  according to linear time invariant Gaussian dynamics. An  adversary radar observes us in Gaussian noise and runs a Kalman filter to compute the conditional mean estimate
 $\hat{\state}_k$ and covariance $\kalmancov_k$ of our state $\state_k$ at each time $k$.
 We call this the forward learner's Kalman filter.
 
 The adversary then chooses an action $\lact_k = \fun(\kalmancov_k)\,\hat{\state}_k$ at each time $k$ for some prespecified function $\fun$.  This choice is motivated by linear quadratic Gaussian control where the action is chosen as a linear function of the estimated state $\hstate_k$ weighted by the covariance matrix.
 
 Suppose we observe the adversary's actions  $\{\lact_k\}$ in i.i.d Gaussian noise as $\{\act_k\}$.
Given these noisy actions $\act_{1:k}$, our aim is to estimate the adversary's
posterior  $\belief_k=\normal(\hat{\state}_k,\kalmancov_k)$ of us,
at each time~$k$. This is achieved by the inverse Kalman filter which computes the posterior of $\belief_k$.

 
 Since the conditional mean update of a  Kalman  filter has linear Gaussian dynamics, our
 inverse Kalman filter operates on the following linear Gaussian state space model with state
 $\hstate_k$:
\beq \label{eq:inversekf}
\begin{split}
  {\hstate}_{k+1} &=   (I - \kg_{k+1} \obsm) \, \statem \hstate_{k} + \kg_{k+1} \onoise_{k+1} + \kg_{k+1} \obsm \state_{k+1},  \text{ where }  \kg_{k+1} = \kalmancov_{k+1|k} \obsm^{\p}  \Sig_{k+1}^{-1}  \\
   \act_k &= \fun(\kalmancov_k)\,\hstate_k + \anoise_k, \quad
   \anoise_k \sim \normal(0,\anoisecov) \text{ i.i.d. }   
\end{split}
\eeq
Here $\hstate_k$ is the unobserved state to us (since we observe the adversary's estimate $\hstate_k$ via noisy actions $a_k$), while our state $\state_k$ is known to us. 
The inverse Kalman filter is the Kalman filter algorithm run on model~\eqref{eq:inversekf}  to estimate $\hstate_k$, i.e., it estimates the adversary's estimate of us.

The above inverse Kalman filter does not interact with the forward learner's Kalman filter.

\section[Proof of Theorem \ref{thm:BIRL}]{Proof of Theorem~\vref{thm:BIRL}}\label{proof:BIRL}

{\bf Necessity of NIAS and NIAC Inequalities}:

1.  NIAS~(\ref{eq:NIAS}): For any environment $\dpiter\in\dpset$, define $\obsset_{\act}\subseteq\obsset$ such that for any observation $\obs\in\obsset_{\act}$, given posterior $p_{\dpiter}(\state|\obs)$, the optimal choice of action is $\act$ as defined in~\eqref{eq:utilitymaximization}. The revealed posterior given action $\act$, denoted  as $p_{\dpiter}(\state|\act)$,  is  the public belief in social learning, and is a  garbled version of the actual posterior  $p_{\dpiter}(\state|\obs)$:
 	\begin{equation}\label{eq:revpos}
 	  p_{\dpiter}(\state|\act) =\sum_{\obs\in\obsset} p_{\dpiter}(\obs|\act)\, p_{\dpiter}(\state|\obs).
 	\end{equation} 
 	Since the optimal action is $a$ for all $\obs\in\obsset_{\act}$, \eqref{eq:utilitymaximization} implies that for all actions $\actiontwo\in \actionset$:
 	\begin{align*}
 	& \hspace{-0.4cm}\quad\quad~~\sum_{\state\in \stateset} p_{\dpiter}(\state|\obs) (\utilitysymbolagent{\dpiter}(\state,\actiontwo)-\utilityagent{\dpiter})\leq 0\\
 	&\hspace{-0.4cm}\implies \sum_{\obs\in \obsset_{\act} } p_{\dpiter}(\obs|\act) \sum_{\state\in \stateset} p_{\dpiter}(\state|\obs) (\utilitysymbolagent{\dpiter}(\state,\actiontwo)-\utilityagent{\dpiter})\leq 0 \\
        & \hspace{-0.4cm}\implies \sum_{\obs\in \obsset} p_{\dpiter}(\obs|\act) \sum_{\state\in \stateset} p_{\dpiter}(\state|\obs) \big(\utilitysymbolagent{\dpiter}(\state,\actiontwo)-\utilityagent{\dpiter}\big)\leq 0
          \quad~~(\text{since }p_{\dpiter}(\obs|\act)=0,~\forall \obs\in\obsset\backslash\obsset_{\act})\\
	&\hspace{-0.4cm}\implies\sum_{\state\in \stateset} p_{\dpiter}(\state|\act) \big(\utilitysymbolagent{\dpiter}(\state,\actiontwo)-\utilityagent{\dpiter}\big)\leq 0~(\text{from }\eqref{eq:revpos}) \implies \text{NIAS~\eqref{eq:NIAS}}.
 	\end{align*}

2. NIAC~(\ref{eq:NIAC}):
The action probabilities
$\bB(\env)$ are a  garbled version of the observation probabilities
$\oprob(\env)$ since
 	\begin{equation}\label{eq:revattfun}
 	    \bB_{\state,\act}(\env)  = \pdf_{\dpiter}(\act|\state) = \sum_{\obs\in\obsset}p_{\dpiter}(\act|\obs)\,\oprob_{\state,\obs}(\env).
          \end{equation}
 Equation         \eqref{eq:revattfun} implies that $B(m)$ Blackwell dominates $\bB(m)$. (We discussed  Blackwell dominance extensively in the POMDP book  which we denoted as  $B(m) \bd \bB(m)$.)
Next from~\eqref{eq:attentionmaximization},
$R(\reward_\env,\oprob,\belief_0)  $   is convex in~$\oprob$. 
Then Blackwell dominance and convexity imply that for any two environments $l$ and $m$, 
 	\begin{equation}
 	    R(r_l, \bB(\env),\belief_0) \leq R(r_l,\attfunsymb(\env),\belief_0).  \label{eq:ineq_BLK}
 	\end{equation}
The above inequality holds with equality if $\dpiter=\dpitertwo$ (due to NIAS \eqref{eq:NIAS}).

Then
consider~\eqref{eq:attentionmaximization} for optimality of attention policy.
  Let $\hb_{\dpiter}=\RIcost(\attfunsymb(\env),\belief_0)>0$  denote the information acquisition cost. The following inequalities hold for any two environments $\dpitertwo\neq\dpiter$:
$$
     R(r_m, \bB(\env),\belief_0) - \hb_{\dpiter}\overset{\eqref{eq:ineq_BLK}}{=} R(r_m, \oprob(\dpiter),\belief_0) - \hb_{\dpiter} 
     \overset{\eqref{eq:attentionmaximization}}{\geq} R(r_m,\oprob(\dpitertwo),\belief_0) - \hb_{\dpitertwo} \geq 
     R(r_m, \bB(l),\belief_0) - \hb_{\dpitertwo} .
     $$
 The last inequality follows from~\eqref{eq:ineq_BLK} and is the NIAC inequality~\eqref{eq:NIAC}; see~\eqref{eq:niac_full}.

 \noindent{\bf Sufficiency of NIAS and NIAC Inequalities:}
 
Let $\{\utilitysymbol_\dpiter,\hb_\dpiter\}_{\dpiter=1}^\numdp$ denote a feasible solution to the NIAS and NIAC inequalities of Theorem~\ref{thm:BIRL}. To prove sufficiency, we construct the following  UMRI tuple as a function of dataset $\datasetaccum$, and then construct a  feasible solution that satisfies the optimality conditions~\eqref{eq:attentionmaximization},  \eqref{eq:utilitymaximization}:
 \begin{equation}
   \begin{split}
     &\bigtuple  = (\dpset,\stateset,\obsset=\actionset,\actionset,\belief_0,\RIcost,\{\bB_{x,a}(\env)=p_\dpiter(\act|\state),\utilitysymbol_\dpiter,\dpiter\in\dpset\}),  \\ 
 &  \text{ where } \;   \RIcost(\bB,\belief_0)  = \max_{\dpiter\in\dpset} \hb_\dpiter + R(r_m, \bB,\belief_0) - R(r_m, \bB(\env),\belief_0).
   \end{split}
   \label{eq:construct_cost}
 \end{equation}
Here, $\RIcost(\cdot,\belief_0)$ is convex since it is a point-wise maximum of monotone convex functions. Further, since NIAC is satisfied, \eqref{eq:construct_cost} implies $\RIcost(\bB(\dpiter),\belief_0)=\hb_{\dpiter}$. It only remains to show that inequalities \eqref{eq:utilitymaximization} and \eqref{eq:attentionmaximization} are satisfied for all environments  in $\dpset$.
 \begin{compactenum}
 \item {\em NIAS implies \eqref{eq:utilitymaximization} holds.} This hold trivially  since the observation and action sets can be chosen to be identical. Recall that the analyst does not know the observation space.

 \item {\em Information Acquisition Cost \eqref{eq:construct_cost} implies \eqref{eq:attentionmaximization} holds.} Fix environment  $\dpitertwo\in\dpset$. Then 
 \begin{align*}
     &\RIcost(\bB,\belief_0)  =  \max_{\dpiter\in\dpset} \big\{ \hb_{\dpiter} + R(r_m,\bB,\belief_0) - R(r_m, \bB(\env),\belief_0) \big\} \\
      & \implies R(r_l,\bB(\dpitertwo),\belief_0)  -\hb_\dpitertwo \geq R(r_l, \bB,\belief_0) - \RIcost(\bB,\belief_0), \quad \forall~ \bB \in \attspace \\
        & \implies \bB(\dpitertwo) \in \argmax_{\bB \in \attspace} R(r_\dpitertwo, \bB,\belief_0) - \RIcost(\bB,\belief_0)=\eqref{eq:attentionmaximization}.
 \end{align*}
 \end{compactenum}

\end{subappendices}


\chapter{Langevin Dynamics for Adaptive Inverse Reinforcement Learning}
\label{chp:langevin}

As discussed in previous chapters, 
IRL aims to  estimate the reward function of optimizing agents by observing their response (estimates or actions).
  This chapter studies IRL  when noisy estimates of the gradient of
a reward  function  generated by
  multiple stochastic gradient agents are observed.
  We  present a generalized Langevin dynamics algorithm to estimate the reward function $\Reward(\th)$; specifically, the resulting Langevin algorithm asymptotically generates samples from the  distribution
  proportional to $\exp(\Reward(\th))$. The proposed adaptive IRL algorithms use  kernel-based passive learning schemes. We also construct multi-kernel passive Langevin algorithms for IRL which are suitable for high dimensional data. The performance of the proposed IRL algorithms are illustrated on examples in adaptive Bayesian learning, logistic regression (high dimensional problem) and constrained Markov decision processes.

Specifically,
the problem we consider is this: Suppose we  observe  estimates of multiple (randomly initialized) stochastic gradient algorithms (reinforcement learners) that  aim to maximize a (possibly non-concave) expected reward. {\em How to  design another stochastic gradient algorithm \textit{(inverse learner)} to estimate the expected reward function?}
Put simply our aim is to design an 
inverse stochastic gradient algorithm. A key issue is that the inverse stochastic gradient algorithm needs to be passive: the inverse learner
observes noisy gradients evaluated at random points chosen by the adversary. Unlike classical (active)
stochastic gradient algorithms, the inverse learner cannot specify where the gradients are evaluated. Thus
we will design  passive learning from mis-specified noisy gradients

\section{RL and IRL Algorithms}
To discuss  the main ideas, we first describe the point of view of multiple  agents performing reinforcement learning (RL).  These agents act sequentially to perform RL by using   stochastic gradient algorithms  to maximize a reward function.
Let $\tslow = 1,2\ldots$ index agents that perform RL sequentially. The sequential protocol is as follows.  The
agents  aim to maximize a possibly non-concave   reward $\Reward(\th) = \E\{\reward_\dtime(\th)\}$ where $\th \in \reals^\thdim$.
Each agent $\tslow$  runs a stochastic  gradient algorithm over the time horizon $ \dtime \in \{ \stoptimeirl_{\tslow} , \stoptimeirl_\tslow+1,\ldots, \stoptimeirl_{\tslow+1}-1\}$:
\begin{equation}
  \label{eq:rl}
 \begin{split}
   \th_{\dtime+1} &= \th_\dtime + \step \,\nabla_\th \reward_\dtime(\th_\dtime), \quad \dtime = \stoptimeirl_{\tslow} , \stoptimeirl_\tslow+1,\ldots, \stoptimeirl_{\tslow+1}-1 \\
 &  \text{ initialized independently by }   \th_{\stoptimeirl_\tslow} \sim \belief(\cdot).
 \end{split}
\end{equation}
Here $\nabla_\th \reward_\dtime(\th_\dtime)$ denotes the sample path gradient evaluated at $\th_\dtime$, and
$\stoptimeirl_\tslow$, $\tslow=1,2\ldots,$ denote stopping times measurable wrt the $\sigma$-algebra
generated by
$\{\th_{\stoptimeirl_\tslow}, \nabla_\th \reward_\dtime(\th_\dtime),\dtime=\stoptimeirl_\tslow, \stoptimeirl_\tslow+1,\ldots\} $. The initial estimate  $\th_{\stoptimeirl_\tslow} $ for agent $n$ is sampled independently
from  probability density function $\belief$ defined on   $\reals^\thdim$.  Finally,  $\step$ is a small positive constant step size.

Next  we consider the point of view of an observer that  performs \textit{inverse reinforcement learning (IRL)} to estimate the reward function $\Reward(\th)$. The observer (inverse learner)  knows initialization density $\belief(\cdot)$ and only has access to    the estimates $\{\th_\dtime\}$ generated by RL algorithm  (\ref{eq:rl}). The observer  reconstructs
  the gradient  $\nabla_\th \reward_\dtime(\th_\dtime)$  as
  $\hat{\nabla}_\th \reward_\dtime(\th_\dtime) = (\th_{\dtime+1} - \th_{\dtime})/\stepa$ for some positive step size $\stepa$.
  
The main idea of this chapter  is to propose and analyze  the following IRL  algorithm (which is a passive Langevin dynamics  algorithm) deployed by the observer: \index{passive Langevin dynamics}
\begin{equation}
  \boxed{  \eth_{\dtime+1} = \eth_\dtime +  \stepa  \bigl[ \kerneln\big(\frac{\th_\dtime-\eth_\dtime}{\kernelstep}\bigr)\,
 \frac{\temperature}{2}\, \nabla_\th \reward_\dtime(\th_\dtime) + \nabla_\eth \belief(\eth_\dtime)\bigr] \,  \belief(\eth_\dtime)+  \sqrt{\stepa}\, \belief(\eth_\dtime)\, \noise_\dtime,
    \quad \dtime =1,2,\ldots},
\label{eq:irl}
\end{equation}
initialized by  $\eth_0 \in \reals^\thdim$.
Here $\stepa$ and $\kernelstep$ are small positive constant step sizes,  $\{\noise_k, \dtime \geq 0\}$ is an i.i.d. sequence
of  standard $\thdim$-variate Gaussian random variables, and $\temperature = \step/\stepa $ is a fixed constant. 
Note that we have  expressed  (\ref{eq:irl}) in terms of $\nabla_\th \reward_k(\th_k)$
(rather than $\hat{\nabla}_\th \reward_\th(\th_k)$) since we have absorbed the ratio of step sizes into  the scale factor $\temperature$.

The key construct in (\ref{eq:irl})  is the kernel function\footnote{Our use  of the  term ``kernel'' stems from non-parametric statistics and passive stochastic approximation. It is not related to reproducing kernels in Hilbert spaces.}
   $\kernel(\cdot)$. This kernel function is chosen by the observer such that $\kernel(\cdot)$ decreases monotonically to zero as any component of the argument increases to infinity,
\beq \label{eq:kernel_properties} \kernel(\th) \geq 0, \quad \kernel(\th) = \kernel(-\th), \quad  \int_{\reals^\thdim} \kernel(\th) d\th = 1.
\eeq

An example is to choose the kernel as a multivariate normal  $\normal(0,\sigma^2I_{\thdim})$ density with $\sigma = \kernelstep $, i.e.,
$$ \kerneln\bigl(\frac{\th}{\kernelstep}\bigr) = (2 \pi)^{-\thdim/2}  \kernelstep^{-N} \exp \bigl (- \frac{\|\th\|^2}{2 \kernelstep^2}\bigr),
$$ which is essentially  like
a Dirac delta centered at 0 as $\kernelstep \rightarrow 0$.
Our main result stated informally is as follows; see Theorem~\ref{thm:weak-conv} in
Sec.\ref{sec:weak} for the formal statement.

To sumarize, the above  adaptive IRL algorithm 
 has  two important differences compared to the previous two chapters.  First, we are interested in {\em adaptive IRL}, namely, learning and tracking the reward  in real time as the dataset is being generated 
by the forward learner. Second, the IRL algorithm needs to be  passive:
in an adversarial setting the inverse learner  cannot specify to the forward learner  where to evaluate the gradients.
The above passive Langevin dynamics algorithm takes these e two requirements into account.

\noindent {\bf  Convergence of Adaptive IRL (Informal).} 
{\it
Based on the estimates $\{\th_\dtime\}$ generated by RL algorithm  \eqref{eq:rl}, the
 IRL algorithm  \eqref{eq:irl} asymptotically generates samples
$\{\eth_\dtime\}$
from the Gibbs measure
\beq \stat(\eth)   \propto  \exp \bigl(  \temperature \Reward(\eth ) \bigr) , \quad \eth \in \reals^\thdim, \quad
\text{ where } \temperature =  \step/\stepa .\label{eq:stationary1} \eeq
}

To explain the above result, let   $\hat{\stat}$ denote the empirical density function constructed from samples $\{\eth_\dtime\}$ generated by IRL algorithm (\ref{eq:irl}). Then
clearly\footnote{Since the IRL algorithm  does not know the step size $\step$ of the RL, it can only estimate $\Reward(\cdot)$ up to a proportionality constant $\temperature$. In classical Langevin dynamics  $\temperature $ denotes an inverse temperature parameter.}   $\log \hat{\stat}(\eth) \propto \Reward(\eth)$. Thus  IRL algorithm~(\ref{eq:irl}) serves as a \textit{non-parametric method} for  stochastically exploring and reconstructing  reward $\Reward$, given the estimates $\{\th_\dtime\}$ of  RL algorithm (\ref{eq:rl}).
Hence  based on  the estimates $\{\th_\dtime\}$ generated by  RL algorithm (\ref{eq:rl}), IRL algorithm~(\ref{eq:irl})  serves as a randomized sampling method for exploring the reward $\Reward(\eth)$ by  simulating random samples from it. Finally, in adaptive Bayesian learning discussed in  Sec.\ref{sec:numerical},  the RL agents maximize  $\log \Reward(\eth)$ using gradient algorithm (\ref{eq:rl});  then IRL algorithm (\ref{eq:irl}) directly yields samples from $\temperature \Reward(\eth)$.

\section{Context and Discussion} \label{sec:context}

The stochastic gradient RL algorithm (\ref{eq:rl}) together with non-parametric
passive Langevin IRL algorithm (\ref{eq:irl}) constitute the adaptive IRL framework. Figure \ref{fig:schematic} displays this framework.

\begin{figure}[h]
\begin{tikzpicture}[node distance =5.25cm and 2cm, auto]
   \node [blockff] (l1) {Stochastic Gradient Learner $\{\th_k\}$};
 \node [blockff,right of=l1] (l2) {Passive IRL Langevin\\ Dynamics~$\{\eth_\dtime\}$};
 \node [blockff,left of=l1] (l0) {Time evolving Utility $\Reward(\cdot)$};
  \node [right of=l2,node distance=2.5cm] (estimatedu)[draw=none]{};
 \draw[->](l0) -- node[above,pos=0.5]{noisy} node[below,pos=0.5]{measurement} (l1);
 \draw[->](l1) -- node[above,pos=0.5]{ $\{\nabla_\th \reward_\dtime(\th_\dtime)\}$} (l2);
 \draw[->](l2) --  node[above,pos=0.5]{ $\hat{\Reward}(\cdot)$} (estimatedu);
 \draw[dashed,->](l2) --  node[anchor=south] {} ++(0,-1.5) -| (l1);
 \node[text width=4cm] at (3,-1.2) {active (Sec.\ref{sec:activeirl})};
\end{tikzpicture}
\caption{{\small Schematic of adaptive IRL framework. Multiple agents (learners)
  compute noisy gradient estimates $ \nabla_\th \reward_\dtime(\th_\dtime)$ of
  a possibly time evolving reward $\Reward(\cdot)$.
  By observing these gradient estimates, the IRL Langevin dynamics algorithm
 (\ref{eq:irl})  generates samples $\eth_k \sim \exp(\temperature \Reward(\eth))$. So reward
  $\Reward(\cdot)$ can be estimated from the log of the empirical distribution of $\{\eth_k\}$.  The IRL algorithm (\ref{eq:irl}) is \textit{passive}:  its estimate $\eth_k$
  plays no role in determining the point $\th_\dtime$ where the learner evaluates gradients $ \nabla_\th \reward_\dtime(\th_\dtime)$.
Sec.\ref{sec:alternate} presents several additional IRL algorithms including a variance reduction algorithm and a non-reversible diffusion.
 Sec.\ref{sec:activeirl} presents an
  active IRL where the IRL requests the learner to provide a gradient at $\eth_k$, but the learner provides the noisy mis-specified gradient estimate
  $\nabla_\th \reward_\dtime(\eth_\dtime+ \obsnoise_\dtime)$ where $\{v_k\}$ is a noise sequence. Finally, Sec.\ref{sec:markov} discusses  the tracking properties of the IRL algorithm when the reward $\Reward(\cdot)$ evolves in time according to an unknown Markov chain.
}}
\label{fig:schematic}
\end{figure}
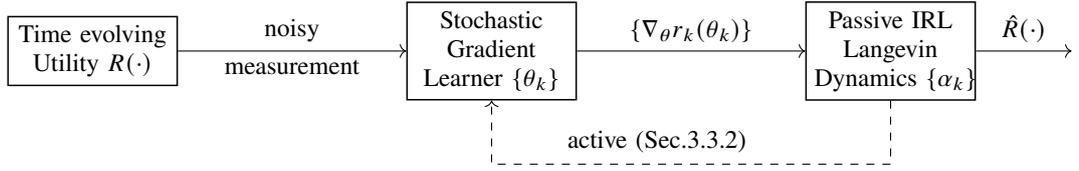

More abstractly, the IRL problem we address is this: given a sequence of noisy sample path  gradients $\{\nabla_\th \reward_\dtime(\th_k)\}$, how to estimate the expected reward $\Reward(\cdot)$? The IRL algorithm (\ref{eq:irl}) builds on classical stochastic gradient algorithms in 3 steps. First, it is {\em passive}: it does not specify where the RL agents compute gradient estimates.  The gradient estimates are evaluated by RL agents at points $\th_k$, whereas the IRL algorithm requires  gradients at $\eth_k$. To incorporate these mis-specified gradients, the passive algorithm uses the kernel $\kernel(\cdot)$.
  Second, a classical passive stochastic gradient algorithm only estimates a local  maximum of $\Reward(\cdot)$; in comparison we are interested in non-parametric reconstruction  (estimation) of the entire reward $\Reward(\cdot)$. Therefore,  we use  a passive {\em Langevin dynamics} based algorithm. Finally, we are interested in {\em tracking} (estimating) time evolving reward functions $\Reward(\cdot)$. Therefore we use a {\em constant step size}, passive Langevin dynamics IRL algorithm; see point (vii) below.

To give additional insight we now discuss the context, useful generalizations of IRL algorithm (\ref{eq:irl}),  and related works in the literature.

\begin{enumerate}[wide,labelwidth=!, labelindent=0pt,label=(\roman*)]
\item {\em Multiple agents}.  The multiple agent RL algorithm  (\ref{eq:rl}) is natural in non-convex stochastic optimization problems. Starting from various randomly chosen  initial conditions  $\th_{\stoptimeirl_\tslow} \sim \belief(\cdot)$, the  agents evaluate the gradients $\nabla_\th\reward_\dtime(\th_\dtime)$
  at various points $\th_\dtime$ to estimate the global maximizer. Since
  the initializations $\{\th_{\stoptimeirl_\tslow}\}$ is a sequence of
  independent random variables,
  the RL agents can also act in parallel
(instead of sequentially).
 Given this sequence of gradients $\{\nabla_\th\reward_\dtime(\th_\dtime)\}$, the aim  is to construct  IRL algorithms to estimate $\Reward(\th)$.
As an example, motivated by
 stochastic control involving information theoretic measures \cite{GRW14} detailed in Sec.\ref{sec:bayesian}, suppose
 multiple RL agents run stochastic gradient algorithms to estimate the minimum
 of a non-convex Kullback Leibler (KL) divergence.
 By observing these gradient estimates, the passive IRL algorithm (\ref{eq:irl})  reconstructs  the KL divergence.\footnote{To give additional context, multi-agent systems for  {\em deterministic} optimization are  studied in \cite{NO09} where the aim is to optimize cooperatively the sum of convex objectives corresponding to multiple agents. Agents  (represented by  nodes in a graph) deploy deterministic sub-gradient algorithms  and exchange information to achieve consensus.  We consider a {\em stochastic}  optimization framework where agents compute noisy gradient estimates. Our passive IRL uses these gradient estimates  to reconstruct (explore) the reward function and  uses a (stochastic gradient)) Langevin dynamics  algorithm.}

\item  {\em Passive IRL}. The IRL algorithm (\ref{eq:irl}) is a Langevin dynamics based gradient algorithm
  with  injected noise $\{\noise_\dtime\}$. It is a {\em passive} learning algorithm since the gradients are not evaluated at $\eth_\dtime$ by the inverse learner; instead the gradients are evaluated at the random points  $\th_\dtime$ chosen by the RL algorithm. This passive framework  is natural in an IRL. The inverse learner passively observes the RL algorithm and aims to estimate its utility.

We emphasize  that the passive Langevin  IRL algorithm \eqref{eq:rl}  estimates
    the utility function $\Reward(\th)$; see (\ref{eq:stationary1}). This is unlike classical passive stochastic gradient algorithms that
    estimate a local stationary point of the utility. To the best of our knowledge, such passive Langevin dynamics algorithms have not been proposed or analyzed - yet such algorithms arise naturally in estimating the utility by observing the estimates from a stochastic gradient algorithm.

The kernel $\kernel(\cdot)$ in (\ref{eq:irl}) effectively weights the usefulness of the gradient $\nabla_\th\reward_\dtime(\th_\dtime)$ compared to the required gradient
$\nabla_\eth\reward_\dtime(\eth_\dtime)$.
If $\th_\dtime$ and $\eth_\dtime$ are far apart, then kernel $\kernel((\th_\dtime-\eth_\dtime)/\kernelstep)$  will be  small. Then only a small proportion of the gradient estimate $\nabla_\th\reward_\dtime(\th_\dtime)$ is added to the IRL iteration. On the other hand, if $
\eth_\dtime = \th_\dtime$,
(\ref{eq:irl}) becomes a standard Langevin dynamics type algorithm. We refer to \cite{Rev77,HN87,NPT89,YY96} for the analysis of  passive stochastic  gradient algorithms.  The key difference compared to these works is that
we are dealing with a passive Langevin dynamics algorithm, i.e., there is an extra injected noise  term involving $\noise_\dtime$.

\item  {\em Intuition behind passive Langevin IRL algorithm (\ref{eq:irl})}. To discuss the intuition behind (\ref{eq:irl}), we first
  discuss the classical Langevin dynamics and also a more general reversible diffusion.
    The classical Langevin dynamics algorithm with fixed step size $\mu>0$ and deterministic reward $\Reward(\th)$    is of the form
    \begin{equation}
      \label{eq:classical_lang}
      \th_{\dtime+1} = \th_{\dtime} + \stepa\,  \nabla \Reward(\th_\dtime)
   + \sqrt{\stepa}\,  \sqrt{\frac{2}{\temperature}}\, \noise_\dtime,
    \quad \dtime =1,2,\ldots
  \end{equation}
 Indeed \eqref{eq:classical_lang}  is the Euler-Maruyama time discretization of the continuous time diffusion
  process
  \begin{equation}
    \label{eq:3}
    d\th(t) = \nabla_\th R(\th) + \sqrt{\frac{2}{\temperature}}\, d\bm(t)
  \end{equation}
    which has stationary measure $\pdf(\th)$ given by  (\ref{eq:stationary1}).
More generally,  assuming $\sigma(\cdot)$ is differentiable, \cite{ST99} studied  reversible diffusions of the form
    \begin{equation}
      \label{eq:tweedie}
     d\th(t) =  \biggl[\frac{\temperature}{2} \sigma(\th)\, \nabla_\th \Reward(\th)\, dt +  \nabla_\th \sigma(\th)\, dt +  d\bm(t) \biggr]\,\sigma(\th),
   \end{equation}
   whose  Euler-Maruyama time discretization  yields
   \begin{equation}
     \label{eq:tweedie_discrete}
     \th_{\dtime+1} = \th_\dtime +  \stepa  \bigl[
 \frac{\temperature}{2}\, \nabla_\th \Reward(\th_\dtime) + \nabla_\th \sigma(\th_\dtime)\bigr] \,  \sigma(\th_\dtime)+  \sqrt{\stepa}\, \sigma(\th_\dtime)\, \noise_\dtime,  \quad \dtime =1,2,\ldots
   \end{equation}
    It is easily verified that  reversible diffusion \eqref{eq:tweedie} has the same   Gibbs stationary measure $\pdf(\th)$ in~(\ref{eq:stationary1}).

    The IRL algorithm \eqref{eq:irl} substantially generalizes
      \eqref{eq:tweedie_discrete} in three ways: First, the gradient is at a mis-specified point $\th_k$ compared to $\eth_k$; hence we use the kernel
      $\kernel$ as discussed in point (ii) above.
      Second, unlike \eqref{eq:tweedie_discrete} which uses  $\nabla_\th \Reward(\th)$, IRL
      algorithm (\ref{eq:irl}) only has the (noisy) gradient estimate $\nabla_\th \reward_\dtime(\th)$. Finally, we choose $\sigma(\th)$ as $\belief(\th)$, namely the initialization density specified in (\ref{eq:rl}), to ensure that the stationary measure is as specified in \eqref{eq:stationary1} as explained at the end of
      Sec.\ref{sec:informal}.

The intuition behind the weak convergence of the passive Langevin IRL algorithm (\ref{eq:irl}) is explained in Sec.\ref{sec:informal}.
It is shown there  via stochastic averaging arguments  as the kernel converges to a Dirac-delta, the IRL algorithm (\ref{eq:irl})  converges to the reversible diffusion process \eqref{eq:tweedie} with
stationary measure given by (\ref{eq:stationary1}).

  \item {\em  IRL for Markov Decision Process.}
  Several types of RL based policy gradient algorithms in the Markov decision process
(MDP) literature \cite{BT96,SB98} fit our framework.
As a motivation, we now briefly discuss IRL for an infinite horizon average cost\footnote{As mentioned in Sec.\ref{sec:mdp}, our   IRL algorithms also apply to the simpler discounted cost MDP case.} MDP; details are  discussed in
Sec.\ref{sec:mdp}. Let  $\{\state_n\}$ denote a finite state Markov chain with
controlled transition probabilities $\tp_{ij}(\action) = \prob[\state_{n+1} = j | \state_n = i, \action_n = \action]$ where action $\action_n$ is chosen from policy $\policy_\th$ parametrized by $\th$ as  $\action_n = \policy_\th(\state_n)$.
Solving an average cost MDP (assuming it is unichain \cite{Put94}) involves computing  the optimal parameter $\th^*= \sup\{\th: \Reward(\th)\}$ where  the cumulative reward is
\begin{equation}
   \label{Jo}
\Reward(\th) = \lim_{\finaltime\to\infty} \inf \frac{1}{\finaltime} \E_{\th} \Big[\sum_{n=1}^\finaltime
\mreward(\state_n,\action_n) \mid \state_0 = x\Big], \action_n = \policy_\th(\state_n)
\end{equation}
Suppose now that a forward learner runs a policy gradient RL  algorithm  that evaluates
estimates $\nabla_\th r_k(\th)$ of $\nabla_\th \Reward(\th)$ in order to estimate $\th^*$.
Given these gradient estimates, how can an IRL algorithm estimate $\Reward(\th)$?

In Sec.\ref{sec:mdp}, motivated by widely used fairness constraints
  in wireless communications, we consider more general  average cost
  \textit{constrained} MDPs (CMDPs), see \cite{Alt99,NK10,BJ10}. In CMDPs \cite{Alt99}, the optimal policy is randomized.  Since the optimal policy is randomized, classical stochastic dynamic programming or Q-learning cannot be used to solve CMDPs as they yield deterministic policies. One can construct a Lagrangian dynamic programming formulation \cite{Alt99} and Lagrangian Q-learning algorithms \cite{DK07}.
  Sec.\ref{sec:mdp}  considers the case where the RL agents deploy a  policy gradient  algorithm. By observing these gradient estimates, our IRL algorithm reconstructs the Lagrangian of the CMDP.

  Notice that our non-parametric setup is different to classical IRL in \cite{NR00}
where the inverse learner has access to actions from the optimal policy, knows the controlled transition probabilities, and formulates a set of linear inequalities that the reward function
$\mreward(\state,\action)$ satisfies.
Our IRL framework only has access to gradient estimates $\nabla_\th \reward_\dtime(\th)$ evaluated at random points $\th$, and does not require knowledge of the parameters of the CMDP.
Also  our IRL framework is adaptive (see point (vii) below): the IRL algorithm
(\ref{eq:irl}) can track a time evolving $\Reward(\th)$ due to  the transition probabilities or rewards  of the MDP evolving
over time (and unknown to the inverse learner).

\item  {\em Multikernel IRL}.  IRL algorithm (\ref{eq:irl}) requires the gradient $\nabla_\th \reward(\th_\dtime)$
and knowing the density $\belief(\cdot)$. In Sec.\ref{sec:highdim} we will discuss a two-time scale multi-kernel IRL algorithm, namely (\ref{eq:mcmcirl}),  that does not require knowledge of the density $\belief(\cdot)$. All that is required
is a sequence of samples $\{\nabla_\th \reward_\dtime(\th_i),i=1\,\ldots,\numparticles\}$ when the IRL estimate is $\eth_\dtime$.  The multi-kernel  IRL algorithm (\ref{eq:mcmcirl}) incorporates variance reduction and is suitable for high dimensional inference. In Sec.\ref{sec:alternate}, we also discuss several other variations of
IRL algorithm (\ref{eq:irl}) including a mis-specified active IRL algorithm where the gradient is evaluated at a point $\th_\dtime$ that is a corrupted value of $\eth_\dtime$.

\item  {\em Global Optimization vs IRL}. Langevin dynamics based gradient algorithms have been studied  as a means for achieving global minimization for non-convex stochastic
optimization problems, see for example \cite{GM91}. The papers \cite{TTV16,RRT17} give a comprehensive study of convergence of the Langevin dynamics stochastic gradient algorithm in a non-asymptotic setting.
Also \cite{WT11} studies Bayesian learning, namely, sampling from the posterior using stochastic gradient Langevin dynamics.

Langevin dynamics for global optimization  considers the limit as $\temperature \rightarrow \infty$.
In comparison, the IRL algorithms in this chapter consider the case of fixed $\temperature = \step/\stepa$, since we are interested in sampling from the reward $\Reward(\cdot)$.   Also, we consider  passive Langevin dynamics algorithms in the context of IRL. Thus the IRL algorithm (\ref{eq:irl}) is non-standard in two ways. First, as mentioned above, it has a kernel to facilitate passive  learning. Second, the IRL algorithm (\ref{eq:irl}) incorporates
the initialization probability $\belief(\cdot)$ which appears in the RL algorithm (\ref{eq:rl}). Thus
(\ref{eq:irl}) is  a non-standard generalized Langevin dynamics algorithm (which still has  reversible diffusion dynamics).

\item {\em Constant step size Adaptive IRL for Time Evolving Utility}.  An important feature of the IRL algorithm (\ref{eq:irl}) is the  constant step size $\stepa$  (as opposed to a decreasing step size).  This facilities  estimating (adaptively tracking) rewards  that evolve over time. Sec.\ref{sec:markov}  gives a formal weak convergence analysis of the asymptotic tracking capability of the IRL algorithm (\ref{eq:irl}) when the reward $\Reward(\cdot)$  jump changes over time according to an  unknown Markov chain. The Markov chain constitutes a hyper-parameter in the sense that
it is not known or used by the IRL algorithm; it is used in our convergence analysis to determine how well the IRL algorithm can learn a time evolving reward.


  The analysis is very different to classical
  tracking analysis of stochastic  gradient algorithms \cite{BMP90} where the underlying hyper-parameter   evolves continuously over time.

The assumptions used in analyzing the adaptation of the  IRL algorithm in
Sec.\ref{sec:markov}
    are  similar to those for the passive IRL in Sec.\ref{sec:weak}; the main additional assumption is that the Markov chain's transition matrix is parametrized by a small parameter $\mcstep$. Depending on how $\mcstep$ compares to the  IRL algorithm step size $\stepa$, we analyze
 three cases of adaptive IRL   in Sec.\ref{sec:markov}: (i)  the reward jump changes on a slower time scale than the dynamics of the Langevin IRL  algorithm, i.e., $\mcstep = o(\stepa)$;  (ii) the reward jump changes on the same time scale as the Langevin IRL algorithm,  i.e., $\mcstep = O(\stepa)$;
 (iii) The reward jump changes on a faster time scale compared to the Langevin IRL algorithm, i.e., $\stepa = o(\mcstep)$.

  The most interesting (and difficult) case considered in Sec.\ref{sec:markov}  is when the reward changes at the same rate as the IRL algorithm,  i.e., $\mcstep = O(\stepa)$. Then  stochastic averaging theory yields a Markov switched diffusion limit as the asymptotic behavior of the IRL algorithm.
This is in stark contrast to classical averaging theory of stochastic gradient algorithms which yields a deterministic ordinary differential equation \cite{KY03,BMP90}.
  Due to the constant step size, the appropriate notion of convergence is weak convergence \cite{KY03,EK86,Bil99}.  The Markovian hyper-parameter tracking analysis generalizes our earlier work \cite{YKI04,YIK09} in stochastic gradient algorithms to the current case of  passive Langevin dynamics with a kernel.

\item  {\em Estimating utility functions}. Estimating a utility function given the response of agents is studied under the area of revealed preferences in microeconomics.
 Afriat's theorem \cite{Afr67,Die12,Var12} in
revealed preferences uses the response of a linearly constrained optimizing agent to construct a set of linear inequalities that are necessary and sufficient for an agent to be an utility maximizer; and gives a set valued estimate
of the class of utility functions that rationalize the agents behavior.
Different to revealed preferences, the current chapter uses noisy gradients to recover the utility function and that too in real time via a constant step size Langevin diffusion algorithm.

We already mentioned classical IRL \cite{NR00,AN04} which aims to estimate an unknown deterministic reward function  of an agent by observing the optimal actions of the agent in a Markov decision process setting. 
  \cite{ZMB08} uses the principle of maximum entropy for achieving IRL of optimal agents.
More abstractly, IRL falls under the area of {\em imitation learning}  \cite{HE16,OPN18} which is the process of learning from demonstration.
Our IRL approach can be considered as imitation learning from mis-specified noisy gradients
evaluated at random points in Euclidean space.
  We perform {\em adaptive} (real time) IRL:  given  samples from a stochastic gradient algorithm (possibly a forward RL algorithm), we propose a Langevin dynamics algorithm to estimate the utility function. Our  real-time IRL algorithm  facilitates  adaptive  IRL, i.e.,   estimating (tracking) time evolving  utility functions. In Sec.\ref{sec:markov}, we analyze the tracking properties of such  non-stationary IRL algorithms  when the utility function jump changes according to an unknown Markov process.

\item  {\em Interpretation as a numerical integration algorithm}. Finally, it is helpful to view IRL algorithm (\ref{eq:irl}) as a numerical integration algorithm when the integrand (gradients to be integrated) are presented at  random points and the integrand terms are corrupted by noise (noisy gradients).
One possible offline approach  is to discretize $\reals^\thdim$  and numerically build up an estimate of the integral at the discretized points by rounding off the evaluated integrands terms to the nearest discretized point. However, such an approach suffers from the curse of dimensionality: one needs
$O(2^\thdim)$ points to construct the integral with a specified level of tolerance.
In comparison, the passive IRL algorithm (\ref{eq:irl}) provides a principled real time approach for generating samples from the integral, as depicted by main result~(\ref{eq:stationary1}).

\item Although our main motivation for {\em passive} Langevin dynamics stems from IRL, namely estimating a utility function, we mention  the interesting recent paper by \cite{KHH20}  which shows that classical Langevin dynamics   yields more robust RL algorithms compared to classic stochastic gradient. In analogy to \cite{KHH20}, in future work it is worthwhile exploring if our passive Langevin dynamics algorithm can be viewed as a robust version of classical passive stochastic gradient algorithms.

\item Finally, we assumed in  \eqref{eq:rl} that the RL  agents use a fixed step size $\step$ which is not necessarily known to the inverse learner deploying \eqref{eq:irl}. More generally, 
    the step size of each  RL agent $n$ in \eqref{eq:rl} can be chosen as
$\step_n = \step(1+\stepn_n)$ where $\{\stepn_n\}$ is an iid bounded zero mean process.
This models the case where the RL agents deploy different step sizes, for example, due to  separate hyper-parameter tuning methods, that are not known to the inverse learner.  Then our main  result  \eqref{eq:stationary1}
continues to hold.
In particular, denote $\temperature_n = \step_n/\stepa$. Then
 as explained in Footnote \ref{foot:step} in Sec.\ref{sec:informal}, 
by averaging theory arguments, $\temperature_n$ ``averages out''' to $\temperature = \step/\stepa$ on the IRL algorithm  time scale.

\end{enumerate}

\section{Informal Proof and Alternative IRL Algorithms} \label{sec:context}

The RL algorithm (\ref{eq:rl}) together with IRL algorithm (\ref{eq:irl}) constitute our main setup.
In this section, we first start with  an informal  proof of convergence of (\ref{eq:irl}) based on stochastic averaging theory; the formal proof is in Sec.\ref{sec:weak}. The informal proof provided below is useful  because it
gives additional insight into the design of related  IRL algorithms. We then discuss several  related IRL algorithms including a novel multi-kernel version with variance reduction.

\subsection{Informal  Proof of Main Result
\eqref{eq:stationary1}}    \label{sec:informal}
Since
the IRL algorithm (\ref{eq:irl}) uses a constant step size, the  appropriate notion of  convergence  is  weak convergence.
Weak convergence  (for example, \cite{EK86}) is a function space generalization of convergence in distribution; function space because we  prove convergence of the entire trajectory (stochastic process) rather than simply the estimate at a fixed time (random variable).

A few words about our proof approach. Until the mid 1970s, convergence  proofs of stochastic gradient algorithms assumed martingale difference type of uncorrelated noises. The so-called ordinary differential equation (ODE) approach  was proposed by \cite{Lju77} for correlated
noises and decreasing step size, yielding with probability one convergence. This was subsequently generalized by Kushner and coworkers (see for example \cite{Kus84}) to weak convergence analysis of constant step size algorithms. The assumptions required   are weaker and
  the results  more general than that used in classical mean square error analysis
because  we are
dealing with
suitably scaled sequences of the iterates that are treated
as stochastic processes rather than random variables. Our
approach captures the dynamic evolution of the
algorithm. As a consequence, using weak convergence
methods we can also analyze the tracking properties of the
IRL  algorithms when the parameters are time varying (see Sec.\ref{sec:markov}).

As is typically done in weak convergence analysis,
we first represent the sequence of estimates $\{\eth_k\}$  generated by the IRL algorithm as a continuous-time random process. This is done by constructing the continuous-time trajectory  via piecewise
constant interpolation as follows:
For
 $t   \in [0,\horizon]$,
define the continuous-time piecewise constant interpolated processes parametrized by the step size
$\stepa$ as
\beq
\eth^\stepa(t) = \eth_\dtime , \; \text{ for } \ t\in [\stepa \dtime, \stepa \dtime+ \stepa). \label{eq:interpolatedp} \eeq

Sec.\ref{sec:weak} gives the detailed weak convergence  proof using the martingale problem formulation of Stroock and Varadhan \cite{EK86}.

Our informal proof of the main result  (\ref{eq:stationary1})  proceeds in two steps:

\underline{\em Step I}.
We  first {\it fix} the kernel step size  $\kernelstep$ and apply stochastic averaging theory arguments: this says that at the slow time scale, we can replace the fast variables by their expected value.
For small step sizes $\step$ and $\stepa=\step/\temperature$, there are three time scales in IRL algorithm (\ref{eq:irl}):
\begin{compactenum} \item
$\{\th_\dtime\}$ evolves slowly on intervals $ \dtime \in \{\stoptimeirl_\tslow, \stoptimeirl_{\tslow+1}-1\}$, and $\{\eth_\dtime\}$ evolves slowly versus $\dtime$.
\item
 We assume that the run-time of the RL algorithm (\ref{eq:rl}) for each agent $\tslow$  is bounded by some finite constant, i.e.,
 $\stoptimeirl_{\tslow+1} - \stoptimeirl_\tslow < M $ for some constant $M$.
 So $\{\th_{\stoptimeirl_{\tslow}}\} \sim \belief$
 is a fast variable compared to
 $\{\eth_\dtime\}$.
\item Finally  the noisy gradient process $\{\nabla_\th \reward_\dtime(\cdot)\}$ evolves at each time $\dtime$ and is  a faster  variable than  $\{\th_{\stoptimeirl_{\tslow}}\}$ which is updated at stopping times $\stoptimeirl_\tslow$.
\end{compactenum}
 With the above time scale separation,
there are two levels of averaging involved. First  averaging the noisy gradient $\nabla_\th \reward_\dtime(\th_\dtime)$ yields $\nabla_\th\Reward(\th)$. Next\footnote{\label{foot:step} Recall item (xi) of Sec.\ref{sec:context} discussed the case where each agent $n$ chooses step size $\step_n = \step(1+\stepn_n)$. Then $\temperature_n = \step_n/\stepa$  is averaged at this time scale yielding $\temperature$.} averaging $\{\th_{\stoptimeirl_{\tslow}}\} $ yields $\th \sim \belief$. Thus applying
 averaging theory to IRL algorithm (\ref{eq:irl}) yields the following averaged system:
 \begin{multline}  \bar{\eth}_{\dtime+1} = \bar{\eth}_\dtime +  \stepa \, \E_{\th \sim \belief} \Big[ \kerneln\big(\frac{\th-\bar{\eth}_\dtime}{\kernelstep}\bigr)\,
 \frac{\temperature}{2}\, \nabla_\th \Reward(\th) + \nabla_\eth \belief(\bar{\eth}_\dtime)\Big] \,  \belief(\bar{\eth}_\dtime)+  \sqrt{\stepa}\, \belief(\bar{\eth}_\dtime)\, \noise_\dtime \\
  = \bar{\eth}_\dtime +  \stepa \,\int_{\reals^\thdim} \, \kerneln\big(\frac{\th-\bar{\eth}_\dtime}{\kernelstep}\bigr)\,
 \frac{\temperature}{2}\, \belief(\bar{\eth}_\dtime) \nabla_\th \Reward(\th) \belief(\th) d\th + \belief(\bar{\eth}_\dtime)\, \nabla_\eth \belief(\bar{\eth}_\dtime)   +  \sqrt{\stepa}\, \belief(\bar{\eth}_\dtime)\, \noise_\dtime . \label{eq:discavg}
 \end{multline}
Given the sequence $\{\bar{\eth}_\dtime\}$, define the interpolated continuous time process  $ \bar{\eth}^\stepa$  as in (\ref{eq:interpolatedp}).  Then
 as $\stepa$ goes to zero,
$ \bar{\eth}^\stepa$ converges weakly to the  solution of the stochastic differential equation
\begin{equation} \begin{split}
      d\eth(t) &=   \int_{\reals^\thdim}\kerneln\bigl(\frac{\th - \eth}{\kernelstep}\bigr)\, \biggl[\frac{\temperature}{2} \belief(\eth)\, \nabla_\th \Reward(\th)\, dt \biggr] \,\belief(\th)  \,d\th + \belief(\eth)\, \nabla_\eth \belief(\eth)\, dt +  \belief(\eth)\,  d\bm(t) , \\
      \eth(0) & = \eth_0,
        \end{split} \label{eq:2levelaverage}
      \end{equation}
      where $\bm(t)$ is standard Brownian motion. Put differently,
the Euler-Maruyama time discretization of (\ref{eq:2levelaverage}) yields (\ref{eq:discavg}).
To summarize (\ref{eq:2levelaverage}) is the continuous-time averaged dynamics of  IRL algorithm~(\ref{eq:irl}). This is formalized
in Sec.\ref{sec:weak}.


\underline{\em Step II}. Next, we set the kernel step size $\kernelstep \rightarrow 0$.
  Then $\kernel(\cdot)$ mimics a Dirac delta function and so  the asymptotic dynamics of (\ref{eq:2levelaverage}) become the  diffusion
     \beq
     d\eth(t) =  \biggl[\frac{\temperature}{2} \belief(\eth)\, \nabla_\eth \Reward(\eth)\, dt +  \nabla_\eth \belief(\eth)\, dt +  d\bm(t) \biggr]\,\belief(\eth),  \quad \eth(0) = \eth_0 \label{eq:gld}
     \eeq
     %
     Finally,  (\ref{eq:gld}) is a reversible diffusion and its  stationary measure  is the Gibbs measure $\stat(\eth)$ defined in~(\ref{eq:stationary1}).  Showing this is  straightforward:\footnote{Note  \cite[Eq.34]{ST99}  has a typographic  error in specifying the determinant.}
Recall \cite{KS91} that for a generic diffusion process denoted as $d\state(t) = \statem(\state) dt + \snoisecov(\state) d\bm(t)$, the stationary distribution $\stat$ satisfies
\beq  \forward \stat = \frac{1}{2} \Tr [ \nabla^2 ( \kalmancov \stat)] - \div ( \statem \stat) = 0 , \qquad  \text{ where }
\kalmancov = \snoisecov \snoisecov^\p \label{eq:forward} \eeq
and  $\forward$ is the  forward operator.
From (\ref{eq:gld}),
$\statem(\eth) = [\frac{\temperature}{2} \belief(\eth)\, \nabla_\eth \Reward(\eth)  +  \nabla_\eth \belief(\eth)] \belief(\eth)$, $\snoisecov = \belief(\eth) I$. Then it is  verified by elementary calculus  that $\stat(\eth) \propto  \exp \bigl(  \temperature \Reward(\eth ) \bigr) $ satisfies (\ref{eq:forward}).

To summarize, we have shown informally that IRL algorithm (\ref{eq:irl})  generates  samples from (\ref{eq:stationary1}).
Sec.\ref{sec:weak} gives the formal weak convergence  proof.

(v)
{\em  Why not use classical Langevin dynamics?}
The passive version of the  classical Langevin dynamics  algorithm reads:
\begin{equation}
   \eth_{\dtime+1} = \eth_\dtime + \stepa\, \kerneln\big(\frac{\th_\dtime-\eth_\dtime}{\kernelstep}\bigr)\,  \nabla \reward_\dtime(\th_\dtime)
   + \sqrt{\stepa}\,  \sqrt{\frac{2}{\temperature}}\, \noise_\dtime,
    \quad \dtime =1,2,\ldots
\label{eq:irlstandard}
\end{equation}
 where $\th_\dtime$ are computed by RL (\ref{eq:rl}).
Then averaging theory (as $\stepa\rightarrow 0$ and then $\kernelstep \rightarrow 0$)
 yields the following asymptotic dynamics  (where $\bm(t)$ denotes standard Brownian motion)
 \beq
 d\eth(t) =  \nabla_\eth \Reward(\eth)\, \belief(\eth) dt + \sqrt{\frac{2}{\temperature}} d\bm(t)   ,  \quad \eth(0) = \eth_0 \label{eq:hard}
 \eeq
 Then  the stationary distribution of (\ref{eq:hard}) is proportional to
 $\exp(\temperature \int[ \nabla_\eth \Reward(\eth)\, \belief(\eth) ] d\eth )$. Unfortunately, this is difficult to relate to $\Reward(\eth)$ and therefore less useful.
  In comparison, the generalized Langevin algorithm (\ref{eq:irl})   yields samples from stationary distribution proportional to $\exp(\temperature\Reward(\eth))$
  from which $\Reward(\eth)$ is easily estimated (as discussed below (\ref{eq:stationary1})).
 This is the reason why we  will use the passive generalized Langevin dynamics (\ref{eq:irl}) for IRL instead of the passive classical  Langevin dynamics (\ref{eq:irlstandard}).

 \subsection{Alternative IRL Algorithms} \label{sec:alternate}
 IRL algorithm (\ref{eq:irl}) is the vanilla  IRL algorithm.
 In this section we discuss  several variations of  IRL algorithm  (\ref{eq:irl}). The algorithms discussed below include a passive version of the classical Langevin dynamics, a two-time scale multi-kernel MCMC based IRL algorithm (for variance reduction) and finally,  a non-reversible diffusion algorithm. The construction of these algorithms are based on the informal proof discussed above.

 \subsubsection{Passive Langevin Dynamics Algorithms for IRL}
 IRL algorithm (\ref{eq:irl}) can be viewed as a passive modification of the generalized Langevin dynamics proposed in \cite{ST99}. Since  generalized Langevin dynamics includes  classical Langevin dynamics as a special case, it stands to reason that we can construct a passive version of the classical Langevin dynamics algorithm.  Indeed, instead of (\ref{eq:irl}),  the following   passive Langevin dynamics can be used for IRL
 (initialized by $\eth_0 \in \reals^\thdim$):
     \begin{equation}
 \boxed{   \eth_{\dtime+1} = \eth_\dtime +  \stepa  \, \kerneln\big(\frac{\th_\dtime-\eth_\dtime}{\kernelstep}\bigr)\,
 \frac{\temperature}{2\, \belief(\eth_\dtime)}\, \nabla_\th \reward_\dtime(\th_\dtime) +  \sqrt{\stepa}\,  \noise_\dtime,
    \quad \dtime =1,2,\ldots }
\label{eq:irl2}
\end{equation}
Note that this algorithm is different to (\ref{eq:irlstandard}) due to the term $\belief(\eth_k)$ in the denominator,
 which makes a  crucial difference. Indeed, unlike  (\ref{eq:irlstandard}), algorithm  (\ref{eq:irl2}) generates samples from (\ref{eq:stationary1}), as we now explain:
By stochastic averaging theory arguments as $\stepa$ goes to zero,
the interpolated processes $  \eth^\stepa$ converges weakly to  (where $\bm(t)$ below is standard Brownian motion)
\begin{equation} \begin{split}
    d\eth(t) &=   \int_{\reals^\thdim}\kerneln\bigl(\frac{\th - \eth}{\kernelstep}\bigr)\, \biggl[\frac{\temperature}{2\,\belief(\eth)} \, \nabla_\th \Reward(\th)\, dt \biggr] \,\belief(\th)  \,d\th +   d\bm(t) , \quad \eth(0) = \eth_0
        \end{split}
      \end{equation}
          Again   as $\kernelstep \rightarrow 0$, $\kernel(\cdot)$ mimics a Dirac delta function and so  the
     $\belief(\cdot)$ in the numerator and denominator cancel out. Therefore
the      asymptotic dynamics become the  reversible diffusion
     \beq
     d\eth(t) = \frac{\temperature}{2}\, \nabla_\eth \Reward(\eth)\, dt  +  d\bm(t),  \quad \eth(0) = \eth_0 \label{eq:ld2}
     \eeq
     Note that (\ref{eq:ld2}) is the classical Langevin diffusion and has stationary distribution
     $\stat$ specified by (\ref{eq:stationary1}). So algorithm  (\ref{eq:irl2}) asymptotically  generates samples from (\ref{eq:stationary1}).

Finally, we note that Algorithm  (\ref{eq:irl2}) can be viewed  as a special case of IRL algorithm (\ref{eq:irl}) since its  limit dynamics (\ref{eq:ld2})
     is a special case of the limit dynamics (\ref{eq:gld}) with $\belief(\cdot) = 1$.

     \subsubsection{Multikernel Adaptive IRL for Variance Reduction in High Dimensions} \label{sec:highdim}
     For large dimensional problems (e.g., $\thdim=124$ in the numerical example of Sec.\ref{sec:numerical}), the passive IRL algorithm (\ref{eq:irl}) can take a very large number of iterations to converge to its stationary distribution.
This is because with high probability, the kernel $\kernel(\th_k,\eth_k)$ will be close to zero and so updates of $\eth_k$  will occur very rarely.

There is strong motivation to introduce variance reduction in the algorithm. Below we propose a
two time step, multi-kernel variance reduction IRL algorithm motivated by importance sampling.
Apart from the ability to deal with high dimensional problems, the algorithm also does not require
 knowledge of the initialization probability density $\belief(\cdot)$.

Suppose the IRL operates at a slower time scale than the RL algorithm.
At each time $k$ (on the slow time scale), by observing the RL  algorithm, the IRL obtains a  pool of samples
of the
gradients $  \nabla_\th \reward_\dtime(\th_{\dtime,i}) $ evaluated  at a large number of points $\th_{\dtime,i}$,
$i=1,2,\ldots,\numparticles$ (here $i$ denotes the fast time scale).
As previously, each sample $\th_{\dtime,i}$ is chosen randomly from $\belief(\cdot)$.
     Given these sampled derivatives, we propose the following multi-kernel IRL algorithm:
\beq
\boxed{\begin{split}
  \eth_{\dtime+1} &= \eth_\dtime + \stepa \,\frac{\temperature}{2} \, \frac{\sum_{i=1}^\numparticles   \pdf(\eth_\dtime|\th_{\dtime,i}) \nabla_\th \reward_\dtime(\th_{\dtime,i}) }{\sum_{l=1}^\numparticles \pdf(\eth_\dtime|\th_{\dtime,l})}
  + \sqrt{\stepa}  \noise_\dtime , \quad \th_{\dtime,i} \sim \belief(\cdot)
     \end{split}} \label{eq:mcmcirl}
     \eeq
      In (\ref{eq:mcmcirl}),  we choose the conditional probability density function $\pdf(\th|\eth)$ as follows:
      \begin{equation}
        \label{eq:cond-al}
        \pdf(\eth|\th) = \pdf_\obsnoise(\th - \eth)\quad \text{ where } \pdf_\obsnoise(\cdot) = \normal(0,\sigma^2I_\thdim).
      \end{equation}
      For notational convenience, for each $\eth$, denote the normalized weights in (\ref{eq:mcmcirl})  as
\beq  \weight_{\dtime,i}(\eth)  =   \frac{\pdf(\eth|\th_{\dtime,i}) }{\sum_{l=1}^\numparticles 
\pdf (\eth| \theta_{k,l})}
\quad i = 1,\ldots, \numparticles  \eeq
     Then these
     $\numparticles $ normalized weights  qualify  as
   symmetric kernels in the sense of (\ref{eq:kernel_properties}). Thus IRL algorithm (\ref{eq:mcmcirl}) can be viewed as  a multi-kernel  passive stochastic approximation
   algorithm. Note that the algorithm does not require knowledge of $\belief(\cdot)$.

Since for each $\dtime$,  the samples $\{\th_{\dtime,i}, i=1,\ldots,\numparticles\}$ are generated i.i.d. random variables,
it is well known from self-normalized importance sampling \cite{CMR05} that as  $\numparticles\rightarrow \infty$, then for fixed $\eth$,
\beq   \sum_{i=1}^\numparticles \weight_{\dtime,i}(\al) \,
\nabla_\th \reward_\dtime(\th_{\dtime,i}) \rightarrow \E\{\nabla_\th \reward_\dtime(\th) | \eth \}
\quad \text{ w.p.1,}
   \label{eq:sir}
   \eeq
   provided $\E| \pdf(\th|\eth) \,\nabla \reward_\th(\th)| < \infty$.
Similar results can also be established more generally if
$\{\th_{\dtime,i}, i=1,\ldots,\numparticles\}$ is a geometrically  ergodic Markov process with stationary distribution $\belief(\cdot)$.

{\em    Remark}: Clearly  the conditional expectation $ \E\{\nabla_\th \reward_\dtime(\th) | \eth_\dtime \}$ always has smaller variance than
$ \nabla_\th \reward_\dtime(\th)$; therefore variance reduction is achieved in IRL algorithm
(\ref{eq:mcmcirl}).
In sequential Markov chain Monte Carlo  (particle filters), to avoid degeneracy, one resamples from the pool of ``particles''  $\{\th_i,i=1\ldots,\numparticles\}$ according to the probabilities (normalized weights) $\weight_i$.  For large $\numparticles$, the resulting resampled particles have a density $\pdf(\th|\eth_k)$.
However,   we are only interested in computing an estimate of the gradient (and not in propagating particles  over time). So we use the estimate  $\sum_i \gamma_{\dtime,i}  \nabla_\th \reward_\dtime(\th_{\dtime,i})$ in (\ref{eq:mcmcirl}); this always has a smaller variance than resampling and then estimating the gradient; see \cite[Sec.12.6]{Ros13} for an elementary proof.

Why not use the popular MCMC tool of  sequential importance sampling with resampling?   Such a process resamples from the pool of particles and  pastes together components of $\th_i$ from other more viable candidates $\th_j$. As a result, $\numparticles$ composite vectors are obtained, which are more viable. However, since our IRL framework  is passive, this is of no  use since we cannot obtain  the gradient for these $\numparticles$ composite vectors. Recall that in our passive framework, the IRL
has no control over where the gradients $\nabla_\th \reward_k(\th)$ are evaluated.

{\em Informal Analysis of IRL algorithm (\ref{eq:mcmcirl})}.
By stochastic averaging theory arguments as $\stepa$ goes to zero, the interpolated process
  $\eth^\stepa$ from IRL algorithm (\ref{eq:mcmcirl}) converges weakly to
  \beq \label{eq:avgmcmc}
  d \eth(t) = \int_{\reals^\thdim} \frac{\temperature}{2}\,\nabla_\th \Reward(\th) \, \pdf\big(\th| \eth(t)\big) \, d\th \, dt + d\bm(t)  , \qquad \eth(0) = \eth_0
  \eeq
  where $\bm(t)$ is standard Brownian motion.
  Notice that even though $\th_i$ are sampled from the density $\belief(\cdot)$, the above averaging is w.r.t. the conditional density $\pdf(\th|\alpha)$ because  of
  (\ref{eq:sir}).
  For small variance $\sigma^2$, by virtue of the classical  Bernstein von-Mises theorem \cite{Vaa00},
    the conditional density  $\pdf(\th| \eth)$ in
  (\ref{eq:avgmcmc}) acts as a Dirac delta yielding the classical Langevin diffusion
  \beq \label{eq:classicaldiffusion}
  d \eth(t) =\frac{\temperature}{2}\, \nabla_\eth \Reward(\eth(t)) \, dt + d\bm(t)  \eeq
  Therefore  algorithm  (\ref{eq:mcmcirl})  generates samples from distribution (\ref{eq:stationary1}). The formal proof is in \cite{KY21}.

  \subsubsection{Active IRL with Mis-specified Gradient} \label{sec:activeirl}

Thus far we have considered the case where  the RL algorithm provides estimates $\nabla_\th \reward_\dtime(\th_\dtime)$  at randomly chosen points
 independent of the IRL estimate $\eth_\dtime$. In other words, the IRL is passive and has no role in determining where the RL algorithm evaluates gradients.

We now consider
 a modification where  the RL algorithm gives a noisy version of the gradient evaluated
 at a stochastically perturbed value of $\eth_\dtime$.  That is, when the IRL   estimate is $\eth_\dtime$, it  requests the RL algorithm to provide a  gradient estimate
 $\nabla_\th \reward_\dtime(\eth_\dtime)$. But the RL algorithm evaluates the gradient at a mis specified point
 $\th_\dtime = \eth_\dtime + \obsnoise_\dtime$, namely,   $\nabla_\th \reward_\dtime(\eth_\dtime+ \obsnoise_\dtime)$. Here $\obsnoise_\dtime \sim \normal(0,\sigma^2 I_{\thdim})$ is an i.i.d. sequence.  The RL algorithm then provides the IRL algorithm with $\th_\dtime$ and $\nabla_\th \reward_\dtime(\th_\dtime)$. So,  instead of $\th_\dtime$ being independent of $\eth_\dtime$, now  $\th_\dtime$ is conditionally dependent on $\eth_\dtime$ as
 \beq \pdf(\th_\dtime|\eth_\dtime) = \frac{1}{(2 \pi)^\thdim\,\sigma^{\thdim}} \exp( - \frac{1}{2 \sigma^2} \|\th_k - \eth_k\|^2), \qquad \th_k,\eth_k \in \reals^\thdim
 \label{eq:conditional}
 \eeq
 In other words, the IRL now actively specifies where to evaluate the gradient; however, the RL algorithm evaluates a noisy gradient  and that too at  a stochastically perturbed (mis-specified) point~$\th_k$.

 The active  IRL algorithm we propose is  as follows:
 \beq \label{eq:activeirl}
 \begin{split}
   \eth_{k+1} &= \eth_k + \stepa\, \frac{1}{\kernelstep^\thdim}\, \kernel(\frac{\th_k-\eth_k}{\kernelstep})
   \frac{\temperature}{2\, \pdf(\th_k|\eth_k)}\, \nabla_{\th} \reward_k(\th_k) + \sqrt{\stepa} \noise_k , \quad
 \text{ where }   \th_k = \eth_k + \obsnoise_k
\end{split}
\eeq
The  proof of convergence again follows using averaging theory arguments. Since $\{\th_k\} \sim \pdf(\th|\eth_k)$ is the fast signal and $\{\eth_k\}$ is the slow signal, the averaged system  is
$$ d\eth(t) =  \int_{\reals^\thdim} \kerneln\bigl(\frac{\th - \eth}{\kernelstep}\bigr)\ \frac{\temperature}{2\,\pdf(\th|\eth(t))}\,\nabla_\th \Reward(\th)\, {\pdf(\th|\eth(t))} \,d\th\, dt + d\bm(t)  $$
So the $\pdf(\th|\eth(t))$ cancel out in the numerator and denominator.  As $\kernelstep \rightarrow 0$,
the kernel acts as
a  Dirac delta thereby yielding the classical Langevin diffusion (\ref{eq:classicaldiffusion}).

{\em Remark}: The active IRL algorithm (\ref{eq:activeirl}) can be viewed as an idealization of the multi-kernel IRL algorithm (\ref{eq:mcmcirl}). The multi-kernel algorithm  constructs weights to  approximate  sample from the conditional distribution $\pdf(\th|\eth)$. In comparison, the active IRL has direct measurements from this conditional density. So the active IRL can be viewed as an upper bound to the performance of the multi-kernel IRL
Another motivation is inertia. Given the dynamics of the RL algorithm, it may not be possible to the RL to abruptly  jump to evaluate a gradient at $\eth_k$, at best the RL can only evaluate a gradient at a point $\eth_k + \obsnoise_k$. A third motivation stems from mis-specification:  if the IRL represents a machine (robot) learning from a human, it is difficult to specify to the human exactly what policy
$\eth_\dtime$  to    perform. Then $\th_k = \eth_k + \obsnoise_k$ can be viewed as an approximation to this  mis-specification.

     \subsubsection{Non-reversible Diffusion for IRL} So  far we have defined four  different  passive Langevin dynamics algorithms for IRL, namely (\ref{eq:irl}),
 (\ref{eq:irl2}),  (\ref{eq:mcmcirl}), and (\ref{eq:activeirl}).
These algorithms
yield  reversible diffusion processes that asymptotically sample from the stationary distribution
(\ref{eq:stationary1}).
  It is well  known \cite{HHS93,HHS05,Pav14} that adding a skew symmetric matrix to the gradient always improves the convergence rate of Langevin dynamics to its stationary distribution.
  That is for any $\thdim\times \thdim$ dimensional  skew symmetric matrix $S = -S^\p$, the non-reversible diffusion process
  \beq
   d\eth(t) = \frac{\temperature}{2} \, (I_\thdim + S) \, \nabla_\eth \Reward(\eth)  dt  +   d \bm(t) , \quad \eth(0) = \eth_0
 \label{eq:skew1}
\eeq
has a larger spectral gap and therefore converges to stationary distribution $\belief(\eth)$ faster than (\ref{eq:gld}).  The resulting IRL algorithm obtained by a  Euler-Maruyama time discretization of (\ref{eq:skew1})  and then introducing a kernel $\kernel(\cdot)$ is
  \begin{equation}
 \boxed{  \eth_{\dtime+1} = \eth_\dtime +  \stepa  \, \kerneln\big(\frac{\th_\dtime-\eth_\dtime}{\kernelstep}\bigr)\,
 \frac{\temperature\,(I_\thdim+ S)}{2\, \belief(\eth_\dtime)}\, \nabla_\th \reward_\dtime(\th_\dtime) +  \sqrt{\stepa}\,  \noise_\dtime,
    \quad \dtime =1,2,\ldots }
\label{eq:irl3}
\end{equation}
initialized by  $\eth_0 \in \reals^\thdim$.
Again  a stochastic  averaging theory argument  shows that IRL algorithm (\ref{eq:irl3}) converges weakly to the
non-reversible diffusion (\ref{eq:skew1}).
In numerical examples, we  found empirically that the convergence of (\ref{eq:irl3})  is  faster than (\ref{eq:irl}) or~(\ref{eq:irl2}). However,
the faster convergence comes at the expense of  an order of magnitude increased computational cost. The  computational cost of IRL algorithm  (\ref{eq:irl3}) is $O(\thdim^2)$ at each iteration due to  multiplication with  skew symmetric matrix $S$. In comparison  the computational costs of IRL
algorithms (\ref{eq:irl}) and (\ref{eq:irl2}) are each $O(\thdim)$.

\section{Numerical Examples} \label{sec:numerical}
This section presents two examples to illustrate the performance of the proposed IRL algorithms.

\subsection{Example 1. IRL  for Bayesian KL divergence and Posterior Reconstruction} \label{sec:bayesian}

 This section illustrates the performance of our proposed IRL algorithms in reconstructing the Kullback Leibler (KL) divergence and multi-modal posterior distribution. Our formulation is  a stochastic  generalization of  adaptive Bayesian learning in \cite{WT11} as  explained below.

{\bf Motivation.} Exploring and estimating the KL divergence of a multimodal posterior distribution is important in Bayesian inference \cite{RC13},  maximum likelihood estimation, and also stochastic control with KL divergence cost \cite{GRW14}.
To motivate the problem,
suppose random variable  $\th$ has prior probability density $ \pdf(\th)$.
Let $\thtrue$ denote a fixed (true) value of $\th$ which is unknown to the optimizing agents and inverse learner.
Given a sequence of observations
$\obs_{1:\horizon} = (\obs_1,\ldots,\obs_\horizon)$,  generated from distribution $\pdf(\obs_{1:\horizon}|\thtrue)$, the KL divergence of the posterior distribution is
\beq \label{eq:kldiv}
 J(\thtrue,\th)=
\E_\thtrue\{ \log \pdf(\thtrue| \obs_{1:\horizon})  -
 \log \pdf(\th| \obs_{1:\horizon})
\}  = \int
\log \frac{\pdf(\thtrue|\obs_{1:\horizon})}
{\pdf(\th|\obs_{1:\horizon})}
\, \pdf(\obs_{1:\horizon}|\thtrue) d\obs_{1:\horizon}
\eeq
It is well known  (via Jensen's inequality) that the global minimizer $\th^*$ of  $J(\thtrue,\th)$ is
$\thtrue$. Therefore minimizing the KL divergence yields a consistent estimator
of $\thtrue$. Moreover, under mild stationary conditions,   when the prior is non-informative (and so possibly improper), the Shannon-McMillan-Breiman theorem \cite{Bar85} implies that
the global minimizer of the  KL divergence converges with probability 1 to the maximum likelihood estimate as $\horizon \rightarrow \infty$. So there is strong motivation to explore and estimate the KL divergence.

Typically the KL divergence $J(\thtrue,\th)$ is non-convex in $\th$. So we are in the non-convex optimization setup of (\ref{eq:rl})  where multiple agents seek to estimate the global
minimizer of the KL divergence.

\subsubsection{Model Parameters}

We consider a stochastic optimization problem where a RL system chooses actions
$\action_k$
from randomized policy $\pdf(\th|\obs_{1:\horizon})$. In order to learn the optimal policy, the RL system aims to
estimate the global minimizer
$ \th^* = \argmin_\th J(\thtrue,\th) $; see for example \cite{GRW14} for motivation of KL divergence minimization in stochastic control.
Then by observing the  gradient estimates of the RL agents, we will use our proposed passive
IRL algorithms to reconstruct the KL divergence.

Ignoring the constant term  $\pdf(\thtrue| \obs_{1:\horizon}) $ in (\ref{eq:kldiv}),
minimizing $J(\thtrue,\th)$ wrt $\th$ is equivalent to maximizing the relative
entropy $\Reward(\th) = \E_{\thtrue}\{ \log \pdf(\th|\obs_{1:\horizon})\}$.
So  multiple RL agents aim to 
solve the following  non-concave stochastic maximization problem: Find
\beq
\th^* = \argmax_\th  \Reward(\th), \quad \text{ where } \Reward(\th)  = \E_{\thtrue}\{ \log \pdf(\th|\obs_{1:\horizon})\}
\label{eq:relative}
\eeq
In our numerical example we choose  $\th =[\th(1),\th(2)]^\p\in \reals^2$  and $\thtrue$ is  the true parameter value which is  unknown to the learner. The prior is $\pdf(\th) = \normal(0,\Sigma)$ where $\Sigma=\diag[10,2]$. The observations $\obs_\dtime$ are independent and generated from the multi-modal mixture likelihood
$$ \obs_\dtime \sim \pdf(\obs|\thtrue) = \frac{1}{2} \normal(\thtrue(1), 2) + \frac{1}{2} \normal(\thtrue(1) + \thtrue(2), 2) $$
Since $\obs_1,\ldots,\obs_\horizon$ are independent and identically distributed,
 the objective $\Reward(\th)$ in (\ref{eq:relative})  is
 \beq  \Reward(\th) =  \E_\thtrue\{ \log \pdf(\th) +  \horizon \, \log \pdf(\obs|\th) \} + \text{ constant indpt of $\th$}
 \label{eq:jtheta}
\eeq
For  true parameter value  $\thtrue=[0, 1]^\p$, it can be verified that
the
objective  $\Reward(\th)$ is non-concave and has two maxima at
$\th = [0, 1]^\p$ and $\th=[1, -1]^\p$.

\subsubsection{Classical Langevin Dynamics}
To  benchmark the performance of our passive  IRL algorithms (discussed below),
we  ran the classical Langevin dynamics algorithm:
\begin{equation} \label{eq:classical_langevin}
   \th_{\dtime+1} = \th_\dtime + \stepa\,  \frac{\temperature}{2}  \nabla_\th \reward_\dtime(\th_\dtime)
   + \sqrt{\stepa}\,   \noise_\dtime,
   \quad \dtime =1,2,\ldots,
\end{equation}
Note that the classical Langevin dynamics \eqref{eq:classical_langevin}
evaluates the gradient estimate $\nabla_\th \reward_\dtime(\th_\dtime)$  unlike our passive IRL algorithm which has no control of where the gradient is evaluated.
Figure \ref{fig:classicalLang}  displays
both the empirical histogram and a contour plot of the estimate $\Reward(\th)$ generated by classical Langevin dynamics. The classical Langevin dynamics can be viewed as an upper bound for the performance of our passive IRL algorithm; since our passive algorithm cannot specify where the gradients are evaluated.

\begin{figure}[h] \centering
 \mbox{\subfigure[]{
    \includegraphics[scale=0.5]{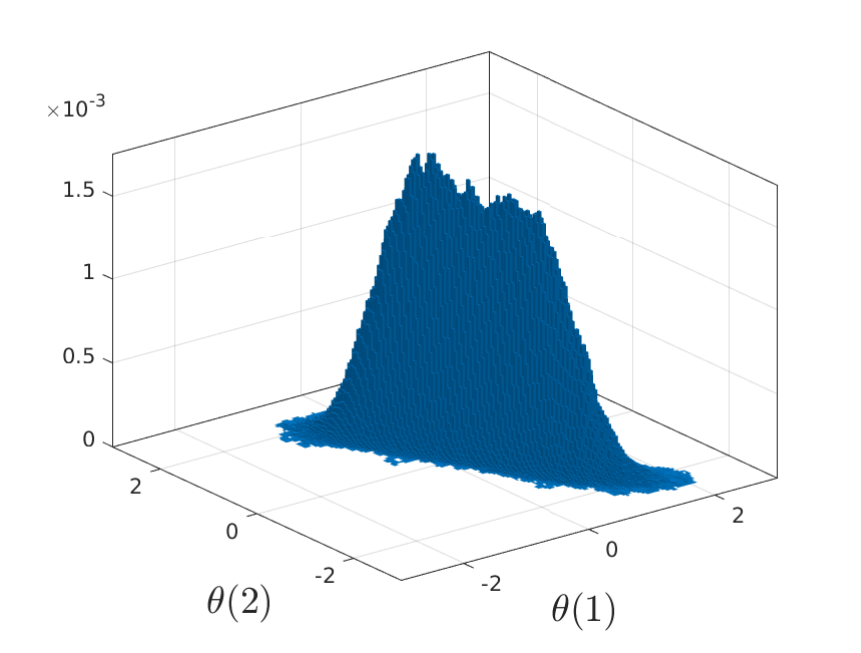}}
   \subfigure[]{\includegraphics[scale=0.5]{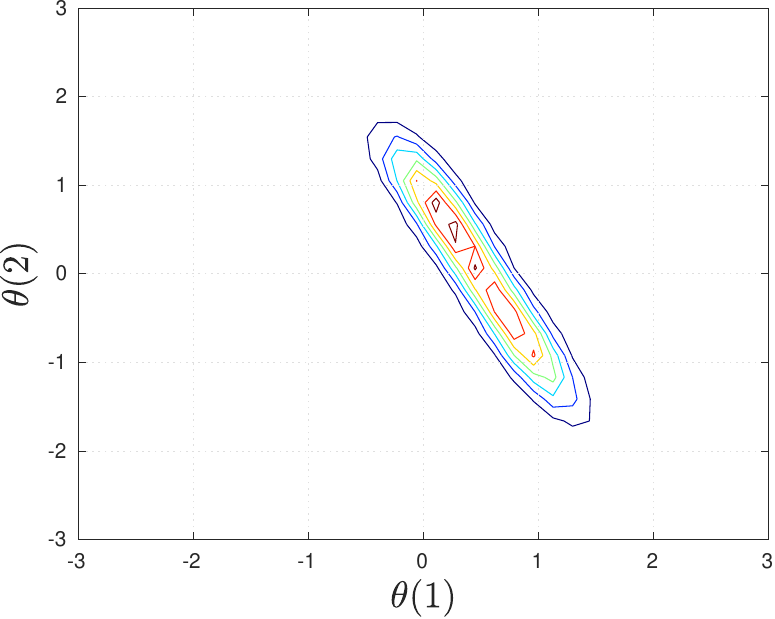}
}}
  \caption{Classical Langevin dynamics (ground truth)}
  \label{fig:classicalLang}
\end{figure}

\begin{figure}[h] \centering
  \mbox{\subfigure[]{
     \includegraphics[scale=0.5]{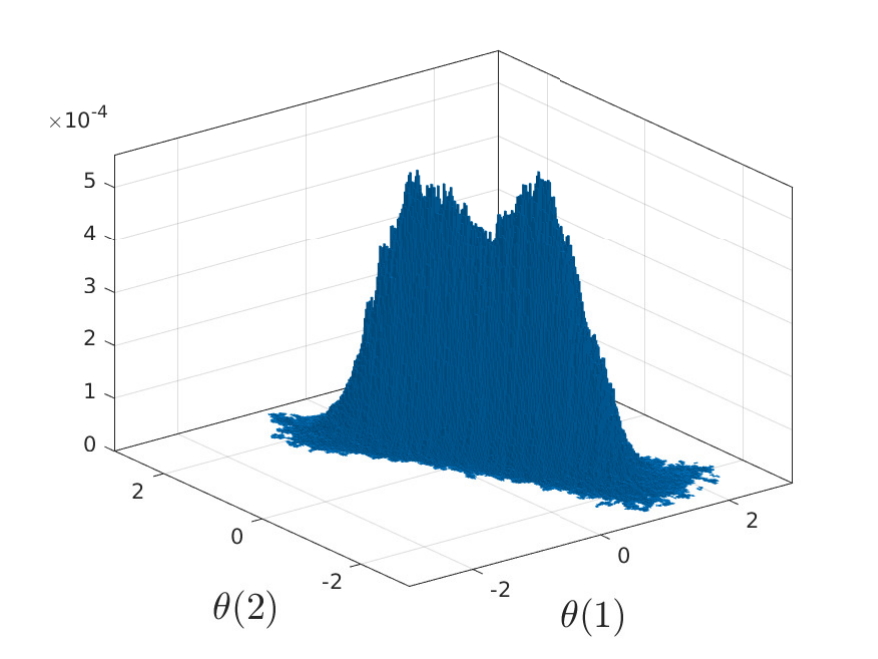}}
  \subfigure[]
        {\includegraphics[scale=0.5]{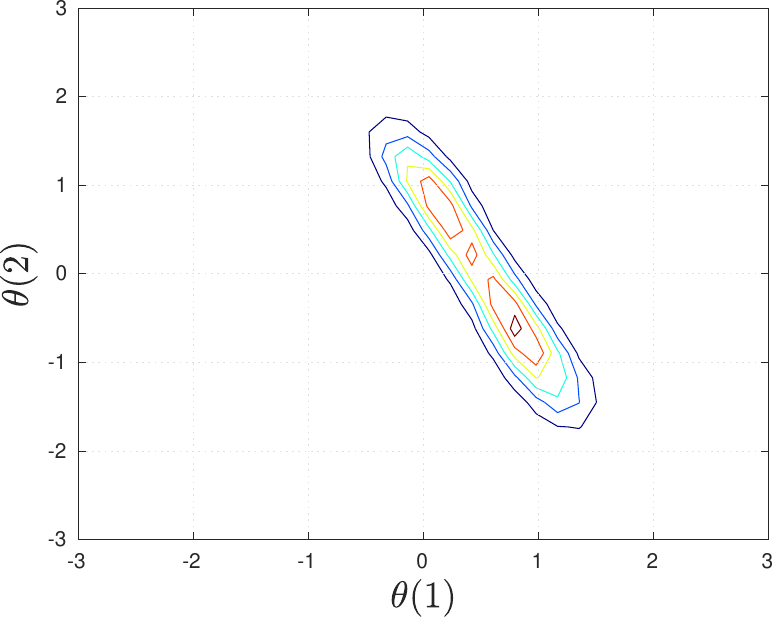}}}
  \caption{IRL Algorithm (\ref{eq:irl})}
  \label{fig:mh}
\end{figure}

\begin{figure}[h]
   \mbox{\subfigure[]{
    \includegraphics[scale=0.5]{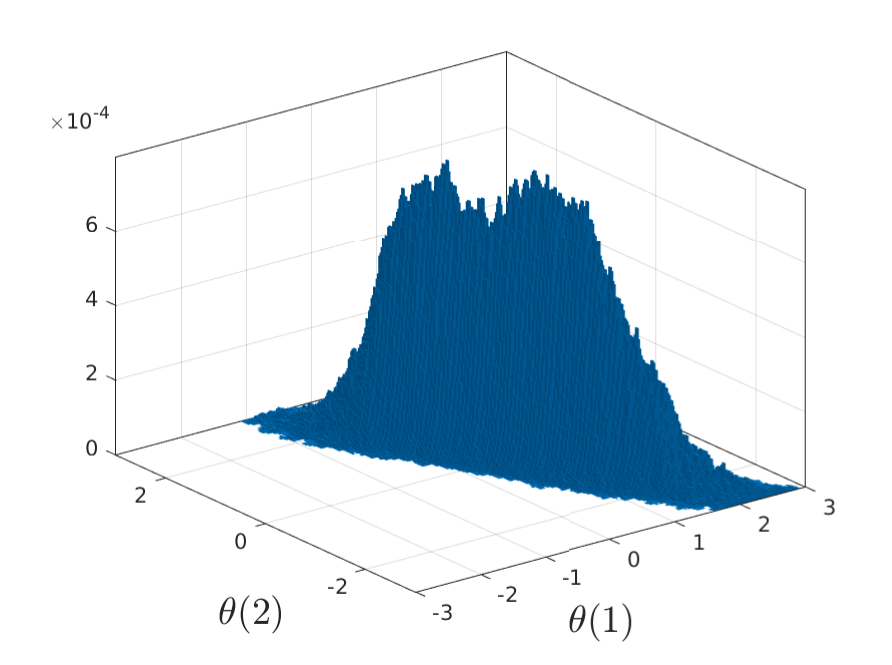}}
 \subfigure[]
    {\includegraphics[scale=0.5]{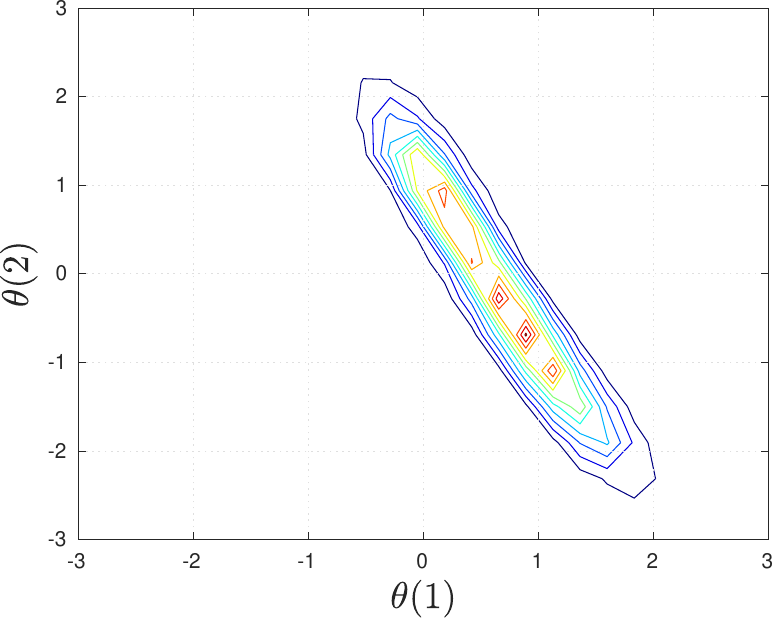}}}
  \caption{Two time scale multi-kernel IRL Algorithm (\ref{eq:mcmcirl})}
  \label{fig:mh2}
\end{figure}

\subsubsection{Passive IRL Algorithms}

We now illustrate the performance of our proposed passive IRL algorithms for the above model.
Recall that the framework comprises two parts: First, multiple RL agents run randomly initialized stochastic gradient algorithms to maximize $\Reward(\th)$.  Second, by observing these gradients, our passive IRL Langevin based algorithms construct a non-parametric estimate of the $\Reward(\th)$. We discuss these two parts below:

{\em 1. Multiple agent Stochastic Gradient Algorithm}.
Suppose  multiple RL agents aim to learn the optimal policy by estimating the optimal parameter $\th^*$. To do so, the  agents use the stochastic gradient algorithm (\ref{eq:rl}):
\begin{equation} \label{eq:rlsim}  \begin{split}
\th_{\dtime+1} &= \th_\dtime + \step \nabla_\th \reward_\dtime(\th_\dtime)
\\
\nabla_\th \reward_\dtime(\th_\dtime) &=   \nabla_\th \log  \pdf(\th_\dtime) + \horizon\, \nabla_\th \log \pdf(\obs_\dtime|\th_\dtime)
\end{split}
\end{equation}
with multiple random initializations, depicted by agents $\tslow=1,2,\ldots$.
For each agent $\tslow$, the initial estimate  was sampled randomly as $\th_{\stoptimeirl_\tslow} \sim \belief(\cdot) =  \normal(0,I_{2\times 2})$. Each agent runs the gradient algorithm for 100 iterations with step size  $\step = 10^{-3}$ and the number of agents is  $10^5$.  Thus the sequence $\{\th_\dtime;\dtime=1,\ldots10^7\}$ is generated. \\

{\em 2. IRL algorithms and performance}.
 Given   the sequence of estimates $\{\th_\dtime\}$ generated by the RL agents above, and initialization density $\belief$,  the inverse learner aims to estimate $\Reward(\th)$  in (\ref{eq:jtheta}) by
 generating samples $\{\eth_k\}$ from
 $\exp(\temperature \Reward(\th))$. Note that the IRL algorithm
 has no knowledge of $\pdf(\th)$ or $\pdf(\obs|\th)$.
 Since the inverse learner has no control of where the reinforcement learner evaluates its gradients,  we are in passive IRL setting.
We ran
  the IRL algorithm (\ref{eq:irl}) with kernel
$ \kernel (\th,\eth)\propto \exp(- \frac{\|\eth- \th\|^2}{0.02})$, step size $\stepa = 5 \times 10^{-4}$, $\temperature=1$.
Figure \ref{fig:mh} displays both the empirical histogram and a contour plot.
Notice that the performance of our IRL is  very similar to classical Langevin dynamics (where the gradients are fully specified).

We compared the performance of the classical Langevin with the passive Langevin IRL algorithm averaged over 100 independent runs. The comparison is with respect to the variational distance\footnote{Recall the variational distance is half the $L_1$ norm} $d(1)$ and $d(2)$ between the two marginals of the empirical
density $\pdf(\th)  \propto \exp(\Reward(\th))$. The values obtained from
our simulations
are
\beq d(1) = 0.0122, \quad d(2)= 0.0202 . \label{eq:l1dist}\eeq

Finally,  we illustrate the performance of the two-time scale multikernel algorithm (\ref{eq:mcmcirl}). Recall this algorithm does not require knowledge of the initialization probabilities $\belief(\cdot)$. Figure \ref{fig:mh2} displays
both the empirical histogram and a contour plot. Again the performance of the IRL is  very similar to the classical Langevin dynamics performance.

\subsubsection{Multiple Inverse Learners}
We also considered the case where  multiple inverse learners act in parallel. Suppose each inverse learner $l \in \{1,2,\ldots, L\}$ deploys IRL algorithm (\ref{eq:irl})
with its own noise  sample path denoted by $\{\noise_k^{(l)}\}$, which is independent of
other inverse learners. Obviously, if the estimate $\eth_k^{(l)}$ of one of the  inverse learners
(say $l$) is close to
$\th_k$, then $\nabla_\th \reward_\dtime(\th_k)$ is  a more accurate gradient estimate
for $\nabla_\th \reward_\dtime(\eth_k^{(l)})$. However, for high dimensional problems, our numerical experiments (not presented here) show  very little benefit unless the number of inverse learners is chosen as $L = O(2^\thdim)$ which is intractable.

\subsubsection{IRL for Adaptive   Bayesian Learning} \label{sec:irlabl}

Having discussed reconstructing the KL divergence via IRL, we now   extend  the
Bayesian learning framework proposed in \cite{WT11}  to our IRL framework.

{\bf Bayesian Learning}. First a few words about the Bayesian learning framework in \cite{WT11}.
In comparison to the stochastic optimization problem (\ref{eq:relative}),
 they consider
 a \textit{fixed} sample path $\obs_{1:\horizon}$ and  the associated {\em deterministic} optimization problem of finding  global maximizers of
\beq  \label{eq:rllog1}  \Reward(\th) =  \log \pdf(\th|\obs_{1:\horizon}). \eeq
 \cite{WT11}  use the
 classical Langevin dynamics to generate samples from
 the posterior  $\pdf(\th|\obs_{1:\horizon})$ as follows:
First,
since $\obs_1,\ldots, \obs_\horizon$ are independent,
\beq  \nabla_\th \log \pdf(\th|\obs_{1:\horizon}) \propto   \nabla_\th \log  \pdf(\th) +  \sum_{\dtime=1}^\horizon \nabla_\th \log \pdf(\obs_{\dtime}|\th)
\label{eq:gradlang}
\eeq
Next it is straightforward to see that $\horizon$ iterations of the  classical Langevin algorithm (or a fixed step size deterministic gradient ascent algorithm) using the  gradient  $\nabla_\th \log  \pdf(\th) +
\horizon\, \nabla_\th \log \pdf(\obs_{k}|\th)$  is identical to running $\horizon$ sweeps
of the  algorithm through the sequence $\obs_{1:\horizon}$ with gradient (\ref{eq:gradlang}).
So \cite{WT11} run the classical Langevin algorithm using the  gradient  $$\nabla_\th \log  \pdf(\th) +
\horizon\, \nabla_\th \log \pdf(\obs_{k}|\th). $$
  Notice unlike the KL estimation framework (\ref{eq:relative}) which has an expectation $\E_\thtrue$ over the observations,   the underlying optimization of $\log \pdf(\th|\obs_{1:\horizon})$
 is {\em deterministic}
 since we have a  fixed sequence  $\obs_{1:\horizon}$.
Then clearly  the Langevin dynamics generates samples from the stationary distribution
 \beq \belief(\th) = \exp\big( \log (\pdf(\th| \obs_{1:\horizon}) ) \big) = \pdf(\th| \obs_{1:\horizon}) \label{eq:wt11}\eeq
  namely, the posterior distribution.\footnote{This is in contrast to our KL divergence estimation setup
    \eqref{eq:jtheta} where the stationary distribution is  $\belief(\th) = \exp\big( \E_\thtrue \{\log \pdf(\th| \obs_{1:\horizon}) \}\big) $ and
    $\E_\thtrue $ denotes expectation wrt $\pdf(\obs_{1:\horizon}|\thtrue)$.}
So  the classical Langevin algorithm which  sweeps repeatedly through the dataset $\obs_{1:\horizon}$ generates samples
  from the posterior distribution - this is the main idea of Bayesian learning in \cite{WT11}.

  {\bf IRL}.
 We now consider  IRL in this Bayesian learning framework to reconstruct the posterior density.
Given the sample path $\obs_{1:\horizon}$,
suppose multiple forward learners seek to estimate the maximum (mode) of the multimodal posterior $\log \pdf(\th|\obs_{1:\horizon})$.
The agents run the (deterministic) gradient ascent algorithm (\ref{eq:rl})
 with gradient
 $$  \nabla_\th \reward_\dtime(\th_\dtime) = \nabla_\th \log  \pdf(\th_\dtime) + \horizon\, \nabla_\th \log \pdf(\obs_\dtime|\th_\dtime)$$
The IRL problem we consider is:  By passively observing these gradients, how can the IRL algorithm reconstruct the posterior
 distribution  $\pdf(\th|\obs_{1:\horizon})$?
We use  our IRL algorithm (\ref{eq:irl}).
  The implementation of  IRL algorithm (\ref{eq:irl}) follows the   \cite{WT11} setup: The RL agents choose random initializations $\th_0 \sim \belief$ and  then run gradient algorithms  sweeping repeatedly through the dataset $\obs_{1:\horizon}$. The IRL algorithm (\ref{eq:irl}) passively views these estimates $\{\th_k\}$ and  reconstructs the posterior distribution  $p(\th| \obs_{1:\horizon})$  from these estimates.

  We now illustrate the performance of IRL algorithm (\ref{eq:irl}) in this Bayesian learning setup.
For the same parameters as in the example above (recall $\horizon=100$),   Table \ref{tab:irl} compares the performance of the classical Langevin
  algorithm and our IRL algorithm with the Metropolis Hastings sampler.
  The Metropolis Hastings sampler can be considered as the ground truth
  for the posterior  $p(\th| \obs_{1:100})$.

  \begin{table}[h]
    \centering
  \begin{tabular}{|c|c|c|} \hline
    &  Classical Langevin &  Passive IRL Langevin \\ \hline
      $    d(1) $ & 0.0213  & 0.0264 \\
      $d(2) $ &  0.0229 &  0.0305 \\                   \hline
  \end{tabular}
  \caption{Variational distance between marginals and Metropolis Hastings
    algorithm}  \label{tab:irl}
\end{table}

\subsection{Example 2. IRL for Constrained Markov Decision Process (CMDP)}  \label{sec:mdp}
In this section we illustrate the performance of the IRL algorithms for reconstructing the cumulative reward of a constrained Markov decision process (CMDP) given gradient information from a RL algorithm.
This is in contrast to classical IRL \cite{NR00} where the transition matrices of the MDP are assumed known to the inverse learner.

Consider  a unichain\footnote{By {\em unichain} \cite[pp.~348]{Put94} we mean that every policy where $\action_n$  is a deterministic function of $\state_n$ consists of a
single recurrent class  plus possibly an empty set of transient states.} average reward CMDP  $\{\state_n\}$ with finite state space $\statespace
= \{1,\ldots,\statedim\}$  and  action space $\actionspace= \{1,2,\ldots,\actiondim\}$. The CMDP evolves with
transition probability matrix $\tp(\action)$ where
\begin{equation}
\label{Aij}
\tp_{ij}(\action) \ole \prob[\state_{n+1} = j | \state_n = i, \action_n = \action],  \quad u \in \actionspace.
\end{equation}
When the system is in state $\state_n\in \statespace$, an action $\action_n = \policy(\state_n) \in \actionspace$ is chosen, where $\policy$ denotes (a possible randomized)  stationary policy.
The reward incurred at stage $n$ is  $\mreward(\state_n,\action_n)\geq 0$.

Let $\admissible$ denote the class of stationary randomized Markovian policies.
For any stationary  policy $\policy \in \admissible$, let $ \E_{\policy}$ denote the corresponding expectation and define the infinite horizon average reward
\begin{equation}
   \label{J}
J(\policy) = \lim_{\finaltime\to\infty} \inf \frac{1}{\finaltime} \E_{\policy} \Big[\sum_{n=1}^\finaltime
\mreward(\state_n,\action_n) \mid \state_0 = x\Big].
\end{equation}
Motivated by modeling fairness constraints  in network optimization  \cite{NK10},
 we consider the reward  (\ref{J}), subject to the
average constraint:
\begin{equation}
 \label{costconstraint}
  \Cons(\policy) =  \lim_{\horizon\to\infty} \inf {1\over \horizon} \E_{\policy}\Big[ \sum_{n=1}^\horizon\con(\state_n,\action_n)
\Big] \leq \rcon,
\end{equation}
(\ref{J}), (\ref{costconstraint}) constitute a CMDP. Solving a CMDP involves computing
the optimal policy $\policy^* \in \admissible$ that satisfies
\begin{equation}
J(\policy^*) = \sup_{\policy \in \admissible} J(\policy) \quad
\forall x_0 \in \statespace,  \text{ subject to } (\ref{costconstraint}) \label{eq:objective1}
\end{equation}

To solve a  CMDP,  it is sufficient to consider randomized stationary policies:
\begin{equation} \label{eq:randpol}
 \policy(\state)  =  \action \text{ with probability } \; \cond(\action|\state)= \frac{\statpi({\state,\action})}{\sum_{\tilde \action \in \actionspace} \statpi({\state,\tilde \action})} ,\end{equation}
where the conditional probabilities $\cond$ and joint probabilities $\statpi$ are defined as
\beq  \cond(\action|\state)  = \prob(\action_n =\action| \state_n = \state),  \quad \statpi(\state,\action) = \prob(\action,\state). \label{eq:conditionalprob} \eeq
Then the optimal policy $\optpolicy$ is obtained as the solution of a   linear programming problem in terms of the $\statedim \times \actiondim$ elements of $\statpi$; see \cite{Put94} for the precise equations.

Also  \cite{Alt99}, the optimal policy $\policy^*$ of the CMDP
is {\em randomized} for at most one of the states. That is,
\begin{equation}
\optpolicy(\state)  = \randmix \,\optpolicy_1(\state)  + (1 - \randmix)\, \optpolicy_2(\state)  \label{eq:randomizedt} \end{equation}
where $\randmix \in [0,1]$ denotes the randomization probability and $\optpolicy_1,\optpolicy_2$ are pure (non-randomized) policies.
Of course,
when there is no constraint   (\ref{costconstraint}),  the CMDP reduces to classical MDP and the optimal stationary  policy $\optpolicy(\state)$ is  a pure policy.  That is, for each state $\state \in \statespace$, there exists an action $\action$ such that $\cond(\action|\state)= 1$.

{\em Remarks}. (i) (\ref{costconstraint}) is a global constraint that applies to the entire sample path \cite{Alt99}. Since the optimal policy is randomized, classical value iteration based approaches and  Q-learning cannot be used to solve CMDPs as they yield deterministic policies. One can construct a Lagrangian dynamic programming formulation \cite{Alt99} and Lagrangian Q-learning algorithms \cite{DK07}. Below for brevity, we consider a  policy gradient RL algorithm.

(ii) {\em Discounted CMDPs}. Instead of an average cost CMDP,  a discounted cost CMDP can be considered. Discounted CMDPs are  less technical in the sense that an optimal policy always exists (providing  the constraint set is non-empty); whereas average cost CMDPs require a unichain assumption. It is easily shown \cite{Kri25}  that the dual linear program of a discounted CMDP can be expressed in terms of the conditional probabilities
  $\cond(\action|\state)$ and the optimal randomized policy is of the form
  (\ref{eq:randomizedt}).
  The final IRL algorithm is identical to
\eqref{eq:irl_mdp} below.

 \subsubsection{Policy Gradient for RL of  CMDP}
Having specified the CMDP model, we next turn to the RL algorithm.
RL algorithms\footnote{In adaptive control, RL algorithms such as policy gradient are   viewed as simulation based  {\em implicit} adaptive control methods that bypass estimating the MDP parameters (transition probabilities) and  directly estimate the optimal policy.} are   used to estimate the optimal policy of an MDP    when the transition matrices are not known. Then the LP formulation in terms of joint probabilities $\statpi$ is not useful since the constraints depend on the transition  matrix.
In comparison, \textit{policy gradient RL algorithms} are stochastic gradient algorithms of the form (\ref{eq:rl}) that  operate on the  conditional action probabilities $ \cond(\action| \state)$ defined in  (\ref{eq:conditionalprob})  instead of the joint probabilities $\statpi(\state,\action)$. 

Note that
 (\ref{eq:objective1}) written as a minimization (in terms of $-J$), together with constraint (\ref{costconstraint}) is in general, no longer a convex optimization problem in the variables $\cond$;
 see Figure \ref{fig:mdp} for an illustration. So it is not possible  to guarantee that simple gradient descent schemes\footnote{Consider minimizing the negative of the objective function, namely  $-J$ without constraint (\ref{costconstraint}). Even though $-J$  is nonconvex in $\cond$, one can show (using Lyapunov function arguments)  that for this unconstrained MDP case, the gradient algorithm  will converge to a global optimum. However for the constrained MDP case this is not true; the nonconvex  objective and constraints  results in a duality gap.} can achieve the global optimal policy. This motivates  the setting of (\ref{eq:rl}) where multiple agents that are initialized randomly aim to estimate the optimal policy.

Since the problem is non-convex, and the inequality constraint is active (i.e.,\ achieves equality) at the global maximum, we assume that the RL agents use a quadratic penalty method:
For $\lambda \geq 0$, denote the quadratic penalized objective to be maximized  as
\beq
\Reward(\cond) = J(\cond) - \lambda \, (\Cons^2(\cond) - \rcon)
\label{eq:penalty_reward}
 \eeq
 Such quadratic penalty functions are used widely for equality constrained non-convex problems.

The RL agents aim to minimize the $\finaltime$-horizon sample path penalized objective which at batch $k$ is
\beq
\begin{split}
  \reward_k (\cond)  &\ole  J_{k,\horizon}(\cond) + \lambda \,\Big(\Cons^2_{k,\horizon}(\cond)-\rcon\Big),  \quad \lambda \in \reals_+\\
  J_{k,\horizon} &=  \frac{1}{\finaltime} \sum_{n=1}^\finaltime
  \mreward(\state_n, \policy_\cond(\action_n)) , \quad
  \Cons_{k,\horizon} = \frac{1}{\finaltime} \sum_{n=1}^\finaltime
  \con(\state_n, \policy_\cond(\action_n))
\end{split} \label{eq:penalized}
\eeq
There are several methods for estimating the  policy gradient $\nabla_\cond \reward_\dtime(\cond_\dtime)$  \cite{Pfl96} including the score function method, weak derivatives \cite{VK03} and finite difference methods. A useful finite difference gradient estimate is given by the SPSA algorithm \cite{Spa03}; useful because SPSA evaluates the gradient along a single random direction.

\subsubsection{IRL for CMDP}
Consider the CMDP (\ref{Aij}), (\ref{J}), (\ref{eq:randpol}). Assume we are given a sequence of  gradient estimates $\{\nabla_\cond \reward_\dtime(\cond_\dtime)\}$ of the sample path wrt to the parametrized policy  $\cond$ from (\ref{eq:penalized}).  The aim of the inverse learner is to reconstruct the reward $\Reward(\cond) $ in  (\ref{eq:penalty_reward}). Since by construction the constraint is active at the optimal policy, the aim of the inverse learner is to explore regions of $\cond$ in the vicinity where the constraint $\{\cond: \Cons(\cond) \approx \rcon\}$ is active in order  to estimate  $\Reward(\cond) $.


A naive application of Langevin IRL algorithm (\ref{eq:irl}) to update the conditional probabilities $\{\cond_\dtime\}$  will not work. This is because there is no guarantee that the  estimate sequence $\{\cond_\dtime\}$ generated by the algorithm are valid  probability vectors, namely
\beq \cond_\dtime(\action|\state) \in [0,1], \quad \sum_{\action\in \actionspace} \cond_\dtime(\action|\state) = 1, \quad x \in \statespace.
\label{eq:simplex}
\eeq
We will use spherical coordinates\footnote{Another parametrization  widely used in machine learning is  exponential coordinates: $ \cond(\action|\state)  =  \frac{\exp(\param({\state, \action}))}{\sum_{a\in \actionspace} \exp(\param({\state, a}))}$,  where $ \param({\state,\action}) \in \reals$ is unconstrained. However, as shown in \cite{Kri25,KV18}, spherical coordinates typically yield faster convergence. We also found this in numerical studies on IRL (not presented here).}
to ensure that  the conditional probability estimates $\cond_\dtime$  generated by the IRL algorithm satisfies  (\ref{eq:simplex}) at each iteration $\dtime$. The idea is to parametrize $\sqrt{\cond_\dtime(\action|\state)}$ to lie on the unit hyper-sphere in $\reals^\actiondim$.  Then all needed are the $\actiondim-1$ angles for each $\state$, denoted as $\param(i,1),\ldots \param(i,\actiondim-1)$.
Define the spherical coordinates in terms of the mapping:
\begin{equation} \cond = \logistic(\param), \quad \text{ where }
      \cond(\action|\state)  =  \begin{cases} \cos^2 \param(i,1)  & \text{ if } \action =1  \\
        \cos^2 \param(i,u) \prod_{p=1}^{u-1} \sin^2 \param(i,p)  & u \in \{2,\ldots, \actiondim-1\}           \\
        \sin^2 \param(i,\actiondim-1) \prod_{p=1}^{\actiondim-2} \sin^2\param(i,p) & u = \actiondim
                                                                        \end{cases}
                                                                        \label{eq:expo}
\end{equation}
Then clearly $\cond({\action|\state})$ in (\ref{eq:expo})  always satisfies feasibility
(\ref{eq:simplex})
for any real-valued (un-constrained)  $\param({\state,\action})$.
To summarize,  there are  $(\actiondim-1) \statedim$ unconstrained parameters in $\param$.
Also for $\param(i,u) \in [0,\pi/2]$,
the mapping $\logistic: \reals^{\actiondim\times \statedim}\rightarrow \reals^{\actiondim\times \statedim}$ in (\ref{eq:expo}) is  one-to-one and therefore invertible. We denote the inverse as $\logistic^{-1}$.

{\em Remark}: As an example, consider $\actiondim=2$. Then  in spherical coordinates $\cond(1|i) = \sin^2\param(i,1)$, $\cond(2|i) = \cos^2\param(i,1)$, where $\param(i,1)$ is un-constrained.; clearly
$\cond(1|i) + \cond(2|i) = 1$, $\cond(u|i) \geq 0$.

With the above re-parametrization,
we can run any of the  passive Langevin dynamics IRL algorithms proposed above.
In the numerical example below,
we ran the two-time scale multi-kernel  IRL algorithm (\ref{eq:mcmcirl}). Recall this does not require knowledge of $\belief(\cdot)$ and also provides variances reduction:
Given the current IRL estimate $\eth_\dtime$, the RL gives us  a sequence
  $\{\cond_i, \nabla_\cond \reward_{\dtime}(\cond_i), i=1,\ldots,L\}$
 The IRL algorithm (\ref{eq:mcmcirl})  operating on the $(\actiondim-1) \statedim$ unconstrained parameters of $\param$ is:
\beq
\begin{split}
 \eth_{\dtime+1} &= \eth_\dtime + \stepa \,\frac{\temperature}{2} \, \frac{\sum_{i=1}^\numparticles   \pdf(\th_i|\eth_\dtime) \nabla_\th \reward_\dtime(\th_i) }{\sum_{l=1}^\numparticles \pdf(\th_l|\eth_\dtime)}  
  + \sqrt{\stepa}  \noise_\dtime,\qquad \cond_i \sim \belief(\cdot)  \\
    \text{ where } &\quad \th_i = \logistic^{-1}(\cond_i), \quad \nabla_\th \reward_\dtime(\th_i) =
( \nabla_\cond \reward_{\dtime}(\cond_i) )^\p \, \nabla_\th \cond_i, \\
&\pdf(\th|\eth) = \pdf_\obsnoise(\th - \eth), \qquad  \pdf_\obsnoise(\cdot) = \normal(0,\sigma^2I_\thdim) 
\end{split} \label{eq:irl_mdp}
     \eeq
 In the
 second line of (\ref{eq:irl_mdp}), we transformed  $\nabla_\cond \reward_{\dtime}(\cond_\dtime) $ to
$ \nabla_\th \reward_\dtime(\th_\dtime)$  to use in the  IRL algorithm.

To summarize,  the IRL algorithm (\ref{eq:irl_mdp}) generates samples  $\eth_\dtime \sim \exp(\Reward(\logistic(\eth)))$.
Equivalently, $\cond_\dtime = \logistic(\eth_\dtime) \sim \exp(\Reward(\cond))$, where $\Reward(\cond)$ is defined in (\ref{eq:penalty_reward}). Thus given only gradient information from a RL algorithm,  we can reconstruct (sample from) the penalized reward $\Reward(\cdot)$ of the  CMDP without any knowledge of the CMDP parameters.

\subsubsection{Numerical Example}
We generated a CMDP with $\statedim=2$ (2 states),  $\actiondim=2$ (2 actions) and 1 constraint with
\beq
\tp(1) = \begin{bmatrix} 0.8 & 0.2 \\ 0.3 & 0.7
\end{bmatrix}, \quad \tp(2) =  \begin{bmatrix} 0.6 & 0.4 \\ 0.1 & 0.9
\end{bmatrix},  \mreward = \begin{bmatrix} 1 & 100 \\ 30 & 2
\end{bmatrix},  \con = \begin{bmatrix} 0.2 & 0.3 \\ 2 & 1
\end{bmatrix}, \rcon = 1, \lambda = 10^5
\eeq
Recall the transition matrices $\tp(\action)$ are defined in (\ref{Aij}),  the reward matrix $(\mreward(\state,\action))$ in  (\ref{J}),  constraint matrix $(\con(\state,\action))$ and $\rcon$ in (\ref{costconstraint}),  and penalty multiplier  $\lambda$  in (\ref{eq:penalized}).

The  randomized policy $\cond(\action|\state)$, $\action\in \{1,2\}$, $\state \in \{1,2\}$  is a $2\times 2$ matrix. It is completely determined  by $(\cond(1|1), \cond(1|2)) \in [0,1]\times [0,1]$; so it suffices to  estimate $\Reward(\cond)$ over $[0,1]\times[0,1]$.

Figure \ref{fig:mdp}(a) displays the cumulative reward $J(\cond)$; this constitutes the ground truth.
To obtain this figure, we computed  the average reward MDP value function $J(\cond)$ and constraint $\Cons(\cond)$ for each policy $\cond$ where $\cond$ sweeps over $[0,1]\times[0,1]$.
Given a policy $\cond$, $J(\cond)$ and $\Cons(\cond)$ are  computed by first evaluating the joint probability $\statpi$ as \cite[pp.101]{Ros83}
$$ \statpi(j,a) = \sum_i \sum_{\bar{a}} \statpi(i,\bar{a})\, \tp_{ij}(\bar{a})\, \cond(a|j), \quad
\sum_j \sum_a \statpi(j,a)= 1 $$
and then
$ J(\cond)  = \sum_\state\sum_ \action \statpi(\state,\action) \mreward(\state,\action)$,
$\Cons(\cond) =  \sum_\state \sum_\action \statpi(\state,\action) \con(\state,\action) $.

For values of $\cond$ that do not satisfy the constraint $\Cons(\cond) < \rcon$, we plot $J(\cond) = 0$.
Figure \ref{fig:mdp}(a) illustrates the non-convex nature of the constraint set.

Figure \ref{fig:mdp}(b) displays the penalized cumulative reward $\Reward(\cond) = J(\cond) - \lambda\, (\Cons(\cond)-\rcon)^2$ where the quadratic penalty function is $\lambda\, (\Cons(\cond)-\rcon)^2$. As mentioned earlier, since we know that the constraint is active at the optimal policy, we want the IRL  to explore the vicinity of the region of $\cond$ where the constraint is active.

We then ran the  IRL algorithm (\ref{eq:irl_mdp})
 using spherical coordinates with parameters  $\stepa = 5 \times 10^{-6}$, $\sigma =0.1$, $L=50$ for
 $\horizon = 10^5$ iterations.
Figure \ref{fig:mdp}(c)   displays a 3-dimensional stem plots of the log of the  empirical distribution of $\cond_k = \logistic(\eth_k)$.
 wrt coordinates $\cond(1|1)$ and $\cond(1|2)$.  As can be seen from the two plots, the IRL algorithm samples from the high probability regions $\{\cond: \Cons(\cond) \approx \rcon\}$ to reconstruct the penalized reward $\Reward(\cond)$. Specifically, the $C$-shaped curve profile generated by the IRL estimates match the $C$-shaped curve of the penalized cumulative reward Figure \ref{fig:mdp}(b).

\begin{figure}\centering
 \subfigure[Cumulative Reward $J(\cond)$ with active constraint $\Cons(\cond)\leq 1$. The non-convexity of the constraint set is clearly seen.]{
\includegraphics[scale=0.45]{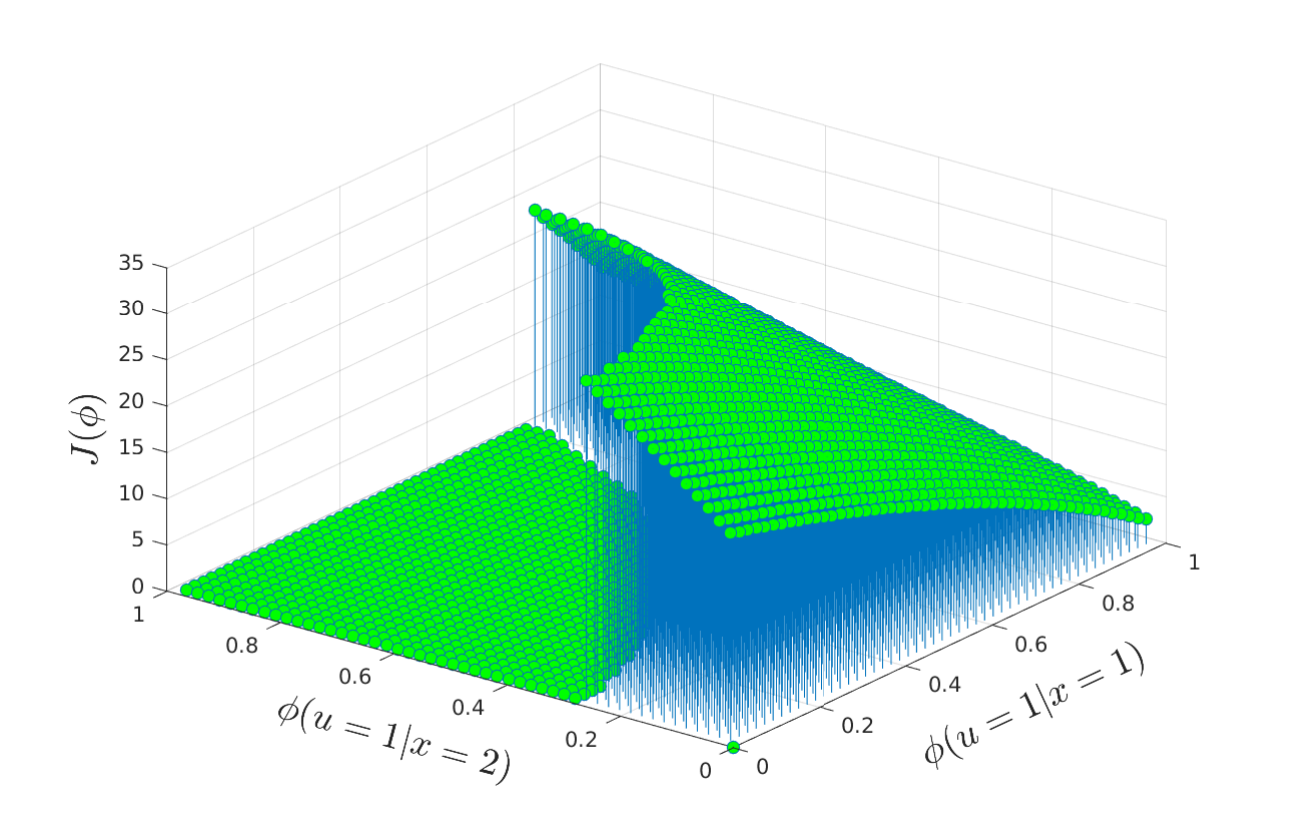}}
\subfigure[Penalized Cumulative Reward with Quadratic Penalty  $\Reward(\cond) = J(\cond) - \lambda\, (\Cons(\cond)-1)^2$. The lighter green shade  on top shows the active constraint. This plot constitutes the ground truth]
{\includegraphics[scale=0.45]{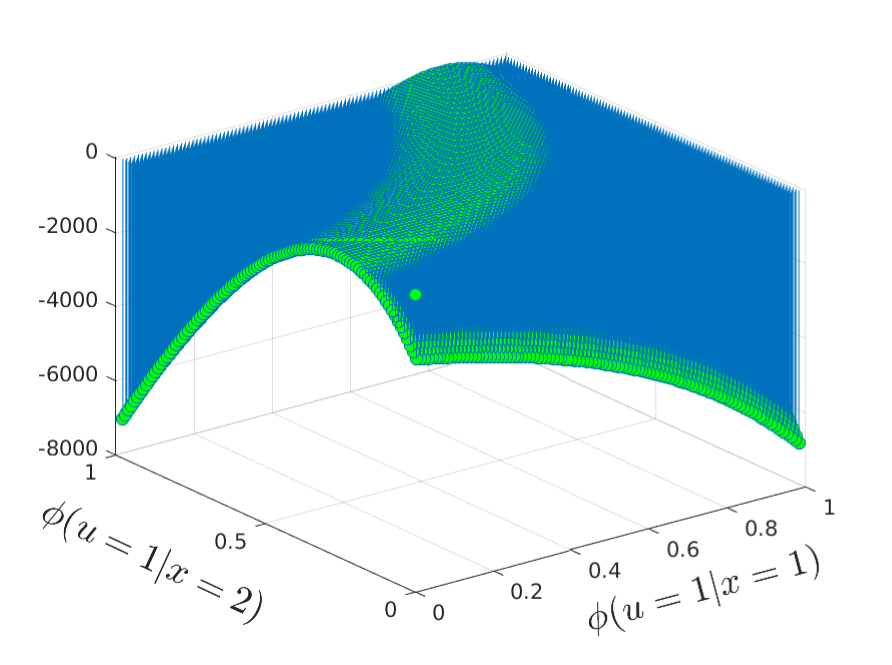}}
\end{figure}

\begin{figure}\centering
  \subfigure[]{\includegraphics[scale=0.4]{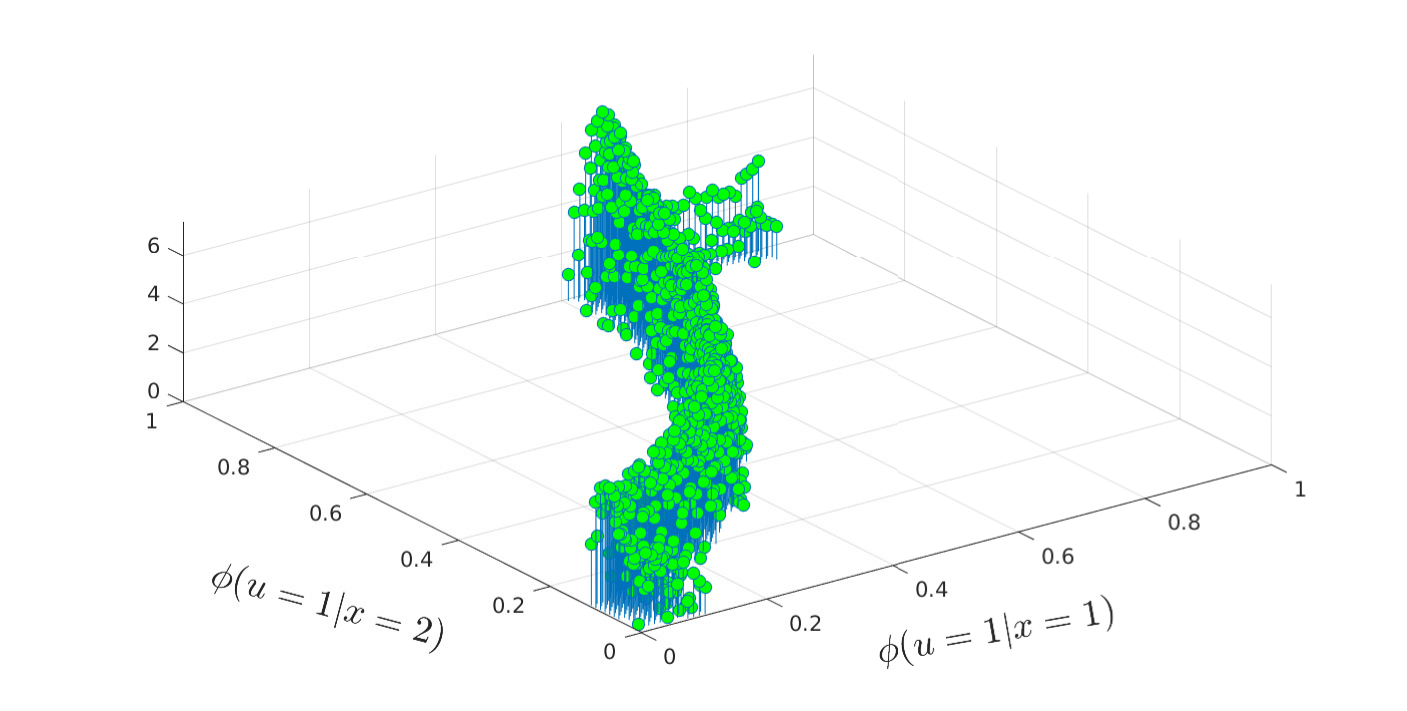}}
  \subfigure[]{\includegraphics[scale=0.4]{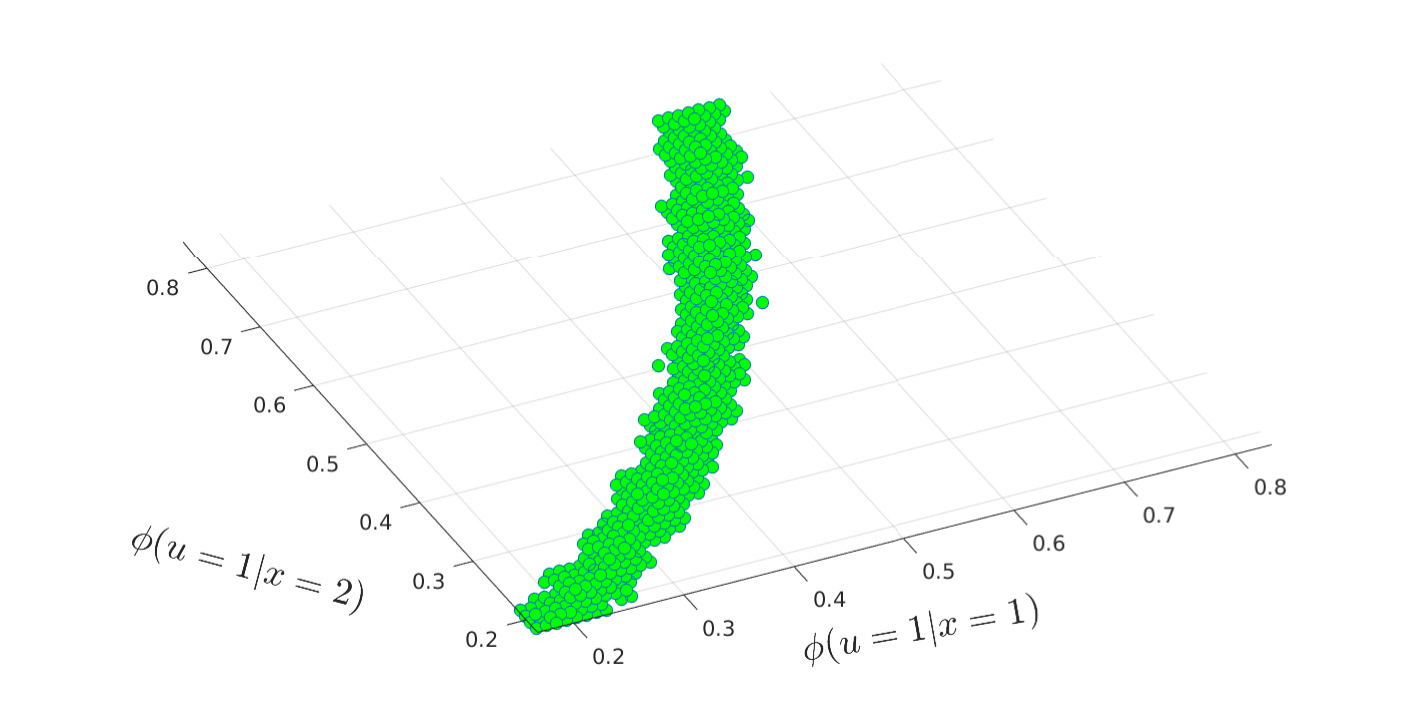}}
  \caption{IRL algorithm estimate. Snapshot 1 shows that the IRL estimates $\Reward(\cond)$   in the vicinity of the active constraint..
Snapshot 2  shows that the IRL explores regions  in the vicinity of the active constraint. Specifically the curve is close to the lighter shade green in Fig (b)}
\caption{IRL for Constrained MDP}
\label{fig:mdp}
\end{figure}

\section{Weak Convergence Analysis of IRL Algorithm}
\label{sec:weak}

This section discusses the main assumptions, weak convergence theorem and proof regarding IRL algorithm~(\ref{eq:irl}). (Recall the informal proof in Sec.\ref{sec:informal} for the motivation of weak convergence.)

\subsubsection*{Notation}
\begin{compactitem}
\item Since  $\nabla_\th \reward_k(\th_k)$ is a noise corrupted estimate of the gradient $\nabla_\th \Reward(\th)$, we write it in more explicit notation as
$\wdt \reward (\th_k,\xi_k)$, where $\{\xi_k\}$ is a sequence of random variables satisfying appropriate conditions specified below.
\item We  use $\belief_\al(\cdot)$ to denote $\nabla_\al \belief(\cdot)$. 
\item Finally,  $\E_m$  denotes the conditional expectation (conditioning up to time $m$), i.e.,
conditioning wrt the $\sigma$-algebra ${\mathcal F}_m=\sigma\{ \al_0,\th_j,\xi_j; \
j<m\} $.
\end{compactitem}

\subsubsection*{Algorithm}
There are two possible implementations of  IRL algorithm (\ref{eq:irl}).
The first implementation is (\ref{eq:irl}), namely,
\begin{equation}
  \label{eq:irl-1}\al_{k+1}=\al_k + {\stepa \over \kernelstep^\thdim} K\( {\th_k-\al_k \over \kernelstep} \) {\temperature \over 2} \wdt \reward(\th_k,\xi_k)\belief(\al_k)+ \stepa \belief_\al(\al_k) \belief(\al_k) +\sqrt \stepa \belief(\al_k) \noise_k,
\end{equation}
and the second implementation is
\begin{equation}
  \label{eq:irl-2}\al_{k+1}=\al_k + {\stepa \over \kernelstep^\thdim}  K\( {\th_k-\al_k \over \kernelstep} \)\Big[ {\temperature \over 2} \wdt \reward(\th_k,\xi_k)\belief(\th_k) + \belief_\al(\th_k)\Big]  +\sqrt {\stepa \over \kernelstep^\thdim}  K\( {\th_k-\al_k \over \kernelstep} \)\belief(\th_k) \noise_k,
\end{equation}
where $\stepa$ is the stepsize and $\kernelstep=\kernelstep(\stepa)$ is chosen so
$\stepa /\kernelstep^\thdim\to 0$ as $\stepa\to 0$.

Both the above  algorithms converge  to the same limit. The proof below is devoted to \eqref{eq:irl-1}, but
\eqref{eq:irl-2} can be handled similarly.
Also the proofs of the other two proposed IRL algorithms, namely   (\ref{eq:irl2}) and  (\ref{eq:irl3}) are similar.

Taking a continuous-time interpolation
\beq \al^\stepa(t)= \al_k
\ \hbox{ for } \ t \in [\stepa k, \stepa k+ \stepa),  \label{eq:interpolated}\eeq
we aim to show that the sequence $\al^\stepa(\cdot)$ converges weakly to $\al(\cdot)$, which give the desired limit.

\subsection{Assumptions}
We begin by stating the conditions needed.

\begin{enumerate}[label=(A{\arabic*})]

\item \label{H1} For each $\xi$, $\wdt \reward(\cdot,\xi)$ has continuous partial
derivatives up to the second order such that the second partial
$\wdt \reward_{\al\al}(\cdot, \xi)$ is bounded.
For each
$b<\infty$ and $T<\infty$,
$\{\wdt r(\al,\xi_j); |\al|\le b, j\stepa \le T\}$
is
uniformly integrable.

\item \label{H2}  The sequences $\{ \th_k\} $ is stationary and  independent of $\{\xi_k\}$.
For each
$k\ge n$, there exists a conditional density of $\th_k$
given ${\mathcal F}_n$, denoted by $\belief_k(\th| {\mathcal F}_n)$ such that
$\belief_k(\th|{\mathcal F}_n)>0$ for each $\th$ and that
$\belief_k(\cdot |{\mathcal F}_n)$ is continuous. The sequence
$\{\belief_k(\cdot|{\cal F}_n)\}_{k\ge n}$ is bounded uniformly.
The probability
density $\belief(\cdot)$ is continuous and bounded with
$\belief(\th)>0$ for each $\th$ such that
\begin{equation}
  \label{pi-ave}\lim_{k-n\to \infty}\E | \belief_k(\th|{\mathcal F}_n) - \belief(\th) | =
  0 .
\end{equation}

\item \label{H3}
The measurement noise $\{\xi_n\}$ is
exogenous, and bounded stationary mixing process with mixing measure $\ph_k$ such that
$\E \wdt r(\al,\xi_k)= R_\al(\al)$ for each $\al$ and $\sum_{k} \ph_k<\infty$. The $\{\noise_k\}$ is a sequence of $\reals^\thdim$-valued i.i.d. random variables with mean $0$ and covariance matrix $I$ (the identity matrix); $\{w_k\}$ and $\{\xi_k\}$ are independent.

\item \label{H4}
 The kernel $\kernel(\cdot)$ satisfies
 \begin{equation}
   \label{ker}
\barray
 \ad K(u)\ge 0, \ K(u)=K(-u), \sup_u K(u) < \infty,\\
 \ad \int K(u) du =1, \  \int |u|^2 K(u)du < \infty.
 \earray
\end{equation}

\end{enumerate}

{\em Remarks}:
We briefly  comment on the assumptions \ref{H1}-\ref{H4}.
\begin{compactitem}
\item Assumption \ref{H1} requires the smoothness of $\wdt r(\cdot, \xi)$, which is natural because we are using $\wdt r(\cdot,\xi_k)$ to approximate the smooth function $\nabla R$. We consider a general noise so the uniform integrability is used. If
    the noise is additive in that $\wdt r(\th,\xi)= \nabla R(\th) + \xi$, then we only need the finite $\wdt p$-moments of $\xi_k$ for $\wdt p>1$.
\item Assumption \ref{H3} requires the stochastic process $\{\xi_n\}$ to be
exogenous, and bounded stationary mixing. Thus
for each $\al$,  $\{\wdt r(\al,\xi_k)\}$ is also a mixing sequence. A mixing process is one in which remote past and distant future are asymptotically independent. It covers a wide range of random processes such as i.i.d. sequences, martingale difference sequences,  moving average sequences driving by a martingale difference sequence, and functions of stationary Markov processes with a finite state space \cite{Bil99}, etc. The case of $\{w_k\}$ and $\{\xi_k\}$
being dependent can be handled, but for us $\{w_k\}$ is the added  perturbation to get the desired Brownian motion so independence is sufficient.

\item By exogenous in \ref{H3}, we mean that
\begin{align*}
& P(\xi_{n+1}\in A_1,\ldots, \xi_{n+k}\in
A_k|\al_0,\xi_j, x_j; \
j\le n)
\\
& =P(\xi_{n+1}\in A_1,\ldots,\xi_{n+k}\in A_k|\al_0, x_j,
\xi_j,\al _{j+1}; \
  j\le n),
\end{align*}
for all Borel sets $A_i$, $i\le k$, and for all $k$ and $n$.

\item In view of the mixing condition \ref{H3} on $\{\xi_k\}$,
for each
$b<\infty$ and $T<\infty$,
$\{\wdt r(\al,\xi_j); |
\al|\le b, j\stepa \le T\}$ and $\{\wdt r_\al(\al,\xi_j); |\al|\le b,
j \stepa\le T\}$ are
uniformly integrable.

\item
Again, using the mixing condition,
for each $\al$, as $n\to \infty$,
\begin{equation}
\label{ave}{1\over {n}}\sum^{m+n-1}_{j=m} \E_m \wdt r(\al,\xi_j) \to R_\al(\al)
\hbox{ in probability.}
\end{equation}

\item For a Borel set $A$,  we have
$P(\th_k\in A| {\cal F}_n)=\int_{\th\in A} \belief_k(\th|{\cal F}_n) d\th$.
If $\{\th_n\}$ is itself a stationary $\phi$-mixing sequence with a
continuous density, and if $\E|\th_n|^2<\infty$,
then by virtue of a well-known mixing inequality, some $\wdt c_0>0$,
\cite[Corollary 2.4 in Chapter~7]{EK86},
$$\E \bigl\{ | \int  \th \belief_k(\th|{\cal F}_n)d\th -\int \th \belief( \th) d\th |\bigr\} \le
\wdt c_0
\ph^{1/2}_\th(k-n)\E^{1/2}|\th_k|^2
\to 0 \hbox{   as  }k-n\to \infty,$$
where $\ph_\th(\cdot)$ denotes the
mixing measure.

\item Condition \ref{H4} is
  concerned with the properties of $K(\cdot)$.
 It assumes  that the kernel is  nonnegative, symmetric, bounded (similar to a probability density function),
 and square integrable. \ref{H4} is satisfied by a large class of kernels.
 For example, commonly used symmetric kernels with compact supports
 satisfy this condition (e.g., truncated Gaussian kernels). Moreover, it is also verifiable for kernels with {\em unbounded} support.
 A crucial point is  that the tails of $K(\cdot)$ are small (asymptotically negligible).
  For simplicity, we use \ref{H4} as a nicely packaged version.
   In fact,  \ref{H4} is a sufficient condition
for a much larger class of
kernels satisfying
\begin{subequations} \label{ker-1}
  \begin{align}
 &\int K(u) du=1, \ \int |u|^l K(u) du < \infty \ \hbox{ some } l,\\
&\int K^2 (u)du <\infty, \ \int |u|^2 K(u) du < \infty, \\
&\int (u^1)^{m_1} (u^2)^{m_2} \cdots (u^N)^{m_N} K(u) du =0\ \hbox{ if } \ l > 1, \label{eq:subc}\\
& \text{ where } 1 \le m_1+ m_2 + \cdots + m_N \le l-1 \label{eq:subd}
  \end{align}
\end{subequations}
Here $u^1,\dots,u^N$ denote the components of $u\in {\mathbb R}^N$. 
The parameter $l$ is a smoothness indicator of the kernel and the last line of \eqref{ker-1} is often used in  nonparametric estimation in statistics.
Such a condition stems from a large class of kernels used
in the so-called
 $l$th-order averaging operator; see \cite{Kat76}. Thus, \ref{H4} can be replaced by this more general setup. However,
 we use the current form of \ref{H4} because it is easily verifiable (e.g., by Gaussian kernel).
 
 Eq.\eqref{ker-1} can be written in   multi-index notation as follows. Let
 $m= (m_1,\dots, m_N)$ where each $m_i$ is a  nonnegative integer,
 $|m|= \sum^N_{i=1} m_i$, and $m!= \prod^N_{i=1} m_i!$. Thus, $u^m= (u^1)^{m_1} \cdots (u^N)^{m_N}$. Then  (\ref{eq:subc}), (\ref{eq:subd})   
 can be written in multi-index notation as
 $$\int u^m K(u)du=0  \ \hbox{ if } \ 1\le |m| \le l-1  \ \hbox{ and } \ l>1.$$

\end{compactitem}

\subsection{Main Result} \label{sec:proofmain}

As is well known
 \cite{KY03}, a classical fixed step size  stochastic gradient  algorithm
converges weakly  to a \textit{deterministic}  ordinary differential equation (ODE) limit; this is the basis of the so called ODE approach  for analyzing stochastic gradient algorithms.  In comparison, the discrete time IRL algorithm (\ref{eq:irl})  converges weakly to a \textit{stochastic} process limit $\al(\cdot)$.  We now formally state the weak convergence result  of the interpolated process $\{ \al^\stepa(\cdot)\}$ to the stochastic process limit $\al(\cdot)$ as $\stepa\rightarrow 0$.
Proving weak convergence
requires first that  the tightness of the sequence be verified and
then the limit be characterized via the so called
martingale problem formulation.  For a comprehensive treatment of the martingale problem of Stroock and Varadhan,  see \cite{EK86}.

\begin{theorem}\label{thm:weak-conv}
  Assume conditions {\rm \ref{H1}-\ref{H4}}. Then the interpolated process $\al^\stepa(\cdot)$ $($defined in \eqref{eq:interpolated}$)$ for IRL algorithm \eqref{eq:irl}  has the following properties:
  \begin{compactenum} \item
$\{ \al^\stepa(\cdot)\} $
is tight in $D^d[0,\infty)$.
\item Any weakly convergent subsequence
of $\{\al^\stepa(\cdot)\}$  has a
limit $\al(\cdot)$ that
satisfies
\beq \begin{split} d\al(t) &= \Big[ {\temperature \over 2} \belief^2(\al(t)) \Reward_\al(\al(t))+ \belief_\al(\al(t)) \belief(\al(t))\Big] dt+ \belief(\al(t) ) d\bm(t), \\
 \al(0) &=\al_0,
\end{split}
\label{eq:sde} \eeq
where $\bm(\cdot)$ is a standard Brownian motion with mean 0 and covariance being the identity matrix $I \in \reals^{\thdim\times \thdim}$,
provided~\eqref{eq:sde} has a
unique weak solution $($in a distributional sense$)$ for each initial condition.
\end{compactenum}
\end{theorem}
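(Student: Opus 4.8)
The plan is to follow the standard two-part recipe for weak convergence of constant-stepsize stochastic approximation algorithms: first establish tightness of the interpolated family $\{\al^\stepa(\cdot)\}$ in $D^\thdim[0,\infty)$, then identify any weak limit as the unique solution of the martingale problem associated with the SDE~\eqref{eq:sde}. Throughout I would work with implementation \eqref{eq:irl-1} (the argument for \eqref{eq:irl-2} being parallel) and exploit the time-scale separation: $\stepa \to 0$ while $\kernelstep = \kernelstep(\stepa)$ satisfies $\stepa/\kernelstep^\thdim \to 0$, so that the kernel-weighted term behaves, after averaging, like a local (Dirac) evaluation.

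\textbf{Step 1: Tightness.} First I would split the increment $\al_{k+1} - \al_k$ into its drift part (the terms multiplied by $\stepa$) and its martingale part $\sqrt{\stepa}\,\belief(\al_k)\noise_k$. Using \ref{H1} (smoothness and uniform integrability of $\wdt\reward$), the boundedness of $\belief$ and $\belief_\al$ in \ref{H2}, the boundedness and integrability of $K$ in \ref{H4}, and the i.i.d.\ zero-mean structure of $\{\noise_k\}$ in \ref{H3}, I would derive an a priori bound $\sup_k \E|\al_k|^2 \le C(T)$ on any finite interpolated horizon $[0,T]$, e.g.\ via a discrete Gronwall argument on $\E|\al_k|^2$. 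Then the Kurtz--Aldous tightness criterion applies: one shows $\E_m|\al^\stepa(t+s)-\al^\stepa(t)|^2 \le C s + o(1)$ uniformly, where the drift contributes a term $O(s)$ (again using the moment bound plus \ref{H1}, \ref{H4}) and the martingale contributes $O(s)$ by the independence and unit-covariance of the $\noise_k$'s. This gives tightness in $D^\thdim[0,\infty)$, hence part (1).

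\textbf{Step 2: Characterizing the limit via the martingale problem.} By Prohorov's theorem extract a weakly convergent subsequence with limit $\al(\cdot)$; by Skorohod representation I would work on a common probability space. The core is to show that for any $f \in C_0^2(\reals^\thdim)$,
\[
f(\al(t)) - f(\al(0)) - \int_0^t \mathcal{L} f(\al(u))\,du
\]
is a martingale, where $\mathcal{L}$ is the generator of \eqref{eq:sde}, namely $\mathcal{L}f(\al) = \big[\tfrac{\temperature}{2}\belief^2(\al)\Reward_\al(\al) + \belief_\al(\al)\belief(\al)\big]^\p \nabla f(\al) + \tfrac12 \belief^2(\al)\,\mathrm{tr}\,\nabla^2 f(\al)$. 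This is done by the usual telescoping of $f(\al^\stepa(t+s)) - f(\al^\stepa(t))$ over the intervening steps, Taylor-expanding to second order, and taking conditional expectations $\E_m$. The first-order term produces $\sum \stepa\,\E_m[\,\cdots]$, and here two levels of averaging enter: averaging the mixing noise $\{\xi_j\}$ replaces $\wdt\reward(\th_j,\xi_j)$ by $\Reward_\al(\th_j)$ via \eqref{ave}/\eqref{ave}-type limits from \ref{H3}; averaging the stationary sequence $\{\th_j\}$ (using the conditional-density convergence \eqref{pi-ave} in \ref{H2}) converts $\E_m\big[\kernelstep^{-\thdim}K((\th_j-\al)/\kernelstep)\belief(\al)\,\Reward_\al(\th_j)\big]$ into $\int \kernelstep^{-\thdim}K((\th-\al)/\kernelstep)\belief(\al)\Reward_\al(\th)\belief(\th)\,d\th$, which as $\kernelstep\to 0$ (by \ref{H4}, the kernel acting as an approximate identity) tends to $\belief^2(\al)\Reward_\al(\al)$. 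Adding the deterministic $\belief_\al(\al)\belief(\al)$ term gives the drift of $\mathcal{L}$. The second-order (quadratic) term, after using $\E_m[\noise_k\noise_k^\p]=I$ and the negligibility of cross terms (the $\sqrt\stepa$-martingale crossed with the $O(\stepa)$-drift is $o(\stepa)$), produces $\tfrac12\belief^2(\al)\,\mathrm{tr}\,\nabla^2 f(\al)$. The remaining discretization/averaging errors are shown to vanish using uniform integrability (\ref{H1}), the mixing-rate summability $\sum_k\ph_k<\infty$ (\ref{H3}), and $\stepa/\kernelstep^\thdim\to 0$. The stated uniqueness of the weak solution of \eqref{eq:sde} then forces all subsequential limits to coincide, giving full weak convergence.

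\textbf{Main obstacle.} The delicate point is the \emph{interchange and ordering of the two averaging limits together with $\kernelstep\to 0$}: one must average out the exogenous mixing noise $\{\xi_j\}$ and the stationary sequence $\{\th_j\}$ on a time scale that is slow relative to their own fluctuations but fast relative to the evolution of $\al^\stepa$, while simultaneously letting the kernel shrink to a Dirac mass. Controlling the error term $\kernelstep^{-\thdim}K((\th_j-\al)/\kernelstep)$ — which is large ($O(\kernelstep^{-\thdim})$) precisely where it is nonnegligible — requires the condition $\stepa/\kernelstep^\thdim\to 0$ and careful use of the uniform boundedness of the conditional densities $\belief_k(\cdot|\mathcal F_n)$ from \ref{H2}; getting the bookkeeping right so that all remainder terms are genuinely $o(1)$ (uniformly on compact time intervals) is where the real work lies. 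A secondary technical nuisance is the a priori moment bound $\sup_k\E|\al_k|^2<\infty$, which is needed for both tightness and the averaging estimates and must be established first despite the $\belief(\al_k)$-modulated noise coefficient.
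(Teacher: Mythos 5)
Your outline is correct and follows essentially the same route the paper takes: the paper explicitly declares that the proof consists of verifying tightness and then characterizing the limit via the Stroock--Varadhan martingale problem (with the detailed argument deferred to \cite{KY21}), and its informal proof in the preceding section carries out exactly the two levels of averaging you describe (mixing noise $\xi_j$ averaged to $\Reward_\al$, then $\th_j\sim\belief$ averaged against the kernel, which collapses to a Dirac mass as $\kernelstep\to 0$ under $\stepa/\kernelstep^\thdim\to 0$), yielding the drift $\frac{\temperature}{2}\belief^2\Reward_\al+\belief_\al\belief$ and diffusion coefficient $\belief(\al)$. The only minor stylistic difference is that you propose a global a priori moment bound via Gronwall, whereas the standard Kushner--Yin treatment the paper relies on typically handles this by truncation; this does not affect the validity of your plan.
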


The proof is in \cite{KY21}.
For sufficient conditions leading to  unique weak solutions of  stochastic differential equation and uniqueness of martingale problem, see \cite[p. 182]{EK86} or
\cite{KS91}.

\section{Tracking Analysis of IRL in Non-Stationary Environment}
\label{sec:markov}

An important feature of the IRL algorithm (\ref{eq:irl}) is its constant step size $\stepa$  (as opposed to a decreasing step size).  This facilities  estimating (tracking) time evolving reward functions. This section analyzes the ability of IRL algorithm  to track  a time-varying reward  function.

Since we are  estimating  a time evolving  reward, we first  give a model for the evolution of the reward
$\Reward(\th)$ over time.
Below, the Markov chain $\{\mc_\dtime\}$ will be used as a \textit{ hyper-parameter} to model the evolution
of the time varying  reward, which we will denote as $\Reward(\th,\mc_\dtime)$.  By hyper-parameter we mean that the Markov chain model is not known or used  by the IRL  algorithm (\ref{eq:irl}). The Markov chain assumption is used  only in  our convergence analysis
to determine how well does the IRL  algorithm estimates (tracks) the  reward $\Reward(\th,\mc_\dtime)$  that jump changes
(evolves) according to an unknown Markov chain $\mc_\dtime$.

We assume that the RL agents perform gradient algorithm (\ref{eq:rl}) by
evaluating the sequence of gradients  $\{\nabla_\th \reward_\dtime(\th_\dtime, \mc_\dtime)\}$. Note that both   the RL and IRL do not know the sample path $\{\mc_\dtime\}$.
We will use similar   notation to  Sec.\ref{sec:weak}:
\begin{compactitem} \item
  Denote $ \nabla_\th\reward_\dtime(\th_\dtime, \mc_\dtime) $ as $\wdt \reward (\th_k,\xi_k,\mc_\dtime)$,
  \item We use $\belief_\al(\cdot)$ to denote $\nabla_\al \belief(\cdot)$.
\end{compactitem}

\subsection{Assumptions}
We focus on the following algorithm
\begin{equation}
  \label{eq:irl-1a}
\al_{k+1}=\al_k + {\stepa \over \kernelstep^\thdim} K\( {\th_k-\al_k \over \kernelstep} \) {\temperature \over 2} \wdt \reward(\th_k,\xi_k,x_k)\belief(\al_k)+ \stepa \belief_\al(\al_k) \belief(\al_k) +\sqrt \stepa \belief(\al_k) \noise_k,
\end{equation}
The  main assumptions are as follows.
\begin{enumerate}[label=(M{\arabic*})]
  \item  \label{M1} (Markovian hyper-parameter)   Let $\{\mc_\dtime, \dtime \geq 0\}$ be a Markov chain with finite state space  $\statespace=\{1,\dots,X\}$ and
  transition probability matrix $I +\mcstep Q$, where $\mcstep>0$ is a small parameter and $Q=(q_{ij})$ is an
  $\statedim \times \statedim$ irreducible generator (matrix) \cite[p.23]{YZ13} with $$q_{ij}\ge 0, \quad i\not =j, \qquad
  \sum_j q_{ij}=0 , \quad   i \in \statespace ,$$ also $\{x_k\}$ is independent of $\{\th_k\}$ and $\{w_k\}$.
\item \label{M2} Assumption \ref{H1} holds on   $ \wdt \reward(\cdot,\xi, i)$ for each fixed  state $i \in \statespace$.
    Also  \ref{H2}, \ref{H3}, \ref{H4} hold.
\end{enumerate}

\subsection{Main Result}
Recall that $\stepa$ is the step size of the IRL algorithm while $\mcstep$ reflects the rate at which the hyper-parameter Markov chain $\mc_\dtime$ evolves.   In the following tracking analysis of IRL algorithm (\ref{eq:irl}), we will consider three cases,
 $\stepa = O(\mcstep)$, $\stepa \ll \mcstep$, and $\stepa \gg \mcstep$.
 The three cases represent three different types of asymptotic behavior.
 If $\stepa \gg \mcstep$, the frequency of changes of the Markov chain is very slow. Thus, we are treating a case similar to a
 constant parameter, or
 we essentially deal with a ``single'' objective function.
 If $\stepa \ll \mcstep$, then the Markov chain jump changes frequently.
 So what we are optimizing is a function
 $\sum^X_{i=1} R(\al,i) \nu_i$,
 where $\nu_i$ is the stationary distribution  associated with the generator $Q$.
 If $\stepa =O(\mcstep)$, then the Markov chain changes in line with the optimization recursion. In this case, we obtain switching limit
 Langevin diffusion.

 In  Theorem \ref{thm:gld1} below, for brevity we use  $\stepa =\stepmc$ for $\stepa = O(\stepmc)$,
 $\mcstep = \stepa^{1+\wdt\Delta}$ for $\mcstep = o(\stepa)$ and   $\mcstep= \stepa ^{\wdt\Delta}$
 for $\stepa = o(\mcstep)$, respectively. These cover all three possible cases of the rate at which the hyper-parameter evolves compared to the dynamics of the Langevin IRL algorithm.

 \begin{theorem} \label{thm:gld1}
   Consider the
   IRL algorithm \eqref{eq:irl-1a}.
Under Assumptions {\rm\ref{M1}} and {\rm\ref{M2}}, assuming that
\eqref{eq:sde_track}, or \eqref{eq:sde_track2}, or \eqref{eq:sde_track3} has a unique solution in the sense in distribution. Then
   the following results hold.
    \begin{itemize}
    \item[{\rm 1.}]
Assume $\stepa =\stepmc$.
Then as $\stepa \downarrow 0$,
the interpolated process $(\eth^\stepa\cd,\mc^\stepa\cd)$  converges weakly to the switching diffusion $(\eth\cd, \mc\cd)$ satisfying
\beq  d\al(t) = \Big[ {\temperature \over 2} \belief^2(\al(t)) \Reward_\al(\al(t),\mc(t))+ \belief_\al(\al(t)) \belief(\al(t))\Big] dt+ \belief(\al(t) ) d\bm(t), \\
 \label{eq:sde_track} \eeq
where $\bm(\cdot)$ is a standard Brownian motion with mean 0 and covariance being the identity matrix $I \in \reals^{\thdim\times \thdim}$,
 and  $x\cd$ is a continuous-time Markov chain with generator $Q$.
    \item[{\rm 2.}]
    Suppose $\mcstep = \stepa^{1+\wdt\Delta}$ with $\wdt\Delta > 0$ and denote the initial distribution of $x^\mcstep(0)$ by $p_\iota$ $($independent of $\mcstep)$ for each $\iota \in {\cal X}$.
      Then  as $\stepa \downarrow 0$,
    the interpolated process $(\eth^\stepa\cd)$  converges weakly to the following
    diffusion process
    \beq d\al(t) = \Big[ {\temperature \over 2} \belief^2(\al(t)) \sum_{\iota \in \statespace} \Reward_\al(\al(t),\iota )\, p_\iota+ \belief_\al(\al(t)) \belief(\al(t))\Big] dt+ \belief(\al(t) ) d\bm(t), \\
 \label{eq:sde_track2} \eeq
\item[{\rm 3.}]  Suppose that $\mcstep= \stepa ^{\wdt\Delta}$ with $0<\wdt\Delta<1$ and denote the stationary distribution associated with the continuous-time Markov chain with generator $Q$ by $\nu =(\nu_1,\dots, \nu_{X})$. Then  as $\stepa \downarrow 0$,
    the interpolated process $(\eth^\mu\cd)$  converges weakly to the following
    diffusion process
     \beq  d\al(t) = \Big[ {\temperature \over 2} \belief^2(\al(t)) \sum_{\iota \in \statespace} \Reward_\al(\al(t),\iota )\, \nu_\iota+ \belief_\al(\al(t)) \belief(\al(t))\Big] dt+ \belief(\al(t) ) d\bm(t).
     \label{eq:sde_track3} \eeq
   \end{itemize}
 \end{theorem}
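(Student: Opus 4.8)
The plan is to treat all three cases of Theorem~\ref{thm:gld1} through a single weak convergence machinery, extending the proof of Theorem~\ref{thm:weak-conv}, and then specializing via the averaging of the hyper-parameter Markov chain $\{\mc_\dtime\}$. First I would establish tightness of the interpolated processes $\al^\stepa(\cdot)$ (and, in Case~1, of the pair $(\al^\stepa(\cdot),\mc^\stepa(\cdot))$) in $D^\thdim[0,\infty)$ (resp.\ $D^\thdim[0,\infty)\times D[0,\infty)$). This part is essentially identical to the tightness argument in Theorem~\ref{thm:weak-conv}: one uses the boundedness and uniform integrability from \ref{H1}--\ref{H4} together with the perturbed-test-function / Kurtz criterion to bound the increments, and the injected Gaussian noise $\{\noise_k\}$ contributes the diffusion part. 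The Markov chain $\{\mc_\dtime\}$ lives on a finite state space so its interpolation is automatically tight. Then, along any weakly convergent subsequence, I would characterize the limit via the martingale problem of Stroock and Varadhan, exactly as in Sec.\ref{sec:weak}.

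The new ingredient is the two-layer averaging: at the IRL time scale, one first averages the noisy gradient $\wdt\reward(\th_k,\xi_k,x_k)$ over $\{\xi_k\}$ (using the mixing condition \ref{H3}, giving $\Reward_\al(\al,x_k)$), then over $\{\th_{\stoptimeirl_\tslow}\}\sim\belief$ (using \ref{H2}, giving the factor $\belief^2(\al)$ after the kernel collapses to a Dirac delta as $\kernelstep\to0$), and finally over the hyper-parameter $\{\mc_\dtime\}$. The treatment of this last average is what distinguishes the three cases, and it parallels the two-time-scale Markov chain averaging of \cite{YKI04,YIK09,YZ13}. In Case~1 ($\stepa=\stepmc$), the Markov chain evolves on exactly the IRL time scale; with generator $I+\stepmc Q$, the interpolation $\mc^\stepa(\cdot)$ converges weakly to a continuous-time Markov chain with generator $Q$, and the coupled martingale problem for $(\al(\cdot),\mc(\cdot))$ yields the switching diffusion \eqref{eq:sde_track}: the operator acting on a test function $f(\al,i)$ is the Langevin generator (with $\Reward_\al(\al,i)$) plus $Q$ acting on the discrete variable. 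In Case~2 ($\mcstep=\stepa^{1+\wdt\Delta}$, so $\mcstep=o(\stepa)$), the chain is frozen on the IRL time scale; since $\stepa k\to\infty$ corresponds to $\mcstep k\to0$, the chain has not had time to move away from its initial distribution $p_\iota$, so it averages to the constant $\sum_\iota \Reward_\al(\al,\iota)p_\iota$, giving \eqref{eq:sde_track2}. In Case~3 ($\mcstep=\stepa^{\wdt\Delta}$, $0<\wdt\Delta<1$, so $\stepa=o(\mcstep)$), the chain mixes infinitely fast relative to the IRL updates; by the ergodic theorem for the fast Markov chain it averages to its stationary distribution $\nu$, producing $\sum_\iota\Reward_\al(\al,\iota)\nu_\iota$ in \eqref{eq:sde_track3}. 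In each case, after the kernel collapses ($\kernelstep\to0$), the drift acquires the $\belief^2(\al)$ factor and the $\belief_\al(\al)\belief(\al)$ term exactly as in Step~II of Sec.\ref{sec:informal}.

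Concretely, the steps I would carry out are: (1) rewrite the algorithm \eqref{eq:irl-1a} in perturbed-test-function form and verify tightness via the criteria in \cite{KY03,EK86}; (2) for a fixed kernel scale $\kernelstep$, identify the averaged (over $\xi$, then over $\th\sim\belief$, then over $x$) drift and show the limit satisfies the martingale problem, treating the Markov-chain average differently in the three regimes using the decomposition $I+\mcstep Q$ and a comparison of the orders of $\stepa$ and $\mcstep$; (3) pass $\kernelstep\to0$ so the kernel acts as a Dirac delta, collapsing $\int K((\th-\al)/\kernelstep)\,\cdot\,\belief(\th)\,d\th$ to the evaluation at $\th=\al$ times $\belief(\al)$; (4) invoke the assumed uniqueness (in distribution) of the solutions to \eqref{eq:sde_track}, \eqref{eq:sde_track2}, \eqref{eq:sde_track3} to conclude that the whole sequence converges, not just a subsequence. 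I expect the main obstacle to be Case~1: handling the \emph{coupled} martingale problem for the pair $(\al(\cdot),\mc(\cdot))$ when the continuous and discrete components evolve on the same time scale — one must show the cross terms vanish in the limit and that the limiting generator genuinely decomposes into the Langevin part plus $Q$, which requires care with the perturbed test function $f(\al,i)+\stepa f_1(\al,i,\cdot)$ and with the order-of-averaging when $\xi$, $\th$, and $x$ all fluctuate. The Case~2 and Case~3 limits, by contrast, follow more directly once the single-time-scale machinery of Theorem~\ref{thm:weak-conv} is combined with standard two-time-scale Markov averaging; the details there are routine given \cite{YZ13}.
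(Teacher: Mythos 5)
Your proposal is correct and follows essentially the same route as the paper: the paper defers the detailed argument to \cite{KY21} but explicitly describes it as a weak-convergence proof via tightness plus the Stroock--Varadhan martingale problem, with the layered averaging over $\xi$, then $\th\sim\belief$, then the hyper-parameter chain handled in the three regimes exactly as you outline (switching diffusion when $\stepa=\stepmc$, averaging w.r.t.\ the initial distribution when $\mcstep=o(\stepa)$, and w.r.t.\ the stationary distribution $\nu$ when $\stepa=o(\mcstep)$), drawing on the two-time-scale Markov-modulated stochastic approximation machinery of \cite{YKI04,YIK09,NKY13}. Your identification of the coupled martingale problem in Case~1 as the delicate step, and of the kernel collapse producing the $\belief^2(\al)$ drift factor, matches the paper's own account.
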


{\em Remark}. Theorem \ref{thm:gld1}  is proved in \cite{KY21} and uses ideas from \cite{YKI04,YIK09,NKY13}. The 
theorem characterizes the asymptotic behavior of the IRL  algorithm (\ref{eq:irl-1a})  with Markovian switching.  In accordance with the rates of variations of the adaptation rates (represented by the stepsize $\stepa$)
and the switching rate (represented by the stepsize $\mcstep$),
three cases are considered.  Case~1 indicates that when $\stepa$ is in line with $\stepmc$, the limit differential equation is a switching diffusion. Case~2 concentrates on the case that the switching is much slower than the stochastic approximation generated by the recursion. Thus, the limit Langevin equation is one in which the drift and diffusion coefficients are averaged out with respect to the initial distribution of the limit Markov chain. Roughly, it reveals that the ``jump change'' parameter $\mc(t)$ is more or less as a constant in the sense the coefficients are averages w.r.t. the initial distribution.
Case~3 is the one that the Markov chain is changing much faster than the stochastic approximation rate. As a result, the ``jump change'' behavior is replaced by an average with respect to the stationary distribution of the Markov chain.  Then we derive the associated limit Langevin equation. Again, the limit has no switching in it.

\section{Discussion}

This chapter  has presented and analyzed the convergence of passive Langevin dynamics algorithms for adaptive inverse reinforcement learning (IRL). Given noisy gradient estimates of a possibly time evolving reward function $\Reward$, the Langevin dynamics algorithm generates samples $\{\eth_k\}  $ from the Gibbs measure $\stat(\eth)   \propto  \exp \bigl(  \temperature \Reward(\eth ) \bigr)$; so the log of the empirical distribution of $\{\eth_k\}$ serves as a non-parametric estimator for  $\Reward(\eth)$.
The proposed algorithm is  a {\em passive} learning algorithm since the gradients are not evaluated at $\eth_\dtime$ by the inverse learner; instead the gradients are evaluated at the random points  $\th_\dtime$ chosen by the gradient (RL) algorithm. This passive framework  is natural in an IRL where the inverse learner
passively observes  forward learners.

Apart from the main IRL algorithm (\ref{eq:irl}), we presented
a two-time scale
IRL algorithm for variance reduction, an active IRL algorithm which deals with mis-specified gradients, and a non-reversible diffusion IRL algorithm with larger spectral gap and therefore faster convergence to the stationary distribution.
We presented three detailed numerical  examples: inverse Bayesian learning, a large dimensional IRL problem in logistic learning involving a real dataset, and IRL for a constrained Markov decision process.
Finally, we discussed the main weak convergence result for  the IRL algorithm (the datild proof uses martingale averaging methods) and also discussed  the tracking capabilities of the IRL algorithm when the utility function jump changes according to a slow (but unknown) Markov chain.

{\bf Extensions.} A detailed proof of the two-time scale variance reduction algorithm involves Bayesian asymptotics, namely, the Bernstein von Mises theorem; see  \cite{KY22}. It is important to note that the IRL algorithms proposed in this chapter are adaptive: given the estimates from an adaptive gradient algorithm, the IRL algorithm learns the utility function. In other words, we have a  gradient algorithm operating in series with a Langevin dynamics algorithm. In future work it is of interest to study the convergence properties of multiple such cascaded Langevin dynamics and gradient algorithms.

 \cite{KHH20}  shows that classical Langevin dynamics   yields more robust RL algorithms compared to classic stochastic gradient. In analogy to \cite{KHH20}, in future work it is worthwhile exploring how our passive Langevin dynamics framework can be viewed as a robust version of classical passive stochastic gradient algorithms.

This chapter discussed the weak convergence and tracking properties of passive Langevin dynamic algorithms. In future work it is of interest to analyze the asymptotic convergence rate and
  spectral gap of the diffusion process induced by the proposed algorithm. This will also facilitate  quantifying  how the convergence rate is affected when the step size $\step_n$ of
  each RL agent $n$ is chosen randomly (and unknown to the inverse learner).

Finally, it is worth extending the algorithms in this chapter to the case where the inverse learner observes the gradients from multiple utility functions, but the inverse learner does not know which gradient came from which utility. By using symmetric polynomial transformations (that form an algebraic ring), one can extend the methodology in \cite{Kri23} to estimate the individual utilities (subject to permutation).

{\bf Finite Sample Results}. \cite{SK25} presents a  finite sample analysis of the adaptive IRL algorithm presented in this chapter.



\printbibliography

\clearpage

\printindex
\clearpage
\thispagestyle{empty} 

\end{document}